\newcommand{\R}{\mathbb{R}}
\newcommand{\cH}{\mathcal{H}}
\newcommand{\norm}[1]{\left|\left| #1 \right|\right|}
\newcommand{\Rad}{\mathtt{Rad}}
\newcommand{\Tang}{\mathtt{Tang}}
\newcommand{\Haus}{\mathtt{Haus}}
\newcommand{\grad}{\mathtt{grad}\,}
\newcommand{\Exp}{\mathtt{Exp}}
\renewcommand{\hat}{\widehat}
\renewcommand{\tilde}{\widetilde}
\newenvironment{customthm}[1]
{\innercustomthm}
{\endinnercustomthm}
\newenvironment{customlem}[1]
{\innercustomlem}
{\endinnercustomlem}
\begin{document}
	
	\title{Kernel Smoothing, Mean Shift, and Their Learning Theory with Directional Data}
	
	\author{\name Yikun Zhang \email yikun@uw.edu \\
		\name Yen-Chi Chen \email yenchic@uw.edu \\
		\addr Department of Statistics\\
		University of Washington\\
		Seattle, WA 98195, USA}
	
	\editor{Sayan Mukherjee}
	
	\maketitle
	
	\begin{abstract}
		Directional data consist of observations distributed on a (hyper)sphere, and appear in many applied fields, such as astronomy, ecology, and environmental science.
		This paper studies both statistical and computational problems of kernel smoothing for directional data.
		We generalize the classical mean shift algorithm to directional data, which allows us to identify local modes of the directional kernel density estimator (KDE).
		The statistical convergence rates of the directional KDE and its derivatives are derived, and the problem of mode estimation is examined.
		We also prove the ascending property of the directional mean shift algorithm and investigate a general problem of gradient ascent on the unit hypersphere.
		To demonstrate the applicability of the algorithm, we evaluate it as a mode clustering method on both simulated and real-world data sets.
	\end{abstract}
	
	\begin{keywords}
		Directional data, mean shift algorithm, kernel smoothing, mode clustering, optimization on a manifold
	\end{keywords}
	
	\section{Introduction}
	\label{Sec:Intro}

	A directional data set (or simply directional data) is the collection of observations on a (hyper)sphere.
	It occurs in many scientific problems when measurements are taken on the surface of a spherical object, such as Earth or other planets.
	For instance, the locations of earthquakes are often represented by their longitudes and latitudes
	\citep{taylor2009active, craig2011earthquake};
	thus, the locations can be viewed as random variables on a two-dimensional (2D) sphere.
	In astronomical surveys,
	the locations of galaxies are usually recorded by their angular positions (right ascensions and declinations) in the sky, leading
	to observations on a 2D sphere \citep{york2000sloan,skrutskie2006two, abbott2016dark}. 
	In planetary science, observations often comprise locations on a planet, such as Mars,
	and can also be considered as random variables on a 2D sphere \citep{Lakes_On_Mars,BARLOW2015,Unif_test_hypersphere2020}.
	
	These observations on a sphere can be regarded as independently and identically distributed random variables from a density function supported on the sphere (called a directional density function). 
	The local modes of a density function are often of research interest because they signal high density areas \citep{scott2012multivariate}
	and can be used to cluster data \citep{MS_Density_ridge2018,chacon2020modal}.
	However, identifying the local modes of a directional density function is a nontrivial task that involves both statistical and computational challenges. 
	From a statistical perspective, it is necessary to obtain an accurate estimator of the underlying directional density (as well as its derivatives). From a computational perspective, it is needful to design an algorithm to efficiently compute the local modes of the density estimator. 
	
	To address the aforementioned challenges, we consider the idea of kernel smoothing because the kernel density estimator (KDE; \citealt{Rosenblatt1956, Parzen1962}) in the Euclidean data setting is highly successful. Its statistical properties have been well-studied \citep{All_nonpara2006,Scott2015,KDE_t}, and the local modes of a Euclidean KDE are often good estimators of the local modes of the underlying density function \citep{Parzen1962, Romano1988, vieu1996note,Mode_clu2016}. 
	Moreover, in Euclidean KDEs, there is an elegant algorithm known as the \emph{mean shift} algorithm
	 \citep{MS1975,MS1995,MS2002,carreira2015review}
	that allows us to numerically obtain the local modes at a low cost.
	
	\begin{figure}
		\centering
		\includegraphics[width=0.9\linewidth]{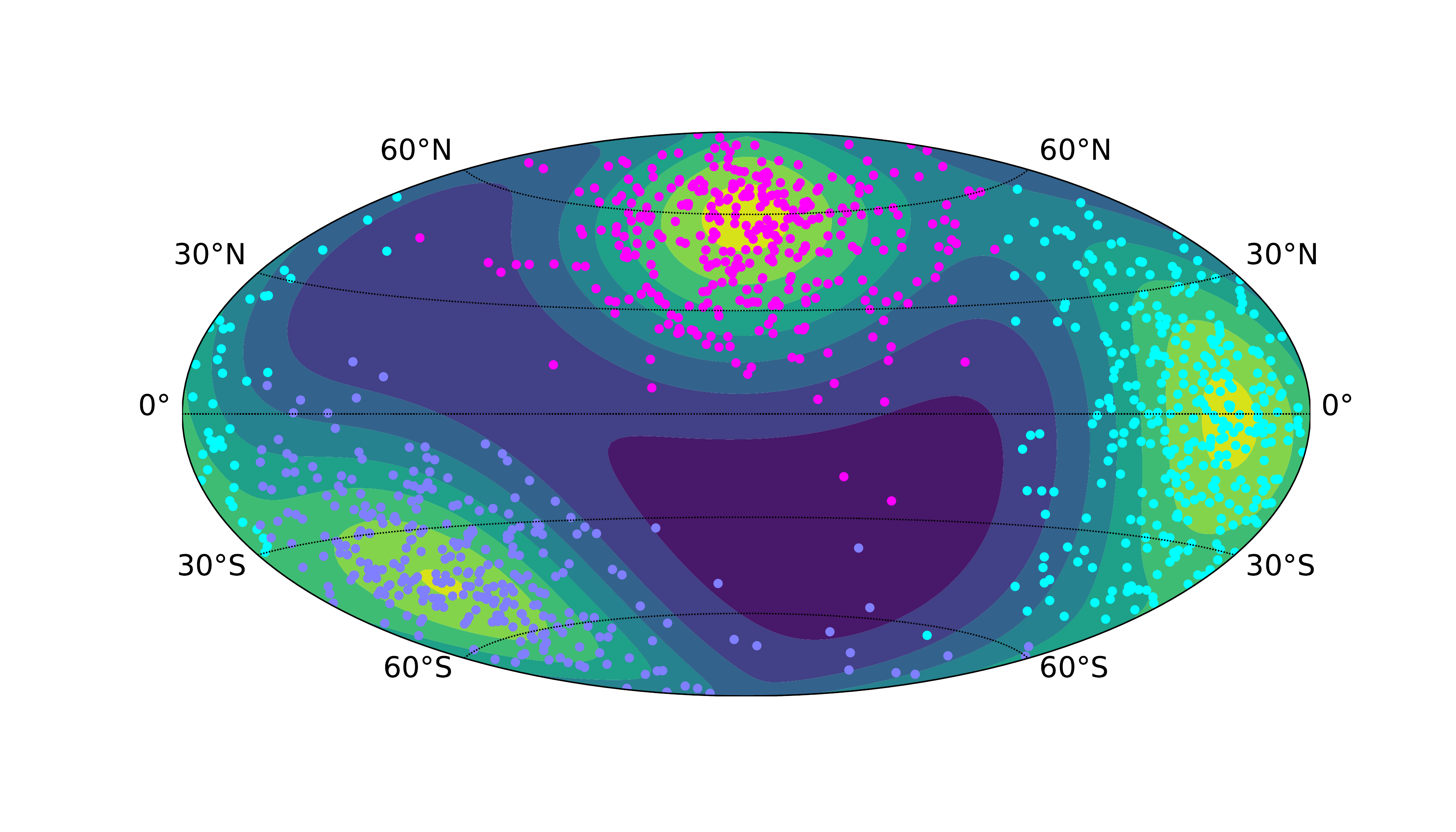}
		\caption{Clustering of directional data using the proposed directional mean shift algorithm (Algorithm~\ref{Algo:MS}). Additional details of the simulated data can be found in Section~\ref{sec:spherical}.}
		\label{fig:Mode_clu_Example}
	\end{figure}
	
	Although kernel smoothing has been applied to directional data since the seminal work of \cite{KDE_Sphe1987}
	and other studies have been conducted on analyzing its performance as a density estimator \citep{KDE_direct1988, Zhao2001,Exact_Risk_bw2013, ley2018applied}, 
	little is known about the behavior of the derivatives of a directional KDE. To the best of our knowledge,
	\cite{KLEMELA2000} was the only work to examine the derivatives of a particular type of directional KDE; however, their estimators are rarely used in practice.
	Thus, the statistical properties of the gradient system induced by a general directional KDE and the resulting local modes are still open problems. 
	
	Computationally, the standard mean shift algorithm was first generalized to directional data setting by \cite{Multi_Clu_Gene2005}. Using the directional mean shift algorithm, we are able to determine the local modes of the directional KDE and perform mode clustering (mean shift clustering) of spherical data. Figure~\ref{fig:Mode_clu_Example} presents an example of mode clustering with our proposed algorithm. However, the algorithmic rate of convergence of the mean shift algorithm with directional data remains unclear. We address this problem by viewing the directional mean shift algorithm as a special case of gradient ascent methods on the $q$-dimensional unit sphere $\Omega_q = \big\{\bm{x} \in \mathbb{R}^{q+1}: \norm{\bm{x}}_2^2 = x_1^2+ \cdots + x_{q+1}^2=1 \big\}$ and develop some linear convergence results for the general gradient ascent method on $\Omega_q$.

	\textbf{\emph{Notation}.} Bold-faced variables (e.g., $\bm{x}, \bm{\mu}$) represent vectors, while capitalized (bold-faced) variables (e.g., $\bm{X}_1,...,\bm{X}_n$) denote random variables (or random vectors). The set of real numbers is denoted by $\mathbb{R}$, while the unit $q$-dimensional sphere embedded in $\mathbb{R}^{q+1}$ is denoted by $\Omega_q$. The norm $\norm{\cdot}_2$ is the usual Euclidean norm (or so-called $L_2$-norm) in $\mathbb{R}^d$ for some positive integer $d$. The directional density is denoted by $f$ unless otherwise specified, and the probability of a set of events is denoted by $\mathbb{P}$. If a random vector $\bm{X}$ is distributed as $f(\cdot)$, the expectations of functions of $\bm{X}$ are denoted by $\mathbb{E}_f$ or $\mathbb{E}$ when the underlying distribution function is clear. We use the big-O notation $h(x)=O(g(x))$ if the absolute value of $h(x)$ is upper bounded by a positive constant multiple of $g(x)$ for all sufficiently large $x$. In contrast, $h(x)=o(g(x))$ when $\lim_{x\to\infty} \frac{|h(x)|}{g(x)}=0$. For random vectors, the notation $o_P(1)$ is short for a sequence of random vectors that converges to zero in probability. The expression $O_P(1)$ denotes a sequence that is bounded in probability. Additional details of stochastic $o$ and $O$ symbols can be found in Section 2.2 of \cite{VDV1998}.
	The notation $a_n \asymp b_n$ indicates that $\frac{a_n}{b_n}$ has lower and upper bounds away from zero and infinity, respectively. 

	\textbf{\emph{Main results}.}
	\begin{enumerate}
		\item We revisit the mean shift algorithm with directional data (Algorithm~\ref{Algo:MS}) and provide some new insights on its iterative formula, which can be expressed in terms of the total gradient of the directional KDE (Sections~\ref{Sec:MS_Dir} and \ref{Sec:grad_Hess_Dir}).
		\item From the perspective of statistical learning theory, we establish uniform convergence rates of the gradient and Hessian of the directional KDE (Theorem~\ref{pw_conv_tang} and \ref{unif_conv_tang}).
		\item Moreover, we derive the asymptotic properties of estimated local modes around the true (population) local modes (Theorem~\ref{Mode_cons}).
		\item With regard to computational learning theory, we prove the ascending and converging properties of the directional mean shift algorithm (Theorems~\ref{MS_asc} and \ref{MS_conv}). 
		\item In addition, we prove that the directional mean shift algorithm converges linearly to an estimated local mode under suitable initialization (Theorem~\ref{Linear_Conv_GA}).
		\item We demonstrate the applicability of the directional mean shift algorithm by using it as a clustering method on both simulated and real-world data sets (Section~\ref{Sec:Experiments}).
	\end{enumerate}
	
	\textbf{\emph{Related work}.}
	The directional KDE has a long history in statistics since the work of \cite{KDE_Sphe1987}. Its statistical convergence rates and asymptotic distributions have been studied by \cite{KDE_direct1988, Zhao2001}.
	In addition, \cite{KDE_Sphe1987,KDE_direct1988,Exact_Risk_bw2013,Dir_Linear2013} considered the problem of selecting the smoothing bandwidth of directional KDEs.
 	A study by \cite{KLEMELA2000} was the first to estimate the derivatives of a directional density. 
 	More generally, \cite{hendriks1990,pelletier2005,berry2017} considered the nonparametric density estimation on (Riemannian) manifolds (with boundary). The uniform convergence rate and asymptotic results of the KDE on Riemannian manifolds have also been investigated in \cite{henry2009,jiang2017,kim2019uniform}. As the unit hypersphere $\Omega_q$ is a $q$-dimensional manifold with constant curvature and positive reach \citep{federer1959}, their analyses and results are applicable to the directional KDE.
 	
 	The standard mean shift algorithm with Euclidean data is a popular approach to various tasks such as clustering \citep{MS1975}, image segmentation \citep{MS2002}, and object tracking \citep{Kernel_Based_Ob2003}; see a comprehensive review in \cite{carreira2015review}. Its convergence properties have been well-studied in \cite{MS1995,MS2007_pf,MS_onedim2013,MS2015_Gaussian,Ery2016,wang2016}.
	The algorithmic convergence rates of mean shift algorithms with Gaussian and Epanechnikov kernels are generally linear, except for some extreme values of the bandwidth \citep{MS_EM2007,huang2018convergence}. It can be improved to be superlinear by dynamically updating the data set for estimating the density \citep{Acc_Dy_MS2006}. There are other methods to accelerate the mean shift algorithm by combining stochastic optimization with blurring or random sampling \citep{Fast_GBMS2006,GBMS2008, Stoc_GKD_Mode_seek2009,Fast_MS2016}. The mean shift algorithm with directional data was studied by \cite{Multi_Clu_Gene2005,DMS_topology2010,vMF_MS2010,MSBC_Dir2010,MSBC_Cir2012,MSC_Dir2014} in the last two decades. More generally, \cite{tuzel2005simultaneous,subbarao2006nonlinear,Nonlinear_MS_man2009,Intrinsic_MS2009,Semi_Intrinsic_MS2012,ashizawa2017least} proposed their mean shift algorithms on manifolds using logarithmic and exponential maps, heat kernel, or direct log-density estimation via least squares. These mean shift algorithms on general manifolds are applicable to directional data, though they are more complicated than our interested method.

	\textbf{\emph{Outline}.}	
	The remainder of the paper is organized as follows. Section~\ref{Sec:Prelim} reviews some background knowledge on directional KDEs and differential geometry, while Section~\ref{Sec:MS_Dir} provides a detailed derivation of the mean shift algorithm with directional data. 
	Section \ref{Sec:grad_Hess_consist} focuses on the statistical learning theory of the directional KDE; we
	formulate the gradient and Hessian estimators of directional KDEs and establish their pointwise and uniform consistency results
	as well as a mode consistency theory.
	Section \ref{Sec:Algo_conv} considers the computational learning theory of the directional mean shift algorithm;
	we study the ascending and converging properties of the algorithm.
	Simulation studies and applications to real-world data sets are unfolded in Section~\ref{Sec:Experiments}. Proofs of theorems and technical lemmas are deferred to Appendix~\ref{Appendix:proofs}. All the code for our experiments is available at \url{https://github.com/zhangyk8/DirMS}.
	
	\section{Preliminaries}
	\label{Sec:Prelim}
	
	This section is devoted to a brief review of the directional KDE and some technical concepts of differential geometry on $\Omega_q$.

	\subsection{Kernel Density Estimation with Directional Data}	\label{sec::KDE}
	
	Let $\bm{X}_1,...,\bm{X}_n \in \Omega_q\subset\R^{q+1}$ be a random sample generated from the underlying directional density function $f$ on $\Omega_q$ with $\int_{\Omega_q} f(\bm{x}) \, \omega_q(d\bm{x})=1$,
	where $\omega_q$ is the Lebesgue measure on $\Omega_q$. A well-known fact about the surface area of $\Omega_q$ is that
	\begin{equation}
	\label{surf_area}
	\bar{\omega}_q\equiv \omega_q\left(\Omega_q \right) = \frac{2\pi^{\frac{q+1}{2}}}{\Gamma(\frac{q+1}{2})} \quad \text{ for any integer } q\geq 1,
	\end{equation}
	where $\Gamma$ is the Gamma function defined as $\Gamma(z)=\int_0^{\infty} x^{z-1} e^{-x} dx$ with the real part of the complex integration variable $z$ (if applicable) being positive. 
	The directional KDE at point $\bm{x}\in \Omega_q$ is often written as \citep{KDE_Sphe1987,KDE_direct1988,Exact_Risk_bw2013}:
	\begin{equation}
	\hat{f}_h(\bm{x}) = \frac{c_{h,q}(L)}{n} \sum_{i=1}^n L\left(\frac{1-\bm{x}^T \bm{X}_i}{h^2} \right),
	\label{Dir_KDE}
	\end{equation}
	where $L$ is a directional kernel (a rapidly decaying function with nonnegative values and defined on $(-\delta_L,\infty) \subset \mathbb{R}$ for some constant $\delta_L >0$)\footnote{Normally, the kernel $L$ is only required to be defined on $[0,\infty)$. We extend its domain to $(-\delta_L,\infty) \subset \mathbb{R}$ so that the usual derivatives of $\hat{f}_h$ can be defined in $\mathbb{R}^{q+1}$ or at least a small neighborhood around $\Omega_q$ in $\mathbb{R}^{q+1}$ under some mild conditions on $L$. See Section~\ref{Diff_Geo_Review} and condition (D2') in Section~\ref{Sec:Consist_Assump} for details.}, $h>0$ is the bandwidth parameter, and $c_{h,q}(L)$ is a normalizing constant satisfying
	\begin{equation}
	\label{asym_norm_const}
	c_{h,q}(L)^{-1} = \int_{\Omega_q} L\left(\frac{1-\bm{x}^T \bm{y}}{h^2} \right) \omega_q(d\bm{y}) =h^q \lambda_{h,q}(L) \asymp h^q \lambda_q(L)
	\end{equation}
	with $\lambda_{h,q}(L) = \bar{\omega}_{q-1} \int_0^{2h^{-2}} L(r) r^{\frac{q}{2}-1} (2-rh^2)^{\frac{q}{2}-1} dr$ and $\lambda_q(L) = 2^{\frac{q}{2}-1} \bar{\omega}_{q-1} \int_0^{\infty} L(r) r^{\frac{q}{2}-1} dr$; see (a) of Lemma~\ref{integ_lemma} in Appendix~\ref{Appendix:Thm2_pf} for details.

	As in Euclidean kernel smoothing, bandwidth selection is a critical component in determining the performance of directional KDEs. There is extensive literature \citep{KDE_Sphe1987,KDE_direct1988,Auto_bw_cir2008,KDE_torus2011,Oliveira2012,Exact_Risk_bw2013,Nonp_Dir_HDR2020} that investigates various reliable bandwidth selection mechanisms. On the contrary, kernel selection is less crucial, and a popular candidate is the so-called von Mises kernel $L(r) = e^{-r}$. Its name originates from the famous $q$-von Mises-Fisher (vMF) distribution on $\Omega_q$, which is denoted by $\text{vMF}(\bm{\mu}, \nu)$ and has the density
	\begin{equation}
	\label{vMF_density}
	f_{\text{vMF}}(\bm{x};\bm{\mu},\nu) = C_q(\nu) \cdot \exp(\nu \bm{\mu}^T \bm{x}) \quad \text{ with } \quad C_q(\nu) = \frac{\nu^{\frac{q-1}{2}}}{(2\pi)^{\frac{q+1}{2}} \mathcal{I}_{\frac{q-1}{2}}(\nu)},
	\end{equation}
	where $\bm{\mu} \in \Omega_q$ is the directional mean, $\nu \geq 0$ is the concentration parameter, and 
	$$\mathcal{I}_{\alpha}(\nu) = \frac{\left(\frac{\nu}{2} \right)^{\alpha}}{\pi^{\frac{1}{2}} \Gamma\left(\alpha +\frac{1}{2} \right)} \int_{-1}^1 (1-t^2)^{\alpha -\frac{1}{2}} \cdot e^{\nu t} dt$$ 
	is the modified Bessel function of the first kind of order $\nu$. See Figure~\ref{fig:vMF_density} for contour plots of a von Mises-Fisher density and a mixture of von Mises-Fisher densities on $\Omega_2$, respectively.
	
	\begin{figure}
		\captionsetup[subfigure]{justification=centering}
		\begin{subfigure}[t]{.5\textwidth}
			\centering
			\includegraphics[width=0.9\linewidth]{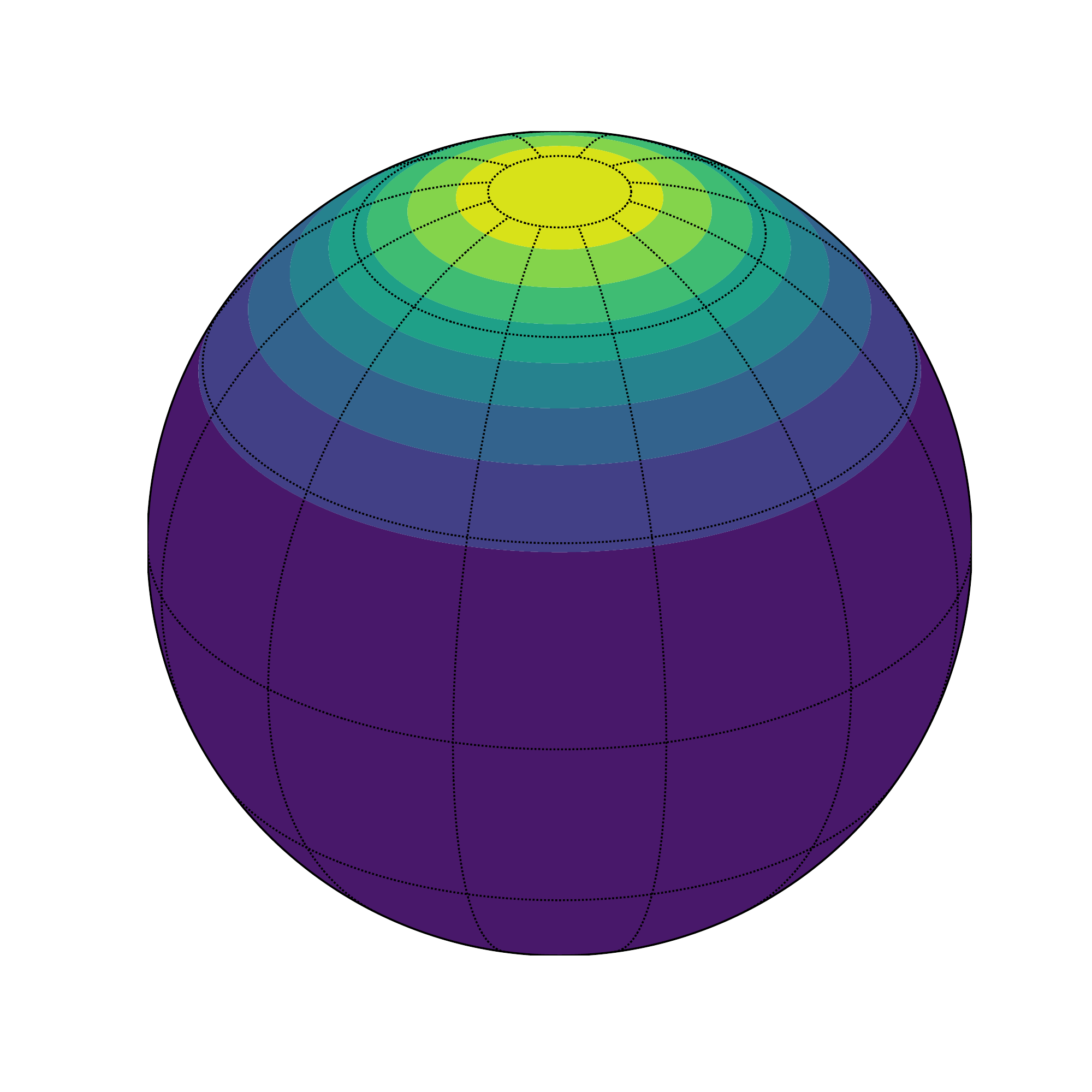}
			\caption{$f_{\text{vMF},2}(\bm{x};\bm{\mu}, \nu)$ with $\bm{\mu}=(0,0,1)$ and $\nu=4.0$}
		\end{subfigure}%
		\hspace{1em}
		\begin{subfigure}[t]{.5\textwidth}
			\centering
			\includegraphics[width=0.9\linewidth]{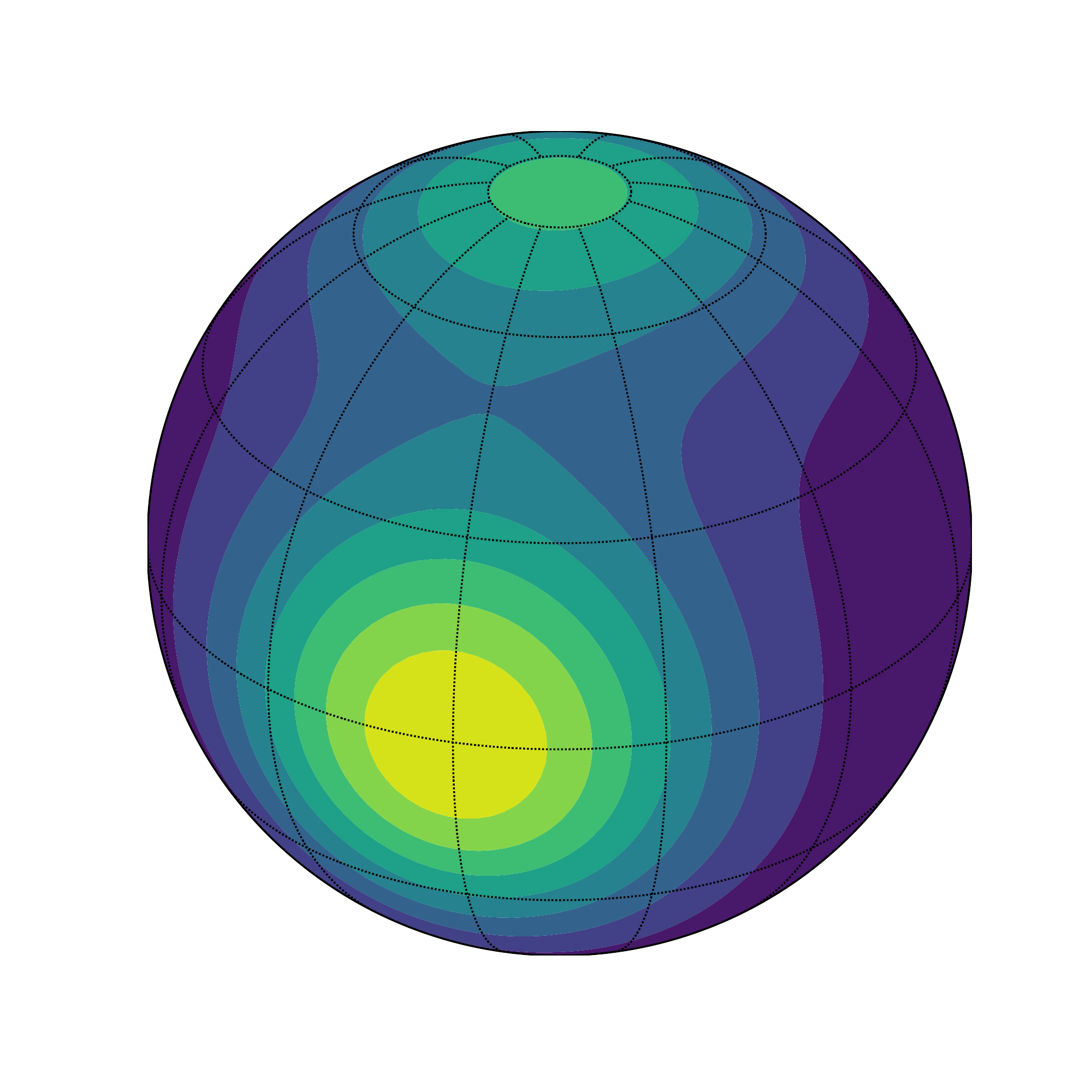}
			\caption{$\frac{2}{5}\cdot f_{\text{vMF},2}(\bm{x};\bm{\mu}_1, \nu_1) + \frac{3}{5} \cdot f_{\text{vMF},2}(\bm{x};\bm{\mu}_2, \nu_2)$ \\ with $\bm{\mu}_1=(0,0,1), \bm{\mu}_2=(1,0,0)$,\\ and $\nu_1=\nu_2=5.0$}
		\end{subfigure}
		\caption{Contour plots of a 2-von Mises-Fisher density and a mixture of 2-vMF densities}
		\label{fig:vMF_density}
	\end{figure}
	
	Using the von-Mises kernel, the directional KDE in (\ref{Dir_KDE}) becomes a mixture of $q$-von Mises-Fisher densities as follows:
	$$\hat{f}_h(\bm{x}) = \frac{1}{n} \sum_{i=1}^n f_{\text{vMF}}\left(\bm{x};\bm{X}_i, \frac{1}{h^2} \right)=\frac{1}{n(2\pi)^{\frac{q+1}{2}} \mathcal{I}_{\frac{q-1}{2}}(1/h^2) h^{q-1}} \sum_{i=1}^n \exp\left(\frac{\bm{x}^T\bm{X}_i}{h^2} \right).$$
	
	For a more detailed discussion of the statistical properties of the von Mises-Fisher distribution and directional KDE, we refer the interested reader to \cite{Mardia2000directional,spherical_EM,pewsey2021recent}.
	
	\subsection{Gradient and Hessian on a Sphere}
	\label{Diff_Geo_Review}
	
	For a function defined on a manifold, its gradient and Hessian are defined through the tangent space of the manifold. Whereas the formal definitions of the gradient and Hessian on a general manifold are often involved (see Appendix~\ref{sec::GH}), their representations are simple when the manifold is a (hyper)sphere $\Omega_q$.
	
	Let $T_{\bm x} \equiv T_{\bm x}(\Omega_q)$ be the tangent space of the sphere $\Omega_q$ at point ${\bm x}\in \Omega_q$.
	For the sphere $\Omega_q$, the tangent space has a simple representation in the ambient space $\mathbb{R}^{q+1}$ as follows:
	\begin{equation}
	T_{\bm x} \simeq \left\{\bm{v}\in \mathbb{R}^{q+1}: \bm{x}^T\bm{v}=0 \right\},
	\label{tangent_new}
	\end{equation}
	where $V_1 \simeq V_2$ signifies that the two vector spaces are isomorphic. In what follows, the expression $\bm{v}\in T_{\bm x}$ indicates that $\bm{v}$ is a vector tangent to $\Omega_q$ at $\bm{x}$.

    A \emph{geodesic} on $\Omega_q$ is a non-constant, parametrized curve $\gamma: [0,1] \to \Omega_q$ of constant speed and (locally) minimum length between two points on $\Omega_q$. It can be represented by part of a great circle on the sphere $\Omega_q$.
	For a smooth function $f: \Omega_q\to \R$, its differential in the (tangent) direction $\bm{v} \in T_{\bm x}$ with $\norm{\bm{v}}_2=1$ at point $\bm{x}\in\Omega_q$ is defined as follows. 
	We first define a geodesic curve $\alpha: (-\epsilon, \epsilon) \to \Omega_q$ with $\alpha(0)=\bm{x}$ and $\alpha'(0)=\bm{v}$.
	Then the differential (at $\bm x$) $df_{\bm{x}}:T_{\bm x}\rightarrow \R$ 
	is given by
	\begin{equation}
	df_{\bm{x}}(\bm{v}) = \frac{d}{dt} f(\alpha(t)) \Big|_{t=0}.
	\label{differential}
	\end{equation}
	With this, the \emph{Riemannian gradient} $\grad  f(\bm x)\in T_{\bm x}\subset \R^{q+1}$ is defined as 
	\begin{equation}
	df_{\bm{x}}(\bm{v}) =\langle \grad  f(\bm x), \bm{v}\rangle= \bm v^T \grad  f(\bm x).
	\label{grad_new}
	\end{equation}
	
	The Riemannian Hessian $\cH f(\bm x)\in T_{\bm x} \times T_{\bm x}$ is the second derivative of $f$ within the tangent space $T_{\bm x}$. We characterize its matrix representation as follows.
	Let ${\bm v},{\bm u}\in T_{\bm x}\subset\R^{q+1}$ be two unit vectors inside the tangent space $T_{\bm x}$.
	We consider two geodesic curves
	$\alpha,\beta : (-\epsilon, \epsilon) \to \Omega_q$ with $\alpha(0) = \beta(0)=\bm{x}$ and $\alpha'(0)=\bm{v}$ and $\beta'(0) = {\bm u}$.
	We define a second-order differential as
	$$d^2f_{\bm{x}}(\bm{v}, \bm{u}) = \frac{d}{dt} df_{\beta(t)}\left(\alpha'(t) \right)\Big|_{t=0}$$
	and the \emph{Riemannian Hessian} $\cH f(\bm x)$ is a $(q+1)\times (q+1)$ matrix satisfying 
	\begin{equation}
	d^2f_{\bm{x}}(\bm{v}, \bm{u}) = \langle\grad \langle \grad f, \bm{v}\rangle(\bm x) , \bm{u}\rangle=  \bm v^T \cH f(\bm x) \bm u
	\label{hess_new}
	\end{equation}
	and belongs to $T_{\bm x} \times T_{\bm x}$. 
	To ensure that $\cH f(\bm x)$ belongs to $T_{\bm x} \times T_{\bm x}$,
	it has to satisfy
	\begin{equation}
	\cH f(\bm x) = (I_{q+1}-\bm x \bm x^T)\cH f(\bm x) = \cH f(\bm x) (I_{q+1} - \bm x \bm x^T),
	\label{projection}
	\end{equation}
	where $I_{q+1}$ is the $(q+1)\times (q+1)$ identity matrix and $(I_{q+1}-\bm x \bm x^T)$ is a projection matrix onto the tangent space $T_{\bm x}$.
	Note that $d^2f_{\bm{x}}(\bm{v}, \bm{u}) = d^2f_{\bm{x}}(\bm{u}, \bm{v})$ can be easily verified.

	Although \eqref{grad_new} and \eqref{hess_new} define the Riemannian gradient and Riemannian Hessian on a sphere, 
	it is unclear how they are related to the total gradient operator $\nabla$, where $\nabla g(\bm x)\in\R^{q+1}$ and the $\ell$-th component is 
	$$[\nabla g(\bm x)]_\ell = \frac{d g(\bm x)}{dx_\ell}$$
	for any differentiable function $g:\R^{q+1}\rightarrow \R$.
	Whereas the total gradient $\nabla$ cannot be applied to a directional density (because it is only supported on $\Omega_q$),
	the directional KDE $\hat f_h$ 	is well-defined outside of $\Omega_q$ (after smoothly extending the domain of the kernel $L$ from $[0,\infty)$ to $\mathbb{R}$), and
	its total gradient $\nabla \hat f_h(\bm x) \in\R^{q+1}$ can be defined for any point $\bm x\in \R^{q+1}$. 
	
	\begin{figure}
		\centering
		\includegraphics[width=0.8\linewidth]{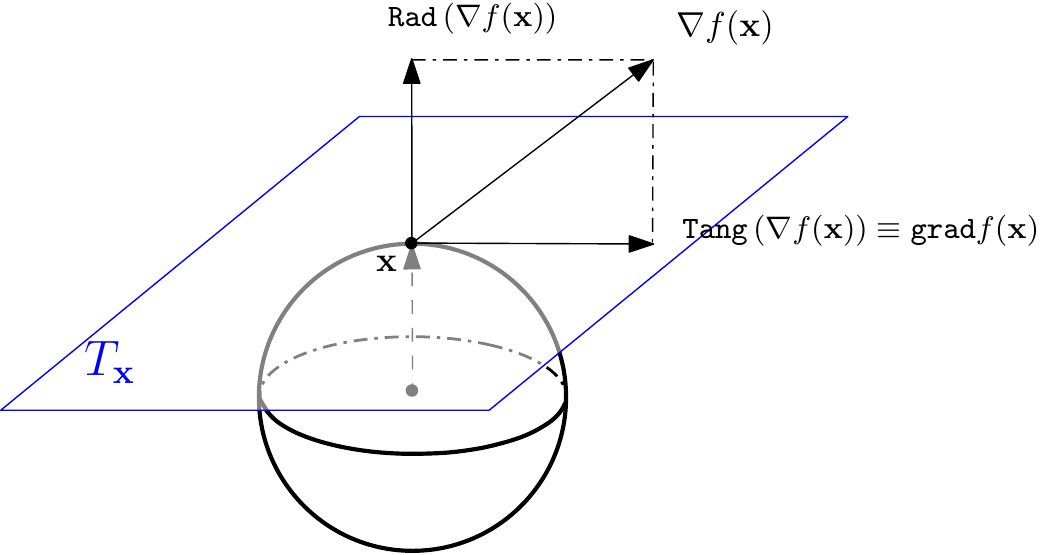}
		\caption{Visualization of a differential of the directional density $f$ on the unit sphere and its gradient}
		\label{fig:grad_proj}
	\end{figure}

	To associate the total gradient with the Riemannian gradient, we consider the following construction. Assume tentatively that $f$ is well-defined and smooth in $\R^{q+1}\backslash\{\bm 0\}$, not limited to $\Omega_q$.
	In this case, $\nabla f(\bm x)$ is well-defined $\R^{q+1}\backslash\{\bm 0\}$ and all subsequent derivations can also be applied to the directional KDE $\hat f_h$.
	For any point $\bm{x} \in \Omega_q$ and unit vector $\bm{v} \in T_{\bm{x}}$, we define a geodesic curve $\alpha: (-\epsilon, \epsilon) \to \Omega_q$ with $\alpha(0)=\bm{x}$ and $\alpha'(0)=\bm{v}$. Then, a differential of $f$ at $\bm{x}\in \Omega_q$ is a linear map characterized by
	\begin{equation*}
%	\label{diff_Dir}
	df_{\bm{x}}(\bm{v}) = \frac{d}{dt} f(\alpha(t)) \Big|_{t=0} = \nabla f(\alpha(t))^T \alpha'(t) \Big|_{t=0} = \nabla f(\bm{x})^T \alpha'(0) = \nabla f(\bm{x})^T \bm{v}
	\end{equation*}
	for any given $\bm{v} \in T_{\bm{x}}$. Thus, by the definition of the Riemannian gradient in \eqref{grad_new}, 
	$$
	df_{\bm{x}}(\bm{v})  = {\bm v}^T \grad f({\bm x})  = \nabla f(\bm{x})^T \bm{v} = \Tang\left(\nabla f(\bm{x})\right)^T \bm{v},$$
	and we conclude that 
	\begin{equation}
	\grad f({\bm x}) \equiv \Tang\left(\nabla f(\bm{x}) \right) = \left(I_{q+1}-\bm{x}\bm{x}^T \right)\nabla f(\bm{x}),
	\label{tangent}
	\end{equation} 
	which is the tangent component of the total gradient $\nabla f(\bm{x})$. 
	That is, the Riemannian gradient is the same as the tangent component of the total gradient.
	In addition, we can define the radial component of the total gradient as
	\begin{equation}
	\Rad(\nabla f(\bm{x})) = \nabla f(\bm{x}) - \Tang\left(\nabla f(\bm{x}) \right)  = \bm{x}\bm{x}^T \nabla f(\bm{x}).
	\label{radial}
	\end{equation}
	See Figure~\ref{fig:grad_proj} for a graphical illustration. 
	
	In the same context, we use the fact that $\alpha''(0) = -\bm x$ for the geodesic curve $\alpha$ and deduce that for any unit vector $\bm v\in T_{\bm x}\subset\R^{q+1}$,
	\begin{align}
	\label{Hess_curve}
	\begin{split}
	\bm v^T\mathcal{H} f(\bm x) \bm v &= \frac{d^2}{dt^2} f(\alpha(t)) \Big|_{t=0}\\
	&= \frac{d}{dt} \left[\nabla f(\alpha(t))^T \alpha'(t) \right]\Big|_{t=0}\\
	&\left(= \left[\sum_{i=1}^{q+1} \sum_{j=1}^{q+1} \frac{\partial^2 }{\partial x_i \partial x_j} f(\alpha(t)) \cdot \alpha_i'(t)\alpha_j'(t) + \sum_{i=1}^{q+1} \frac{\partial}{\partial x_i} f(\alpha(t)) \cdot \alpha_i''(t) \right]\Bigg|_{t=0}
	\right)\\
	&= \alpha'(0)^T \nabla\nabla f(\alpha(0)) \alpha'(0) + \nabla f(\alpha(0))^T \alpha''(0)\\
	&= \bm{v}^T \nabla\nabla f(\bm{x}) \bm{v} + \nabla f(\bm{x})^T \alpha''(0)\\
	&= \bm{v}^T (\nabla\nabla f(\bm{x}) - \nabla f(\bm{x})^T \bm x I_{q+1}) \bm{v}.
	\end{split}
	\end{align}
	One may conjecture that $(\nabla\nabla f(\bm{x}) - \nabla f(\bm{x})^T \alpha''(0))$ is 
	the Riemannian Hessian matrix.
	However, it does not satisfy the projection condition in Equation \eqref{projection}.
	To this end, we select
	\begin{equation}
	\label{Hess_Dir}
	\mathcal{H} f(\bm x) = (I_{q+1} - \bm{x}\bm{x}^T) \left[\nabla\nabla f(\bm{x}) - \nabla f(\bm{x})^T \bm{x} I_{q+1} \right](I_{q+1} - \bm{x}\bm{x}^T).
	\end{equation}
	One can verify that the Hessian matrix in \eqref{Hess_Dir}
	satisfies both \eqref{hess_new} and \eqref{projection};
	thus, it characterizes the relationship between the Riemannian Hessian and total gradient operator. More importantly, the Hessian matrix in \eqref{Hess_Dir} is indeed the Riemannian Hessian on $\Omega_q$. Detailed definitions of Riemannian Hessians can be found in Section 2 and 4.2 of \cite{Extrinsic_Look_Riem_Manifold}.

	\section{Mean Shift Algorithm with Directional Data}
	\label{Sec:MS_Dir}
	
	In this section, we present a detailed derivation of the mean shift algorithm with directional data. Given the directional KDE $\hat{f}_h(\bm{x})$ in \eqref{Dir_KDE}, \cite{vMF_MS2010,MSC_Dir2014} introduced a Lagrangian multiplier to maximize $\hat{f}_h(\bm{x})$ under the constraint $\bm{x}^T\bm{x}=1$ and derived the directional mean shift algorithm. To make a better comparison with the standard mean shift algorithm with Euclidean data, we provide an alternative derivation. 
	
	Given a Euclidean KDE of the form $\hat{p}_n(\bm{x}) = \frac{c_{k,d}}{nh^d} \sum\limits_{i=1}^n k\left(\norm{\frac{\bm{x}-\bm{X}_i}{h}}_2^2 \right)$ with a differentiable kernel profile $k:[0,\infty) \to [0,\infty)$, its (total) gradient has the following decomposition:
	\begin{equation}
		\label{KDE_grad_Euclidean}
		\nabla \hat{p}_n(\bm{x}) = \underbrace{\frac{2c_{k,d}}{nh^{d+2}} \left[\sum_{i=1}^n g\left(\norm{\frac{\bm{x}-\bm{X}_i}{h}}_2^2 \right) \right]}_{\text{term 1}} \underbrace{\left[\frac{\sum_{i=1}^n \bm{X}_i g\left(\norm{\frac{\bm{x}-\bm{X}_i}{h}}_2^2 \right)}{\sum_{i=1}^n g\left(\norm{\frac{\bm{x}-\bm{X}_i}{h}}_2^2 \right)} -\bm{x}\right]}_{\text{term 2}},
	\end{equation}
	where $g(x)=-k'(x)$ is the derivative of the selected kernel profile. As noted by \cite{MS2002}, the first term is proportional to the density estimate at $\bm{x}$ with the ``kernel'' $G(\bm{x})=c_{g,d}\cdot g(\norm{\bm{x}}_2^2)$, and the second term is the so-called \emph{mean shift} vector, which points toward the direction of maximum increase in the density estimator $\hat{p}_n$. 
	Thus, the standard mean shift algorithm with Euclidean data translates each query point according to the corresponding mean shift vector, which leads to a converging path to a local mode of $\hat{p}_n$ under some conditions \citep{MS2007_pf,MS2015_Gaussian,Ery2016}. 
	
	The key insight in our derivation of the directional mean shift algorithm is the following alternative representation of the directional KDE as:
	\begin{equation}
	\label{Dir_KDE2}
	\tilde{f}_h(\bm{x}) = \frac{c_{h,q}(L)}{n} \sum_{i=1}^n L\left(\frac{1}{2} \norm{\frac{\bm{x} -\bm{X}_i}{h}}_2^2 \right),
	\end{equation}
	given a directional random sample $\bm{X}_1,...,\bm{X}_n \in \Omega_q$. Recall that the original directional KDE in \eqref{Dir_KDE} is $\hat{f}_h(\bm{x}) = \frac{c_{h,q}(L)}{n} \sum\limits_{i=1}^n L\left(\frac{1-\bm{x}^T \bm{X}_i}{h^2} \right). $
	Both $\hat f_h$ and $\tilde f_h$ can be defined on any point in $\R^{q+1}\backslash \{\bm 0\}$.
	Although $\hat f_h(\bm{x}) \neq \tilde f_h(\bm{x})$ for $\bm{x}\notin \Omega_q$, their function values are identical on the sphere; that is, 
	\begin{equation}
	\hat f_h(\bm{x}) = \tilde f_h(\bm{x}),\quad \forall \bm{x}\in \Omega_q
	\label{equiv}
	\end{equation}	
	due to the fact that 
	$\frac{1}{2}\norm{\bm{x}-\bm{X}_i}_2^2 = 1-\bm{x}^T\bm{X}_i$ for any $\bm{x}\in \Omega_q$.

	Since the two directional KDEs are the same on $\Omega_q$,
	either of them can be used to express our density estimator. 
	The power of the expression $\tilde{f}_h$ is that its total gradient has a similar decomposition as the total gradient of the Euclidean KDE (cf. \eqref{KDE_grad_Euclidean}):
	\begin{align}
	\label{Dir_KDE_grad}
	\begin{split}
	\nabla \tilde{f}_h(\bm{x}) &= \frac{c_{h,q}(L)}{nh^2} \sum_{i=1}^n (\bm{x} -\bm{X}_i) \cdot L'\left(\frac{1}{2} \norm{\frac{\bm{x} -\bm{X}_i}{h}}_2^2 \right) \\
	&= \underbrace{\frac{c_{h,q}(L)}{nh^2} \left[\sum_{i=1}^n -L'\left(\frac{1}{2} \norm{\frac{\bm{x} -\bm{X}_i}{h}}_2^2 \right) \right]}_{\text{term 1}} \cdot \underbrace{\left[\frac{\sum_{i=1}^n \bm{X}_i \cdot L'\left(\frac{1}{2} \norm{\frac{\bm{x} -\bm{X}_i}{h}}_2^2 \right)}{\sum_{i=1}^n L'\left(\frac{1}{2} \norm{\frac{\bm{x} -\bm{X}_i}{h}}_2^2 \right)} -\bm{x}\right]}_{\text{term 2}}.
	\end{split}
	\end{align}

	Similar to the density gradient estimator with Euclidean data (cf. Equation~\eqref{KDE_grad_Euclidean}), the first term of the product in (\ref{Dir_KDE_grad}) can be viewed as a proportional form of the directional density estimate at $\bm{x}$ with ``kernel'' $G(r) = -L'(r)$:
	\begin{align}
	\label{prod1}
	\begin{split}
	\tilde{f}_{h,G}(\bm{x}) &= \frac{c_{h,q}(G)}{n} \sum_{i=1}^n -L'\left(\frac{1}{2} \norm{\frac{\bm{x} -\bm{X}_i}{h}}_2^2 \right) = \frac{c_{h,q}(G)}{n} \sum_{i=1}^n -L'\left(\frac{1-\bm{x}^T\bm{X}_i}{h^2} \right)
	\end{split}
	\end{align}
	given that $-L'(r)$ is non-negative on $[0,\infty)$. Some commonly used kernel functions, such as the von-Mises kernel $L(r)=e^{-r}$, easily satisfy this condition. The second term of the product in (\ref{Dir_KDE_grad}) is indeed the \emph{directional mean shift} vector
	\begin{equation}
	\label{mean_shift}
	\Xi_h(\bm{x}) =\frac{\sum_{i=1}^n \bm{X}_i  L'\left(\frac{1}{2} \norm{\frac{\bm{x} -\bm{X}_i}{h}}_2^2 \right)}{\sum_{i=1}^n L'\left(\frac{1}{2} \norm{\frac{\bm{x} -\bm{X}_i}{h}}_2^2 \right)} -\bm{x}= \frac{\sum_{i=1}^n \bm{X}_i L'\left(\frac{1-\bm{x}^T\bm{X}_i}{h^2} \right)}{\sum_{i=1}^n L'\left(\frac{1-\bm{x}^T\bm{X}_i}{h^2} \right)} -\bm{x},
	\end{equation}
	which is the difference between a weighted sample mean with weights $\frac{L'\left(\frac{1-\bm{x}^T \bm{X}_i}{h^2} \right)}{\sum\limits_{i=1}^n  L'\left(\frac{1-\bm{x}^T \bm{X}_i}{h^2} \right)}$, $i=1,...,n$, and $\bm{x}$, the current query point of the directional density estimation. It is worth mentioning that these weights are strictly positive when the von-Mises kernel $L(r)=e^{-r}$ is applied. From Equations (\ref{prod1}) and (\ref{mean_shift}), the total gradient estimator at $\bm{x}$ becomes
	$$\nabla \tilde{f}_h(\bm{x}) = \frac{c_{h,q}(L)}{c_{h,q}(G) h^2} \cdot \tilde{f}_{h,G}(\bm{x}) \cdot \Xi_h(\bm{x}),$$
	yielding
	$$\Xi_h(\bm{x}) = \frac{c_{h,q}(G) h^2}{c_{h,q}(L)} \cdot \frac{\nabla \tilde{f}_h(\bm{x})}{\tilde{f}_{h,G}(\bm{x})}.$$
	As is illustrated in \eqref{tangent}, the total gradient of the directional KDE at $\bm{x}$, $\nabla \tilde{f}_h(\bm{x})$, becomes the Riemannian gradient of $\tilde{f}_h(\bm{x})=\hat{f}_h(\bm{x})$ on $\Omega_q$ after being projected onto the tangent space $T_{\bm{x}}$. This suggests that the directional mean shift vector $\Xi_h(\bm{x})$, which is parallel to the total gradient of $\tilde{f}_h$ at $\bm{x}$, points in the direction of maximum increase in the estimated density $\tilde{f}_h$ after being projected onto the tangent space $T_{\bm{x}}$. 
	However, due to the manifold structure of $\Omega_q$, translating a point $\bm{x} \in \Omega_q$ in the mean shift direction $\Xi_h(\bm{x})$ deviates the point from $\Omega_q$. 
	We thus project the translated point $\bm x+\Xi_h(\bm{x})$ onto $\Omega_q$ by a simple standardization: $\bm x+\Xi_h(\bm{x}) \mapsto \frac{\bm x+\Xi_h(\bm{x})}{\norm{\bm x+\Xi_h(\bm{x})}_2}$.
	In summary, starting at point $\bm x$, the directional mean shift algorithm moves this point to a new location $\frac{\bm x+\Xi_h(\bm{x})}{\norm{\bm x+\Xi_h(\bm{x})}_2}$.
	This movement creates a path leading to a local mode of the estimated directional density under suitable conditions (Theorems \ref{MS_asc} and \ref{MS_conv}).
	
	\begin{algorithm}[t]
		\caption{Mean Shift Algorithm with Directional Data}
		\label{Algo:MS}
		\begin{algorithmic}
			\State \textbf{Input}: 
			\begin{itemize}
				\item Directional data sample $\bm{X}_1,...,\bm{X}_n \sim f(\bm{x})$ on $\Omega_q$.
				\item The smoothing bandwidth $h$.
				\item An initial point $\hat{\bm{y}}_0 \in \Omega_q$ and the precision threshold $\epsilon >0$.
			\end{itemize} 
			\While {$1- \hat{\bm{y}}_{s+1}^T \hat{\bm{y}}_s > \epsilon$}
			\State 
			\begin{equation}
			\hat{\bm{y}}_{s+1} = -\frac{\sum_{i=1}^n \bm{X}_i L'\left(\frac{1-\hat{\bm{y}}_s^T\bm{X}_i}{h^2} \right) }{\norm{\sum_{i=1}^n \bm{X}_i L'\left(\frac{1-\hat{\bm{y}}_s^T\bm{X}_i}{h^2} \right)}_2}
			\label{fix_point_eq}
			\end{equation}
			\EndWhile
			\State \textbf{Output}: A candidate local mode of directional KDE, $\hat{\bm{y}}_s$.
		\end{algorithmic}
	\end{algorithm}
	
	We can encapsulate the directional mean shift algorithm into a single fixed-point equation. Let $\left\{\hat{\bm{y}}_s\right\}_{s=0}^{\infty} \subset \Omega_q$ denote the path of successive points defined by the directional mean shift algorithm, where $\hat{\bm{y}}_0$ is the initial point of the iteration. Translating the query point $\hat{\bm{y}}_s$ by the directional mean shift vector \eqref{mean_shift} at step $s$ leads to
	$$\Xi_h\left(\hat{\bm{y}}_s \right) + \hat{\bm{y}}_s = \frac{\sum_{i=1}^n \bm{X}_i L'\left(\frac{1-\hat{\bm{y}}_s^T\bm{X}_i}{h^2} \right)}{\sum_{i=1}^n L'\left(\frac{1-\hat{\bm{y}}_s^T\bm{X}_i}{h^2} \right)}.$$	
	When $L(r)$ is decreasing, $L'(r)$ is non-positive on $[0,\infty)$ and
	$$\norm{\Xi_h\left(\hat{\bm{y}}_s \right) + \hat{\bm{y}}_s}_2 = \frac{\norm{\sum_{i=1}^n \bm{X}_i L'\left(\frac{1-\hat{\bm{y}}_s^T\bm{X}_i}{h^2} \right) }_2}{\left|\sum_{i=1}^n L'\left(\frac{1-\hat{\bm{y}}_s^T\bm{X}_i}{h^2} \right) \right|} = -\frac{\norm{\sum_{i=1}^n \bm{X}_i L'\left(\frac{1-\hat{\bm{y}}_s^T\bm{X}_i}{h^2} \right) }_2}{\sum_{i=1}^n L'\left(\frac{1-\hat{\bm{y}}_s^T\bm{X}_i}{h^2} \right)}$$
	given that $\sum\limits_{i=1}^n L'\left(\frac{1-\bm{y}_s^T\bm{X}_i}{h^2} \right) \neq 0$. (Here $L'(r)$ can be replaced by subgradients at non-differentiable points of $L$. See also Remark~\ref{Diff_Relax}.) Again, many commonly used kernel functions, such as the von-Mises kernel $L(r)=e^{-r}$, have nonzero derivatives on $[0,\infty)$ and satisfy this mild condition. Therefore,
	\begin{equation*}
	% \label{fix_point}
	\hat{\bm{y}}_{s+1} = \frac{\Xi_h\left(\hat{\bm{y}}_s \right) + \hat{\bm{y}}_s}{\norm{\Xi_h\left(\hat{\bm{y}}_s \right) + \hat{\bm{y}}_s}_2} = -\frac{\sum_{i=1}^n \bm{X}_i L'\left(\frac{1-\hat{\bm{y}}_s^T\bm{X}_i}{h^2} \right) }{\norm{\sum_{i=1}^n \bm{X}_i L'\left(\frac{1-\hat{\bm{y}}_s^T\bm{X}_i}{h^2} \right)}_2}
	\end{equation*}
	is the resulting fixed-point equation for $s=0,1,...$, whose right-hand side is a standardized weighted sample mean at $\hat{\bm{y}}_s$ computed with ``kernel'' $G(r)=-L'(r)$. The entire mean shift algorithm with directional data is summarized in Algorithm \ref{Algo:MS} (see also Figure~\ref{fig:MS_One_Step} for a graphical illustration).
		
	\begin{figure}
		\centering
		\includegraphics[width=0.7\linewidth]{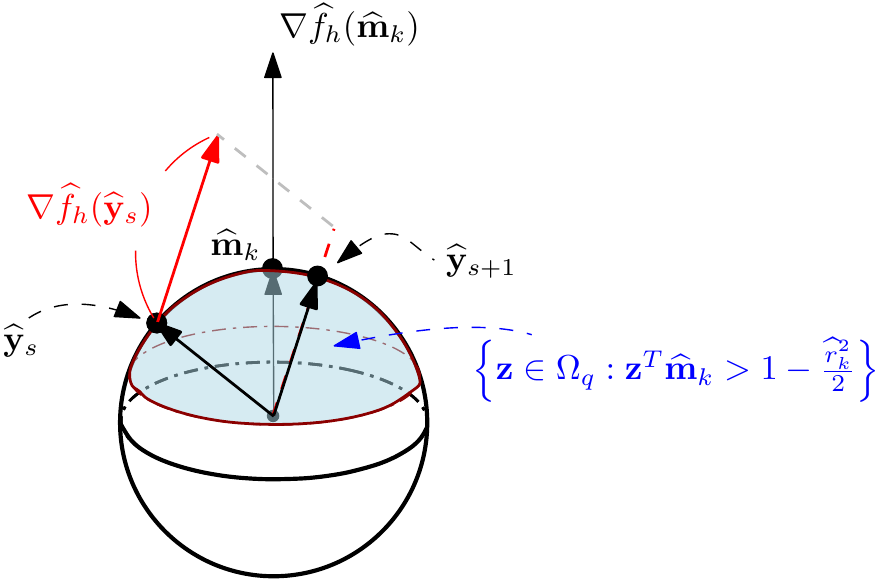}
		\caption{Illustration of one-step iteration of Algorithm \ref{Algo:MS}}
		\label{fig:MS_One_Step}
	\end{figure}
	
	Analogous to the mean shift algorithm with Euclidean data, Algorithm \ref{Algo:MS} can be leveraged for mode seeking and clustering with directional data.
	We derive statistical and computational learning theory for mode seeking in Sections \ref{Sec:grad_Hess_consist} and \ref{Sec:Algo_conv}. 
	For clustering, we demonstrate with both simulated and real-world data sets that the algorithm can be used to cluster directional data in Section~\ref{Sec:Experiments}.
	It should also be noted that the directional mean shift algorithm can be viewed as a gradient ascent method on $\Omega_q$ with an adaptive step size; see Section~\ref{sec:linear} for details.
	
	More importantly, similar to the standard mean shift algorithm with Euclidean data, the directional mean shift algorithm has several advantages over a regular gradient ascent method.
	First, the directional mean shift algorithm requires no tuning of the step size parameter, yet exhibits mathematical simplicity when it is written as the fixed-point iteration \eqref{fix_point_eq}. Second, the algorithm does not need to estimate the normalizing constant $c_{h,q}(L)$ of the directional KDE in its application. Specifically, in order to identify local modes of the directional KDE using our algorithm, it is only necessary to specify the directional kernel $L$ up to a constant. This avoids additional computational cost in estimating the normalizing constant $c_{h,q}(L)$ for the kernel, because the constant $c_{h,q}(L)$ often involves complicated functions for high dimensional directional data. For instance, estimating the normalizing constant of the von Mises kernel involves an approximation of a modified Bessel function of the first kind, though several efficient algorithms have been developed; see, for instance, \cite{Sra2011}.

	\section{Statistical Learning Theory of Directional KDE and its Derivatives}
	\label{Sec:grad_Hess_consist}

	Because the (directional) mean shift algorithm is inspired by a gradient ascent method, we study the gradient and Hessian systems of the two estimators $\hat f_h$ and $\tilde f_h$.

	\subsection{Gradient and Hessian of Directional KDEs}
	\label{Sec:grad_Hess_Dir}
 	
	We have demonstrated that it is valid to deduce two mathematically equivalent directional KDEs (\ref{Dir_KDE}) and (\ref{Dir_KDE2}) for estimating the true directional density $f$. 
	Somewhat surprisingly, the corresponding total gradients are different in general. The total gradient of $\tilde f_h$ is 
	\begin{equation}
	\label{Dir_KDE_grad1}
	\nabla \tilde{f}_h(\bm{x}) = \frac{c_{h,q}(L)}{nh^2} \sum_{i=1}^n (\bm{x} -\bm{X}_i) \cdot L'\left(\frac{1}{2} \norm{\frac{\bm{x} -\bm{X}_i}{h}}_2^2 \right),
	\end{equation}
	while the total gradient of $\hat{f}_h$ is
	\begin{equation}
	\label{Dir_KDE_grad2}
	\nabla \hat{f}_h(\bm{x}) = -\frac{c_{h,q}(L)}{nh^2} \sum\limits_{i=1}^n  \bm{X}_i  L'\left(\frac{1-\bm{x}^T\bm{X}_i}{h^2} \right).
	\end{equation}
	Although the total gradients $\nabla \tilde{f}_h$ and $\nabla \hat{f}_h$ have different values even on $\Omega_q$, they both play a vital role in the directional mean shift algorithm (Algorithm~\ref{Algo:MS}). On the one hand, we have argued in Section \ref{Sec:MS_Dir} that $\nabla \tilde{f}_h(\bm{x})$ has a similar decomposition as the total gradient of the Euclidean KDE, and derived Algorithm~\ref{Algo:MS} based on $\nabla \tilde{f}_h(\bm{x})$. On the other hand, given the form of $\nabla \hat{f}_h(\bm{x})$ in \eqref{Dir_KDE_grad2}, the fixed-point equation \eqref{fix_point_eq} in Algorithm~\ref{Algo:MS} can be written as
	\begin{equation}
	\label{fix_point_grad}
	\hat{\bm{y}}_{s+1} = \frac{\nabla \hat{f}_h(\hat{\bm{y}}_s)}{\norm{\nabla \hat{f}_h(\hat{\bm{y}}_s)}_2}.
	\end{equation}
	As argued in Section \ref{Diff_Geo_Review} and \eqref{radial}, any total gradient at $\bm{x} \in \Omega_q$ can be decomposed into radial and tangent components. Therefore, the total gradient $\nabla \tilde{f}_h(\bm{x})$ is decomposed as
	\begin{align*}
	\nabla \tilde{f}_h(\bm{x}) &= \bm{x}\bm{x}^T \nabla \tilde{f}_h(\bm{x}) + \left(I_{q+1} - \bm{x}\bm{x}^T \right)\nabla \tilde{f}_h(\bm{x})\\
	&= \frac{c_{h,q}(L)}{nh^2} \sum_{i=1}^n \bm{x} \left(1 -\bm{x}^T\bm{X}_i \right) L'\left(\frac{1 -\bm{x}^T \bm{X}_i}{h^2} \right)\\
	&\quad + \frac{c_{h,q}(L)}{nh^2} \sum_{i=1}^n \left(\bm{x} \cdot \bm{x}^T \bm{X}_i  - \bm{X}_i \right) L'\left(\frac{1-\bm{x}^T\bm{X}_i}{h^2} \right)\\
	&\equiv \Rad\left(\nabla \tilde{f}_h(\bm{x}) \right) + \Tang\left(\nabla \tilde{f}_h(\bm{x}) \right),
	\end{align*}
	where $\Rad$ and $\Tang$ are the radial and tangent components of the total gradient, as in \eqref{radial} and \eqref{tangent}. Similarly, we decompose $\nabla \hat{f}_h(\bm{x})$ as 
	\begin{align*}
	\nabla \hat{f}_h(\bm{x}) &= \bm{x}\bm{x}^T \nabla \hat{f}_h(\bm{x}) + \left(I_{q+1} -\bm{x}\bm{x}^T \right) \nabla \hat{f}_h(\bm{x})\\
	&= -\frac{c_{h,q}(L)}{nh^2} \sum_{i=1}^n \bm{x}\bm{x}^T\bm{X}_i \cdot L'\left(\frac{1 -\bm{x}^T \bm{X}_i}{h^2} \right) \\
	&\quad + \frac{c_{h,q}(L)}{nh^2} \sum_{i=1}^n \left(\bm{x} \cdot \bm{x}^T \bm{X}_i  - \bm{X}_i \right) \cdot L'\left(\frac{1-\bm{x}^T\bm{X}_i}{h^2} \right)\\
	&\equiv \Rad\left(\nabla \hat{f}_h(\bm{x}) \right) + \Tang\left(\nabla \hat{f}_h(\bm{x}) \right).
	\end{align*}
	Therefore, the difference between the two total gradients $\nabla \tilde{f}_h(\bm{x})$ and $\nabla \hat{f}_h(\bm{x})$ is
	\begin{equation}
	\label{tot_grad_diff}
	\nabla \tilde{f}_h(\bm{x}) - \nabla \hat{f}_h(\bm{x}) = \frac{c_{h,q}(L)}{nh^2} \sum_{i=1}^n  L'\left(\frac{1 -\bm{x}^T \bm{X}_i}{h^2} \right) \cdot \bm{x},
	\end{equation}
	which is parallel to the radial direction $\bm{x}$. This implies that given kernel $L$, the Riemannian gradients of the two estimators are the same, that is,
	\begin{align}
	\label{tang_grad}
	\begin{split}
	\grad \hat{f}_h(\bm{x}) \equiv \Tang\left(\nabla \hat{f}_h(\bm{x}) \right) &= \nabla \hat{f}_h(\bm{x}) - \left[\bm{x}^T \nabla \hat{f}_h(\bm{x}) \right]\cdot \bm{x}\\
	&= \frac{c_{h,q}(L)}{nh^2} \sum_{i=1}^n \left(\bm{x}^T \bm{X}_i \cdot \bm{x} - \bm{X}_i \right) \cdot L'\left(\frac{1-\bm{x}^T\bm{X}_i}{h^2} \right)\\
	&=\grad \tilde{f}_h(\bm{x}) \equiv \Tang\left(\nabla \tilde{f}_h(\bm{x}) \right).
	\end{split}
	\end{align}
	Later, we demonstrate in Theorems \ref{pw_conv_tang} and \ref{unif_conv_tang} that the Riemannian gradients of $\hat f_h$ and $\tilde f_h$ are consistent estimators of the Riemannian gradient of the underlying density $f$ that generates directional data. One can also deduce the same fixed-point equation \eqref{fix_point_eq} (or equivalently \eqref{fix_point_grad}) from the Riemannian/tangent gradient estimator $\grad \hat{f}_h(\bm{x})\equiv \Tang\left(\nabla \hat{f}_h(\bm{x}) \right)$, although the assumption on the directional estimated density $\hat{f}_h$ is stricter. See Appendix~\ref{Appendix:Tang_MS_DR} for detailed derivations.
	
	Having demonstrated that the Riemannian gradients of $\tilde f_h$ and $\hat f_h$ are identical, we now study the Riemannian Hessians of $\tilde f_h$ and $\hat f_h$.
	By \eqref{Hess_Dir}, the Riemannian Hessian of $\hat f_h$ is associated with the total gradient operator $\nabla$ via
	\begin{equation}
	\label{Hess_est}
	\mathcal{H} \hat f_h(\bm x) = (I_{q+1} - \bm{x}\bm{x}^T) \left[\nabla\nabla \hat f_h(\bm{x}) - \nabla \hat f_h(\bm{x})^T \bm{x} I_{q+1} \right](I_{q+1} - \bm{x}\bm{x}^T)
	\end{equation}
	and similarly for $\mathcal{H} \tilde f_h(\bm x)$. The following lemma shows that when a directional kernel $L$ is smooth, the two Riemannian Hessians are identical.
	
	\begin{lemma}
	\label{lem:Hessian}
	Assume that kernel $L$ is twice continuously differentiable.
	Then,
	$$\mathcal{H}\tilde f_h(\bm x)=\mathcal{H}\hat f_h(\bm x)$$
	for any point $\bm{x}\in\Omega_q$.
	\end{lemma}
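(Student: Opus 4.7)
The plan is to show directly that $g \equiv \tilde f_h - \hat f_h$ has vanishing Riemannian Hessian on $\Omega_q$. Using $\|\bm X_i\|_2 = 1$, one gets $\frac{1}{2}\|\bm x - \bm X_i\|_2^2/h^2 = \frac{1-\bm x^T\bm X_i}{h^2} + \frac{\|\bm x\|_2^2 - 1}{2h^2}$, so if we set $u_i(\bm x) = \frac{1-\bm x^T\bm X_i}{h^2}$ and $s(\bm x) = \frac{\|\bm x\|_2^2 - 1}{2h^2}$, then
\begin{equation*}
g(\bm x) = \frac{c_{h,q}(L)}{n}\sum_{i=1}^n \bigl[L(u_i(\bm x) + s(\bm x)) - L(u_i(\bm x))\bigr].
\end{equation*}
Note that on $\Omega_q$ we have $s(\bm x) = 0$, $\nabla u_i = -\bm X_i/h^2$, $\nabla s = \bm x/h^2$, $\nabla\nabla u_i = 0$, and $\nabla\nabla s = I_{q+1}/h^2$.

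Next I would differentiate $g$ twice and evaluate at a generic $\bm x \in \Omega_q$. Since $s=0$ there, the $L'(u_i+s)$ and $L''(u_i+s)$ factors collapse to $L'(u_i)$ and $L''(u_i)$, and a direct chain-rule calculation yields
\begin{align*}
\nabla g(\bm x) &= \tfrac{c_{h,q}(L)}{nh^2}\Bigl[\sum_i L'(u_i)\Bigr]\bm x, \\
\nabla\nabla g(\bm x) &= \tfrac{c_{h,q}(L)}{n}\sum_i\Bigl[-\tfrac{L''(u_i)}{h^4}\bigl(\bm X_i\bm x^T + \bm x\bm X_i^T - \bm x\bm x^T\bigr) + \tfrac{L'(u_i)}{h^2}I_{q+1}\Bigr].
\end{align*}
Because $\bm x^T\bm x = 1$, the term $\nabla g(\bm x)^T\bm x \cdot I_{q+1}$ is exactly $\tfrac{c_{h,q}(L)}{nh^2}[\sum_i L'(u_i)]\,I_{q+1}$, which cancels the $L'(u_i)$ contribution to $\nabla\nabla g$. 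Thus
\begin{equation*}
\nabla\nabla g(\bm x) - \nabla g(\bm x)^T\bm x\, I_{q+1} = -\tfrac{c_{h,q}(L)}{nh^4}\sum_i L''(u_i)\bigl(\bm X_i\bm x^T + \bm x\bm X_i^T - \bm x\bm x^T\bigr).
\end{equation*}

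Finally, I would apply the projection $(I_{q+1} - \bm x\bm x^T)$ from both sides as required by the representation \eqref{Hess_Dir}. Each of the three rank-one pieces in the bracketed matrix has either $\bm x$ on its right or $\bm x^T$ on its left (the $\bm x\bm x^T$ piece has both), and on $\Omega_q$ we have $(I_{q+1} - \bm x\bm x^T)\bm x = 0$ and $\bm x^T(I_{q+1} - \bm x\bm x^T) = 0$. So the projections annihilate every summand, giving $\mathcal{H}g(\bm x) = 0$, i.e.\ $\mathcal{H}\tilde f_h(\bm x) = \mathcal{H}\hat f_h(\bm x)$ for all $\bm x \in \Omega_q$.

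The main obstacle is bookkeeping: one must carefully separate the $L''$ and $L'$ contributions in $\nabla\nabla g$ and verify that the $L'$ piece cancels exactly against the correction term $\nabla g(\bm x)^T\bm x\, I_{q+1}$ prescribed by \eqref{Hess_Dir}, leaving only the three rank-one pieces that happen to lie in the kernel of the tangential projection. The twice continuous differentiability of $L$ is used precisely to ensure that $\nabla\nabla g$ is well-defined and that $L''(u_i)$ makes sense in the calculation above.
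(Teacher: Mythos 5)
Your proof is correct. You replace the paper's side-by-side comparison of $\nabla\nabla\tilde f_h$ and $\nabla\nabla\hat f_h$ with a direct computation of $\mathcal{H}g$ for the difference $g=\tilde f_h-\hat f_h$, exploiting the decomposition $g(\bm x)=\tfrac{c_{h,q}(L)}{n}\sum_i\bigl[L(u_i(\bm x)+s(\bm x))-L(u_i(\bm x))\bigr]$ where $s(\bm x)=\tfrac{\|\bm x\|_2^2-1}{2h^2}$ vanishes identically on $\Omega_q$ and has purely radial gradient $\nabla s=\bm x/h^2$. The paper instead expands $\nabla\nabla\tilde f_h(\bm x)$ and $\nabla\nabla\hat f_h(\bm x)$ from their respective definitions, substitutes them into the sandwiched form \eqref{Hess_Dir}, and rearranges until the two expressions visibly match; the algebraic content is identical, but your route isolates the source of the agreement more cleanly. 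By working with $g$ you reveal that the Riemannian gradient of the discrepancy is radial and that the $L'$ contribution to $\nabla\nabla g$ is cancelled exactly by the correction term $\nabla g(\bm x)^T\bm x\,I_{q+1}$ in \eqref{Hess_Dir}, with the leftover $L''$ pieces all annihilated by the tangential projection because each carries $\bm x$ or $\bm x^T$ on one side. That structure is present in the paper's calculation too, but it is buried in the bookkeeping; your formulation makes it explicit and also makes it obvious why $L$ must be twice continuously differentiable (so $L''(u_i)$ is defined and the chain rule applies). Both proofs are about the same length; yours trades one round of bookkeeping (carrying two Hessian matrices) for the small extra step of introducing $s$, and in return exposes the mechanism of the cancellation.
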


	The proof of Lemma~\ref{lem:Hessian} can be found in Appendix~\ref{Appendix:lem1_pf}. As a result, we can compute the Riemannian Hessian estimator at point $\bm{x}\in \Omega_q$ 
	based on either $ \tilde{f}_h(\bm{x})$ or $ \hat{f}_h(\bm{x})$, which will produce the same expression. 
	Later, in Theorems \ref{pw_conv_tang} and \ref{unif_conv_tang}, we demonstrate that $\mathcal{H} \hat{f}_h(\bm x)$ is a (uniformly) consistent estimator of $\mathcal{H} f(\bm x)$ defined in (\ref{Hess_Dir}).

	\subsection{Assumptions}
	\label{Sec:Consist_Assump}

	To apply the total gradient operator $\nabla$ to a directional density $f$ that generates data, we extend it from $\Omega_q$ to $\mathbb{R}^{q+1} \setminus\{\bm{0}\}$ by defining $f(\bm{x}) \equiv f\left(\frac{\bm{x}}{||\bm{x}||_2} \right)$ for all $\bm{x}\in \mathbb{R}^{q+1} \setminus \{\bm{0} \}$. In this extension, we assume that the total gradient $\nabla f(\bm{x}) = \left(\frac{\partial f(\bm{x})}{\partial x_1},..., \frac{\partial f(\bm{x})}{\partial x_{q+1}} \right)^T$ and total Hessian matrix $\nabla\nabla f(\bm{x}) = \left(\frac{\partial^2 f(\bm{x})}{\partial x_i \partial x_j} \right)_{1\leq i,j \leq q+1}$ in $\mathbb{R}^{q+1}$ exist, and are continuous on $\mathbb{R}^{q+1} \setminus \{\bm{0} \}$ and square integrable on $\Omega_q$. 
	This extension has also been used by \cite{Zhao2001,Dir_Linear2013,Exact_Risk_bw2013}. 
	Note that the Riemannian gradient and Hessian are invariant under this extension.
	
	To establish the consistency results of gradient and Hessian estimators (cf. \eqref{tang_grad} and \eqref{Hess_est} or \eqref{Hess_KDE} in its explicit form), we consider the following assumptions.
	\begin{itemize}
		\item {\bf (D1)} Assume that the extended density function $f$ is at least three times continuously differentiable on $\mathbb{R}^{q+1}\setminus \left\{\bm{0}\right\}$ and that its derivatives are square integrable on $\Omega_q$.
		\item {\bf (D2)} Assume that $L: [0,\infty) \to [0,\infty)$ is a bounded and Riemann integrable function such that 
		$$0< \int_0^{\infty} L^k(r) r^{\frac{q}{2}-1} dr < \infty$$
		for all $q\geq 1$ and $k=1,2$.
		\item {\bf (D2')} Under (D2), we further assume that $L$ is a twice continuously differentiable function on $(-\delta_L,\infty) \subset \mathbb{R}$ for some constant $\delta_L>0$ such that 
		$$0< \int_0^{\infty} L'(r)^k r^{\frac{q}{2}-1} dr < \infty, \quad 0< \int_0^{\infty} L''(r)^k r^{\frac{q}{2}-1} dr < \infty$$
		for all $q\geq 1$ and $k=1,2$.
%		and 
%		\begin{equation}
%		\label{kernel_rate}
%		\int_{\epsilon^{-1}}^{\infty} \left|L'(r) \right| r^{\frac{q}{2}} dr = O(\epsilon)
%		\end{equation}
%		as $\epsilon \to 0$ for any $\epsilon >0$.
	\end{itemize}
	Here, conditions (D1) and (D2) are required for the consistency of the directional KDE \citep{KDE_Sphe1987,KLEMELA2000,Zhao2001,Dir_Linear2013,Exact_Risk_bw2013}. The stronger condition (D2') is imposed for the consistency of Riemannian gradient estimator $\grad \hat{f}_h(\bm{x}) \equiv \Tang\left(\nabla \hat{f}_h(\bm{x}) \right)$ and Hessian estimator $\mathcal{H} \hat{f}_h(\bm x)$. 
	The differentiability condition in (D2') can be relaxed so that $L$, after being smoothly extrapolated from $[0,\infty)$ to $(-\delta_L,\infty)$ for some constant $\delta_L >0$, is (twice) continuously differentiable except for a set of points with Lebesgue measure $0$ on $(-\delta_L,\infty)$.
	One can justify via integration by parts that many commonly used kernels, such as the von-Mises kernel $L(r)=e^{-r}$ or compactly supported kernels, satisfy condition (D2').
	
	Under conditions (D1) and (D2), the pointwise convergence rate of $\hat f_h$ is 
	$$\hat{f}_h(\bm{x}) - f(\bm{x}) = O(h^2) + O_P\left(\sqrt{\frac{1}{nh^q}} \right);$$
	see, for instance, \cite{KDE_Sphe1987, Zhao2001,Exact_Risk_bw2013, Dir_Linear2013}.
	Moreover, \cite{KDE_direct1988} used a piecewise constant kernel function to approximate the given kernel $L$ and derived the uniform convergence rate as
	\begin{equation}
	\label{Dir_KDE_unif_conv}
	\|\hat{f}_h- f\|_{\infty}= \sup_{\bm{x}\in \Omega_q} \left|\hat{f}_h(\bm{x}) -f(\bm{x}) \right| = O(h^2) + O_P\left(\sqrt{\frac{\log n}{nh^q}} \right).
	\end{equation} 
	One can also prove the uniform consistency of the directional KDE by slightly modifying the technique in \cite{Gine2002} and \cite{Einmahl2005} for the consistency results of the usual Euclidean KDE. 
	We will leverage such technique in our proof for the uniform convergence rates of the Riemannian gradient and Hessian estimators.
	
	\subsection{Pointwise Consistency}
	
	Our derivations of the pointwise convergence rates of the (Riemannian) gradient and Hessian estimators of the directional KDE $\hat{f}_h$ are analogous to the arguments for the usual Euclidean KDE \citep{Silverman1986,Scott2015}, which rely on the Taylor's expansion. The difference in the directional KDE case is that the integrals are taken over the Lebesgue measure $\omega_q$ on $\Omega_q$ when we compute the expectations $\mathbb{E}\left[\grad \hat{f}_h(\bm{x}) \right]$ and $\mathbb{E}\left[\mathcal{H} \hat{f}_h(\bm x) \right]$. The key argument for evaluating directional integrals is the following change-of-variable formula
	$$\omega_q(d\bm{x}) = (1-t^2)^{\frac{q}{2}-1} dt\, \omega_{q-1}(d\bm{\xi}),$$
	where $t=\bm{x}^T\bm{y}$ for a fixed point $\bm{y}\in \Omega_q$ and $\bm{\xi} \in \Omega_q$ is a unit vector orthogonal to $\bm{y}$. The formula is proved in Lemma 2 of \cite{Dir_Linear2013} and on pages 91-93 in \cite{Sphe_Harm} in two different ways. The surface area of $\Omega_q$ in (\ref{surf_area}) easily follows from this formula. 
	With this formula, we have the following convergence results.	
	\begin{theorem}
		\label{pw_conv_tang}
		Assume conditions (D1) and (D2').
		For any fixed $\bm{x}\in \Omega_q$, we have
		$$\grad \hat{f}_h(\bm{x})- \grad f(\bm{x}) = O(h^2) + O_P\left(\sqrt{\frac{1}{nh^{q+2}}} \right)$$
		as $h\to 0$ and $nh^{q+2} \to \infty$.\\
		Under the same condition, for any fixed $\bm{x}\in \Omega_q$, we have
		$$\mathcal{H} \hat{f}_h(\bm x) - \mathcal{H} f(\bm x) = O(h^2) + O_P\left(\sqrt{\frac{1}{nh^{q+4}}} \right)$$
		as $h\to 0$ and $nh^{q+4} \to \infty$.
	\end{theorem}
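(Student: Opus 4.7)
The plan is to prove each statement via a standard bias-variance decomposition, leveraging the explicit formulas for $\grad \hat{f}_h$ in (\ref{tang_grad}) and $\mathcal{H}\hat{f}_h$ in (\ref{Hess_est}) together with the change-of-variable formula $\omega_q(d\bm{y}) = (1-t^2)^{q/2-1} dt\, \omega_{q-1}(d\bm{\xi})$ with $t = \bm{x}^T\bm{y}$ stated earlier in Section~\ref{Sec:grad_Hess_consist}.

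For the Riemannian gradient, I would decompose $\grad \hat{f}_h(\bm{x}) - \grad f(\bm{x})$ into the deterministic bias $\mathbb{E}[\grad \hat{f}_h(\bm{x})] - \grad f(\bm{x})$ plus a centered stochastic term. For the bias, starting from (\ref{tang_grad}), I would apply the change-of-variable formula and then substitute $t = 1 - rh^2$, so that a generic $\bm{y} \in \Omega_q$ is parameterized as $\bm{y} = (1 - rh^2)\bm{x} + h\sqrt{r(2-rh^2)}\,\bm{\xi}$ with $\bm{\xi}$ a unit vector orthogonal to $\bm{x}$. The Jacobian then contributes an $h^q$ factor that cancels $c_{h,q}(L) \asymp h^{-q}$, leaving an integral of the form $\int L'(r) \cdots$ that stays regular as $h \to 0$. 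Taylor-expanding $f(\bm{y})$ about $\bm{x}$ up to third order under condition (D1), and exploiting the fact that odd moments of $\bm{\xi}$ over the equatorial sphere vanish by symmetry, the first surviving term reproduces $\grad f(\bm{x})$ exactly and the next contributes $O(h^2)$. For the stochastic term, I would use that $\grad \hat{f}_h(\bm{x}) = \frac{1}{n}\sum_i \bm{Z}_i$ with iid vectors $\bm{Z}_i$, so $\mathrm{tr}(\mathrm{Cov}[\grad \hat{f}_h(\bm{x})]) \leq \frac{1}{n}\mathbb{E}\|\bm{Z}_1\|^2$. The integrand involves $(L')^2$ together with $\|\bm{x}^T\bm{y}\cdot\bm{x} - \bm{y}\|^2 = 1 - (\bm{x}^T\bm{y})^2 = rh^2(2-rh^2)$, contributing an extra factor of $h^2$. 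Combining this with $c_{h,q}(L)^2/h^4 \asymp h^{-2q-4}$ and the $h^q$ Jacobian yields $\mathbb{E}\|\bm{Z}_1\|^2 = O(h^{-q-2})$, so Chebyshev's inequality gives the $O_P(\sqrt{1/(nh^{q+2})})$ stochastic rate.

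For the Riemannian Hessian, I would apply the same template to the estimator in (\ref{Hess_est}). The bias analysis now requires a fourth-order Taylor expansion of $f$ so that the leading terms assemble into $\mathcal{H} f(\bm{x})$ with an $O(h^2)$ remainder, again via symmetry over $\bm{\xi}$ and the identity $(I_{q+1}-\bm{x}\bm{x}^T)\bm{x} = 0$. The variance is dominated by the $L''/h^4$ contribution coming from the second derivative of the kernel inside $\nabla\nabla \hat{f}_h$, sandwiched by the tangent projection. The key point is that $(I_{q+1}-\bm{x}\bm{x}^T)(\bm{y}-\bm{x})$ has squared norm $1 - (\bm{x}^T\bm{y})^2 = O(h^2)$, so the projected rank-one matrix has squared Frobenius norm of order $h^4 \cdot (L'')^2/h^8$; combining with $c_{h,q}(L)^2 \asymp h^{-2q}$ and the $h^q$ Jacobian produces a variance of order $1/(nh^{q+4})$, and Chebyshev again yields the claimed rate.

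The main obstacle will be carefully tracking the interaction between the tangent projection $(I_{q+1}-\bm{x}\bm{x}^T)$ and the Taylor expansion in the bias analysis, so as to verify that the projected leading integrals recover $\grad f(\bm{x})$ and $\mathcal{H} f(\bm{x})$ (rather than their unprojected counterparts $\nabla f(\bm{x})$ and $\nabla\nabla f(\bm{x})$), and that the next surviving contribution is $O(h^2)$ rather than $O(h)$. Both of these collapses follow from the identity $(I_{q+1}-\bm{x}\bm{x}^T)\bm{x} = 0$ combined with the vanishing of odd moments of $\bm{\xi}$ on the equatorial $\Omega_{q-1}$, but the bookkeeping in the Hessian case is delicate because several terms involving $L'$, $L''$, and the radial part of $\nabla \hat{f}_h$ must be combined before the projection produces the required cancellations.
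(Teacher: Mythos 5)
Your proposal is correct and follows essentially the same route as the paper's proof in Appendix C.2: a bias--variance split combined with the change of variables $t=1-rh^2$, a Taylor expansion of $f$ together with the vanishing of odd $\bm{\xi}$-moments on $\Omega_{q-1}$ (Lemma~\ref{integ_lemma}(c)) to eliminate the $O(h)$ bias and show the leading term is exactly $\grad f(\bm{x})$ (respectively $\mathcal{H} f(\bm{x})$), and a second-moment computation that gains the extra $h^2$ (resp.\ $h^4$) from $\norm{(I_{q+1}-\bm{x}\bm{x}^T)\bm{y}}_2^2 = 1-(\bm{x}^T\bm{y})^2 = rh^2(2-rh^2)$. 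Two small caveats you should note when executing: the constant $1$ in the leading bias term is not automatic but falls out of the integration-by-parts identity relating $\int_0^\infty L'(r)r^{q/2}\,dr$ to $\int_0^\infty L(r)r^{q/2-1}\,dr$ and canceling against $c_{h,q}(L)$, and in the Hessian variance the $L'/h^2$ piece of $\mathcal{A}\hat f_h$ contributes at the same order $1/(nh^{q+4})$ as the $L''/h^4$ piece, so neither strictly dominates the other.
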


	The proof of Theorem~\ref{pw_conv_tang} is lengthy and deferred to Appendix~\ref{Appendix:Thm2_pf}. Theorem~\ref{pw_conv_tang} demonstrates that the Riemannian gradient of a directional KDE is a consistent estimator of the Riemannian gradient of the directional density that generates data.
	A similar result holds for the Riemannian Hessian. 
	% Without the asymptotic rate \eqref{kernel_rate} in condition (D2'), the convergence rate of the bias terms in Theorem~\ref{pw_conv_tang} can be slower than $O(h^2)$.
	It cannot be claimed that the total gradients $\nabla \hat f_h$ or $\nabla \tilde f_h$ converge to $\nabla f$ since the radial component of $f$ depends on how $f$ is extended to points outside $\Omega_q$. Lemma~\ref{Rad_grad} below and the proof of Theorem~\ref{pw_conv_tang} demonstrate that the limiting behaviors of $\Rad\left(\nabla \hat{f}_h(\bm{x}) \right)$ and $\Rad\left(\nabla \tilde{f}_h(\bm{x}) \right)$ are different; the former one is of the order $O\left(h^{-2} \right) + O_P\left(\sqrt{\frac{1}{nh^{q+4}}} \right)$ while the latter one is of the order $O(1) + O_P\left(\sqrt{\frac{1}{nh^{q+2}}} \right)$.
	Note that 
	\cite{KLEMELA2000} also derived similar convergence rates of the derivatives of a directional KDE, while the definitions of directional KDE and its derivatives in \cite{KLEMELA2000} are different from ours and the results are more complex.
	
	\begin{remark}
		Under some smoothness conditions \citep{Asymp_deri_KDE2011}, the pointwise convergence rates of gradient and Hessian estimators defined by the usual Euclidean KDE are
		$$O(h^2) + O_P\left(\frac{1}{nh^{d+2}} \right) \quad \text{ and } \quad O(h^2) + O_P\left(\frac{1}{nh^{d+4}} \right),$$
		where $d$ represents the dimension of the Euclidean data. Therefore, our pointwise consistency results for the Riemannian gradient and Hessian of the directional KDE in Theorem~\ref{pw_conv_tang} align with the pointwise convergence rates of the usual Euclidean KDE, in the sense that the dimension $d$ is replaced by the (intrinsic) manifold dimension $q$ of directional data.
	\end{remark}
	
	\subsection{Uniform Consistency}
	
	We now strengthen the convergence results in Theorem \ref{pw_conv_tang} to uniform convergence rates with the assumptions and techniques developed by \cite{Gine2002} and \cite{Einmahl2005}.
	
	Let $[\tau]=(\tau_1,...,\tau_{q+1})$ be a multi-index (that is, $\tau_1,...,\tau_{q+1}$ are non-negative integers and $|[\tau]|=\sum\limits_{j=1}^{q+1} \tau_j$). Define $D^{[\tau]} = \frac{\partial^{\tau_1}}{\partial x_1^{\tau_1}} \cdots \frac{\partial^{\tau_{q+1}}}{\partial x_1^{\tau_{q+1}}}$ as the $|[\tau]|$-th order partial derivative operator. 
	Given the directional KDE in \eqref{Dir_KDE2}, we define the following function class of the kernel function $L$ and its partial derivatives as
	$$\mathcal{K} = \left\{ \bm{u}\mapsto K\left(\frac{\bm{z}-\bm{u}}{h} \right): \bm{u}, \bm{z}\in \Omega_q, h>0, K(\bm{x}) = D^{[\tau]} L\left(\frac{1}{2}||\bm{x}||_2^2 \right), |[\tau]|=0,1,2 \right\}.$$
	Under condition (D2'), $\mathcal{K}$ is a collection of bounded measurable functions on $\Omega_q$. To guarantee the uniform consistency of the directional KDE itself as well as its (Riemannian) gradient and Hessian, we assume the following:
	\begin{itemize}
		\item {\bf (K1)} $\mathcal{K}$ is a bounded VC (subgraph) class of measurable functions on $\Omega_q$, that is, there exist constants $A,\vartheta >0$ such that for any $0 < \epsilon <1$,
		$$\sup_Q N\left(\mathcal{K}, L_2(Q), \epsilon ||F||_{L_2(Q)} \right) \leq \left(\frac{A}{\epsilon} \right)^{\vartheta},$$
		where $N(T,d_T,\epsilon)$ is the $\epsilon$-covering number of the pseudometric space $(T,d_T)$, $Q$ is any probability measure on $\Omega_q$, and $F$ is an envelope function of $\mathcal{K}$. The constants $A$ and $\vartheta$ are usually called the VC (Vapnik-Chervonenkis) characteristics of $\mathcal{K}$ and the norm $||F||_{L_2(Q)}$ is defined as $\left[\int_{\Omega_q} |F(\bm{x})|^2 dQ(\bm{x}) \right]^{\frac{1}{2}}$.
	\end{itemize}
	
	Given the differentiability of kernel $L$ guaranteed by (D2'), we can take $F$ as a constant envelope function 
	$$C_{\mathcal{K}} = \sup_{\bm{x}\in \mathbb{R}^{q+1}, |[\tau]|=0,1,2} \left|D^{[\tau]} L\left(\frac{1}{2}||\bm{x}||_2^2 \right) \right|$$
	when it is finite.
	Condition (K1) is not stringent in practice and can be satisfied by many kernel functions, such as the von-Mises kernel $L(r)=e^{-r}$ and many compactly supported kernels on $[0,\infty)$. For these kernel options, the resulting function class $\mathcal{K}$ comprises only functions of the form $\bm{u} \mapsto \Phi\left(|\bm{A}\bm{u}+\bm{b}| \right)$, where $\Phi$ is a real-valued function of bounded variation on $[0,\infty)$, $\bm{A}$ ranges over matrices in $\mathbb{R}^{(q+1)\times (q+1)}$, and $\bm{b}$ ranges over $\mathbb{R}^{q+1}$. Thus, $\mathcal{K}$ is of VC (subgraph) class by Lemma 22 in \cite{Nolan1987}.
	
	Under conditions (D1), (D2'), and (K1), the uniform consistency results of the directional KDE (restated) as well as its Riemannian gradient and Hessian estimators are summarized as the following theorem, whose proof can be founded in Appendix~\ref{Appendix:Thm4_pf}.
	
	\begin{theorem}
		\label{unif_conv_tang}
		Assume (D1), (D2'), and (K1). The uniform convergence rate of $\hat{f}_h$ is given by
		$$\sup_{\bm x \in\Omega_q}|\hat{f}_h(\bm x) -f(\bm x)| = O(h^2) + O_P\left(\sqrt{\frac{|\log h|}{nh^q}} \right)$$
		as $h\to 0$ and $\frac{nh^q}{|\log h|} \to \infty$. \\
		Furthermore, the uniform convergence rate of $\grad \hat{f}_h(\bm{x})$ on $\Omega_q$ is 
		$$\sup_{\bm x\in\Omega_q}\norm{\grad \hat{f}_h (\bm x)- \grad f(\bm x) }_{\max} = O(h^2) + O_P\left(\sqrt{\frac{|\log h|}{nh^{q+2}}} \right),
		$$
		as $h\to 0$ and $\frac{nh^{q+2}}{|\log h|} \to \infty$.
		Finally, the uniform convergence rate of $\mathcal{H} \hat{f}_h(\bm x)$ on $\Omega_q$ is
		$$
		\sup_{\bm{x}\in \Omega_q}\norm{\mathcal{H} \hat{f}_h(\bm x) - \mathcal{H} f(\bm x)}_{\max} 
		= O(h^2) + O_P\left(\sqrt{\frac{|\log h|}{nh^{q+4}}} \right),
		$$
		as $h\to 0$ and $\frac{nh^{q+4}}{|\log h|} \to \infty$, where $\norm{\cdot}_{\max} $ is the elementwise maximum norm for a vector in $\mathbb{R}^{q+1}$ or a matrix in $\mathbb{R}^{(q+1)\times (q+1)}$.
	\end{theorem}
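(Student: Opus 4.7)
The plan is to handle all three suprema by the standard bias--variance decomposition, with the bias controlled by Taylor expansion arguments already developed for Theorem~\ref{pw_conv_tang}, and the stochastic (centered) part handled by a VC-type empirical-process concentration inequality.

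\textbf{Step 1: Decomposition.} For each estimator $\hat g_h \in \{\hat f_h,\, \grad \hat f_h,\, \mathcal{H}\hat f_h\}$, write
\begin{equation*}
\hat g_h(\bm x)- g(\bm x) \;=\; \bigl[\hat g_h(\bm x) - \mathbb{E}\hat g_h(\bm x)\bigr] \;+\; \bigl[\mathbb{E}\hat g_h(\bm x) - g(\bm x)\bigr],
\end{equation*}
and bound the two terms in maximum (respectively elementwise maximum) norm separately.

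\textbf{Step 2: Uniform bias.} The pointwise bias calculations in the proof of Theorem~\ref{pw_conv_tang} use Taylor expansion of $f$ (controlled by $\|D^{[\tau]}f\|_\infty$ for $|[\tau]|\le 3$, finite by (D1) on the compact $\Omega_q$) together with the change-of-variable formula $\omega_q(d\bm x)=(1-t^2)^{q/2-1}dt\,\omega_{q-1}(d\bm\xi)$ and the finite moments $\int_0^\infty L^{(k)}(r)\,r^{q/2-1}\,dr$ guaranteed by (D2'). Because all constants depend only on $\|f\|_{C^3}$ and the fixed kernel integrals, never on $\bm x\in\Omega_q$, the $O(h^2)$ bias is uniform.

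\textbf{Step 3: Uniform stochastic part.} Each centered term can be written as an empirical process indexed by a subfamily of the class $\mathcal{K}$ from (K1). For instance, $\hat f_h(\bm x)-\mathbb{E}\hat f_h(\bm x)= c_{h,q}(L)\bigl[P_n K_{\bm x,h} - \mathbb{E} K_{\bm x,h}\bigr]$ with $K_{\bm x,h}(\bm u)=L((1-\bm x^T\bm u)/h^2)$. The key variance computation (again via the change-of-variable formula) gives the envelope
\begin{equation*}
\sup_{\bm x\in\Omega_q}\mathbb{E}\bigl[K_{\bm x,h}(\bm X)^2\bigr]\;=\;O(h^q),
\end{equation*}
and analogously $O(h^q)$ for the classes built from $L'$ and $L''$ (the derivatives enter linearly inside the sum, so the variance scale is the same; only the normalizing constant changes). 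Condition (K1) together with the constant envelope $C_{\mathcal K}$ then lets me apply a Talagrand/Bousquet-type concentration for VC classes in the form of Proposition~2.1 of Gin\'e and Guillou (2002) or Theorem~2.3 of Einmahl and Mason (2005), yielding
\begin{equation*}
\sup_{\bm x\in\Omega_q}\bigl|P_n K_{\bm x,h}-\mathbb{E}K_{\bm x,h}\bigr|\;=\;O_P\!\left(\sqrt{\tfrac{h^q|\log h|}{n}}\right),
\end{equation*}
provided $nh^q/|\log h|\to\infty$. Multiplying by $c_{h,q}(L)=O(h^{-q})$ gives the stated rate for $\hat f_h$. The gradient and Hessian estimators carry an additional $h^{-2}$ and $h^{-4}$ in their normalizations (from $c_{h,q}(L)/h^2$ and $c_{h,q}(L)/h^4$ inside $\nabla\hat f_h$ and $\nabla\nabla\hat f_h$), which accounts exactly for the $h^{q+2}$ and $h^{q+4}$ in the denominators. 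The projection $(I_{q+1}-\bm x\bm x^T)$ appearing in \eqref{tang_grad} and \eqref{Hess_est} is bounded with entries smooth (hence Lipschitz) in $\bm x$, so passing from the uniform rate for $\nabla\hat f_h$, $\nabla\nabla\hat f_h$ to the uniform rate for $\grad\hat f_h$, $\mathcal{H}\hat f_h$ does not deteriorate the rate.

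\textbf{Main obstacle.} The delicate point is verifying that the three function classes involved (for $L$, $L'$, $L''$ evaluated at the rescaled argument) all fall inside a single bounded VC class with the same VC characteristics, and that the variance envelope $O(h^q)$ holds in each case; this is what produces the clean replacement $d\mapsto q$ relative to the Euclidean rates. The VC property is inherited from (K1) since each derivative of $L$ generates functions of the form $\bm u\mapsto \Phi(|\bm A\bm u+\bm b|)$ for a function $\Phi$ of bounded variation. The variance envelope reduces via the change-of-variable formula to a one-dimensional integral $\int_0^{2h^{-2}} (L^{(k)}(r))^2 r^{q/2-1}(2-rh^2)^{q/2-1}\,dr = O(h^q)$, matching the normalizing-constant analysis in \eqref{asym_norm_const}. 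Once these ingredients are in place the Einmahl--Mason machinery closes the argument with the stated rates.
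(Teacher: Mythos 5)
Your proposal follows essentially the same route as the paper's own proof: the paper also decomposes each quantity into a uniform $O(h^2)$ bias (inherited from the Taylor-expansion arguments in Theorem~\ref{pw_conv_tang}) plus a centered empirical process indexed by the VC class $\mathcal{K}$ from (K1), and closes the stochastic term by the Gin\'e--Guillou argument after verifying the second-moment bounds $\mathbb{E}[L^2]=O(h^q)$, $\mathbb{E}[\|\tfrac{\bm{x}-\bm{X}}{h}\|_2^2\,|L'|^2]=O(h^q)$, and $\mathbb{E}[\|\tfrac{\bm{x}-\bm{X}}{h}\|_2^4\,|L''|^2]=O(h^q)$ via the change-of-variable formula and (D2'). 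Your accounting of the extra $h^{-2}$ and $h^{-4}$ normalizing factors and the observation that the tangent projection $(I_{q+1}-\bm{x}\bm{x}^T)$ is bounded and smooth are exactly the steps the paper uses to pass from the raw derivative rates to the stated rates for $\grad\hat f_h$ and $\mathcal{H}\hat f_h$.
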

	
	\begin{remark}
	Theorem~\ref{unif_conv_tang} can also be generalized to higher-order derivatives.
	All that is necessary is to modify the assumptions (D2') and (K1) to higher-order derivatives (projected on the tangent direction) as well as strengthen the differentiable assumptions on $f$ in (D1). 
	The elementwise maximum norm between the derivative estimator and the true quantity will embrace the rate
	$$
	O(h^2) + O_P\left(\sqrt{\frac{|\log h|}{nh^{q+2m}}}\right),
	$$
	where $m$ is the highest order of derivatives desired.
	\end{remark}
	
	\subsection{Mode Consistency}
	\label{Sec:Mode_Const}
	
	Consistency of estimating local modes has been established for the usual Euclidean KDE by \cite{Mode_clu2016}, where the authors demonstrated that with probability tending to 1, the number of estimated local modes is the same as the number of true local modes under appropriate assumptions. Moreover, the convergence rate of the Hausdorff distance (a common distance between two sets) between the collection of local modes and its estimator is elucidated. Here, we reproduce the consistency of estimating local modes of a directional density $f$ supported on $\Omega_q$ by the local modes of the directional KDE $\hat{f}_h$.

	 Given two sets $A,B \subset \Omega_q$, their Hausdorff distance is
	\begin{equation}
	\label{Haus_def}
	\Haus(A,B) = \inf\left\{r>0: A\subset B \oplus r, B \subset A \oplus r \right\},
	\end{equation}
	where $A\oplus r =\left\{\bm{y}\in \Omega_q: \inf_{\bm{x}\in A} \norm{\bm{x}-\bm{y}}_2 \leq r \right\} = \left\{\bm{y}\in \Omega_q: \sup_{\bm{x}\in A} \bm{x}^T\bm{y} \geq 1-\frac{r^2}{2} \right\}$. The equality follows from the fact that $\norm{\bm{x}}_2^2=1$ for any $\bm{x}\in \Omega_q$.

	Let $C_3$ be the upper bound for the partial derivatives of the directional density $f$ on the compact manifold $\Omega_q$ up to the third order.  
	Such constant exists under condition (D1).
	Let $\hat{\mathcal{M}}_n = \left\{\hat{\bm{m}}_1,...,\hat{\bm{m}}_{\hat{K}_n} \right\}$ be the collection of local modes of $\hat{f}_h$ and $\mathcal{M}=\left\{\bm{m}_1,...,\bm{m}_K \right\}$ be the collection of local modes of $f$. Here, $\hat{K}_n$ is the number of estimated local modes and $K$ is the number of true local modes. We consider the following assumptions.
	\begin{itemize}
		\item {\bf (M1)} There exists $\lambda_* >0$ such that 
		$$0 < \lambda_* \leq |\lambda_1(\bm{m}_j)|, \quad \text{ for all } j=1,...,K,$$
		where $0>\lambda_1(\bm{x}) \geq \cdots \geq \lambda_q(\bm{x})$ are the $q$ smallest (negatively-largest) eigenvalues of the Riemannian Hessian $\mathcal{H} f(\bm x)$.
		\item {\bf (M2)} There exist $\Theta_1,\rho_* >0$ such that 
		$$\left\{\bm{x}\in \Omega_q: \norm{\Tang(\nabla f(\bm{x}))}_{\max} \leq \Theta_1, \lambda_1(\bm{x}) \leq -\frac{\lambda_*}{2} <0 \right\} \subset \mathcal{M} \oplus \rho_*,$$
		where $\lambda_*$ is defined in (M1) and $0<\rho_* < \min\left\{\sqrt{2-2\cos\left(\frac{3\lambda_*}{2C_3}\right)}, 2\right\}$.
	\end{itemize}
	
	Condition (M1) is imposed so that every local mode of $f$ is isolated from other critical points; see Lemma 3.2 in \cite{Morse_Homology2004}. The condition also guarantees that the number of local modes of $f$ supported on the compact manifold $\Omega_q$ is finite. As noted by \cite{Mode_clu2016}, condition (M1) always holds when $f$ is a Morse function on $\Omega_q$. The second condition (M2) regularizes the behavior of $f$ so that points with near 0 (Riemannian) gradients and negative eigenvalues of $\mathcal{H} f(\bm x)$ within the tangent space $T_{\bm{x}}$ must be close to local modes. See the paper by \cite{Mode_clu2016} for detailed discussion. The constant $\sqrt{2-2\cos\left(\frac{3\lambda_*}{2C_3}\right)}$ is selected so that the great-circle distance from $\bm{m}_k$ to the boundary of $\bm{m}_k \oplus \rho_*$, that is, $\arccos(\bm{m}_k^T \bm{x})$ with $\bm{x}\in \partial S_k$, is less than $\frac{3\lambda_*}{2C_3}$ for any $\bm{m}_k \in \mathcal{M}$, where $S_k=\left\{\bm{x}\in \Omega_q: \norm{\bm{x}-\bm{m}_k}_2 \leq \rho_* \right\}$ and $\partial S_k = \{\bm{x}\in \Omega_q: \norm{\bm{x}-\bm{m}_k}_2 = \rho_* \}$.
	
	It should be emphasized that condition (M1) is a weak condition that can be satisfied by the local modes of common directional densities. We take the von-Mises-Fisher density as an example. With the formula \eqref{vMF_density}, we naturally extend $f_{\text{vMF}}$ to $\mathbb{R}^{q+1}$ and deduce that
	$$\nabla f_{\text{vMF}}(\bm{x}) = \nu \bm{\mu} C_q(\nu) \cdot \exp\left(\nu \bm{\mu}^T \bm{x} \right) \quad \text{ and } \quad \nabla\nabla f_{\text{vMF}}(\bm{x}) = \nu^2 \bm{\mu} \bm{\mu}^T C_q(\nu) \cdot \exp\left(\nu \bm{\mu}^T \bm{x} \right),$$
	which in turn indicates that at the mode $\bm{\mu} \in \Omega_q$,
	\begin{align*}
		\mathcal{H} f_{\text{vMF}}(\bm{\mu}) &= \left(I_{q+1} -\bm{\mu} \bm{\mu}^T \right) \nabla\nabla f_{\text{vMF}}(\bm{\mu}) \left(I_{q+1} -\bm{\mu} \bm{\mu}^T \right) -\bm{\mu}^T \nabla f_{\text{vMF}}(\bm{\mu}) \left(I_{q+1} -\bm{\mu} \bm{\mu}^T \right)\\
		&= -\nu C_q(\nu) \cdot e^{\nu} \left(I_{q+1} -\bm{\mu} \bm{\mu}^T \right).
	\end{align*}
	By Brauer's theorem (Example 1.2.8 in \citealt{HJ2012}), we conclude that the eigenvalues of $\mathcal{H} f_{\text{vMF}}(\bm{\mu})$ are 0 with (algebraic) multiplicity 1, which is associated with the eigenvector $\bm{\mu}$, and $-\nu C_q(\nu) \cdot e^{\nu}$ with multiplicity $q$, which are associated with the eigenvectors in $T_{\bm{\mu}}$.
		
	Given these assumptions, the mode consistency of the directional KDE is as follows.
	
	\begin{theorem}
		\label{Mode_cons}
		Assume (D1), (D2'), (K1), and (M1-2). For any $\delta \in (0,1)$, when $h$ is sufficiently small and $n$ is sufficiently large,
		\begin{enumerate}[label=(\alph*)]
			\item there must be at least one estimated local mode $\hat{\bm{m}}_k$ within $S_k = \bm{m}_k \oplus \rho_*$ for every $\bm{m}_k \in \mathcal{M}$, and
			\item the collection of estimated modes satisfies $\hat{\mathcal{M}}_n \subset \mathcal{M} \oplus \rho_*$ and there is a unique estimated local mode $\hat{\bm{m}}_k$ within $S_k=\bm{m}_k\oplus \rho_*$
		\end{enumerate}
		with probability at least $1-\delta$. In total, when $h$ is sufficiently small and $n$ is sufficiently large, there exist some constants $A_3, B_3 >0$ such that
		$$\mathbb{P}\left(\hat{K}_n \neq K \right) \leq B_3 e^{-A_3nh^{q+4}}.$$
		\begin{enumerate}[label=(c)]
			\item The Hausdorff distance between the collection of local modes and its estimator satisfies $$\Haus\left(\mathcal{M},\hat{\mathcal{M}}_n \right) = O(h^2) + O_P\left(\sqrt{\frac{1}{nh^{q+2}}} \right),$$
			as $h\to 0$ and $nh^{q+2} \to \infty$.
		\end{enumerate}
	\end{theorem}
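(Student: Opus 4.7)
The strategy is to adapt the Euclidean mode-consistency argument of \cite{Mode_clu2016} to $\Omega_q$ by using Riemannian derivatives throughout and invoking Theorem~\ref{unif_conv_tang} as the analytic engine. I would begin by fixing small tolerances $\eta_1,\eta_2>0$ as functions of $\lambda_*$, $\Theta_1$, and $C_3$, and defining the event
\[
E_n = \left\{\sup_{\bm x\in\Omega_q}\norm{\grad\hat f_h(\bm x)-\grad f(\bm x)}_{\max}\le \eta_1,\ \sup_{\bm x\in\Omega_q}\norm{\cH\hat f_h(\bm x)-\cH f(\bm x)}_{\max}\le \eta_2\right\}.
\]
The Talagrand-type concentration underlying the VC-class argument in the proof of Theorem~\ref{unif_conv_tang} (applied to $\mathcal{K}$) yields $\mathbb{P}(E_n^c)\le B_3 \exp(-A_3\,nh^{q+4})$; the slowest uniform rate, that of the Hessian with bandwidth factor $h^{-(q+4)}$, controls the exponent. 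All subsequent deterministic reasoning will be carried out on $E_n$, so the exponential bound on $\{\hat K_n \ne K\}$ falls out once parts (a) and (b) are established on that event.

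For the containment half of (b), I would exploit (M2) contrapositively. For any $\bm x \notin \mathcal{M}\oplus \rho_*$, either $\norm{\Tang(\nabla f(\bm x))}_{\max}>\Theta_1$, which combined with $\eta_1<\Theta_1$ forbids $\grad\hat f_h(\bm x)=\bm 0$, or $\lambda_1(\bm x)>-\lambda_*/2$, in which case Weyl's inequality applied to the tangent block of $\cH\hat f_h(\bm x)$ (using the projection identity \eqref{projection} to read off the $q$ tangent eigenvalues) pushes its top tangent eigenvalue above the threshold required for $\bm x$ to be a local maximum. Uniqueness inside each $S_k$ then follows because the same Weyl perturbation, together with (M1) and a $C_3$-bounded third-order Taylor remainder controlling $\cH f$ across the small ball $S_k$, forces all $q$ tangent eigenvalues of $\cH\hat f_h$ to stay at most $-\lambda_*/2$ throughout $S_k$, so $\hat f_h$ is strictly Riemannian-concave there.

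For existence in part (a), the plan is to prove that $\grad\hat f_h$ points strictly inward on $\partial S_k$. Taylor-expanding $\grad f$ along the minimizing geodesic from $\bm m_k$ to $\bm x\in\partial S_k$, using $\grad f(\bm m_k)=\bm 0$ and the Hessian lower bound in (M1), gives
\[
\langle \grad f(\bm x),\, \Exp_{\bm m_k}^{-1}(\bm x)\rangle \ \le\ -\lambda_*\,\arccos(\bm m_k^T\bm x)\left(1 - \tfrac{C_3}{\lambda_*}\arccos(\bm m_k^T\bm x)\right),
\]
which stays bounded away from zero on $\partial S_k$ precisely because $\rho_*<\sqrt{2-2\cos(3\lambda_*/(2C_3))}$. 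Choosing $\eta_1$ smaller than half this gap transfers the strict inward-pointing property to $\grad\hat f_h$ on $E_n$. A Poincar\'e--Hopf / degree-theoretic argument applied to the pullback of $\grad\hat f_h$ under the exponential chart $\Exp_{\bm m_k}$ (so that $S_k$ becomes a small Euclidean ball in $T_{\bm m_k}$) then produces at least one critical point of $\hat f_h$ inside $S_k$, which by the Hessian-negativity already established is a local maximum of $\hat f_h$.

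For the Hausdorff rate in part (c), fix a matched pair $(\bm m_k,\hat{\bm m}_k)$ and Taylor-expand $\grad\hat f_h$ along the geodesic joining them, using $\grad\hat f_h(\hat{\bm m}_k)=\bm 0=\grad f(\bm m_k)$:
\[
\grad\hat f_h(\bm m_k)-\grad f(\bm m_k) = -\cH\hat f_h(\tilde{\bm m}_k)\,\Exp_{\bm m_k}^{-1}(\hat{\bm m}_k) + O\!\left(\norm{\hat{\bm m}_k-\bm m_k}_2^2\right),
\]
for some intermediate $\tilde{\bm m}_k\in S_k$. Since $\cH\hat f_h(\tilde{\bm m}_k)$ restricted to $T_{\bm m_k}$ has smallest-in-magnitude nonzero eigenvalue $\ge \lambda_*/2$ on $E_n$, inverting and absorbing the quadratic remainder (using the already-established smallness of $\norm{\hat{\bm m}_k-\bm m_k}_2$) yields $\norm{\hat{\bm m}_k-\bm m_k}_2 \lesssim \norm{\grad\hat f_h(\bm m_k)-\grad f(\bm m_k)}_2/\lambda_*$, which is $O(h^2)+O_P(\sqrt{1/(nh^{q+2})})$ by the pointwise rate in Theorem~\ref{pw_conv_tang}. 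Maximizing over the finite set $\mathcal{M}$ gives the Hausdorff bound. The main obstacle I anticipate is the existence step: phrasing ``inward-pointing'' intrinsically on the geodesic sphere $\partial S_k$ and deploying the manifold degree argument cleanly, while keeping track of the curvature correction $-\nabla f(\bm x)^T\bm x\,I_{q+1}$ in \eqref{Hess_Dir} that separates the Riemannian $\cH$ from the Euclidean $\nabla\nabla$ throughout the Taylor calculus.
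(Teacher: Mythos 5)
Your treatment of part (b), the uniqueness within each $S_k$, the exponential bound via Talagrand's inequality, and the Hausdorff rate in part (c) follows essentially the same lines as the paper: uniform consistency of the Riemannian gradient and Hessian from Theorem~\ref{unif_conv_tang}, the contrapositive use of (M2), Weyl's inequality for tangent eigenvalues, and a Taylor expansion of the gradient about $\bm m_k$ whose Hessian coefficient is invertible on the tangent space by (M1). (The paper expands $\grad f$ and inverts $\mathcal{H}f(\bm m_k)$, while you expand $\grad\hat f_h$ and invert $\mathcal{H}\hat f_h(\tilde{\bm m}_k)$; these are interchangeable once uniform Hessian consistency is in hand.)

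Where you genuinely diverge is the existence step (a). The paper's argument is purely scalar: a third-order Taylor expansion of $f$ via $\Exp_{\bm m_k}$ shows $\sup_{\bm x\in\partial S_k} f(\bm x)\le f(\bm m_k)-c\lambda_*^3/C_3^2$ for an explicit $c>0$, and then the $\|\hat f_h-f\|_\infty$ uniform bound alone forces $\hat f_h$ to attain an interior maximum in $S_k$. No degree theory, no vector-field topology, and crucially no gradient consistency is needed for this step. Your route via an inward-pointing gradient field and Poincaré--Hopf would also work in principle, and is arguably more geometrically informative, but it is heavier machinery and, as you yourself flag, it raises the bookkeeping problem of comparing $\grad f(\bm x)\in T_{\bm x}$ with $\Exp_{\bm m_k}^{-1}(\bm x)\in T_{\bm m_k}$ — you need a parallel transport or must phrase inward-pointing via $\Exp_{\bm x}^{-1}(\bm m_k)$.

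There is also a quantitative gap in your inequality for the inward-pointing property. With $t=\arccos(\bm m_k^T\bm x)$ ranging up to $3\lambda_*/(2C_3)$ on $S_k$, a second-order Taylor bound of the form $\langle\grad f,\cdot\rangle\le -\lambda_* t+\tfrac{C_3}{2}t^2=-\lambda_* t\bigl(1-\tfrac{C_3}{2\lambda_*}t\bigr)$ is negative throughout $S_k$ because $t<2\lambda_*/C_3$. But you wrote the remainder factor as $\bigl(1-\tfrac{C_3}{\lambda_*}t\bigr)$, which turns negative once $t>\lambda_*/C_3$, and since (M2) only gives $t\le 3\lambda_*/(2C_3)$ this makes your right-hand side positive near $\partial S_k$ — the bound then says nothing about the sign of the gradient pairing and the degree-theoretic conclusion does not follow. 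You would need to recompute the Taylor remainder with the correct $1/2!$ factor (and verify that $C_3$ as defined in the paper controls the relevant Riemannian third-order quantity, which involves ambient derivatives through \eqref{Hess_Dir}). Given that the paper's function-value comparison sidesteps both the constant issue and the tangent-space identification, you may find it more economical to adopt that route for (a) and retain your Taylor-and-invert argument for (c).
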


	The proof of Theorem~\ref{Mode_cons} is in Appendix~\ref{Appendix:Thm6_pf}. It states that 
	asymptotically, the set of estimated local modes are close to the set of true local modes and there exists a $1-1$ mapping between pairs of estimated and true local modes. 
	Thus, the local modes of the directional KDE are good estimators of the local modes of the population directional density.

	\begin{remark}
		Unlike the statement of Theorem 1 by \cite{Mode_clu2016}, the radius $\rho_*$ in (M2) for $\mathcal{M}$ to contain $\hat{\mathcal{M}}_n$ can be selected to be independent of the dimension of the data. The reason lies in the fact that the proof of statement (a) in Theorem \ref{Mode_cons} performs a Taylor's expansion to the third order and leverages the constant upper bound for the third-order partial derivatives. The same technique can be used to improve the original proof in Theorem 1 of \cite{Mode_clu2016} to obtain a dimension-free radius for mode consistency.
	\end{remark}

	\section{Computational Learning Theory of Directional Mean Shift Algorithm}
	\label{Sec:Algo_conv}
	
	In this section, we study the algorithmic convergence of Algorithm~\ref{Algo:MS}.
	We start with the ascending property and convergence of Algorithm~\ref{Algo:MS}, and then prove the linear convergence of gradient ascent algorithms on the sphere $\Omega_q$. By shrinking the bandwidth parameter, the adaptive step size of Algorithm~\ref{Algo:MS} as a gradient ascent iteration on $\Omega_q$ can be sufficiently small so that the algorithm converges linearly to the estimated local modes around their neighborhoods. Finally, we discuss on the computational complexity of Algorithm~\ref{Algo:MS}.

	\subsection{Ascending Property and Convergence of Algorithm \ref{Algo:MS}}
	
	Let $\left\{\hat{\bm{y}}_s\right\}_{s=0}^{\infty}$ be the path of successive points generated by Algorithm~\ref{Algo:MS}. 
	The corresponding sequence of directional density estimates is given by
	$$\hat{f}_h(\hat{\bm{y}}_s) = \frac{c_{h,q}(L)}{n} \sum_{i=1}^n L\left(\frac{1-\hat{\bm{y}}_s^T \bm{X}_i}{h^2} \right) \quad \text{ for } s=0,1,\dots.$$
	
	\begin{theorem}[Ascending Property]
		\label{MS_asc}
		If kernel $L:[0,\infty) \to [0,\infty)$ is monotonically decreasing, differentiable, and convex with $L(0)<\infty$, then the sequence $\left\{\hat{f}_h(\hat{\bm{y}}_s) \right\}_{s=0}^{\infty}$ is monotonically increasing and thus converges.
	\end{theorem}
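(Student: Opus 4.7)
The plan is to imitate the classical ascending-property proof for the Euclidean mean shift algorithm of Comaniciu--Meer, adapted to the sphere via the specific form of the update in \eqref{fix_point_eq}. The single ingredient that drives everything is the first-order convexity inequality for $L$: since $L$ is convex and differentiable on $[0,\infty)$, for any $u,v \geq 0$,
\begin{equation*}
L(v) - L(u) \geq L'(u)(v-u).
\end{equation*}
I would apply this with $u = \frac{1-\hat{\bm{y}}_s^T\bm{X}_i}{h^2}$ and $v = \frac{1-\hat{\bm{y}}_{s+1}^T\bm{X}_i}{h^2}$, both of which are nonnegative because $\hat{\bm{y}}_s,\hat{\bm{y}}_{s+1},\bm{X}_i \in \Omega_q$. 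Summing over $i$ and multiplying by $c_{h,q}(L)/n$ should yield, after rearrangement,
\begin{equation*}
\hat{f}_h(\hat{\bm{y}}_{s+1}) - \hat{f}_h(\hat{\bm{y}}_s) \; \geq \; \frac{c_{h,q}(L)}{nh^2}\, (\hat{\bm{y}}_s - \hat{\bm{y}}_{s+1})^T \sum_{i=1}^n \bm{X}_i L'\!\left(\frac{1-\hat{\bm{y}}_s^T\bm{X}_i}{h^2}\right).
\end{equation*}

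The main step is then to identify the right-hand side using the fixed-point equation. Let $C_s = \big\|\sum_{i=1}^n \bm{X}_i L'\big(\tfrac{1-\hat{\bm{y}}_s^T\bm{X}_i}{h^2}\big)\big\|_2$. The update rule \eqref{fix_point_eq} says exactly that $\sum_{i=1}^n \bm{X}_i L'\big(\tfrac{1-\hat{\bm{y}}_s^T\bm{X}_i}{h^2}\big) = -C_s\,\hat{\bm{y}}_{s+1}$, so substituting and using $\|\hat{\bm{y}}_{s+1}\|_2^2 = 1$ gives
\begin{equation*}
\hat{f}_h(\hat{\bm{y}}_{s+1}) - \hat{f}_h(\hat{\bm{y}}_s) \; \geq \; \frac{c_{h,q}(L)\, C_s}{nh^2}\,\bigl(1 - \hat{\bm{y}}_s^T \hat{\bm{y}}_{s+1}\bigr).
\end{equation*}
Both factors on the right are nonnegative: $C_s \geq 0$ by definition, and $\hat{\bm{y}}_s^T \hat{\bm{y}}_{s+1} \leq 1$ since both iterates lie on $\Omega_q$. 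Hence the sequence $\{\hat{f}_h(\hat{\bm{y}}_s)\}$ is monotonically nondecreasing. Monotone convergence then finishes the argument because $\hat{f}_h$ is bounded above on $\Omega_q$: as $L$ is decreasing, $L(r) \leq L(0) < \infty$ for $r \geq 0$, whence $\hat{f}_h(\bm{x}) \leq c_{h,q}(L)\, L(0)$ uniformly in $\bm{x}\in\Omega_q$.

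The main subtlety I anticipate is handling the degenerate case $C_s = 0$, where the update \eqref{fix_point_eq} is not defined. However, $C_s = 0$ is equivalent to the un-normalized weighted sum $\sum_i \bm{X}_i L'(\cdot)$ vanishing; by the decomposition \eqref{Dir_KDE_grad} this forces $\nabla \tilde{f}_h(\hat{\bm{y}}_s) = 0$, so the iterate is already a critical point and the algorithm terminates with the sequence trivially monotone. Apart from this edge case, the proof is a direct convexity-plus-normalization argument; no compactness or Hessian information about $L$ is needed beyond what is assumed.
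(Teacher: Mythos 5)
Your proof is correct and takes essentially the same route as the paper: the convexity inequality $L(v)-L(u)\geq L'(u)(v-u)$, followed by substituting the fixed-point identity $\sum_i \bm{X}_i L'(\cdot)=-C_s\hat{\bm{y}}_{s+1}$ to reduce the increment to $\frac{c_{h,q}(L)C_s}{nh^2}(1-\hat{\bm{y}}_s^T\hat{\bm{y}}_{s+1})\geq 0$; the paper merely writes the last factor as $\tfrac12\|\hat{\bm{y}}_{s+1}-\hat{\bm{y}}_s\|_2^2$, which is the same thing. Your remark on the degenerate $C_s=0$ case is a small extra observation not made explicit in the paper's proof, but it does not change the argument.
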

	
	At a high level, the proof of Theorem~\ref{MS_asc} follows from the inequality
	\begin{equation}
	L(x_2) -L(x_1) \geq L'(x_1) \cdot (x_2-x_1),
	\label{MS_ineq}
	\end{equation}
	which is guaranteed by the convexity and differentiability of the kernel function $L$; see Appendix~\ref{Appendix:Thm8_10_11_pf} for details.

	\begin{remark}
		\label{Diff_Relax}
		Note that the differentiability of kernel $L$ in Theorem \ref{MS_asc} can be relaxed. 
		The monotonicity and convexity of $L$ already imply that $L$ is differentiable except for a countable set of points 
		$\mathcal{N}$ (see Section 6.2 and 6.6 in \citealt{Real_Analysis}). Moreover, the left and right derivatives of $L$ on $\mathcal{N}$ exist and are finite.
	    Therefore, for any $x_1 \in \mathcal{N}$, we can replace $L'(x_1)$ in \eqref{MS_ineq} by any subgradient $g_{x_1}$ without impacting other parts of the inequality. Furthermore, as the left or right derivatives of the convex function $L$ are non-decreasing, any subgradient $g_{x_1}$ at point $x_1$ satisfies $L'(x_1^-) \leq g_{x_1} \leq L'(x_1^+)$; thus, \eqref{MS_ineq} holds.
	\end{remark}
	
	The ascending property of $\left\{\hat{f}_h(\hat{\bm{y}}_s) \right\}_{s=0}^{\infty}$ under the directional mean shift algorithm is not sufficient to guarantee the convergence of its iterative sequence $\{\hat{\bm{y}}_s\}_{s=0}^{\infty}$. To derive the convergence of $\{\hat{\bm{y}}_s\}_{s=0}^{\infty}$, we make the following assumptions on the directional KDE $\hat{f}_h$.
	\begin{itemize}
		\item {\bf (C1)} The number of local modes of $\hat{f}_h$ on $\Omega_q$ is finite, and the modes are isolated from other critical points.
		\item {\bf (C2)} Given the current values of $n$ and $h>0$, we assume that $\hat{\bm{m}}_k^T \nabla \hat{f}_h(\hat{\bm{m}}_k) \neq 0$ for all $\hat{\bm{m}}_k \in \hat{\mathcal{M}}_n$, that is, $\sum\limits_{i=1}^n \hat{\bm{m}}_k^T \bm{X}_i \cdot L'\left(\frac{1-\hat{\bm{m}}_k^T \bm{X}_i}{h^2} \right) \neq 0$.
	\end{itemize}
	
	Condition (C1) is a weak condition when the uniform consistency (Theorem~\ref{unif_conv_tang}) and mode consistency (Theorem~\ref{Mode_cons}) are established. In reality, condition (C1) is implied by conditions (D1) and (M1-2) on $f$ as well as (D2') and (K1) on the kernel function $L$ with a probability tending to $1$ as the sample size increases and the bandwidth parameter decreases accordingly.
	
	Condition (C2) may look strange at first glance; however, it is a reasonable and common assumption. In practice, it will be valid with those commonly chosen kernel functions, a reasonable sample size $n$, and a properly tuned bandwidth parameter $h>0$. More importantly, because the directional density $f$ is always positive around its local mode, the following lemma demonstrates that condition (C2) holds with probability tending to 1 as the sample size increases to infinity and the bandwidth parameter tends to 0 accordingly.
	
	\begin{lemma}
	\label{Rad_grad}
	Assume conditions (D1) and (D2'). For any fixed $\bm{x} \in \Omega_q$, we have
	$$h^2 \cdot \Rad\left(\nabla \hat{f}_h(\bm{x}) \right) \asymp h^2 \cdot \nabla\hat{f}_h(\bm{x})  = \bm{x} f(\bm{x}) C_{L,q} + o\left(1 \right) + O_P\left(\sqrt{\frac{1}{nh^q}} \right)$$
	as $nh^q \to \infty$ and $h\to 0$, where $C_{L,q}=-\frac{\int_0^{\infty} L'(r) r^{\frac{q}{2}-1} dr}{\int_0^{\infty} L(r) r^{\frac{q}{2}-1} dr} > 0$ is a constant depending only on kernel $L$ and dimension $q$ and ``$\asymp$'' stands for an asymptotic equivalence.
	\end{lemma}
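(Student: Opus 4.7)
The plan is to decompose $h^2\nabla\hat{f}_h(\bm{x})$ into its mean and a mean-zero stochastic term, evaluate the mean with the spherical change-of-variable formula, and then bound the variance. For the second claim, I would use Theorem~\ref{pw_conv_tang} to show the tangent part is negligible compared to the radial part.

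Starting from $h^2\nabla \hat{f}_h(\bm{x}) = -\frac{c_{h,q}(L)}{n}\sum_{i=1}^n \bm{X}_i L'\!\left(\frac{1-\bm{x}^T\bm{X}_i}{h^2}\right)$, I would first compute
\[
\mathbb{E}\!\left[h^2\nabla \hat{f}_h(\bm{x})\right] = -c_{h,q}(L)\int_{\Omega_q}\bm{y}\,L'\!\left(\frac{1-\bm{x}^T\bm{y}}{h^2}\right) f(\bm{y})\,\omega_q(d\bm{y}).
\]
Using the change-of-variable $t=\bm{x}^T\bm{y}$ and $\bm{y}=t\bm{x}+\sqrt{1-t^2}\,\bm{\xi}$ with $\bm{\xi}\in \Omega_{q-1}\cap T_{\bm{x}}$ (via the formula stated in Section~\ref{Sec:grad_Hess_consist}), followed by $r=(1-t)/h^2$ so that $1-t^2=rh^2(2-rh^2)$, the integral becomes
\[
-c_{h,q}(L)h^q \!\int_0^{2/h^2}\!\! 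L'(r)\,r^{\frac{q}{2}-1}(2-rh^2)^{\frac{q}{2}-1}\!\!\int_{\Omega_{q-1}}\!\!\bigl(t\bm{x}+h\sqrt{r(2-rh^2)}\,\bm{\xi}\bigr)f(\bm{y})\,\omega_{q-1}(d\bm{\xi})\,dr.
\]
As $h\to 0$, $\bm{y}\to\bm{x}$ at any fixed $r$, the $\bm{\xi}$ cross-term integrates to zero by spherical symmetry, and dominated convergence (justified by the integrability in condition (D2')) yields the leading behaviour. Combined with $c_{h,q}(L)^{-1}\asymp h^q\cdot 2^{q/2-1}\bar{\omega}_{q-1}\int_0^\infty L(r)r^{q/2-1}dr$ from \eqref{asym_norm_const}, the prefactors telescope to give
\[
\mathbb{E}\!\left[h^2\nabla \hat{f}_h(\bm{x})\right] = -\bm{x}f(\bm{x})\cdot\frac{\int_0^\infty L'(r)r^{q/2-1}dr}{\int_0^\infty L(r)r^{q/2-1}dr} + o(1) = \bm{x}f(\bm{x})C_{L,q}+o(1).
\]

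For the stochastic error, I would bound each component's variance by $\frac{1}{n}\mathbb{E}\!\left[c_{h,q}(L)^2 (X_{1,\ell})^2 L'((1-\bm{x}^T\bm{X}_1)/h^2)^2\right]$ and apply the same change of variables. The second-moment integral contributes an $O(h^q)$ factor from the Jacobian $(1-t^2)^{q/2-1}dt\asymp h^q r^{q/2-1}dr$, while $c_{h,q}(L)^2\asymp h^{-2q}$, so the variance is $O(1/(nh^q))$ and Chebyshev yields the $O_P(\sqrt{1/(nh^q)})$ rate. This establishes the displayed asymptotic formula for $h^2\nabla\hat{f}_h(\bm{x})$.

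Finally, to obtain the asymptotic equivalence $h^2\Rad(\nabla\hat{f}_h(\bm{x}))\asymp h^2\nabla\hat{f}_h(\bm{x})$, I would invoke Theorem~\ref{pw_conv_tang}: the tangent component $\Tang(\nabla \hat{f}_h(\bm{x}))=\grad\hat{f}_h(\bm{x})$ is of order $O(1)+O_P(\sqrt{1/(nh^{q+2})})$, so $h^2\Tang(\nabla\hat{f}_h(\bm{x}))=o(1)$ and the radial component dominates the total gradient to leading order, with the same limit $\bm{x}f(\bm{x})C_{L,q}$ (nonzero whenever $f(\bm{x})>0$). The positivity $C_{L,q}>0$ follows from $L'\le 0$ and $L>0$ on $[0,\infty)$ under the monotone-decreasing kernel assumption implicit in (D2').

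The main obstacle I expect is the bookkeeping around the change-of-variable, specifically justifying that the $h$-dependent factor $(2-rh^2)^{q/2-1}$ and the Taylor remainder in $f(\bm{y})-f(\bm{x})$ both contribute only $o(1)$ via dominated convergence, which requires the integrability in (D2') to handle the unbounded domain $[0,2/h^2]$ uniformly in $h$.
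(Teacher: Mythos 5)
Your proof is correct and follows essentially the same route as the paper's: decompose into mean and centered deviation, pass to the spherical coordinate $(t,\bm\xi)$ and then to $r=(1-t)/h^2$, identify the leading radial term as $\bm{x}f(\bm{x})C_{L,q}$ via the limit $\lambda_{h,q}(L)\to\lambda_q(L)$ justified by the dominated-convergence arguments of Lemma~\ref{integ_lemma}(a)/Remark~\ref{integ_lemma_remark}, and then bound the variance of $h^2\nabla\hat f_h(\bm x)$ by $O(1/(nh^q))$ from a second-moment calculation so that Chebyshev/CLT gives the $O_P(\sqrt{1/(nh^q)})$ rate. The one spot where your route diverges is the equivalence $h^2\Rad(\nabla\hat f_h(\bm x))\asymp h^2\nabla\hat f_h(\bm x)$: you invoke Theorem~\ref{pw_conv_tang} to show $h^2\grad\hat f_h(\bm x) = o(1) + o_P(\sqrt{1/(nh^q)})$ is negligible, whereas the paper deduces this internally by observing that the leading terms of both $\mathbb{E}[\nabla\hat f_h(\bm x)]$ and $\mathrm{Cov}[\nabla\hat f_h(\bm x)]$ are already purely radial (proportional to $\bm x$ and $\bm x\bm x^T$). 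Your shortcut is legitimate and modular since Theorem~\ref{pw_conv_tang} is established first, at the small cost of not exhibiting the dominating covariance constant; be aware, though, that the positivity $C_{L,q}>0$ rests on the sign of $\int_0^\infty L'(r)r^{q/2-1}\,dr$, which requires the monotone-decreasing-kernel reading of (D2') rather than the integrability bounds alone.
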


	The proof of Lemma~\ref{Rad_grad} can be found in Appendix~\ref{Appendix:Thm8_10_11_pf}. 
	With Lemma~\ref{Rad_grad}, we know that while the tangent component of $\nabla \hat{f}_h$ at each local mode is $0$,
	its radial component is diverging; thus, condition (C2) holds asymptotically. 
	This is not a surprising result, because observations in a directional data sample are supported on the sphere and the directional KDE $\hat{f}_h$ would thus decrease rapidly when moving away from the sphere.
	In addition, the limiting behavior of $\nabla \hat{f}_h$ determines the adaptive step size of the directional mean shift algorithm when it approaches the estimated local modes (see Section~\ref{sec:linear} for details). A similar asymptotic behavior of the step size of the mean shift algorithm in the Euclidean setting has been noticed by \cite{MS1995} and restated by \cite{Ery2016}. 
	
	We now state the convergence of Algorithm \ref{Algo:MS} under conditions (C1) and (C2).
	
	\begin{theorem}
		\label{MS_conv}
		Assume (C1) and (C2) and the conditions on kernel $L$ in Theorem \ref{MS_asc}. We further assume that $L$ is continuously differentiable. Then, for each local mode $\hat{\bm{m}}_k \in \hat{\mathcal{M}}_n$, there exists a $\hat{r}_k >0$ such that the sequence $\{\hat{\bm{y}}_s\}_{s=0}^{\infty}$ converges to $\hat{\bm{m}}_k$ whenever the initial point $\hat{\bm{y}}_0 \in \Omega_q$ satisfies $\norm{\hat{\bm{y}}_0 -\hat{\bm{m}}_k}_2 \leq \hat{r}_k$. Moreover, under conditions (D1) and (D2'), there exists a fixed constant $r^* >0$ such that $\mathbb{P}(\hat{r}_k \geq r^*) \to 1$ as $h\to 0$ and $nh^q \to \infty$.
	\end{theorem}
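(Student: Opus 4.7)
The plan combines the ascending property of Theorem~\ref{MS_asc}, continuity of the iteration map from \eqref{fix_point_eq}, compactness of $\Omega_q$, and the isolated-critical-points hypothesis (C1) to prove the first assertion; for the deterministic radius $r^*$, I then transport the basin structure of $f$ to that of $\hat f_h$ via the uniform consistency Theorems~\ref{unif_conv_tang} and~\ref{Mode_cons} together with Lemma~\ref{Rad_grad}.

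First I would set up the bookkeeping. Let $U\subset\Omega_q$ be the relatively open set where the denominator in \eqref{fix_point_eq} is nonzero and write
\begin{equation*}
T(\bm{y}) \;=\; -\frac{\sum_{i=1}^n \bm{X}_i L'\!\left(\tfrac{1-\bm{y}^T\bm{X}_i}{h^2}\right)}{\norm{\sum_{i=1}^n \bm{X}_i L'\!\left(\tfrac{1-\bm{y}^T\bm{X}_i}{h^2}\right)}_2},\quad \bm y\in U,
\end{equation*}
so that $\hat{\bm y}_{s+1}=T(\hat{\bm y}_s)$. Continuous differentiability of $L$ makes $T$ continuous on $U$ and (C2) places $\hat{\bm m}_k\in U$; comparing \eqref{Dir_KDE_grad2} with \eqref{tang_grad} further identifies $T(\bm y)=\bm y$ with $\grad\hat f_h(\bm y)=0$. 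Using (C1), pick $\delta>0$ so that $\bar B(\hat{\bm m}_k,2\delta)\cap\Omega_q\subset U$ contains no critical point of $\hat f_h$ besides $\hat{\bm m}_k$; isolatedness of critical points then promotes the local maximum $\hat{\bm m}_k$ to a strict local maximum on this closed ball, so $M:=\sup\{\hat f_h(\bm y):\norm{\bm y-\hat{\bm m}_k}_2=\delta\}<\hat f_h(\hat{\bm m}_k)$. Let $C$ be the connected component of $\hat{\bm m}_k$ in $\{\bm y\in\Omega_q:\hat f_h(\bm y)>M\}$. Since $\partial B(\hat{\bm m}_k,\delta)\cap\Omega_q\subset\{\hat f_h\leq M\}$ blocks $C$ from leaving $B(\hat{\bm m}_k,\delta)\cap\Omega_q$, we have $C\subset B(\hat{\bm m}_k,\delta)\cap\Omega_q\subset U$. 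Theorem~\ref{MS_asc} gives $\hat f_h(T(\bm y))\geq \hat f_h(\bm y)>M$ on $C$, so $T(C)\subset\{\hat f_h>M\}$; continuity of $T$ keeps $T(C)$ connected; and $T(\hat{\bm m}_k)=\hat{\bm m}_k\in T(C)\cap C$ forces $T(C)\subset C$. Choosing $\hat r_k\in(0,\delta)$ so small that $\bar B(\hat{\bm m}_k,\hat r_k)\cap\Omega_q\subset C$ traps any orbit starting in this ball inside $C$.

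Next, for a trapped orbit I would sharpen the argument of Theorem~\ref{MS_asc}. Substituting the iteration into \eqref{MS_ineq} and using $1-\hat{\bm y}_s^T\hat{\bm y}_{s+1}=\tfrac12\norm{\hat{\bm y}_{s+1}-\hat{\bm y}_s}_2^2$ yields
\begin{equation*}
\hat f_h(\hat{\bm y}_{s+1})-\hat f_h(\hat{\bm y}_s)\;\geq\;\frac{c_{h,q}(L)}{2nh^{2}}\norm{\sum_{i=1}^n \bm X_i L'\!\left(\tfrac{1-\hat{\bm y}_s^T\bm X_i}{h^2}\right)}_2\norm{\hat{\bm y}_{s+1}-\hat{\bm y}_s}_2^2.
\end{equation*}
On the compact $\bar C$ the prefactor is bounded below by some $c_1>0$ (by continuity and (C2)); telescoping then gives $\sum_s\norm{\hat{\bm y}_{s+1}-\hat{\bm y}_s}_2^2<\infty$, whence $\hat{\bm y}_{s+1}-\hat{\bm y}_s\to 0$. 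The subsequential-limit set $\Lambda\subset\bar C$ is therefore nonempty, compact, and connected; continuity of $T$ and the strictness of the above inequality unless $T(\bm y)=\bm y$ confine $\Lambda$ to the fixed points of $T$ in $\bar C\subset U$, which coincide with the critical points of $\hat f_h$. Since $\bar C$ contains only the critical point $\hat{\bm m}_k$, $\Lambda=\{\hat{\bm m}_k\}$ and $\hat{\bm y}_s\to\hat{\bm m}_k$. For the deterministic radius, Theorem~\ref{unif_conv_tang} gives uniform convergence of $\hat f_h$, $\grad\hat f_h$, and $\mathcal H\hat f_h$, while Theorem~\ref{Mode_cons} pairs each $\hat{\bm m}_k$ with a true mode $\bm m_k$ at vanishing distance. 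Using (M1)--(M2) together with Lemma~\ref{Rad_grad}, choose a deterministic $r^*>0$ so that on $\bar B(\bm m_k,3r^*)\cap\Omega_q$ the population Hessian of $f$ is uniformly negative-definite on tangent directions, $\bm m_k$ is the unique critical point of $f$ and $f(\bm m_k)$ exceeds the boundary maximum by a fixed gap, and $\norm{\sum_i\bm X_iL'(\cdot)}_2/n$ is bounded below. On the event that the uniform errors fall below half of these margins and $\norm{\hat{\bm m}_k-\bm m_k}_2<r^*$---which has probability tending to $1$ as $h\to 0$ and $nh^q\to\infty$---these structural features transfer to $\hat f_h$ on $\bar B(\hat{\bm m}_k,r^*)\cap\Omega_q$, so the trap construction above goes through with $\delta=r^*$ and yields $\hat r_k\geq r^*$.

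The main obstacle is the forward-invariance claim underlying the trap: the iteration is a discrete update with a nonlinear normalization step and can, in principle, leap across $\partial B(\hat{\bm m}_k,\delta)$ in a single step, so one cannot simply trap the orbit inside a geodesic ball. Replacing the ball with the connected superlevel-set component $C$ side-steps this by combining three ingredients---(i) the ascending property keeps the orbit in $\{\hat f_h>M\}$, (ii) continuity of $T$ makes $T(C)$ connected, and (iii) $T(\hat{\bm m}_k)=\hat{\bm m}_k\in C$ pins $T(C)$ to the same component as $C$---and it is precisely this step that uses the continuous differentiability of $L$ and (C2) in an essential way.
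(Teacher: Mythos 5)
Your proof is correct, and it takes a genuinely different route from the paper on the key trapping step. The paper works with a geodesic ball $\{\bm{y}:\norm{\bm{y}-\hat{\bm{m}}_k}_2\le \hat r_k\}$ and asserts that, since $L$ is continuously differentiable and $T(\hat{\bm m}_k)=\hat{\bm m}_k$, one can shrink $\hat r_k$ until $\hat{\bm{m}}_k^T T(\bm{y})\ge 1-\hat r_k^2/2$ holds throughout, i.e., until the ball is forward invariant under the fixed-point map $T$. As you correctly observe, continuity alone does not deliver this self-referential condition: the linearization $DT|_{\hat{\bm m}_k}=I+\norm{\nabla\hat f_h(\hat{\bm m}_k)}_2^{-1}\mathcal H\hat f_h(\hat{\bm m}_k)$ on $T_{\hat{\bm m}_k}$ need not have operator norm below one under (C1)--(C2) alone, so in principle the iteration can overshoot the ball. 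Your construction replaces the ball with the connected component $C$ of $\hat{\bm m}_k$ in a superlevel set $\{\hat f_h>M\}$; forward invariance of $C$ then follows purely from the ascending property (Theorem~\ref{MS_asc}), continuity of $T$, and the fact that $T(\hat{\bm m}_k)=\hat{\bm m}_k$ pins $T(C)$ to the same component, which is more robust. After that, both arguments run on parallel tracks: both telescope the quantitative form of \eqref{conv_inequ_fh} using a positive lower bound on the gradient norm over the compact trap to get $\norm{\hat{\bm y}_{s+1}-\hat{\bm y}_s}_2\to 0$; the paper then concludes directly that $\norm{\Tang(\nabla\hat f_h(\hat{\bm y}_s))}_2\to 0$ via the algebraic identity between $\grad\hat f_h(\hat{\bm y}_s)$ and $\hat{\bm y}_{s+1}^T\hat{\bm y}_s$, while you invoke the standard subsequential-limit-set argument (nonempty, compact, connected, contained in fixed points of $T$). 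Both close the loop with the uniqueness of the critical point in the trap. For the $r^*$ part, both you and the paper invoke Lemma~\ref{Rad_grad} and the uniform consistency machinery; you additionally invoke (M1)--(M2) and Theorem~\ref{Mode_cons}, which lie outside the stated hypotheses of Theorem~\ref{MS_conv} (only (D1), (D2') are added for the $r^*$ claim), but the paper's own treatment of this part is equally terse, so this is not a distinguishing gap.
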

	
	The proof of Theorem~\ref{MS_conv} is in Appendix~\ref{Appendix:Thm8_10_11_pf}. The theorem implies that when we initialize the directional mean shift algorithm (Algorithm~\ref{Algo:MS}) sufficiently close to an estimated local mode, it will converge to this mode.

	\subsection{Linear Convergence of Gradient Ascent Algorithms on $\Omega_q$}	\label{sec:linear}
	
	We now discuss the linear convergence of gradient ascent algorithms on $\Omega_q$. 
	Because the sphere $\Omega_q$ is not a conventional Euclidean space but a Riemannian manifold, 
	the definition of a gradient ascent update is more complex. We first provide a brief introduction to some useful concepts from differential geometry.
	The interested readers can consult Appendix \ref{sec::GH} for additional details.
	
	An \emph{exponential map} at $\bm x\in \Omega_q$ is a mapping $\Exp_{\bm x}: T_{\bm x} \to \Omega_q$ such that a vector $\bm v\in T_{\bm x}$ is mapped to point $\bm y:=\Exp_{\bm x}(\bm v) \in \Omega_q$ with $\gamma(0) =\bm x, \gamma(1)=\bm y$ and $\gamma'(0)=\bm v$, where $\gamma: [0,1] \to \Omega_q$ is a geodesic. 
	An intuitive way of thinking of the exponential map evaluated at $\bm{v}$ on the sphere $\Omega_q$ is that starting at point $\bm x$, we identify another point $\bm y$ on $\Omega_q$ along the great circle in the direction of $\bm{v}$ so that the geodesic distance between $\bm x$ and $\bm y$ is $\norm{\bm{v}}_2$.
	
	The inverse of the exponential map is a mapping $\Exp_{\bm x}^{-1}: U\subset \Omega_q \to T_{\bm x}$ such that $\Exp_{\bm x}^{-1}(\bm y)$ represents the vector in $T_{\bm x}$ starting at $\bm x$, pointing to $\bm y$, and with its length equal to the geodesic distance between $\bm x$ and $\bm y$. 
	$\Exp_{\bm x}^{-1}$ is sometimes called the logarithmic map. 
	
	On $\Omega_q$, the notion of \emph{parallel transport} provides a sensible way to transport a vector along a geodesic \citep{Geo_Convex_Op2016}. Intuitively, a tangent vector $\bm v\in T_{\bm x}$ at $\bm x\in \Omega_q$ of a geodesic $\gamma$ is still a tangent vector $\Gamma_{\bm x}^{\bm y}(\bm v) \in T_{\bm y}$ of $\gamma$ after being transported to point $\bm y$ along $\gamma$. Furthermore, parallel transport preserves inner products, that is, $\langle {\bm u},{\bm v} \rangle_{\bm x} =\langle \Gamma_{\bm x}^{\bm y}(\bm u), \Gamma_{\bm x}^{\bm y}(\bm v) \rangle_{\bm y}$.
	The above concepts can be defined on a general Riemannian manifold; however, for our purposes, it suffices to focus on the case of $\Omega_q$.

	Adopting the notation of \cite{Geo_Convex_Op2016}, 
	a gradient ascent algorithm applied to an objective function $f$ on $\Omega_q$ (a Riemannian manifold) is written as
	\begin{equation}
	\label{grad_ascent_Manifold}
	\bm{y}_{s+1} = \Exp_{\bm{y}_s}\left(\eta \cdot \grad f(\bm{y}_s) \right).
	\end{equation}
	Recall that given a sequence $\left\{\bm{y}_s \right\}_{s=0}^{\infty}$ converging to $\bm{m}_k \in \mathcal{M}$, the convergence is said to be linear if there exists a positive constant $\Upsilon <1$ (rate of convergence) such that $\norm{\bm{y}_{s+1} -\bm{m}_k} \leq \Upsilon \norm{\bm{y}_s -\bm{m}_k}$ when $s$ is sufficiently large \citep{Boyd2004}. In our context, the norm $\norm{\cdot}$ refers to the geodesic (or great-circle) distance $d(\bm{x},\bm{y})=\norm{\Exp_{\bm x}^{-1}(\bm{y})}_2$ between two points $\bm{x},\bm{y}\in \Omega_q$. 
	An equivalent statement of linear convergence is that the algorithm takes  $O(\log(1/\epsilon))$ iterations to converge to an $\epsilon$-error of $\hat{\bm{m}}_k$. 
	
	Here, we first prove the linear convergence results for the gradient ascent algorithm with $f$ and $\hat f_h$ on $\Omega_q$ under a feasible range of step size $\eta$.
	We then demonstrate that the directional mean shift algorithm is an exemplification of the gradient ascent algorithm on $\Omega_q$ with an adaptive step size, and that its step size eventually falls into the feasible range with a properly tuned bandwidth parameter. Using the notation in \cite{Geo_Convex_Op2016}, we let $\zeta(1, c) \equiv \frac{c}{\tanh(c)}$. One can show by differentiating $\zeta(1, c)$ that $\zeta(1, c)$ is strictly increasing and $\zeta(1, c) > 1$ for any $c>0$.
	
	\begin{theorem}
		\label{Linear_Conv_GA}
Assume (D1) and (M1).
\begin{enumerate}[label=(\alph*)]
	\item \textbf{Linear convergence of gradient ascent with $f$}: Given a convergence radius $r_0$ with $0< r_0 \leq \sqrt{2-2\cos\left[\frac{3\lambda_*}{2(q+1)^{\frac{3}{2}}C_3} \right]}$, the iterative sequence $\left\{\bm{y}_s\right\}_{s=0}^{\infty}$ defined by the population-level gradient ascent algorithm \eqref{grad_ascent_Manifold} satisfies
	$$d(\bm{y}_s, \bm{m}_k) \leq \Upsilon^s \cdot d(\bm{y}_0, \bm{m}_k) \quad \text{ with } \quad \Upsilon = \sqrt{1-\frac{\eta\lambda_*}{2}},$$
	whenever $\eta \leq \min\left\{\frac{2}{\lambda_*}, \frac{1}{(q+1)C_3\zeta(1,r_0)} \right\}$ and the initial point $\bm{y}_0 \in \left\{\bm{z}\in \Omega_q: \norm{\bm{z}-\bm{m}_k}_2 \leq r_0 \right\}$ for some $\bm{m}_k \in \mathcal{M}$.
	We recall from Section~\ref{Sec:Mode_Const} that $C_3$ is an upper bound for the derivatives of the directional density $f$ up to the third order, $\lambda_*>0$ is defined in (M1), and $\mathcal{M}$ is the set of local modes of the directional density $f$.
\end{enumerate}
We further assume (D2') and (K1) in the sequel.
\begin{enumerate}[label=(b)]
	\item \textbf{Linear convergence of gradient ascent with $\hat{f}_h$}: Let the sample-based gradient ascent update on $\Omega_q$ be $\hat{\bm{y}}_{s+1} = \Exp_{\bm{y}_s}\left(\eta\cdot \grad \hat{f}_h(\hat{\bm{y}}_s) \right)$. With the same choice of the convergence radius $r_0>0$ and $\Upsilon=\sqrt{1-\frac{\eta\lambda_*}{2}}$ as in (a), if $h\to 0$ and $\frac{nh^{q+2}}{|\log h|} \to \infty$, then for any $\delta \in (0,1)$,
	$$d\left(\hat{\bm{y}}_s,\bm{m}_k \right) \leq \Upsilon^s \cdot d\left(\hat{\bm{y}}_0,\bm{m}_k \right) + O(h^2) + O_P\left(\sqrt{\frac{|\log h|}{nh^{q+2}}} \right)$$
	with probability at least $1-\delta$, whenever $\eta \leq \min\left\{\frac{2}{\lambda_*}, \frac{1}{(q+1)C_3\cdot\zeta(1,r_0)} \right\}$ and the initial point $\hat{\bm{y}}_0 \in \left\{\bm{z}\in \Omega_q: \norm{\bm{z}-\bm{m}_k}_2 \leq r_0 \right\}$ for some $\bm{m}_k \in \mathcal{M}$.
\end{enumerate}
	\end{theorem}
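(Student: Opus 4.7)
My plan is to first prove (a) by a local strong concavity plus Lipschitz-gradient argument in the Riemannian setting, then bootstrap (b) via the uniform consistency of $\grad \hat{f}_h$ established in Theorem~\ref{unif_conv_tang}.

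For part (a), the first step is to set up two ingredients that hold on the geodesic ball $B_{r_0}(\bm{m}_k) = \{\bm{z}\in\Omega_q:\norm{\bm{z}-\bm{m}_k}_2\le r_0\}$. \emph{Local strong concavity:} combining (M1) with the third-order bound $C_3$ and a Taylor expansion of $\mathcal{H} f$ around $\bm{m}_k$, the Riemannian Hessian satisfies $\bm v^\top \mathcal{H} f(\bm{y})\bm v \le -\tfrac{\lambda_*}{2}\|\bm v\|_2^2$ for every $\bm v\in T_{\bm y}$ and every $\bm y\in B_{r_0}(\bm{m}_k)$, provided $r_0$ is small enough; the explicit bound $r_0\le\sqrt{2-2\cos(3\lambda_*/(2(q+1)^{3/2}C_3))}$ gives exactly the margin needed (the $(q+1)^{3/2}$ converts the coordinatewise $C_3$-bound into a spectral-norm bound on $\nabla\nabla f$, hence on $\mathcal{H} f$). \emph{Lipschitz gradient:} a similar Taylor-plus-parallel-transport argument shows $\|\grad f(\bm{y})-\Gamma_{\bm x}^{\bm y}\grad f(\bm{x})\|_2\le (q+1)C_3\,d(\bm{x},\bm{y})$ on $B_{r_0}(\bm{m}_k)$.

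The second step is the one-step contraction. Letting $\bm v_s = \Exp_{\bm{y}_s}^{-1}(\bm{m}_k)$ and using the law-of-cosines inequality on manifolds of bounded sectional curvature (Lemma~6 of Zhang--Sra, which is where the factor $\zeta(1,r_0)=r_0/\tanh(r_0)$ enters because $\Omega_q$ has constant curvature $1$), one expands
\begin{equation*}
d(\bm{y}_{s+1},\bm{m}_k)^2 \;\le\; d(\bm{y}_s,\bm{m}_k)^2 - 2\eta\,\langle \grad f(\bm{y}_s),\bm v_s\rangle + \eta^2\,\zeta(1,r_0)\,\|\grad f(\bm{y}_s)\|_2^2.
\end{equation*}
Since $\grad f(\bm{m}_k)=0$, strong concavity along the geodesic from $\bm{y}_s$ to $\bm{m}_k$ gives $\langle\grad f(\bm{y}_s),\bm v_s\rangle\ge \tfrac{\lambda_*}{2}d(\bm{y}_s,\bm{m}_k)^2$, while the Lipschitz bound gives $\|\grad f(\bm{y}_s)\|_2\le (q+1)C_3\,d(\bm{y}_s,\bm{m}_k)$. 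Plugging these in and using $\eta\le \min\{2/\lambda_*,\,1/[(q+1)C_3\zeta(1,r_0)]\}$ to absorb the quadratic term yields
\begin{equation*}
d(\bm{y}_{s+1},\bm{m}_k)^2 \;\le\; \bigl(1-\tfrac{\eta\lambda_*}{2}\bigr)\,d(\bm{y}_s,\bm{m}_k)^2.
\end{equation*}
Since $\Upsilon^2=1-\eta\lambda_*/2<1$, the iterate stays inside $B_{r_0}(\bm{m}_k)$, so the inequality can be iterated, giving (a).

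For part (b), I would proceed by perturbation. Write $\grad \hat{f}_h(\bm{y})=\grad f(\bm{y})+\bm\epsilon_n(\bm{y})$, so Theorem~\ref{unif_conv_tang} yields a uniform bound $E_n:=\sup_{\bm y\in\Omega_q}\|\bm\epsilon_n(\bm y)\|_2 = O(h^2)+O_P(\sqrt{|\log h|/(nh^{q+2})})$ (the max-norm bound there gives the Euclidean bound up to a factor $\sqrt{q+1}$, which is absorbed into the $O(\cdot)$). Running the same expansion as in step two, but with $\grad\hat f_h$ in place of $\grad f$, picks up an extra cross-term bounded by $2\eta E_n\,d(\hat{\bm y}_s,\bm m_k)$ and an extra $\eta^2\zeta(1,r_0)$-term; using $2ab\le \tfrac{\eta\lambda_*}{4}a^2+\tfrac{4}{\eta\lambda_*}b^2$ on the cross-term yields
\begin{equation*}
d(\hat{\bm y}_{s+1},\bm m_k)\;\le\;\Upsilon\,d(\hat{\bm y}_s,\bm m_k)+C\,E_n,
\end{equation*}
for an absolute constant $C$. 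Telescoping the geometric series turns this into $d(\hat{\bm y}_s,\bm m_k)\le \Upsilon^s d(\hat{\bm y}_0,\bm m_k)+\tfrac{C E_n}{1-\Upsilon}$, which is the claimed bound. A short induction argument, using that $E_n\to 0$ on the high-probability event provided by Theorem~\ref{unif_conv_tang}, ensures that $\hat{\bm y}_s$ stays inside $B_{r_0}(\bm m_k)$ for all $s$, so the local concavity and Lipschitz bounds remain applicable.

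The main obstacle I expect is getting the exact constants right in the Riemannian one-step inequality: the $\zeta(1,r_0)$ factor captures how the sphere's positive curvature distorts the Euclidean law of cosines, and matching this to the explicit radius $r_0\le \sqrt{2-2\cos[3\lambda_*/(2(q+1)^{3/2}C_3)]}$ requires careful tracking of the $(q+1)$-factors that arise in converting the coordinatewise assumption on third-order partials into operator-norm bounds on $\mathcal{H} f$ and on the gradient Lipschitz constant. The rest is a fairly standard convex-analytic argument cast on the manifold.
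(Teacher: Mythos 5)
Your framework for part (a) — local strong concavity from (M1)+$C_3$, a $(q+1)C_3$-Lipschitz Riemannian gradient, and Zhang–Sra's law-of-cosines inequality on $\Omega_q$ — is the right one, and the place where $\zeta(1,r_0)$ enters is correct. However, the specific bounds you plug in do not deliver the claimed contraction factor under the claimed step-size constraint. You use the Lipschitz bound $\norm{\grad f(\bm{y}_s)}_2\le (q+1)C_3\,d(\bm{y}_s,\bm{m}_k)$ together with $\langle\grad f(\bm{y}_s),\Exp_{\bm{y}_s}^{-1}(\bm{m}_k)\rangle\ge (\lambda_*/2)\,d^2(\bm{y}_s,\bm{m}_k)$, which after substitution gives
\begin{equation*}
d^2(\bm{y}_{s+1},\bm{m}_k)\le\bigl(1-\eta\lambda_*+\eta^2\zeta(1,r_0)(q+1)^2C_3^2\bigr)\,d^2(\bm{y}_s,\bm{m}_k).
\end{equation*}
For this factor to be at most $1-\eta\lambda_*/2$ you need $\eta\le\lambda_*/\bigl(2\zeta(1,r_0)(q+1)^2C_3^2\bigr)$, which is strictly tighter than the stated $\eta\le 1/\bigl((q+1)C_3\,\zeta(1,r_0)\bigr)$ whenever $\lambda_*<2(q+1)C_3$ — and since (M1) and the $(q+1)C_3$-smoothness force $\lambda_*\le(q+1)C_3$, this is always the case. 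So under the hypotheses as stated your one-step inequality does not close.

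The fix, and what the paper actually does, is to avoid the Lipschitz bound on $\norm{\grad f}_2$ and instead derive a gradient-domination (PL-type) inequality from smoothness: comparing $f(\bm{y}_s)$ with $f$ evaluated one gradient-ascent step ahead (step size $1/[(q+1)C_3]$), which stays inside the ball, gives $\norm{\grad f(\bm{y}_s)}_2^2\le 2(q+1)C_3\bigl[f(\bm{m}_k)-f(\bm{y}_s)\bigr]$. Then plug strong concavity in the two-point form $-\langle\grad f(\bm{y}_s),\Exp_{\bm{y}_s}^{-1}(\bm{m}_k)\rangle\le f(\bm{y}_s)-f(\bm{m}_k)-\tfrac{\lambda_*}{4}d^2(\bm{y}_s,\bm{m}_k)$ into the law-of-cosines expansion. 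The two $f(\bm{m}_k)-f(\bm{y}_s)$ terms combine into $-2\eta\bigl[1-\zeta(1,r_0)(q+1)C_3\,\eta\bigr]\bigl[f(\bm{m}_k)-f(\bm{y}_s)\bigr]$, which is nonpositive precisely when $\eta\le 1/\bigl((q+1)C_3\,\zeta(1,r_0)\bigr)$, leaving $d^2(\bm{y}_{s+1},\bm{m}_k)\le(1-\eta\lambda_*/2)\,d^2(\bm{y}_s,\bm{m}_k)$ with no extra restriction. Your part (b) perturbation plan is sound in spirit, but it inherits the same issue if you again expand $\norm{\grad\hat f_h}_2^2$ via Lipschitz; the paper sidesteps this by the triangle inequality $d(\hat{\bm{y}}_{s+1},\bm{m}_k)\le d\bigl(\Exp_{\hat{\bm{y}}_s}(\eta\grad f(\hat{\bm{y}}_s)),\bm{m}_k\bigr)+d\bigl(\Exp_{\hat{\bm{y}}_s}(\eta\grad\hat f_h(\hat{\bm{y}}_s)),\Exp_{\hat{\bm{y}}_s}(\eta\grad f(\hat{\bm{y}}_s))\bigr)$, bounding the first term by $\Upsilon\,d(\hat{\bm{y}}_s,\bm{m}_k)$ directly from part (a) and the second by the uniform consistency of $\grad\hat f_h$, then telescoping as you describe.
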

	
	The proof of Theorem~\ref{Linear_Conv_GA} is in Appendix~\ref{Appendix:Thm12_pf}. As shown in (a) of Theorem~\ref{Linear_Conv_GA}, the linear convergence radius of gradient ascent algorithm \eqref{grad_ascent_Manifold} on $\Omega_q$ generally depends on the lower bound $\lambda_*$ on absolute eigenvalues of the Riemannian Hessian $\mathcal{H} f(\bm{x})$ (within the tangent space $T_{\bm{x}}$), the upper bound $C_3$ for the (partial) derivatives of $f$ up to the third order, and manifold dimension $q$. 
	
	In practice, we are more interested in the algorithmic convergence rate of sample-based gradient ascent algorithms with directional KDEs to the estimated local modes $\hat{M}_n$. As indicated by Theorem \ref{unif_conv_tang}, the Hessian matrices of $\hat{f}_h$ at its local modes have only negative eigenvalues within the corresponding tangent spaces given (M1), sufficiently small $h$, and sufficiently large $\frac{nh^{q+4}}{|\log h|}$. In reality, unless the data configuration is highly abnormal, the local modes of directional KDEs are non-degenerate and $\hat{f}_h$ is geodesically strongly concave (see Appendix \ref{sec::GH} for a precise definition) around small neighborhoods of the estimated local modes. Together with an application of smooth kernels, says the von Mises kernel, $\hat{f}_h$ is $\beta$-smooth on $\Omega_q$ and, consequently, a sample-based gradient ascent algorithm with the directional KDE $\hat{f}_h$ converges linearly to the estimated local modes around their small neighborhoods, given a proper step size.
	
	With respect to the directional mean shift algorithm, we recall from the fixed-point equation \eqref{fix_point_grad} that the geodesic distance between $\hat{\bm{y}}_{s+1}$ and $\hat{\bm{y}}_s$ (one-step iteration) is
	$$\arccos\left(\frac{\nabla \hat{f}_h(\hat{\bm{y}}_s)^T \hat{\bm{y}}_s}{\norm{\nabla \hat{f}_h(\hat{\bm{y}}_s)}_2} \right).$$
	To derive the adaptive step size $\hat{\eta}_s$ of the directional mean shift algorithm as a sample-based gradient ascent iteration $\hat{\bm{y}}_{s+1} = \Exp_{\hat{\bm{y}}_s}\left(\hat{\eta}_s\cdot \grad \hat{f}_h(\hat{\bm{y}}_s) \right)$ on $\Omega_q$, we notice the following geodesic distance equation:
	$$\norm{\hat{\eta}_s \cdot \grad \hat{f}_h(\hat{\bm{y}}_s)}_2 = \arccos\left(\frac{\nabla \hat{f}_h(\hat{\bm{y}}_s)^T \hat{\bm{y}}_s}{\norm{\nabla \hat{f}_h(\hat{\bm{y}}_s)}_2} \right).$$
	This shows that the directional mean shift algorithm is a gradient ascent method on $\Omega_q$ with an adaptive step size
	$$\hat{\eta}_s = \arccos\left(\frac{\nabla \hat{f}_h(\hat{\bm{y}}_s)^T \hat{\bm{y}}_s}{\norm{\nabla \hat{f}_h(\hat{\bm{y}}_s)}_2} \right) \cdot \frac{1}{\norm{\grad \hat{f}_h(\hat{\bm{y}}_s)}_2}$$
	for $s=0,1,...$. We denote the angle between the total gradient estimator $\nabla \hat{f}_h(\hat{\bm{y}}_s)$ and $\hat{\bm{y}}_s$ by $\hat{\theta}_s$. By the orthogonality of $\hat{\bm{y}}_s$ and $\grad \hat{f}_h(\hat{\bm{y}}_s)\equiv \Tang\left(\nabla \hat{f}_h(\hat{\bm{y}}_s) \right)$, the expression for the adaptive step size $\hat{\eta}_s$ becomes
	$$\hat{\eta}_s = \frac{\hat{\theta}_s}{\left(\sin \hat{\theta}_s \right) \cdot \norm{\nabla \hat{f}_h(\hat{\bm{y}}_s)}_2}$$
	for $s=0,1,...$. As the directional mean shift algorithm approaches a local mode of $\hat{f}_h$, $\hat{\theta}_s$ tends to 0 and $\frac{\hat{\theta}_s}{\sin \hat{\theta}_s}$ is approximately equal to 1. Thus, the step size $\hat{\eta}_s$ is essentially controlled by the (Euclidean) norm of the total gradient estimator, that is, $\norm{\nabla \hat{f}_h(\hat{\bm{y}}_s)}_2$. The larger the norm of $\nabla \hat{f}_h(\hat{\bm{y}}_s)$ at step $s$, the shorter the step size of Algorithm~\ref{Algo:MS}. Because the tangent component of $\nabla \hat{f}_h(\hat{\bm{y}}_s)$ is small around the estimated local modes, its radial component $\Rad \left(\nabla \hat{f}_h(\hat{\bm{y}}_s) \right)$ dominates the norm $\norm{\nabla \hat{f}_h(\hat{\bm{y}}_s)}_2$.
	Lemma~\ref{Rad_grad} suggests that $\norm{\nabla \hat{f}_h(\hat{\bm{y}}_s)}_2$ can be sufficiently large as the sample size increases to infinity and the bandwidth parameter decreases to 0 accordingly; therefore, one can always select a small bandwidth parameter $h$ such that the step size $\hat{\eta}_s$ lies within the feasible range for linear convergence. Algorithm~\ref{Algo:MS} thus converges (at least) linearly around the local modes of the directional KDE $\hat{f}_h$.
	
	\subsection{Computational Complexity}
	\label{Sec:Com_Complexity}
	
	Given a fixed data set $\{\bm{X}_1,...,\bm{X}_n\} \subset \Omega_q$, the time complexity of Algorithm \ref{Algo:MS} is $O(n\times q)$ for one iteration of the algorithm on a single query point. When Algorithm~\ref{Algo:MS} is applied to the entire data set as the set of query points, each iteration exhibits $O(n^2\times q)$ time complexity. Assuming that the algorithm converges linearly, the total time complexity for reaching an $\epsilon$-error is $O\left(n^2\times q\times \log(1/\epsilon) \right)$. The space complexity of mode clustering with Algorithm~\ref{Algo:MS} is, in general, $O(n\times q)$ if mode clustering is performed on the entire data set to estimate the directional density and only the current set of iteration points are stored in memory. Algorithm \ref{Algo:MS} inevitably faces a computational bottleneck or even inferior performance when the (intrinsic) dimension $q$ is large. This drawback of the algorithm results not only from its time and space complexity, but also from its original dependency on nonparametric density estimation, which is known to suffer from \emph{the curse of dimensionality}.
	
	\section{Experiments}
	\label{Sec:Experiments}
	
	In this section, we present our experimental results of the directional mean shift algorithm on both simulated and real-world data sets. Unless stated otherwise, we use the von Mises kernel $L(r)=e^{-r}$ in the directional KDE \eqref{Dir_KDE} to estimate the directional densities and their derivatives. Given the data sample $\{\bm{X}_1,...,\bm{X}_n\}$, the default bandwidth parameter is selected via the rule of thumb in Proposition 2 in \cite{Exact_Risk_bw2013}:  
	\begin{equation}
	\label{bw_ROT}
	h_{\text{ROT}} = \left[\frac{4\pi^{\frac{1}{2}} \mathcal{I}_{\frac{q-1}{2}}(\hat{\nu})^2}{\hat{\nu}^{\frac{q+1}{2}}\left[2 q\cdot\mathcal{I}_{\frac{q+1}{2}}(2\hat{\nu}) + (q+2)\hat{\nu} \cdot \mathcal{I}_{\frac{q+3}{2}}(2\hat{\nu}) \right]n} \right]^{\frac{1}{q+4}}
	\end{equation}
	for $q\geq 1$, which is the optimal bandwidth for the directional KDE that minimizes the asymptotic mean integrated squared error when the underlying density is a von Mises-Fisher density and the von Mises kernel is applied. The estimated concentration parameter $\hat{\nu}$ is given by (4.4) in \cite{spherical_EM} as
	$$\hat{\nu} = \frac{\bar{R}(q+1-\bar{R}^2)}{1-\bar{R}^2},$$
	where $\bar{R} = \frac{\norm{\sum_{i=1}^n \bm{X}_i}_2}{n}$ (see also \cite{Sra2011} for a detailed discussion and experiments on the numerical approximation of the concentration parameter for von Mises-Fisher distributions). We also perform mode clustering \citep{Mode_clu2016} (sometimes called mean shift clustering in \citealt{MS1975, MS1995}) on the original data sets in our simulation studies, in which data points are assigned to the same cluster if they converge to the same (estimated) local mode. When such procedure is carried out on the entire data space, it partitions the space into different regions called \emph{basins of attraction} of the (directional) KDE. 
	% Different regions of the data space in which points converge to the same estimated local mode are called \emph{basins of attraction} of the directional KDE. 
	As the true density component from which a data point is generated is known a priori in our simulation studies (i.e., we know the label of each observation), we also provide the misclassification rates or confusion matrices of mode clustering with the directional mean shift algorithm, though one should be aware that mode clustering, by its nature, embraces non-overlapping basins of attraction \citep{Morse_Homology2004,chacon2015population}. Figures~\ref{fig:Two_d_ThreeM}, \ref{fig:Mars_data}, and \ref{fig:Earthquake} in this section as well as Figures~\ref{fig:Two_d_OneM} and \ref{fig:Mode_clu_Mars} in Appendix~\ref{Appendix:Ad_Experiments} are plotted via the Matplotlib Basemap Toolkit (\url{https://matplotlib.org/basemap/}).
	
	\subsection{Simulation Studies}
	
	\subsubsection{Circular Case}
	
	To evaluate the effectiveness of Algorithm~\ref{Algo:MS}, we first randomly generate 60 data points from a circular density
	$$f_1(x)=\frac{e^{-|x|}}{4(1-e^{-\pi})}\cdot \mathbbm{1}_{[-\pi,\pi]}(x)+\frac{1}{4\pi \mathcal{I}_0(6)}\exp\left[6 \cos\left(x-\frac{\pi}{2} \right) \right],$$
	which is a mixture of a Laplace density with mean 0 and scale 1 truncated to $[-\pi,\pi]$ and a von Mises density with mean $\frac{\pi}{2}$ and concentration parameter $\nu=6$. The von Mises(-Fisher) distributed samples are generated via rejection sampling with the uniform distribution as the proposal density. The true local modes are 0 and $\frac{\pi}{2}$ in terms of angular representations or $(0,0)$ and $(0,1)$ in Cartesian coordinates. The directional KDE on the simulated data and directional mean shift iterations are presented in Figure~\ref{fig:One_d_MS}. The bandwidth parameter here is selected as $h=0.3 < h_{\text{ROT}} \approx0.4181$ because the aforementioned rule of thumb $h_{\text{ROT}}$ tends to be oversmoothing when the underlying density is not von Mises distributed. In addition, the tolerance level for terminating the algorithm is set to $\epsilon=10^{-7}$. 
	
	\begin{figure}
		\captionsetup[subfigure]{justification=centering}
		\centering
		\begin{subfigure}[t]{.32\textwidth}
			\centering
			\includegraphics[width=1\linewidth]{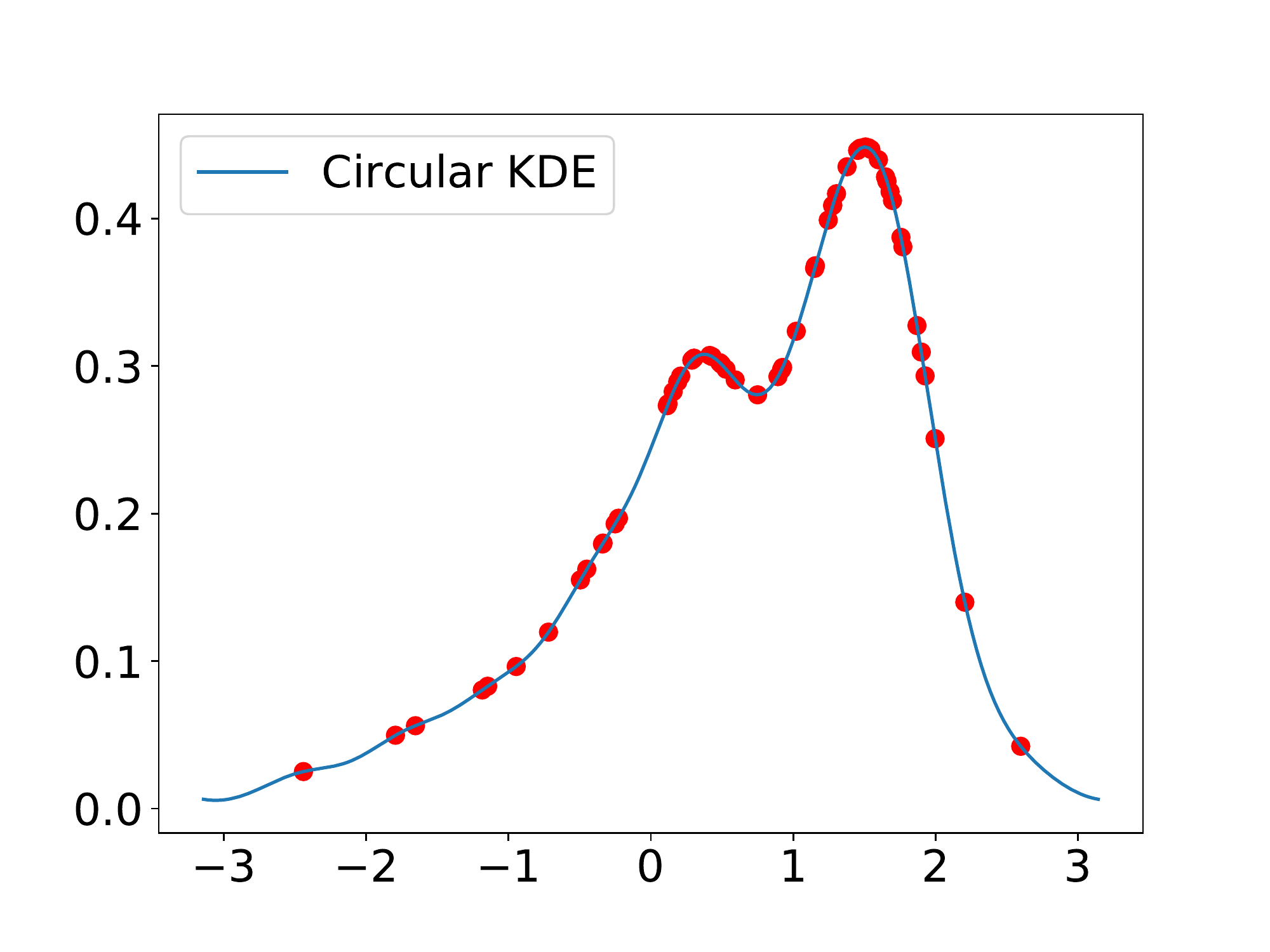}
			\caption{Step 0}
		\end{subfigure}
		\hfil
		\begin{subfigure}[t]{.32\textwidth}
			\centering
			\includegraphics[width=1\linewidth]{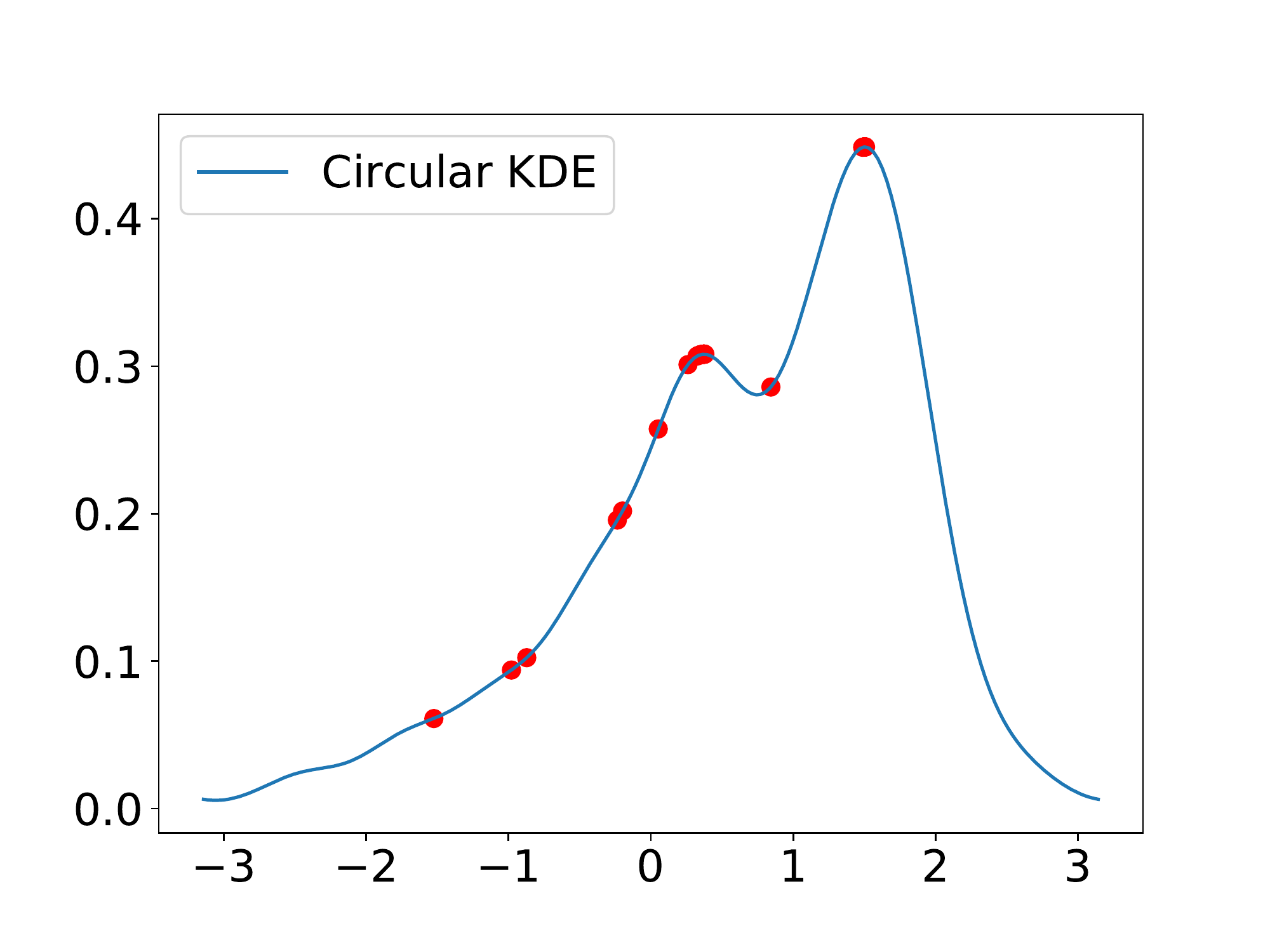}
			\caption{Step 11}
		\end{subfigure}%
		\hfil
		\begin{subfigure}[t]{.32\textwidth}
			\centering
			\includegraphics[width=1\linewidth]{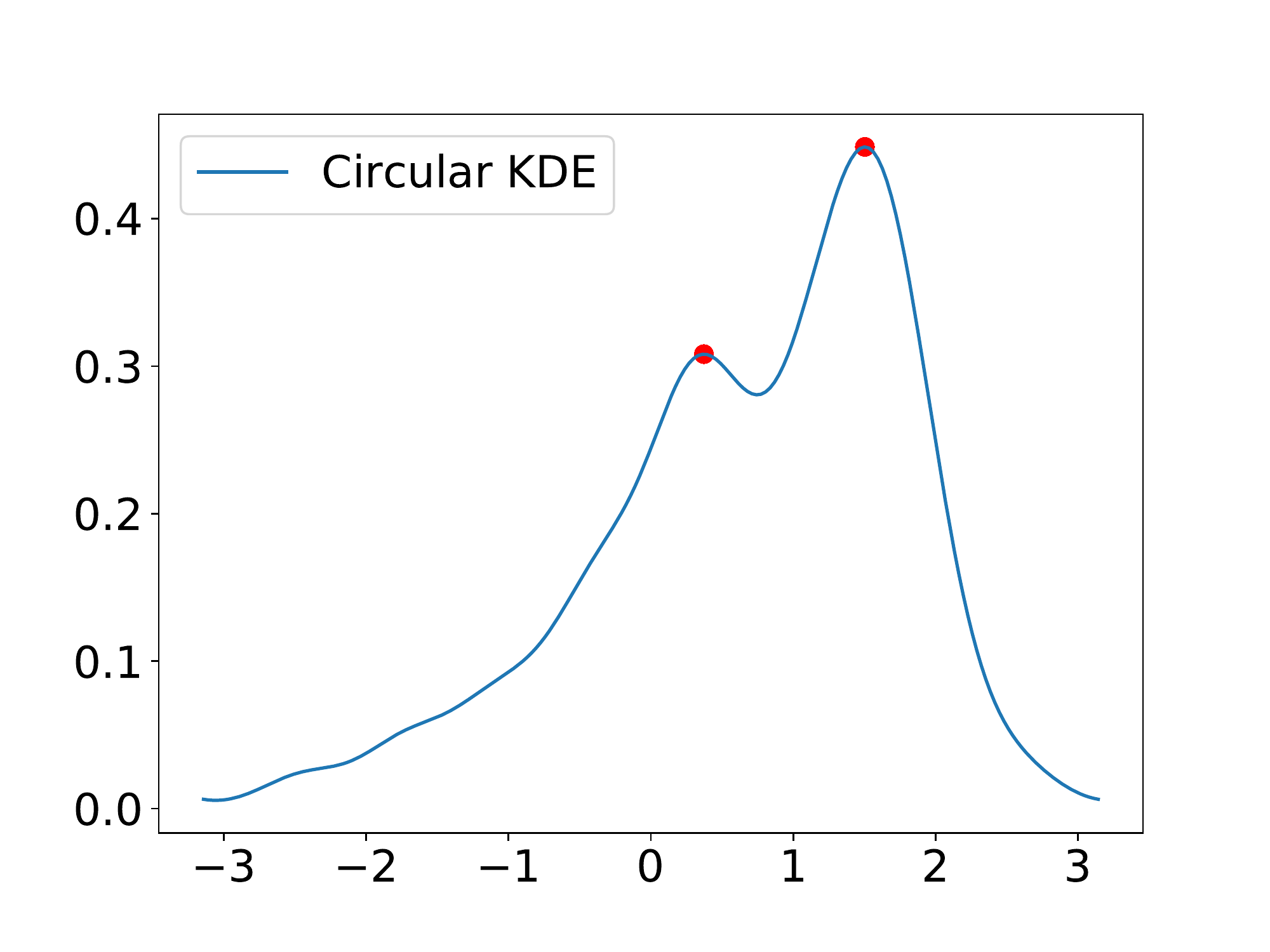}
			\caption{Step 45 (converged)}
		\end{subfigure}%
		
		\begin{subfigure}{.32\textwidth}
			\centering
			\includegraphics[width=1\linewidth]{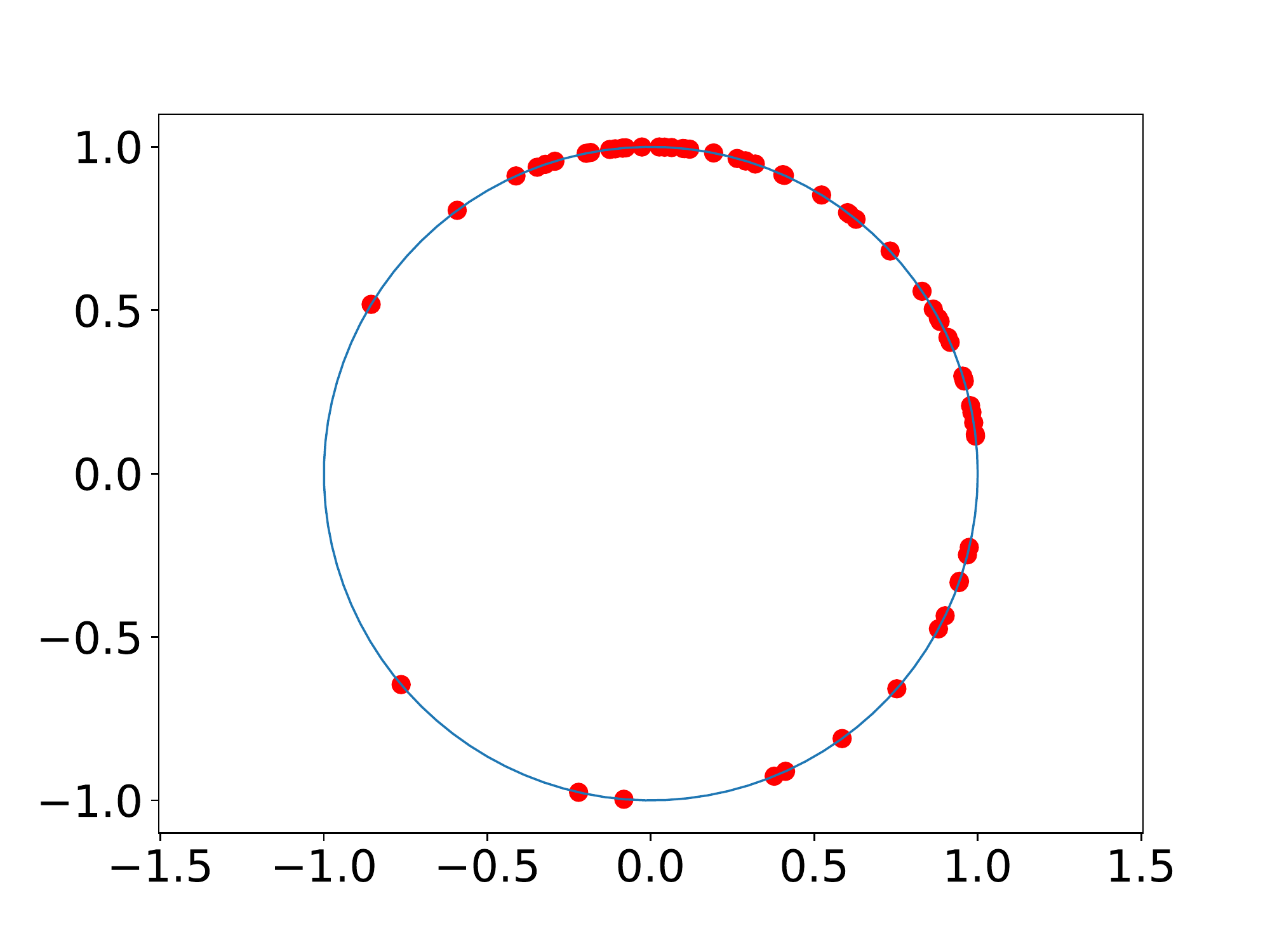}
			\caption{Step 0}
		\end{subfigure}
		\begin{subfigure}{.32\textwidth}
			\centering
			\includegraphics[width=1\linewidth]{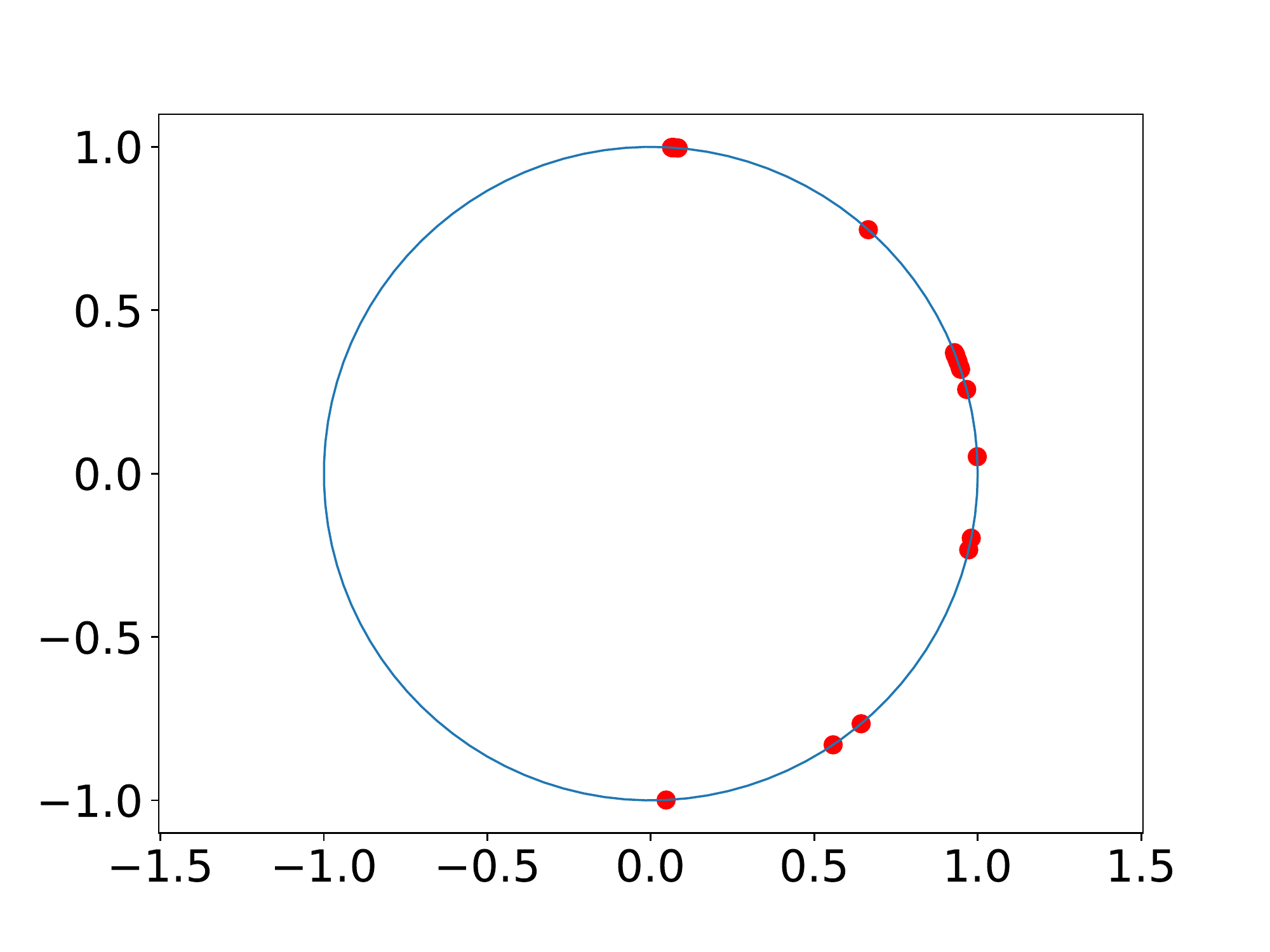}
			\caption{Step 11}
		\end{subfigure}
		\hfil
		\begin{subfigure}{.32\textwidth}
			\centering
			\includegraphics[width=1\linewidth]{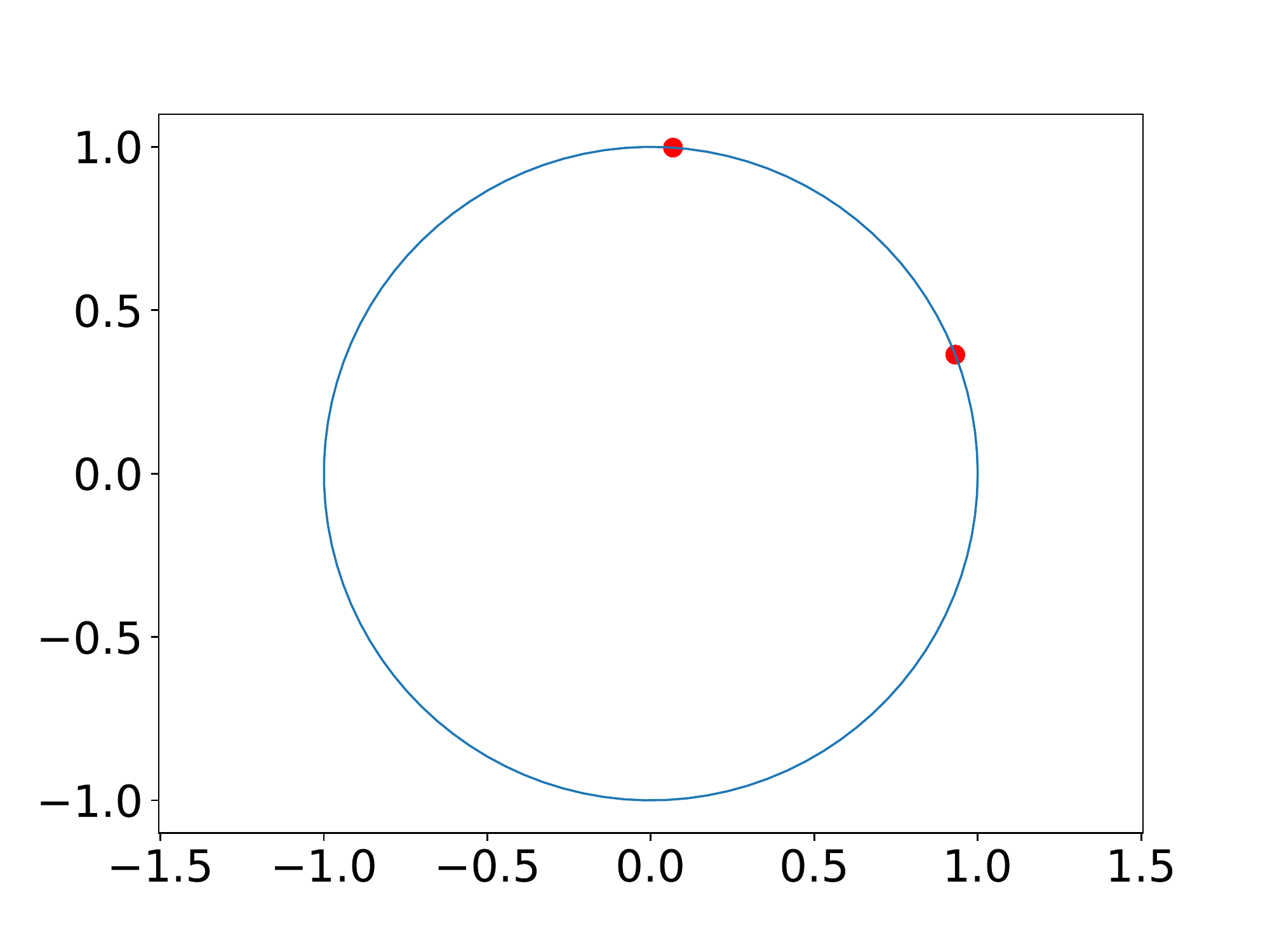}
			\caption{Step 45 (converged)}
		\end{subfigure}
		\hfil
		\begin{center}
			\begin{subfigure}[t]{.49\textwidth}
				\centering
				\includegraphics[width=1\linewidth]{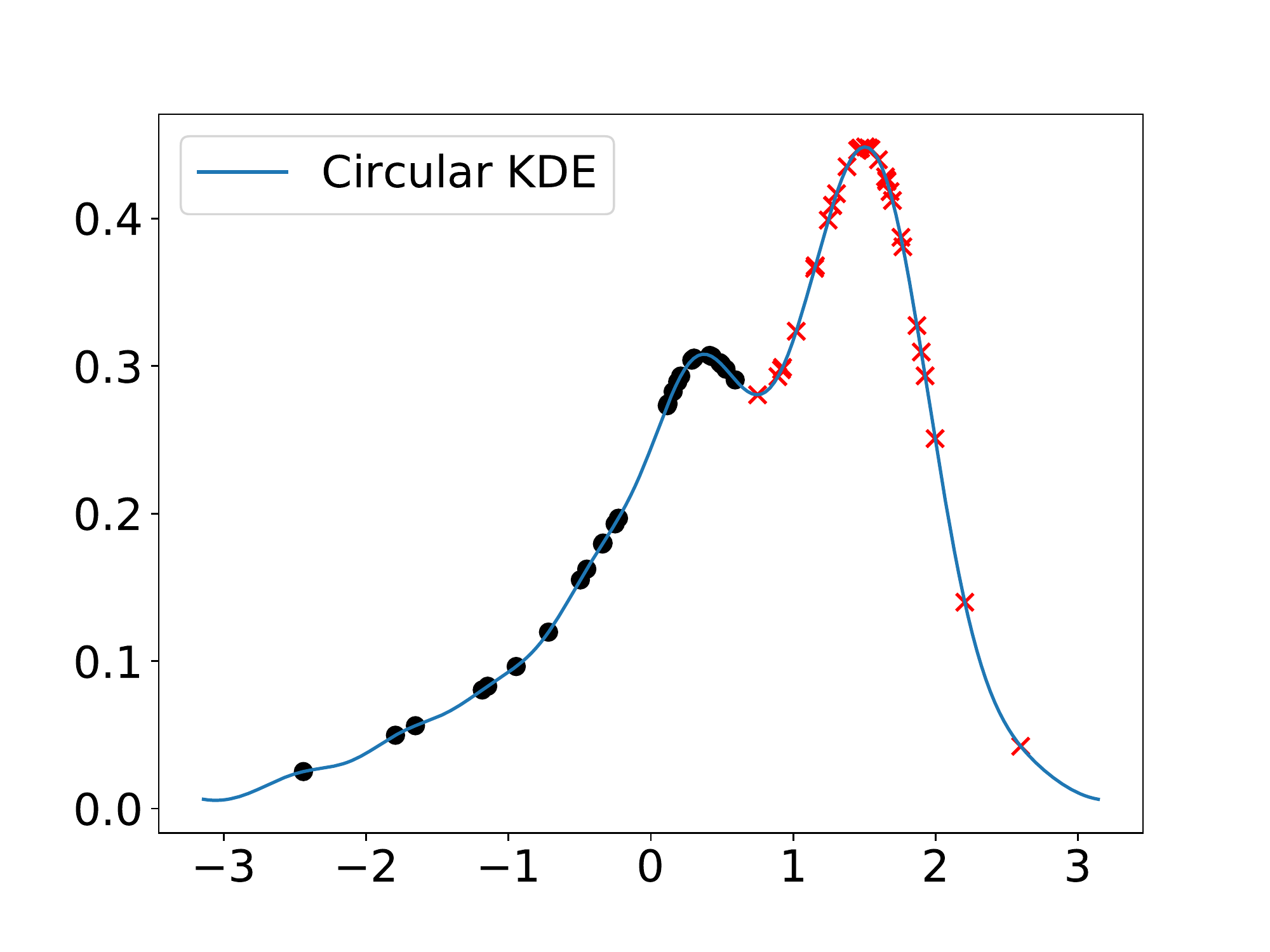}
				\caption{Mode clustering (viewed on function values)}
			\end{subfigure}
			\begin{subfigure}[t]{.49\textwidth}
				\centering
				\includegraphics[width=1\linewidth]{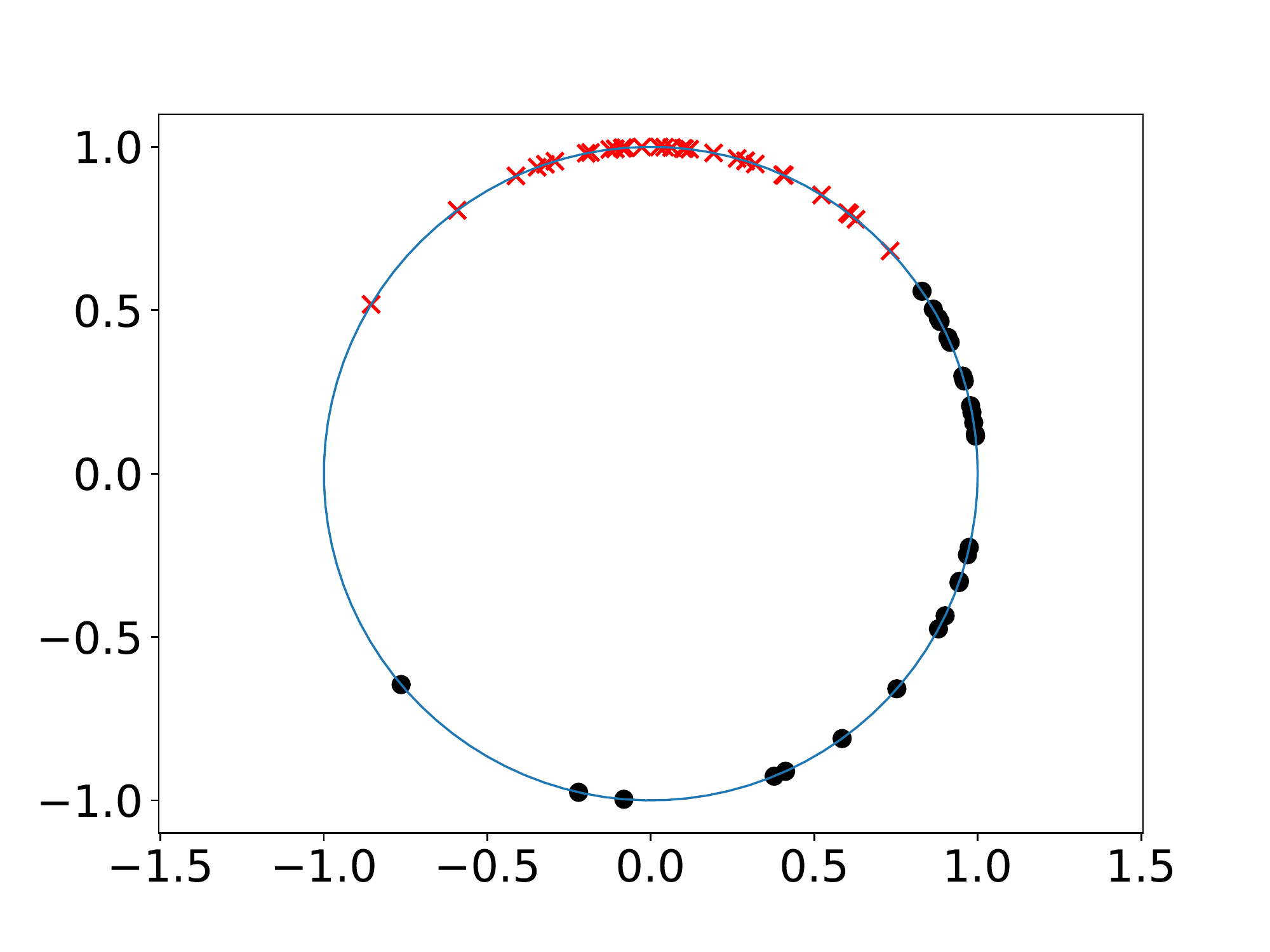}
				\caption{Mode clustering (viewed on $\Omega_1$)}
			\end{subfigure}
		\end{center}
		\caption{Directional mean shift algorithm performed on simulated data on $\Omega_1$. 
		{\bf Panel (a)-(c):} Outcomes under different iterations of the algorithm.
		{\bf Panel (d)-(f):} Corresponding locations of points in panels (a)-(c) on a unit circle.
		{\bf Panel (g) and (h):} Visualization of the affiliations of data points after mode clustering.}
		\label{fig:One_d_MS}
	\end{figure}
	
	Figure~\ref{fig:One_d_MS} empirically demonstrates the validity of Algorithm~\ref{Algo:MS} on the unit circle $\Omega_1$, in which all the simulated data points converge to the local modes of the circular density estimator. In addition, the misclassification rate in this simulation study is 0.1.

	\subsubsection{Spherical Case}	\label{sec:spherical}
	
	We simulate 1000 data points from the following density
	$$f_3(\bm{x}) = 0.3\cdot f_{\text{vMF}}(\bm{x};\bm{\mu}_1,\nu_1)+ 0.3\cdot f_{\text{vMF}}(\bm{x};\bm{\mu}_2,\nu_2) + 0.4\cdot f_{\text{vMF}}(\bm{x};\bm{\mu}_3,\nu_3)$$
	with $\bm{\mu}_1 \approx (-0.35, -0.61,-0.71)$, $\bm{\mu}_2 \approx (0.5,0,0.87)$, $\bm{\mu}_3=(-0.87,0.5,0)$ (or $[-120^{\circ},-45^{\circ}]$, $[0^{\circ},60^{\circ}]$, $[150^{\circ},0^{\circ}]$ in their precise spherical [longitude, latitude] coordinates), and $\nu_1=\nu_2=8$, $\nu_3=5$. The bandwidth parameter is selected using \eqref{bw_ROT}, and the tolerance level for terminating the algorithm is again set to $\epsilon=10^{-7}$. The results are presented in Figure \ref{fig:Two_d_ThreeM}. 
	
	\begin{figure}
		\captionsetup[subfigure]{justification=centering}
		\centering
		\begin{subfigure}[t]{.32\textwidth}
			\centering
			\includegraphics[width=1\linewidth,height=0.8\linewidth]{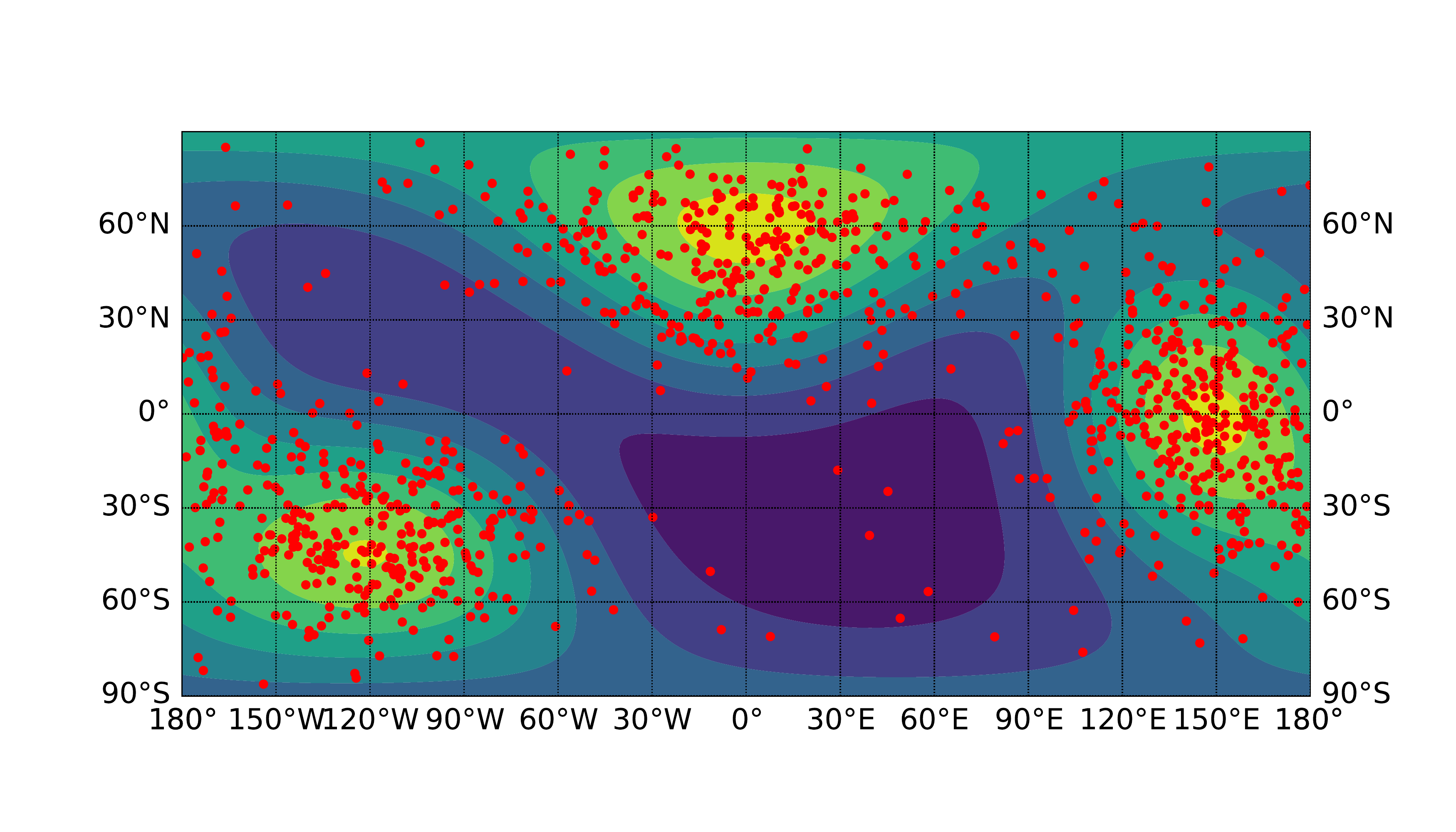}
			\caption{Step 0}
		\end{subfigure}
		\hfil
		\begin{subfigure}[t]{.32\textwidth}
			\centering
			\includegraphics[width=1\linewidth,height=0.8\linewidth]{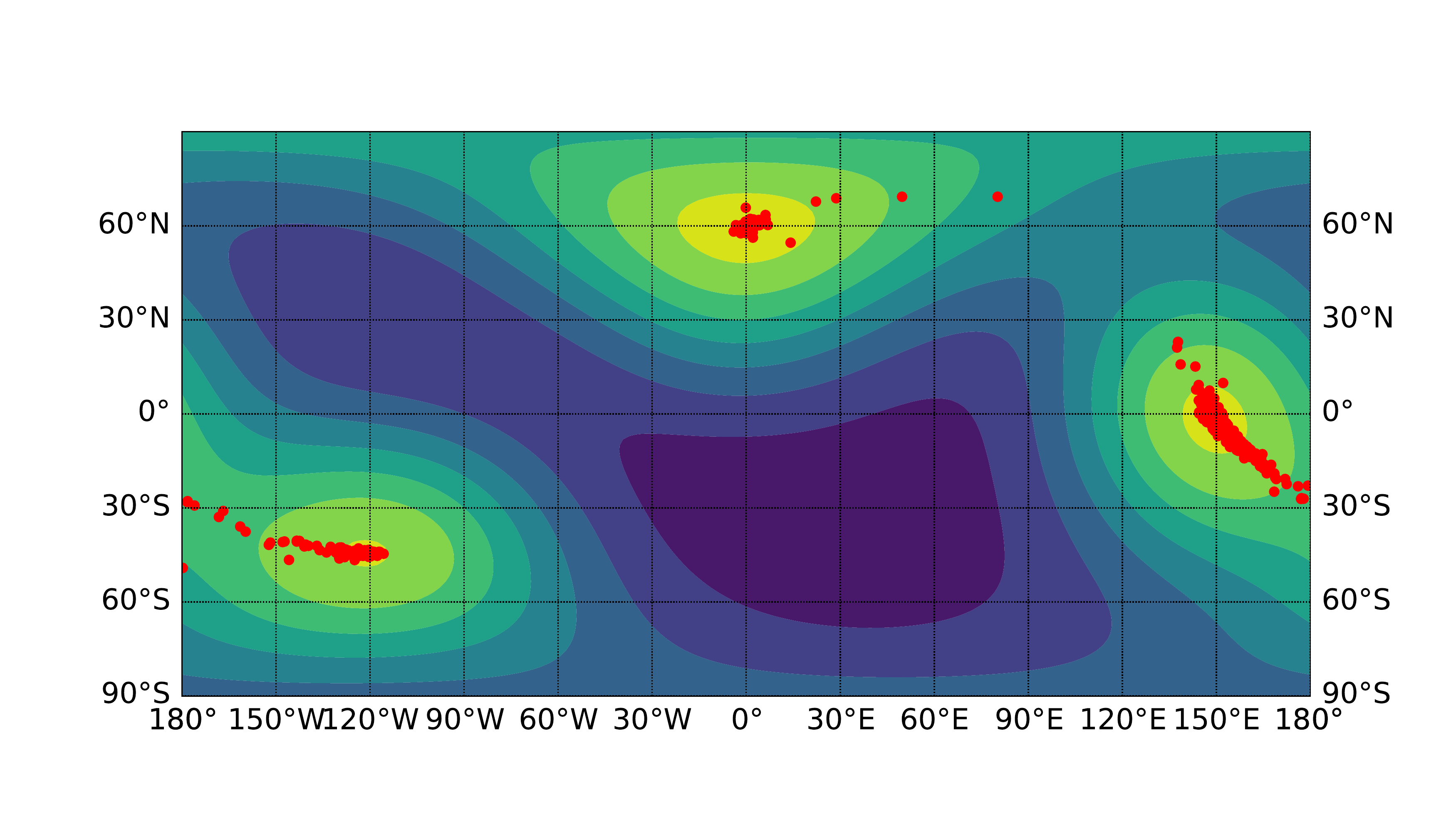}
			\caption{Step 5}
		\end{subfigure}%
		\hfil
		\begin{subfigure}[t]{.32\textwidth}
			\centering
			\includegraphics[width=1\linewidth,height=0.8\linewidth]{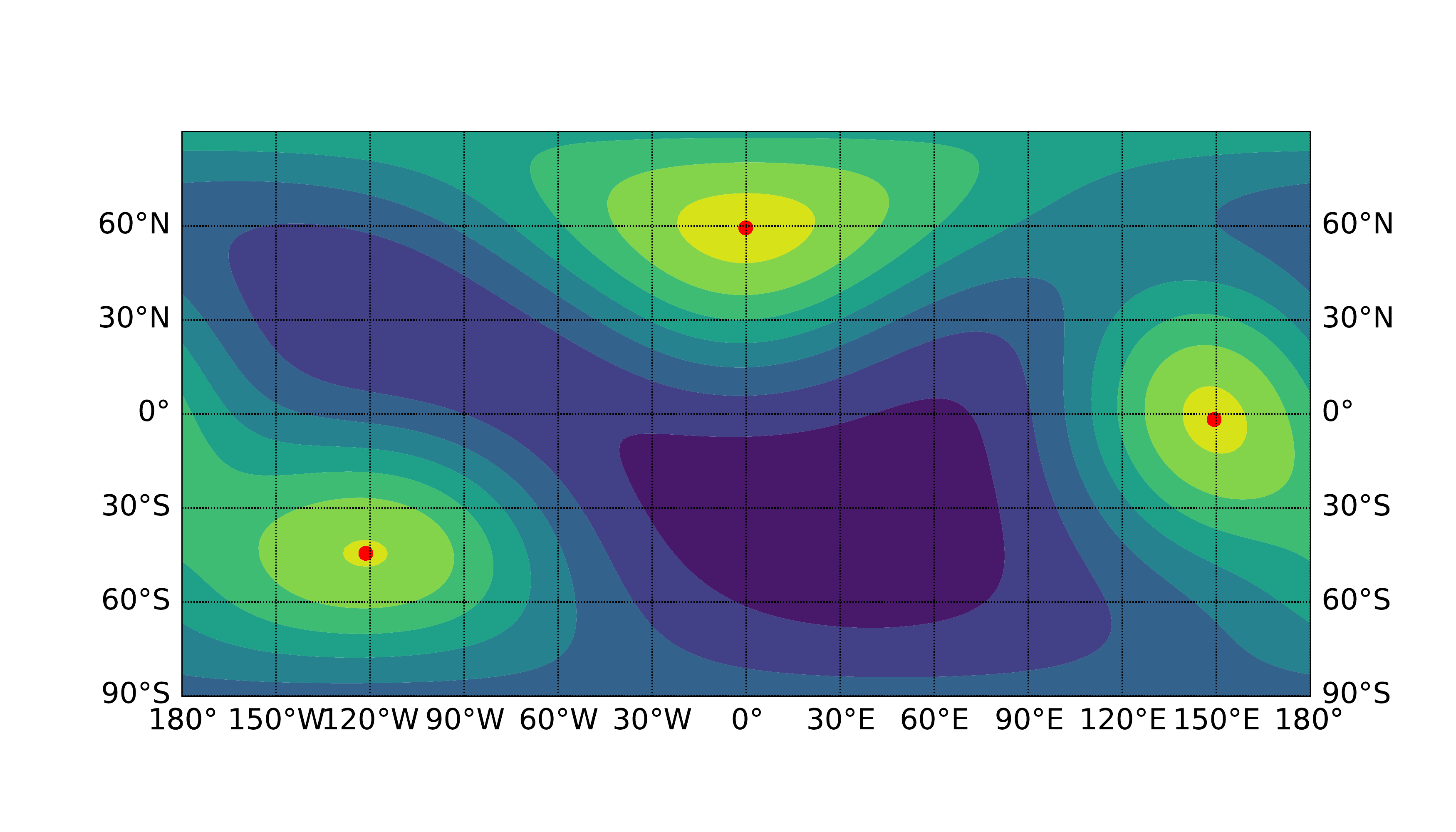}
			\caption{Step 28 (converged)}
		\end{subfigure}%
		
		\begin{subfigure}{.32\textwidth}
			\centering
			\includegraphics[width=1\linewidth]{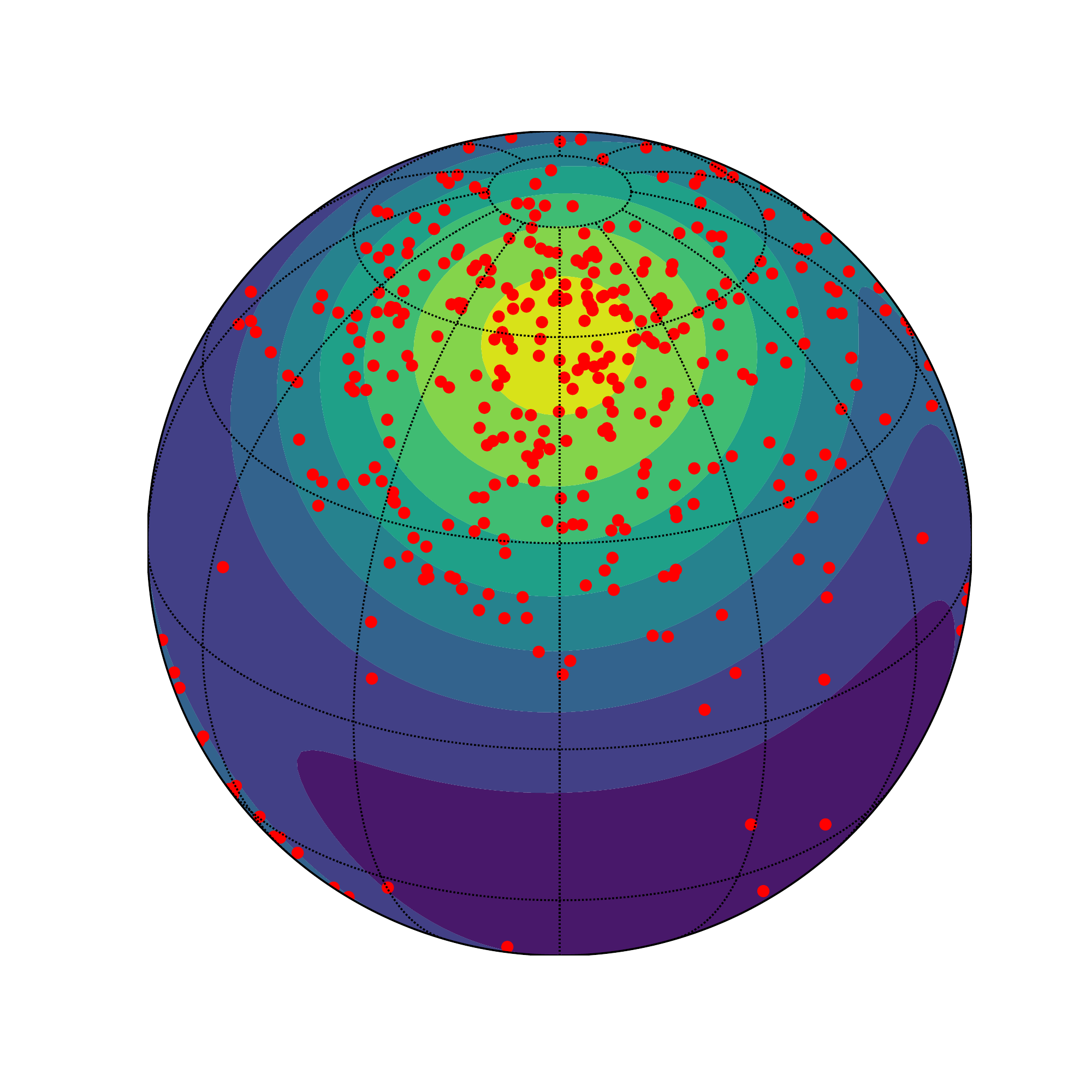}
			\caption{Step 0}
		\end{subfigure}
		\begin{subfigure}{.32\textwidth}
			\centering
			\includegraphics[width=1\linewidth]{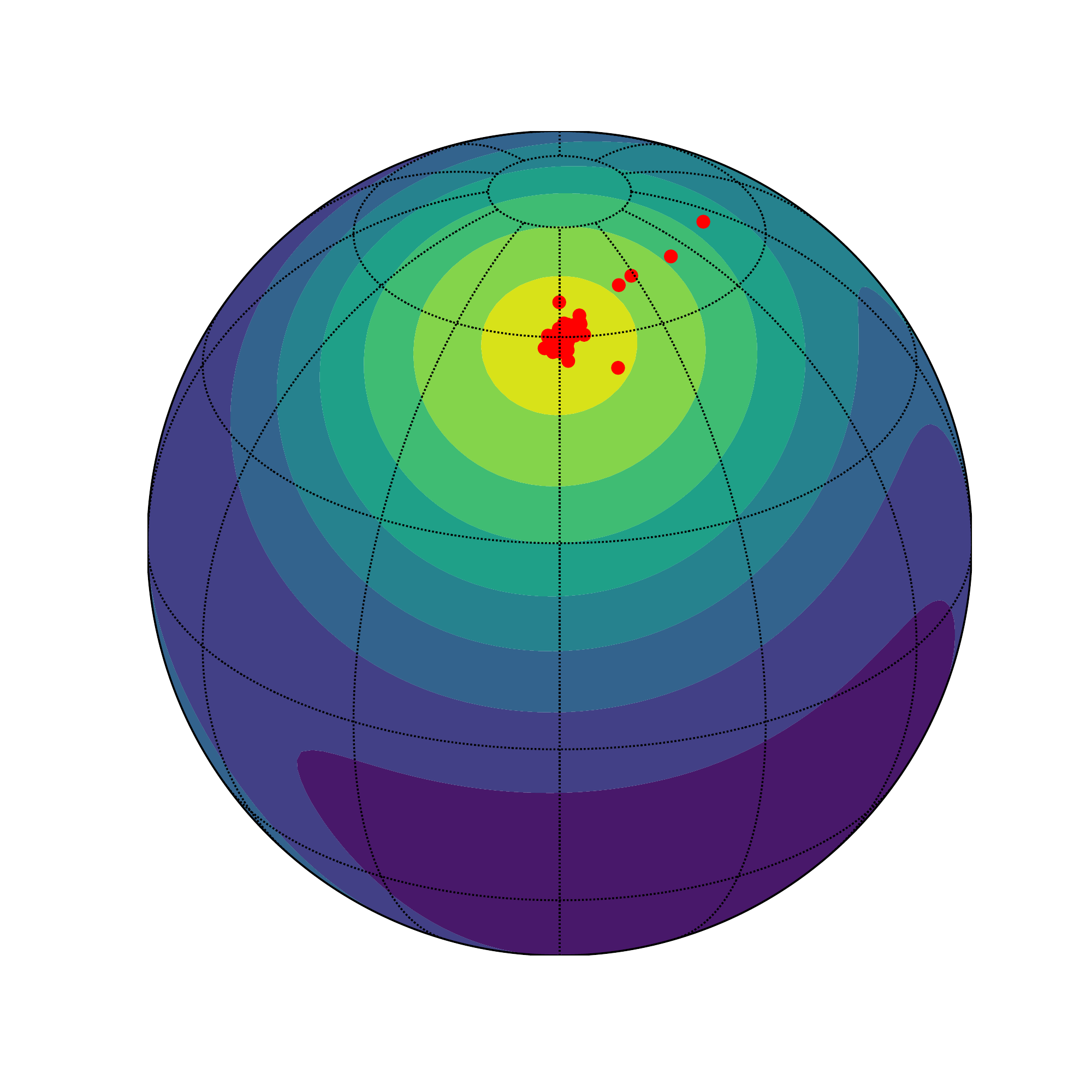}
			\caption{Step 5}
		\end{subfigure}
		\hfil
		\begin{subfigure}{.32\textwidth}
			\centering
			\includegraphics[width=1\linewidth]{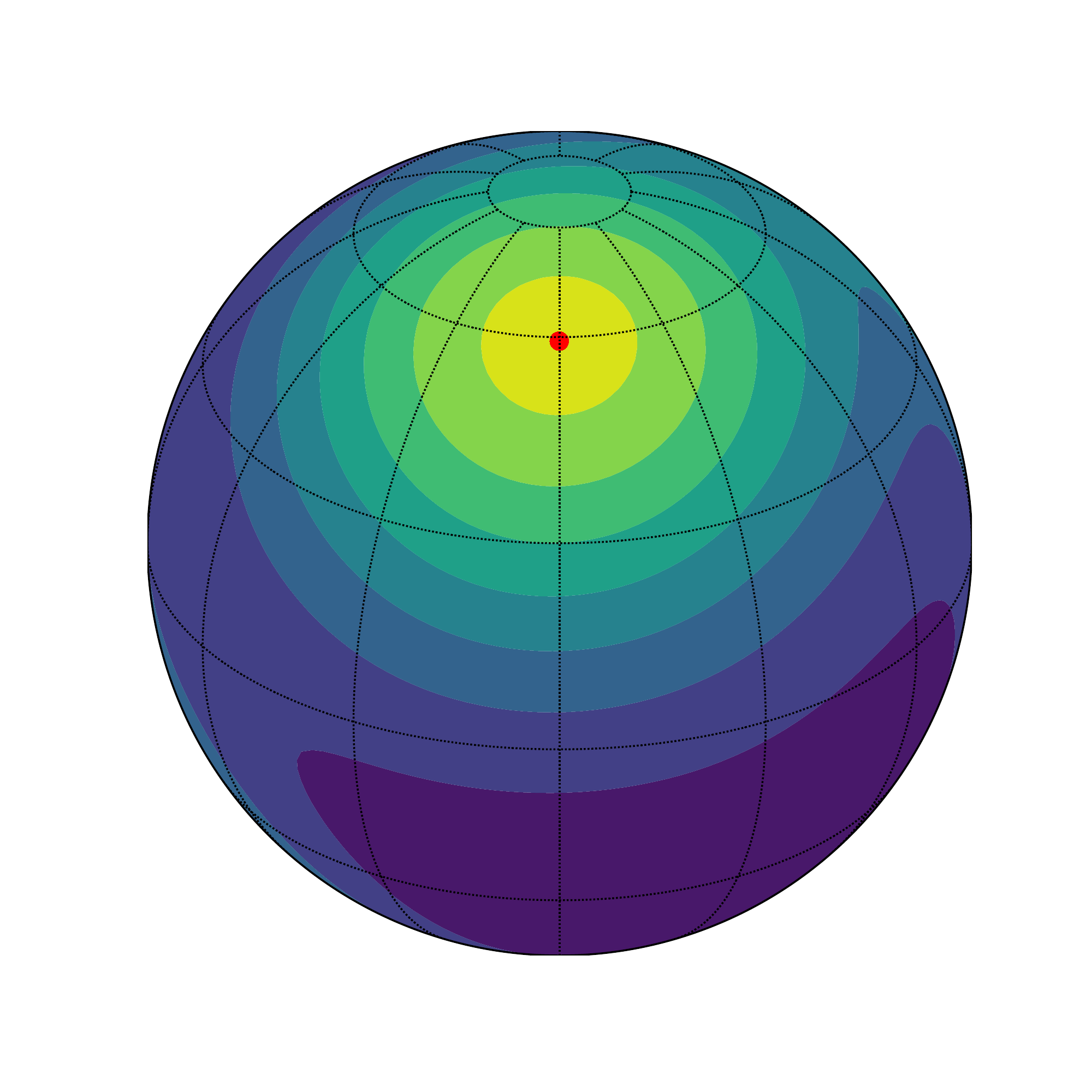}
			\caption{Step 28 (converged)}
		\end{subfigure}
		\hfil
		\begin{center}
			\begin{subfigure}[t]{.8\textwidth}
				\centering
				\includegraphics[width=1\linewidth]{Figures/MS_TripMode_MC_affi.pdf}
				\caption{Mode clustering (Hammer projection view)}
			\end{subfigure}
		\end{center}
		\caption{Directional mean shift algorithm performed on simulated data with three local modes on $\Omega_2$. 
		{\bf Panel (a)-(c):} Outcomes under different iterations of the algorithm displayed in a cylindrical equidistant view.
		{\bf Panel (d)-(f):} Corresponding locations of points in panels (a)-(c) in an orthographic view.
		Note: two local modes are at the back of the sphere; thus, we cannot directly see them. 
		{\bf Panel (g):} Clustering result under the Hammer projection (page 160 in \citealt{Snyder1989album}).}
		\label{fig:Two_d_ThreeM}
	\end{figure}
	
	In Figure~\ref{fig:Two_d_ThreeM}, all simulated data points converge to the local modes of the estimated directional density under the application of Algorithm~\ref{Algo:MS}; therefore, all the original data points are clustered according to where they converge. The confusion matrix in this simulation study is $\begin{bmatrix}
	\bm{278} & 0 & 9\\
	 0 & \bm{323} & 1\\
	 20 & 8 & \bm{361}\\
	\end{bmatrix}$ and the misclassification rate is thus 0.038. 
%	Note that since this is a simulated data, we know the exact label of each observation, so we can compute the misclassification rate.
	Moreover, the total number of iterative steps is much lower than the case with a single mode in Appendix~\ref{Appendix:Two_d_OneM} (Figure~\ref{fig:Two_d_OneM}). We also observe that most of data points already converge to the local modes of the directional KDE after a few initial steps, while most of the subsequent iterations handle those points that are geodesically far away from an estimated local mode and have small estimated (tangent/Riemannian) gradients on their iterative paths. 
	
	\subsubsection{$q$-Directional Case with $q>2$}
	
	Our algorithmic formulation of the directional mean shift algorithm (Algorithm~\ref{Algo:MS}) and its associated learning theory are valid on any general (intrinsic) dimension $q$ of $\Omega_q$. For this reason, we are also interested in how the algorithm behaves as the dimension $q$ of directional data increases. We randomly simulate 1000 data points from each of the following densities repeatedly,
	$$\sum\limits_{i=1}^4 0.25 \cdot f_{\text{vMF}}(\bm{x};\bm{\mu}_{i,q},\nu')$$
	with $\bm{\mu}_{i,q} = \bm{e}_{i,q+1}\in \Omega_q \subset \mathbb{R}^{q+1}$ for $q=3,4,...,12$ and $i=1,...,4$, where the concentration parameter $\nu'=10$ and the mixture weight of each density component are constant. Here, $\left\{\bm{e}_{i,q+1}\right\}_{i=1}^{q+1} \subset \Omega_q$ is the standard basis of the ambient Euclidean space $\mathbb{R}^{q+1}$. For each value of dimension $q$, we repeat the data simulation process 20 times and compute the average misclassification rate of mode clustering with Algorithm~\ref{Algo:MS} on each simulated data set accordingly. Figure~\ref{fig:boxplot} shows the boxplots of misclassification rates.
%	Note that since this is a simulated data, we know the exact label of each observation so we can compute the misclassification rate.
	
	\begin{figure}
		\centering
		\includegraphics[width=0.8\linewidth]{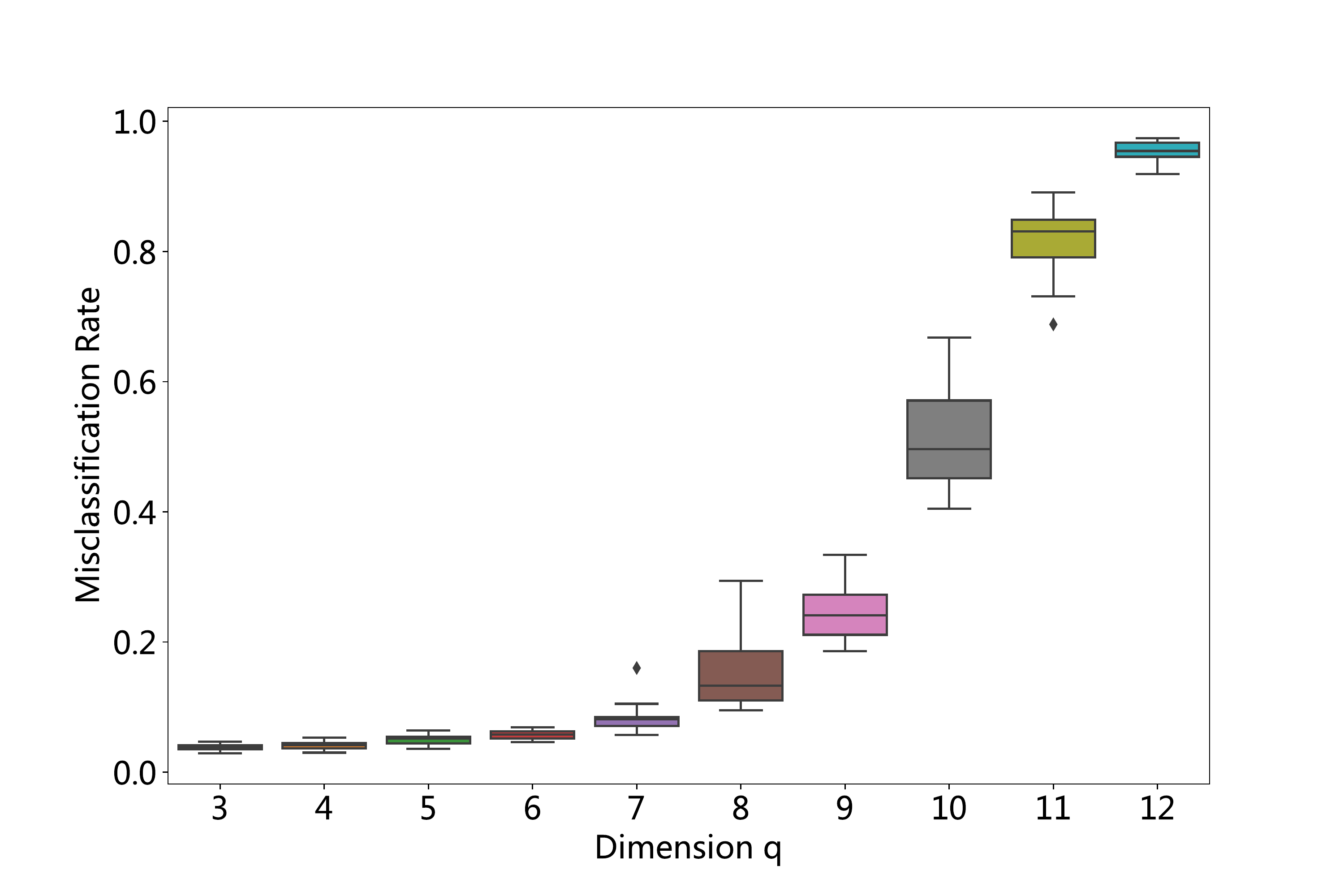}
		\caption{Boxplots of misclassification rates of mode clustering under different values of dimension $q$}
		\label{fig:boxplot}
	\end{figure}

    As the dimension $q$ of directional data becomes larger, the misclassification rates of mode clustering with Algorithm~\ref{Algo:MS} also gradually increase to 1 (the worst case), which in turn implies that the ability of Algorithm~\ref{Algo:MS} to identify the density component from which a data point is simulated tends to deteriorate with respect to the dimension. Such inferior performances of the directional mean shift algorithm on higher-dimensional hyperspheres are not surprising because (i) we do not fine-tune the bandwidth parameter (but simply apply the rule of thumb \eqref{bw_ROT}), and (ii) the algorithm is subject to the curse of dimensionality (see also Section~\ref{Sec:Com_Complexity}). However, since directional data in real-world applications mostly lie on a (hyper)sphere with dimension $q\leq 3$, Algorithm~\ref{Algo:MS} is effective in practice, as we will demonstrate in Section~\ref{Sec:real_world_app}.

	\subsection{Real-World Applications}
	\label{Sec:real_world_app}
	
	We illustrate the practical relevance of the directional mean shift algorithm (Algorithm~\ref{Algo:MS}) with two applications in astronomy and seismology.
	
	\subsubsection{Craters on Mars}
	
	The distribution and cluster configuration of craters on Mars shed light on the planetary subsurface structure (water or ice), relative surfaces ages, resurfacing history, and past geologic processes \citep{Lakes_On_Mars,BARLOW2015}. \cite{Unif_test_hypersphere2020} conducted three different statistical tests (Cramer-von Mises, Rothman, and Anderson-Darling-like tests) on Martian crater data to statistically validate the non-uniformity of the crater distribution on Mars. We apply the directional KDE \eqref{Dir_KDE} together with the directional mean shift algorithm to further estimate the density of craters and determine crater clusters on the surface of Mars. Martian crater data are publicly available on the Gazetteer of Planetary Nomenclature database (\url{https://planetarynames.wr.usgs.gov/AdvancedSearch}) of the International Astronomical Union (IUA). The positions of craters are recorded in areocentric coordinates (the planetocentric coordinates on Mars) so that the areocentric longitudes range from $0^{\circ}$ to $360^{\circ}$ and areocentric latitudes range from $-90^{\circ}$ to $90^{\circ}$. As craters with areocentric longitudes greater than $180^{\circ}$ are on the western hemisphere of Mars, we transform their longitudes back to the interval $(-180^{\circ},0^{\circ})$. (Note that $360^{\circ}$ in longitude corresponds to $0^{\circ}$ west/east after transformation.) In addition, we remove those small craters whose diameters are less than 5 kilometers from the crater data, as their presence may provide spurious information. After trimming, the data set contains 1653 craters. The bandwidth parameter is selected using \eqref{bw_ROT}, which becomes $h_{\text{ROT}} \approx 0.338$ for the trimmed data set. The estimated distribution of craters on Mars and clustering results are presented in Figure~\ref{fig:Mars_data}.
	
	\begin{figure}
		\captionsetup[subfigure]{justification=centering}
		\centering
		\begin{subfigure}[t]{.32\textwidth}
			\centering
			\includegraphics[width=1\linewidth,height=0.8\linewidth]{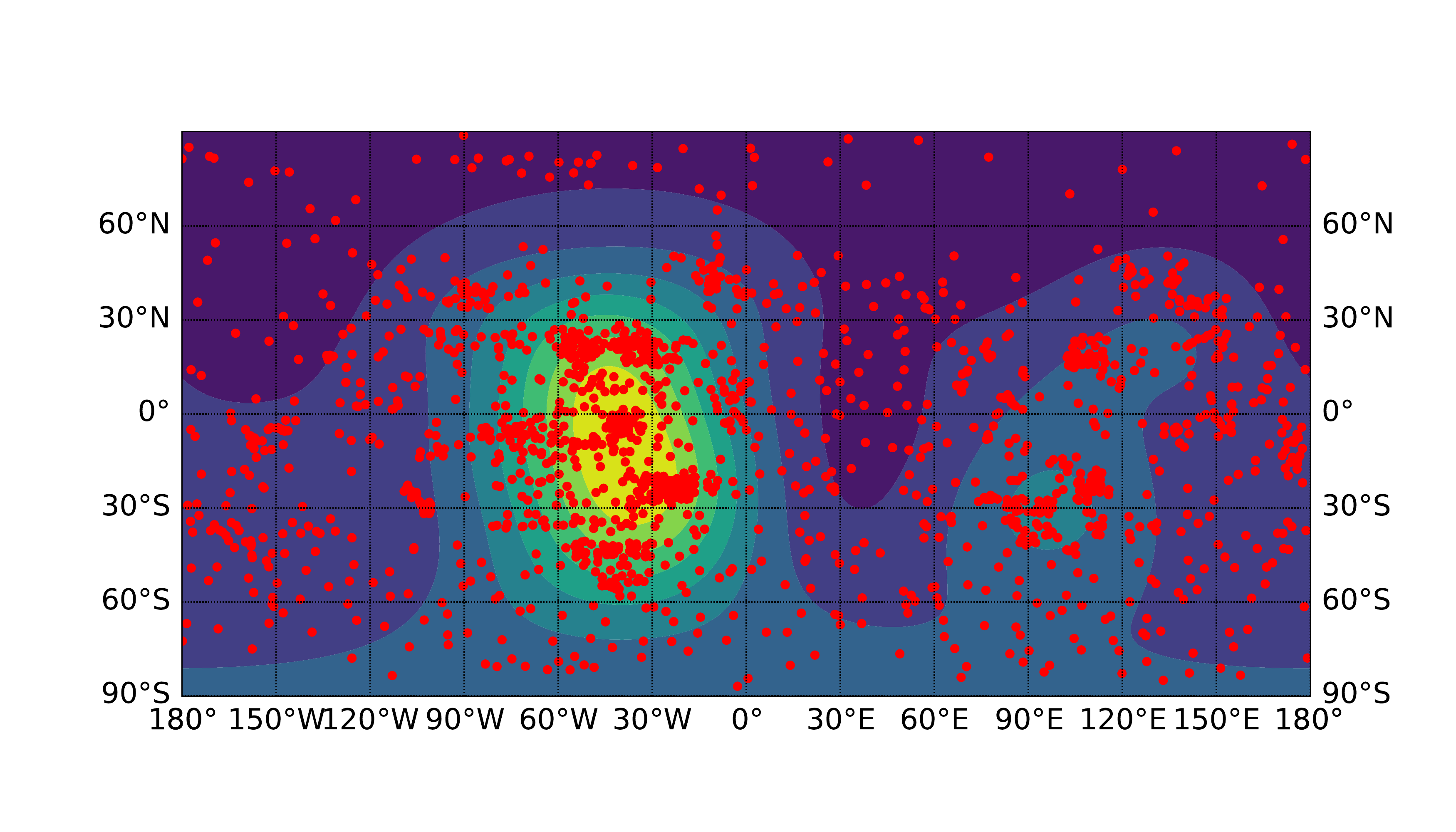}
			\caption{Step 0}
		\end{subfigure}
		\hfil
		\begin{subfigure}[t]{.32\textwidth}
			\centering
			\includegraphics[width=1\linewidth,height=0.8\linewidth]{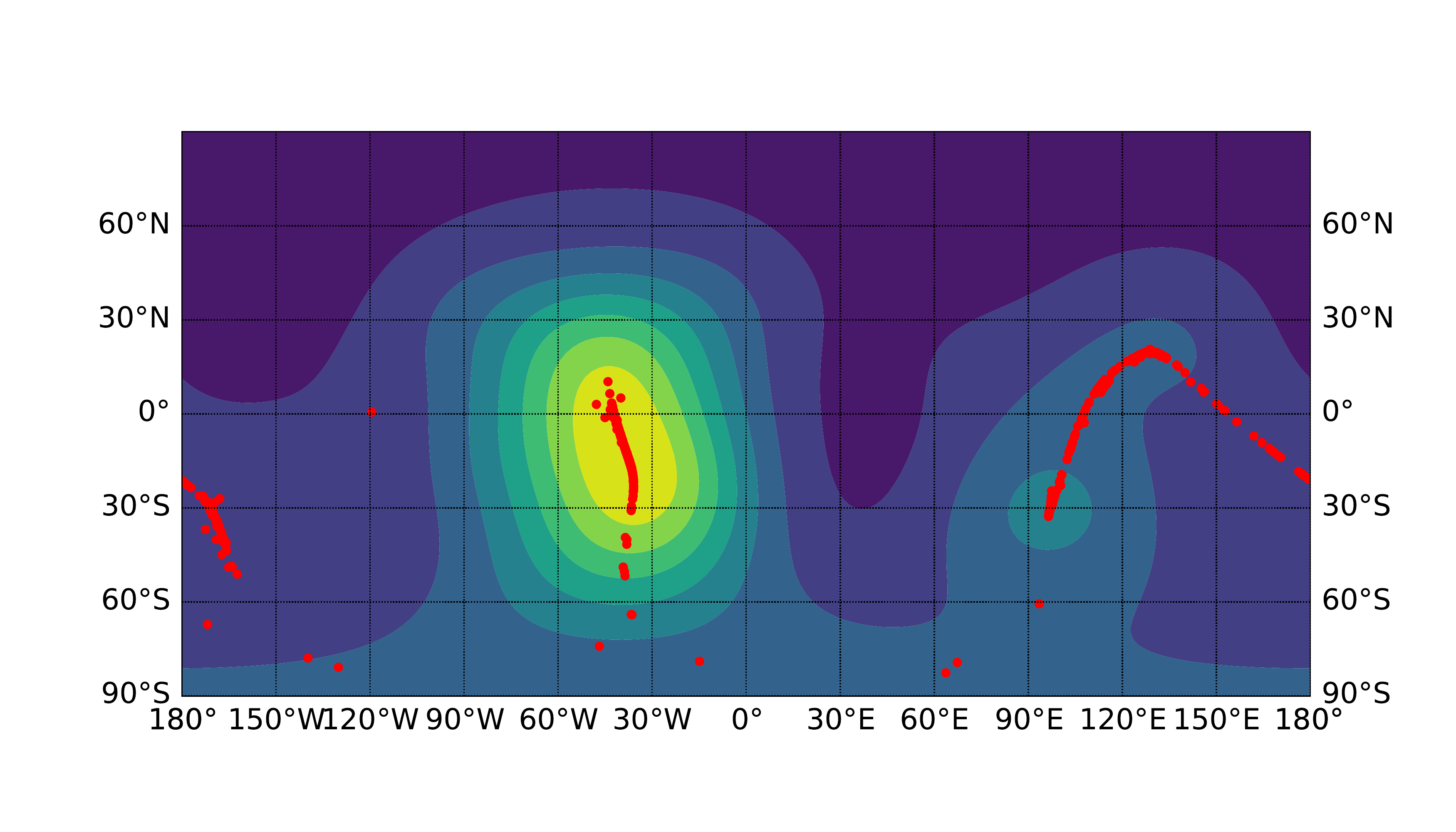}
			\caption{Step 17}
		\end{subfigure}%
		\hfil
		\begin{subfigure}[t]{.32\textwidth}
			\centering
			\includegraphics[width=1\linewidth,height=0.8\linewidth]{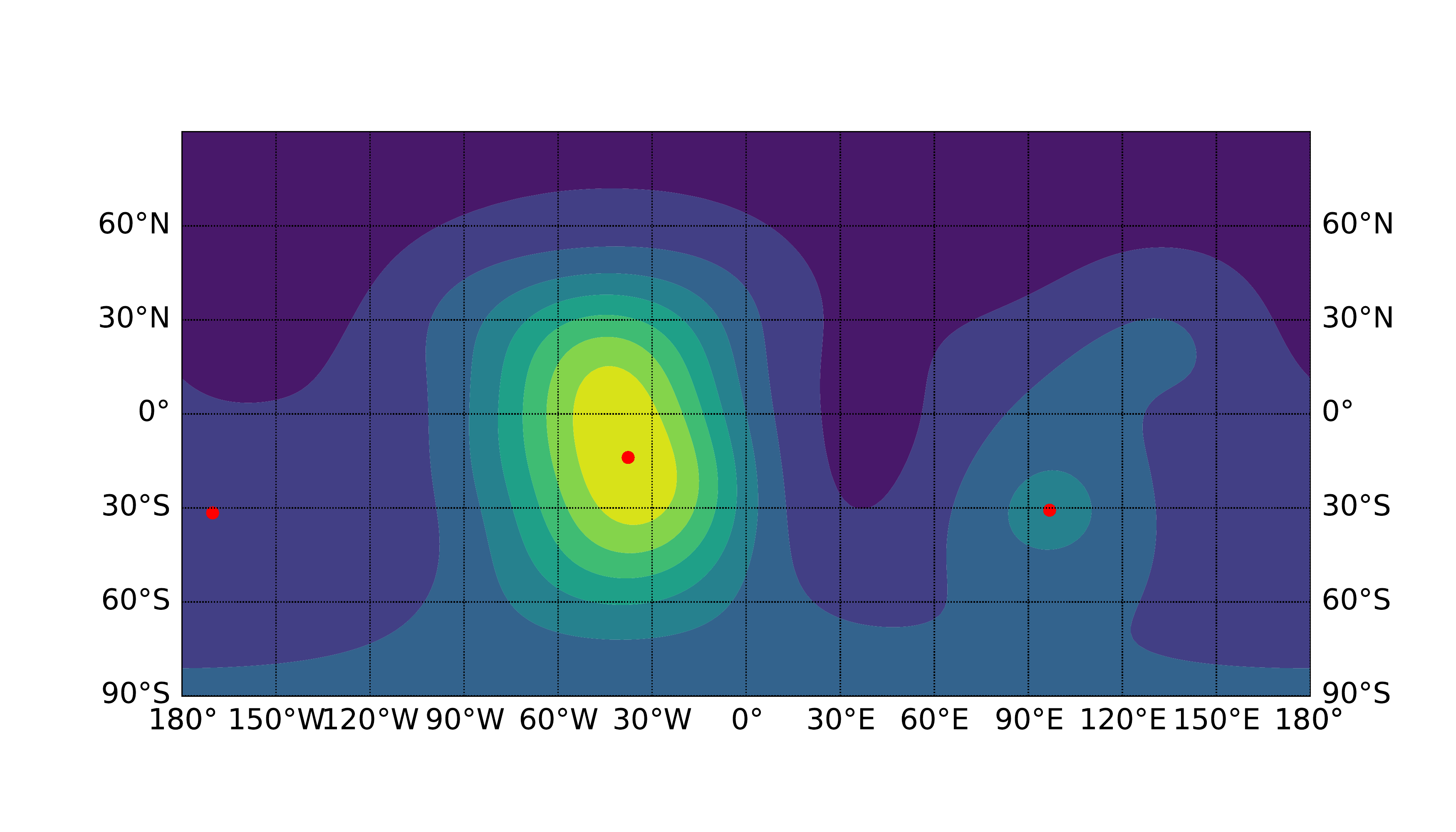}
			\caption{Step 178 (converged)}
		\end{subfigure}%
		
		\begin{subfigure}{.32\textwidth}
			\centering
			\includegraphics[width=1\linewidth]{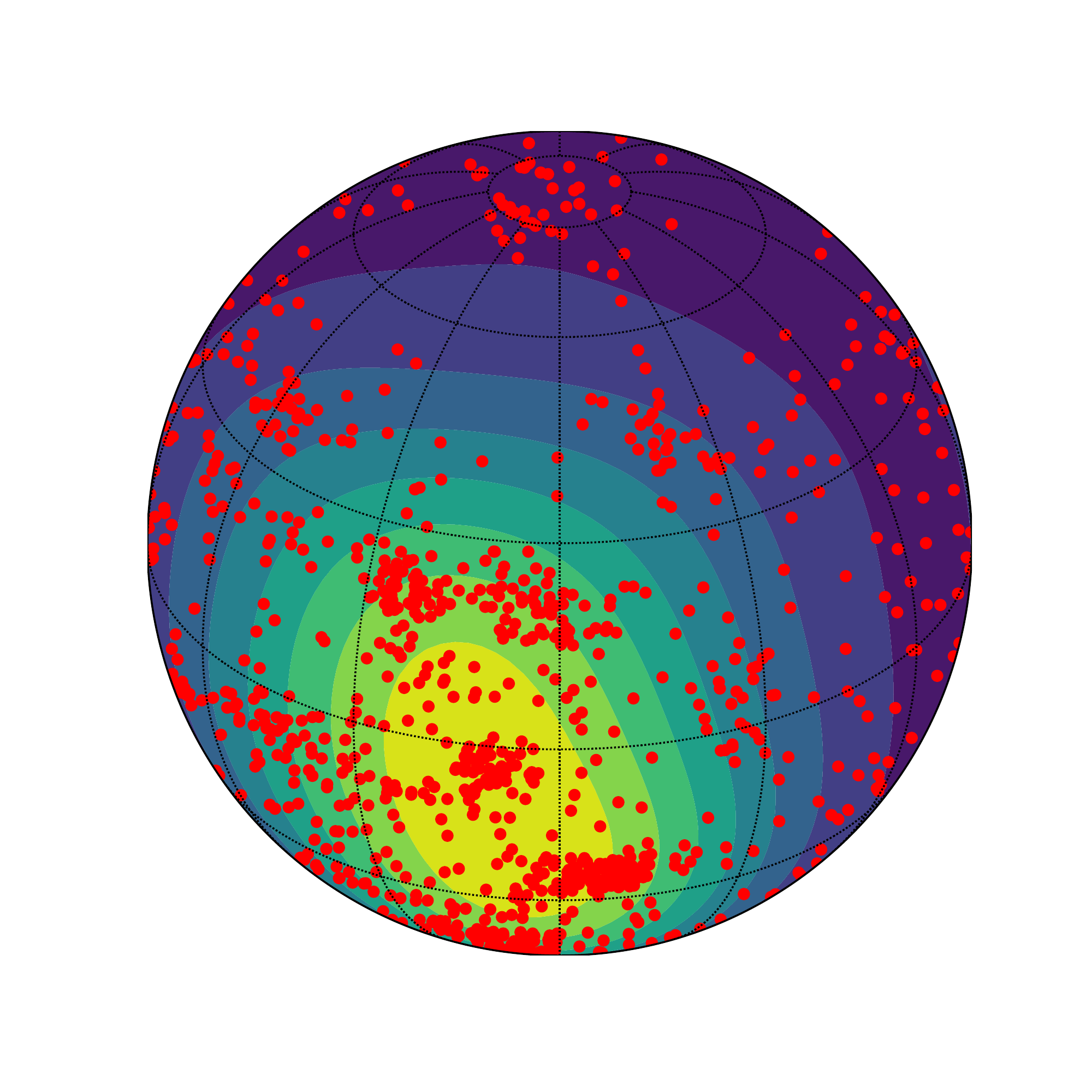}
			\caption{Step 0}
		\end{subfigure}
		\begin{subfigure}{.32\textwidth}
			\centering
			\includegraphics[width=1\linewidth]{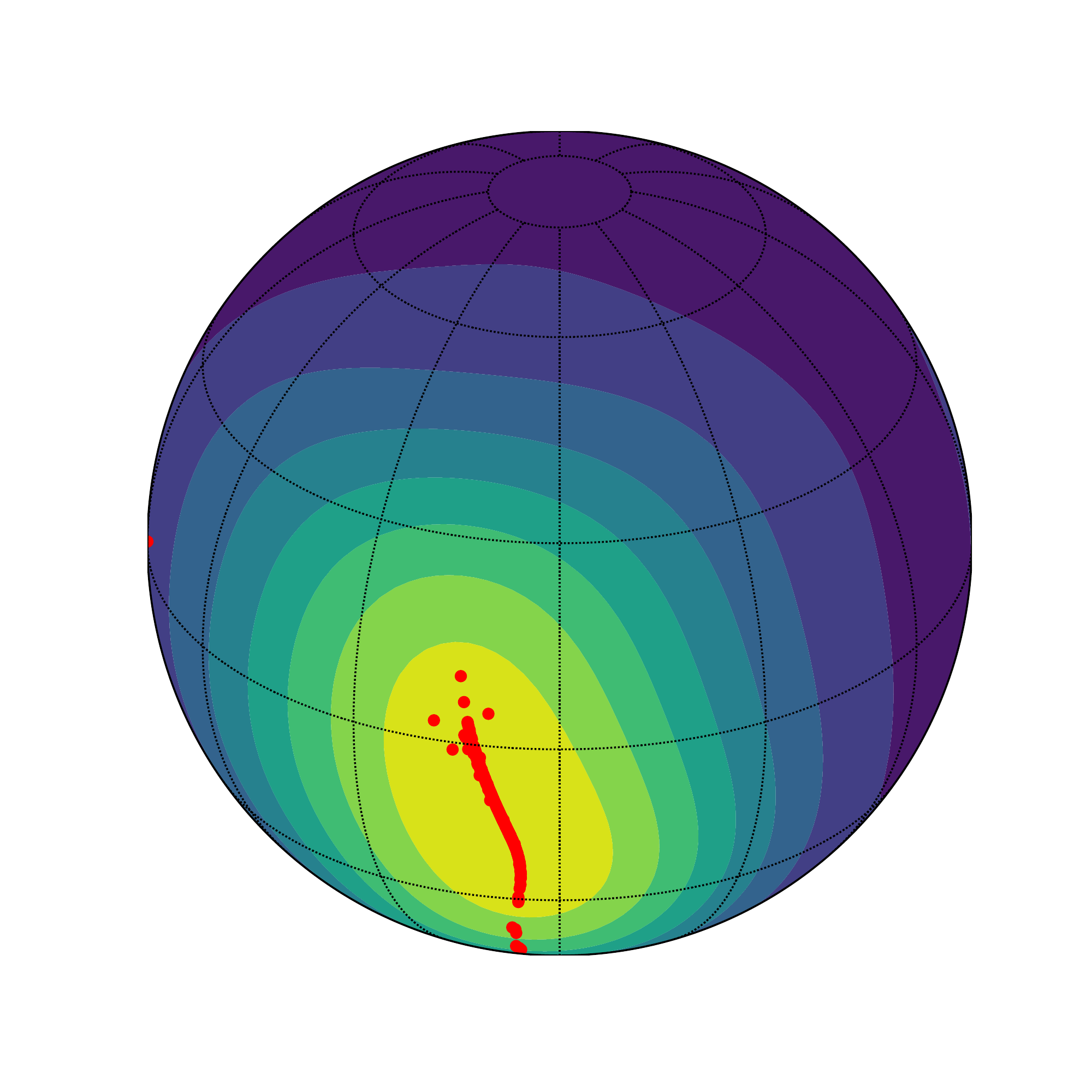}
			\caption{Step 17}
		\end{subfigure}
		\hfil
		\begin{subfigure}{.32\textwidth}
			\centering
			\includegraphics[width=1\linewidth]{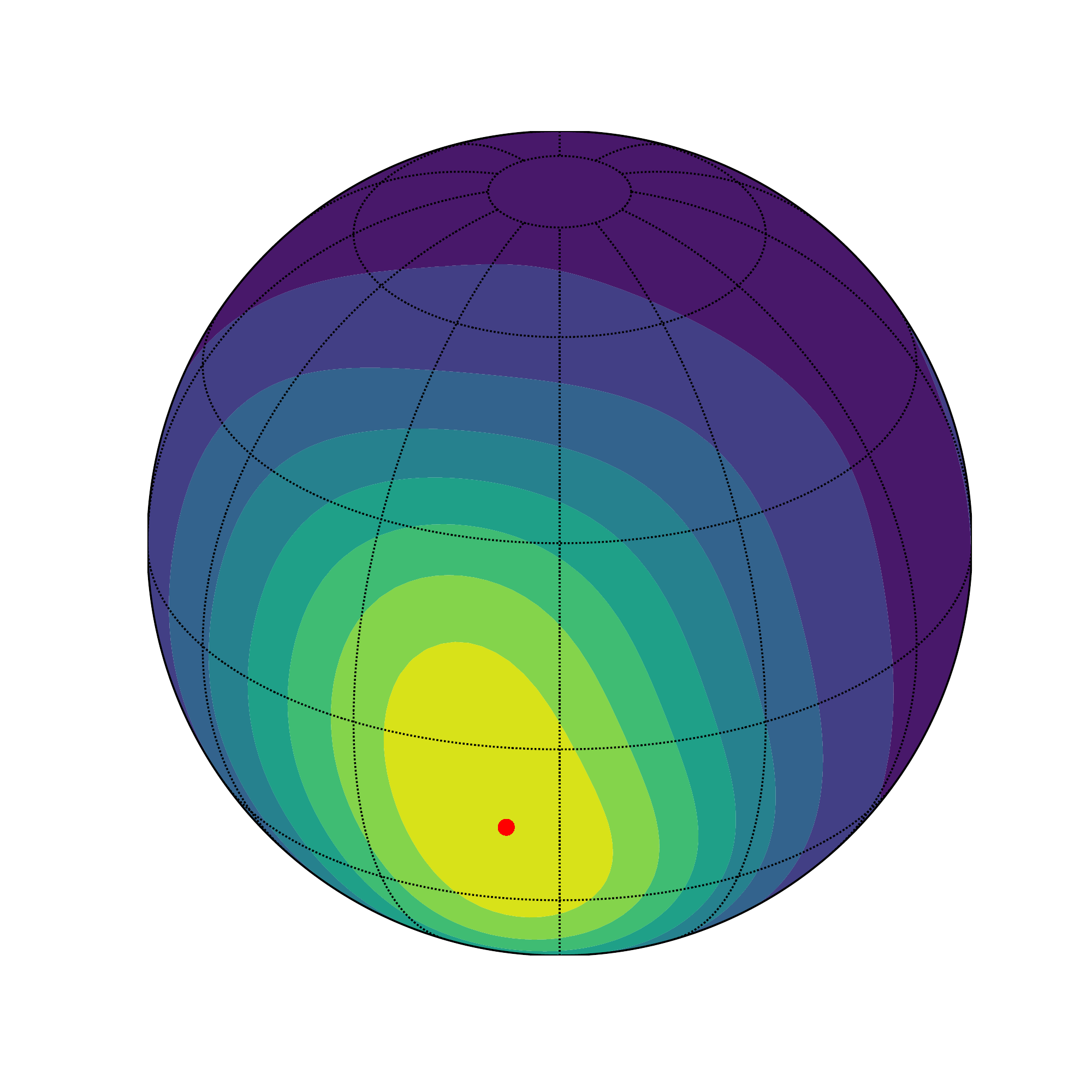}
			\caption{Step 178 (converged)}
		\end{subfigure}
		\hfil
		\begin{center}
			\begin{subfigure}[t]{.8\textwidth}
				\centering
				\includegraphics[width=1\linewidth]{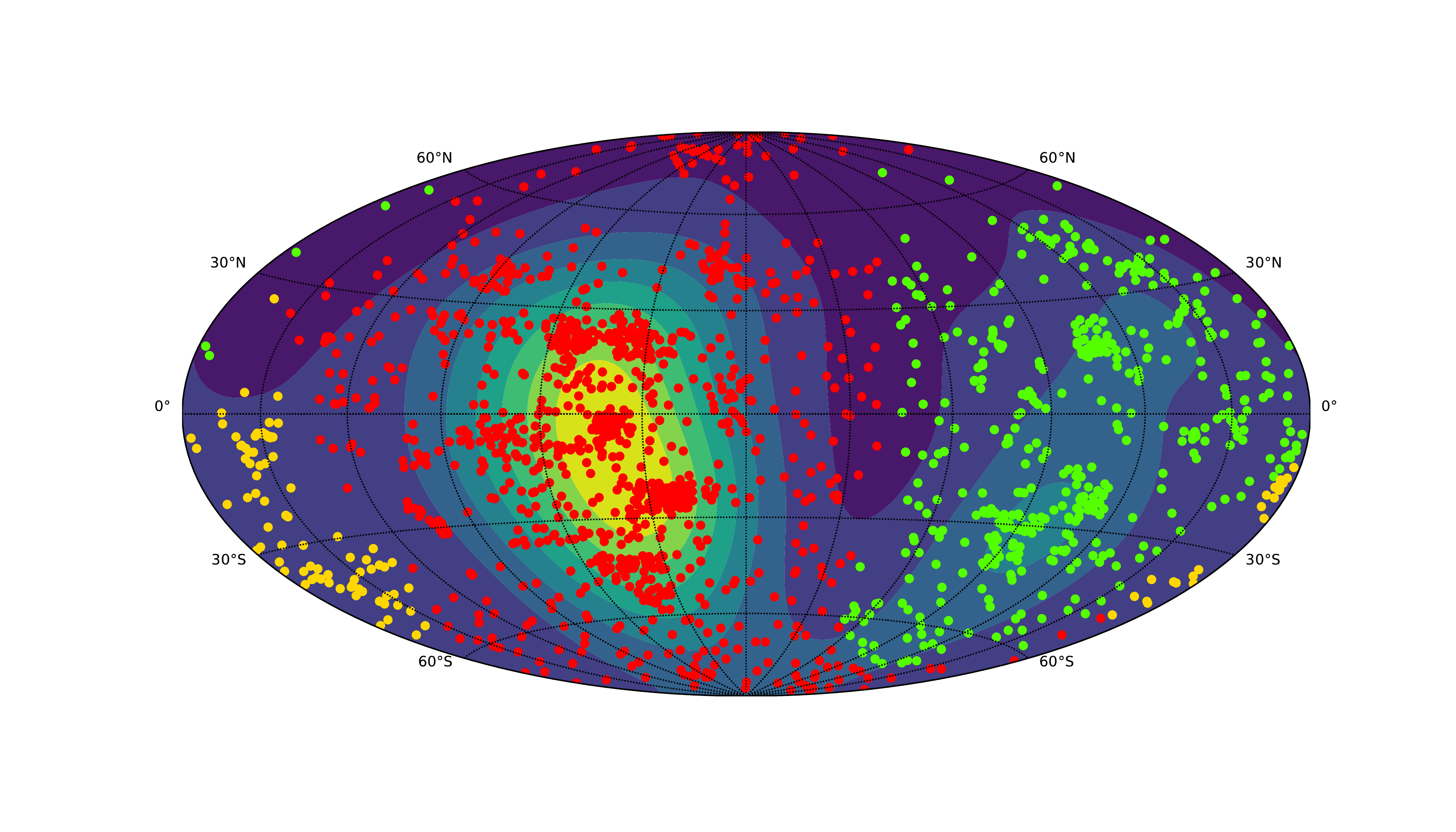}
				\caption{Mode clustering (Hammer projection view)}
			\end{subfigure}
		\end{center}
		\caption{Directional mean shift algorithm performed on Martian crater data. 
		The analysis is displayed in a similar way to Figure~\ref{fig:Two_d_ThreeM}. {\bf Panel (a)-(c):} Outcomes under different iterations of the algorithm displayed in a cylindrical equidistant view.
		{\bf Panel (d)-(f):} Corresponding locations of points in panels (a)-(c) in an orthographic view.
		{\bf Panel (g):} Clustering result under the Hammer projection. }
		\label{fig:Mars_data}
	\end{figure}
	
	As illustrated in Figure~\ref{fig:Mars_data}, the directional mean shift algorithm is capable of recovering the local modes of the estimated Martian crater density. Because we do not properly tune the bandwidth parameter, there is a spurious local mode around $(180^{\circ} W, 30^{\circ} S)$. Nevertheless, the mode clustering based on Algorithm~\ref{Algo:MS} succeeds in capturing two major crater clusters (or basins of attraction) on Mars, in which one cluster is densely cratered while the other is lightly catered. This finding aligns with prior research on the Martian crater distribution, stating that Mars can be divided into two general classes of terrain \citep{SODERBLOM1974239}. In Appendix~\ref{Appendix:Mars_Data}, 
	we perform mode clustering with various smoothing bandwidths to illustrate multi-scale structures in the data.

	\subsubsection{Earthquakes on Earth}
	
	Earthquakes on Earth tend to occur more frequently in some regions than others. We again leverage the directional KDE \eqref{Dir_KDE} as well as the directional mean shift algorithm to analyze earthquakes with magnitudes of 2.5+ occurring between 2020-08-21 00:00:00 UTC and 2020-09-21 23:59:59 UTC around the world. The earthquake data can be obtained from the Earthquake Catalog (\url{https://earthquake.usgs.gov/earthquakes/search/}) of the United States Geological Survey. The data set contains 1666 earthquakes worldwide for this one-month period. We use the default bandwidth estimator \eqref{bw_ROT}, which yields $h_{\text{ROT}} \approx 0.245$ on the earthquake data set, and set the tolerance level to $\epsilon=10^{-7}$ throughout the analysis.

	Figure~\ref{fig:Earthquake} displays the results.	
	There are seven local modes recovered by the directional mean shift algorithm, and they are located near (from left to right and top to bottom in Panel (g) of Figure~\ref{fig:Earthquake}) the Gulf of Alaska, the west side of the Rocky Mountain in Nevada, the Caribbean Sea, the west side of the Andes mountains in Chile, the Middle East, Indonesia, and Fiji. These regions are well-known active seismic areas along subduction zones, and our clustering of earthquake data elegantly partitions earthquakes into these regions without any prior knowledge from seismology.
	
	\begin{figure}
		\captionsetup[subfigure]{justification=centering}
		\centering
		\begin{subfigure}[t]{.32\textwidth}
			\centering
			\includegraphics[width=1\linewidth,height=0.8\linewidth]{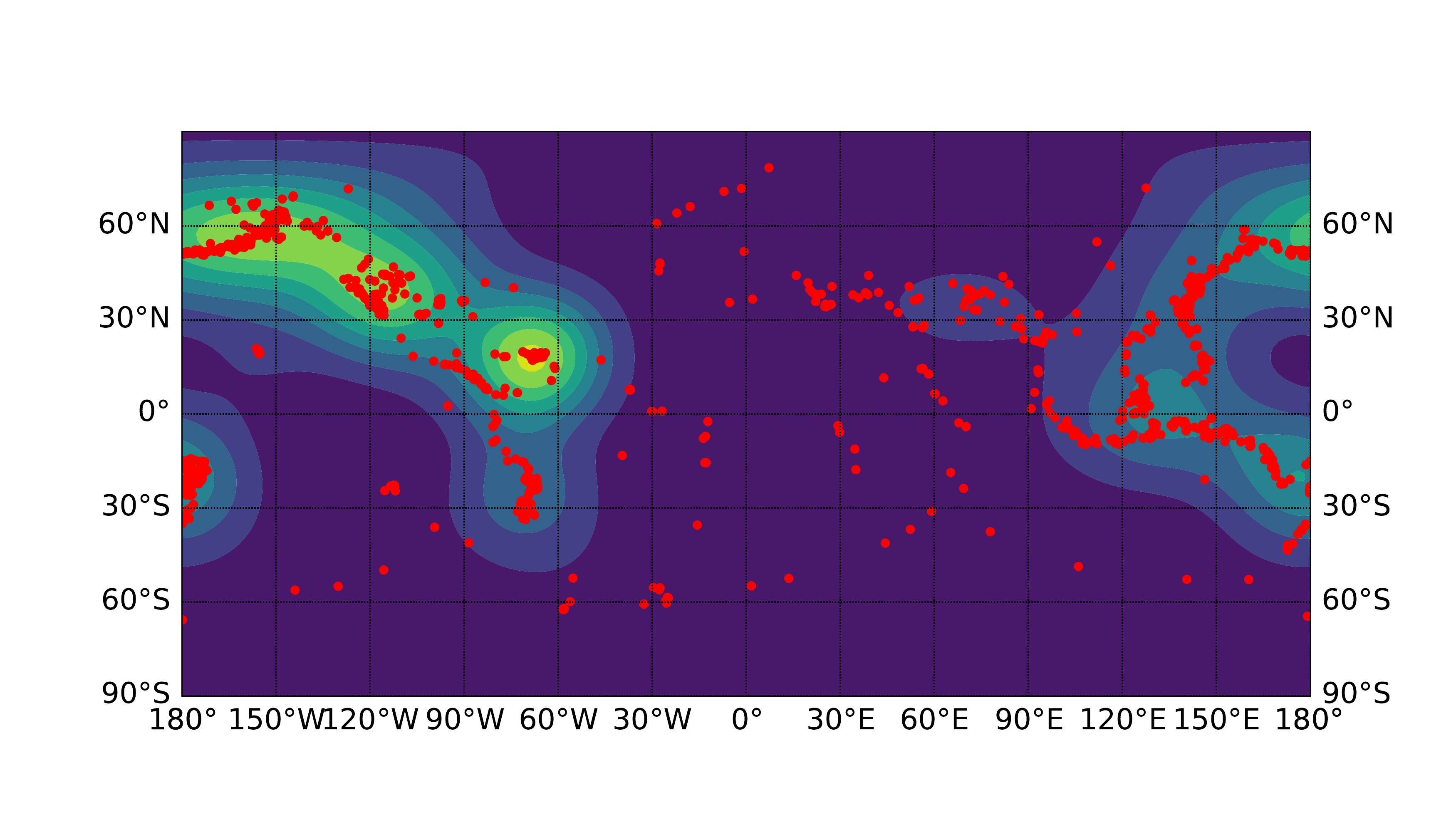}
			\caption{Step 0}
		\end{subfigure}
		\hfil
		\begin{subfigure}[t]{.32\textwidth}
			\centering
			\includegraphics[width=1\linewidth,height=0.8\linewidth]{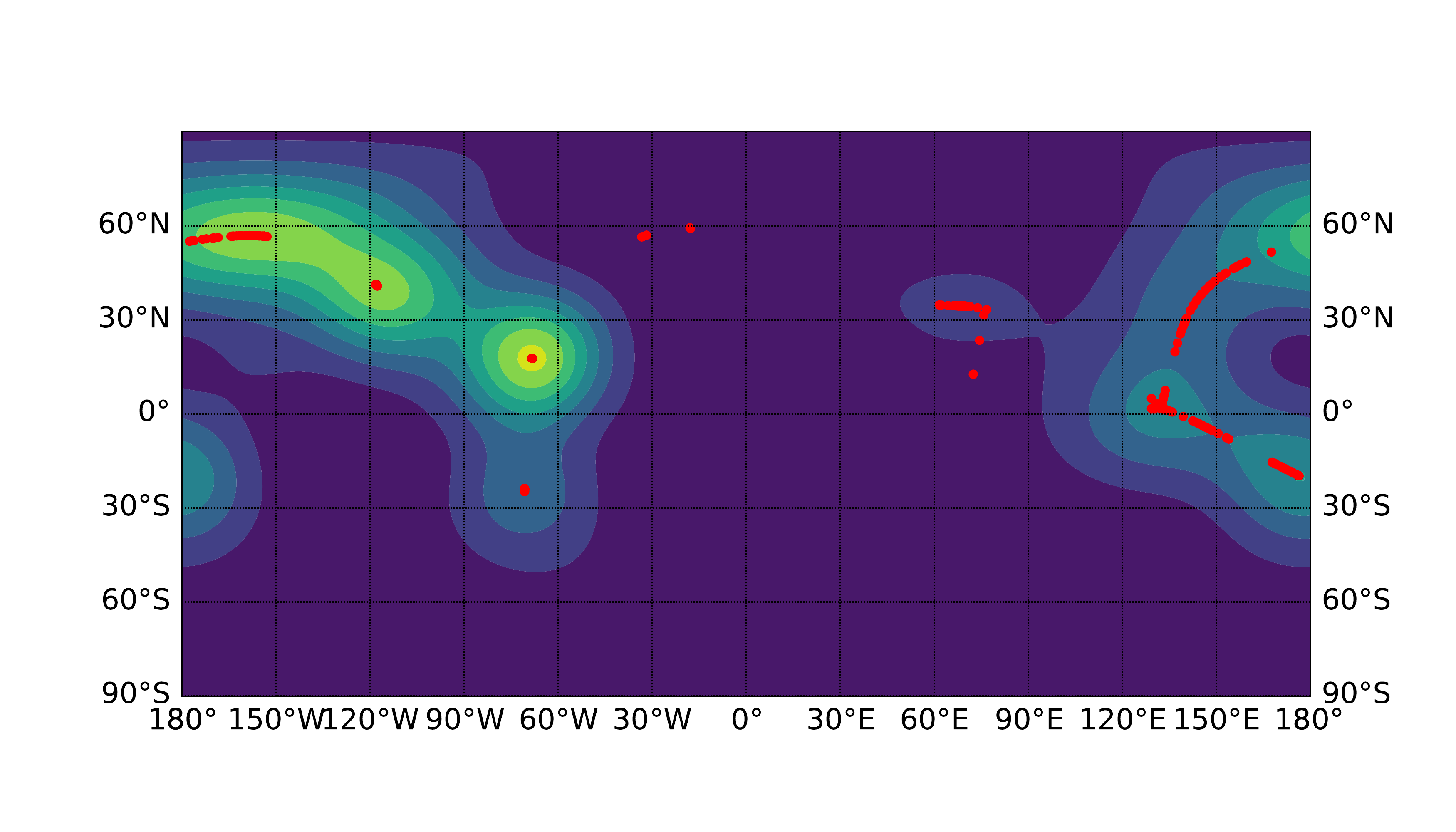}
			\caption{Step 10}
		\end{subfigure}%
		\hfil
		\begin{subfigure}[t]{.32\textwidth}
			\centering
			\includegraphics[width=1\linewidth,height=0.8\linewidth]{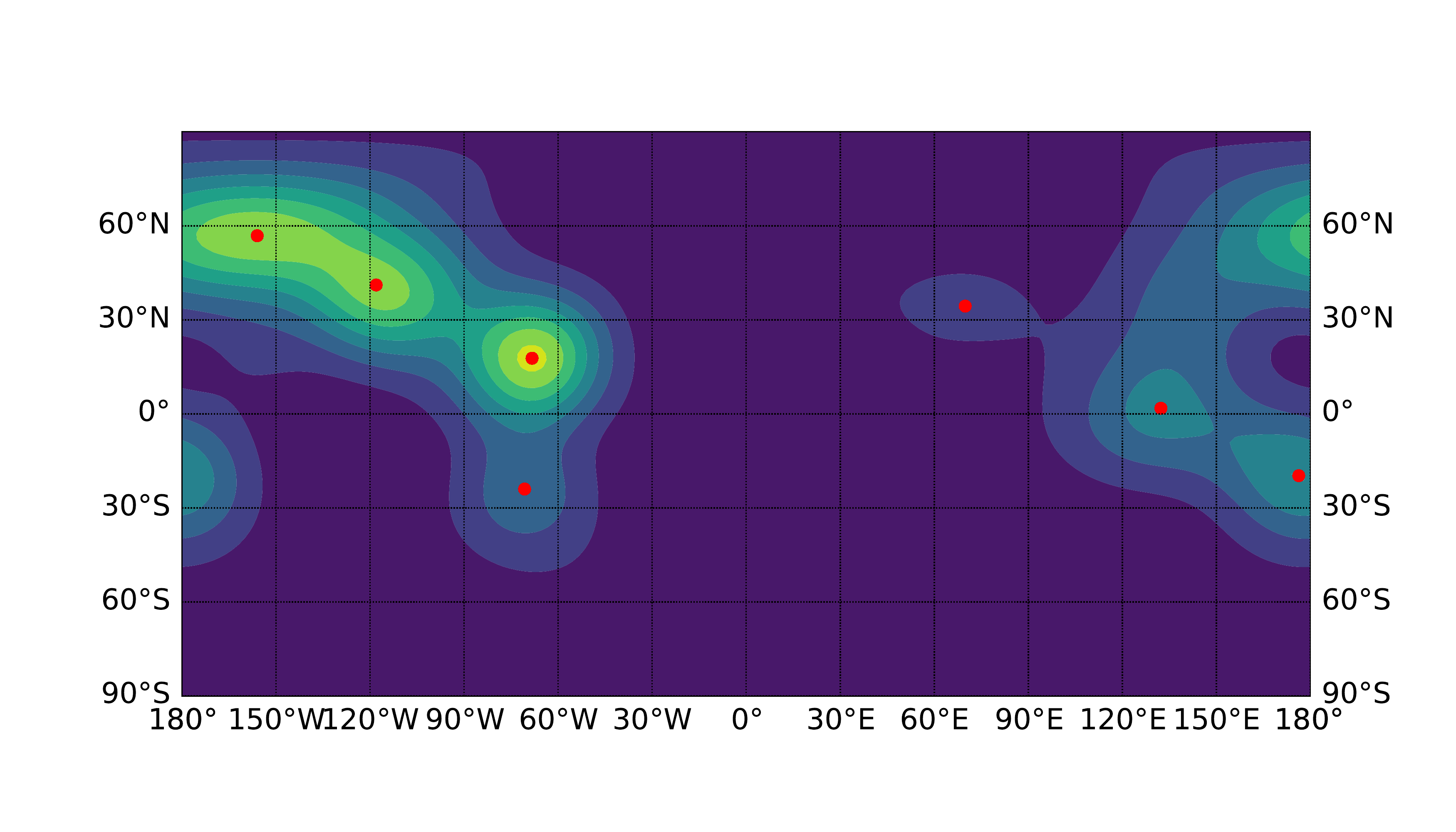}
			\caption{Step 82 (converged)}
		\end{subfigure}%
		
		\begin{subfigure}{.32\textwidth}
			\centering
			\includegraphics[width=1\linewidth]{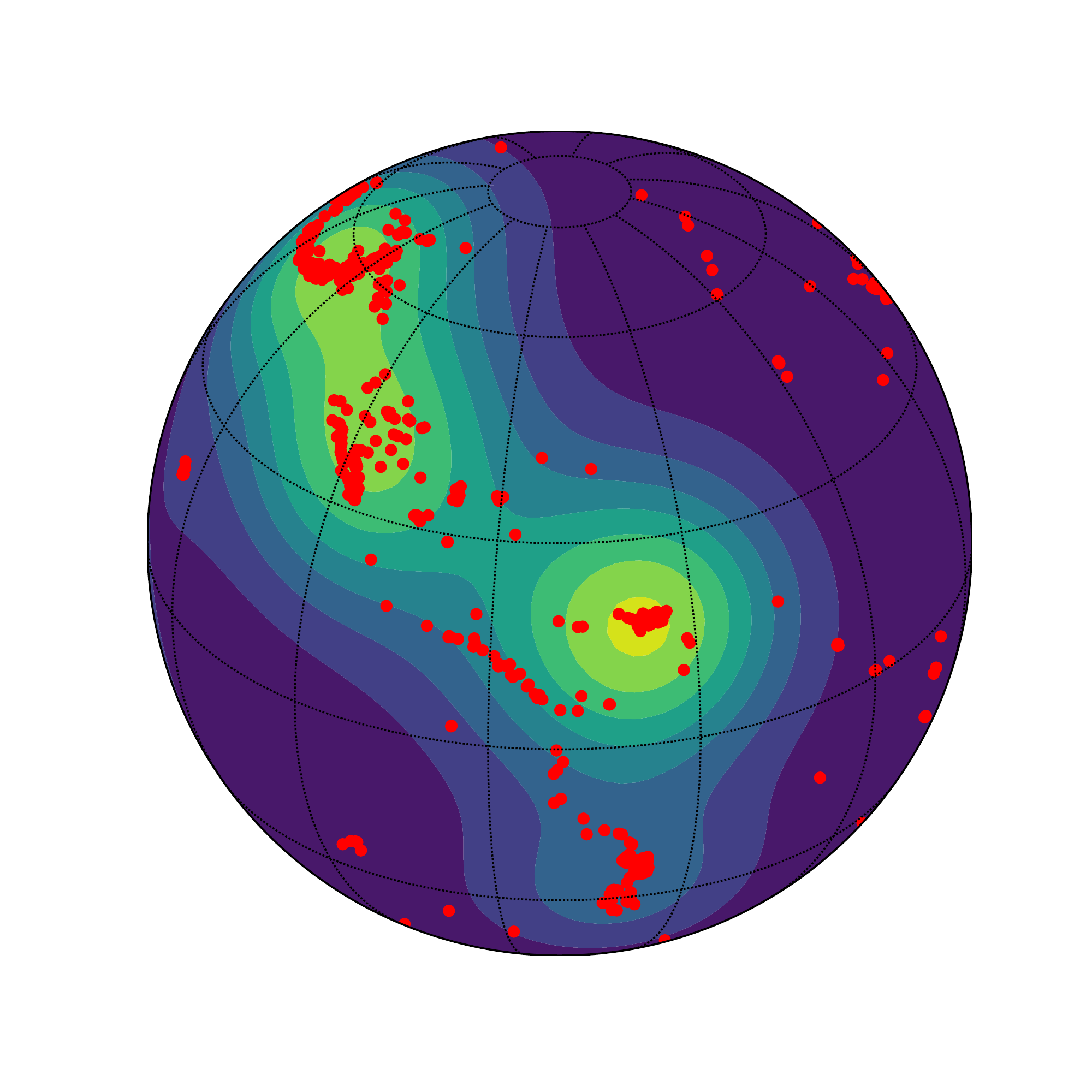}
			\caption{Step 0}
		\end{subfigure}
		\begin{subfigure}{.32\textwidth}
			\centering
			\includegraphics[width=1\linewidth]{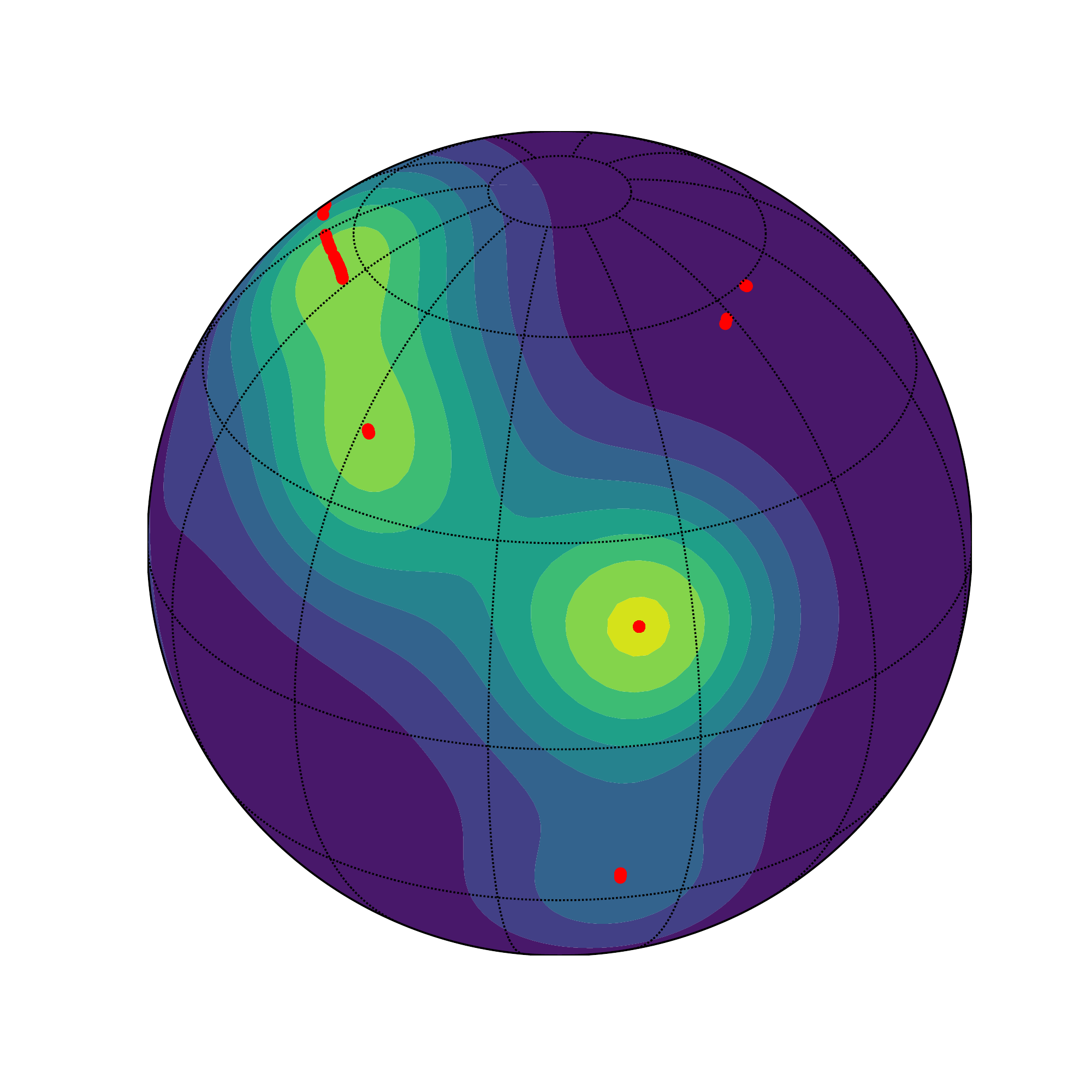}
			\caption{Step 10}
		\end{subfigure}
		\hfil
		\begin{subfigure}{.32\textwidth}
			\centering
			\includegraphics[width=1\linewidth]{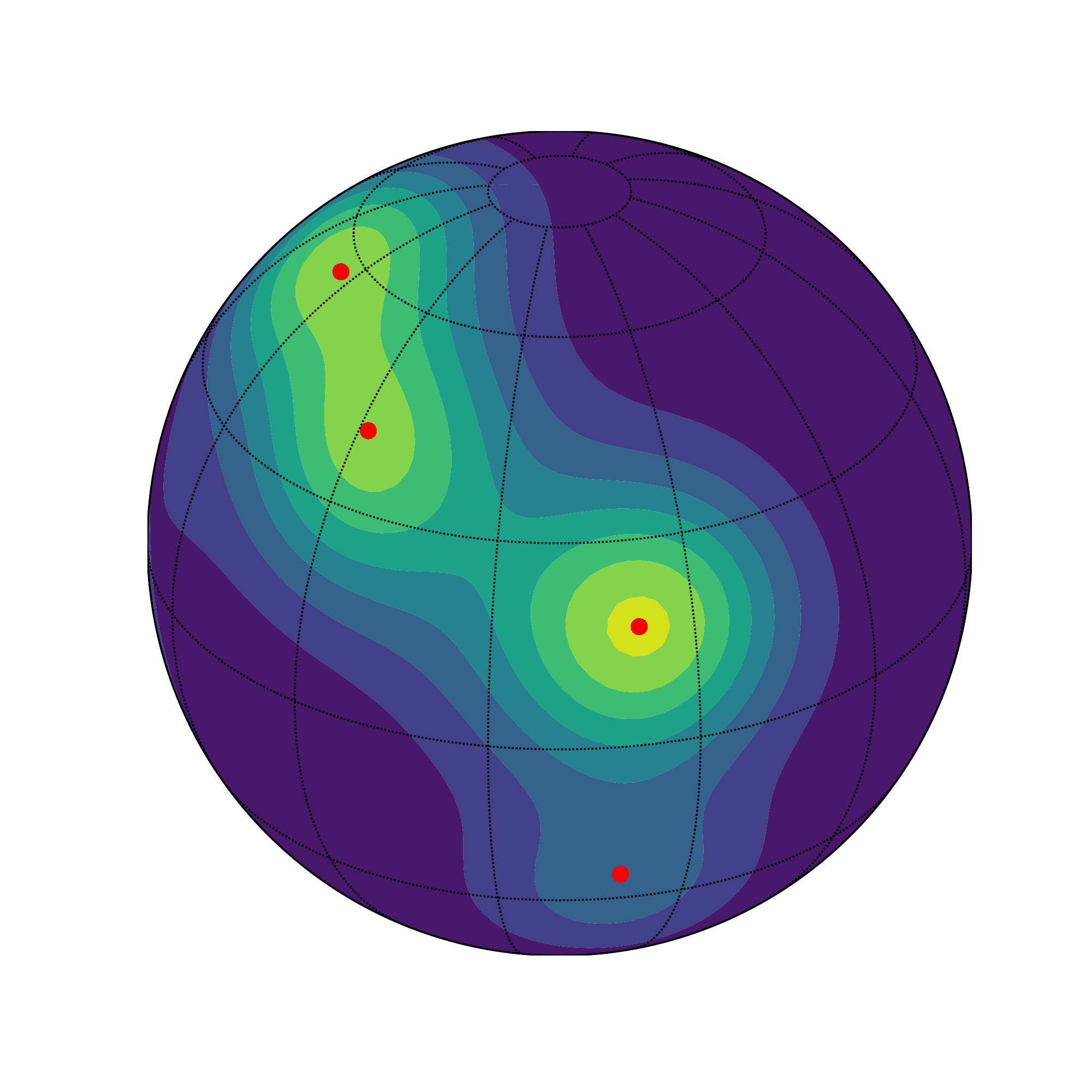}
			\caption{Step 82 (converged)}
		\end{subfigure}
		\hfil
		\begin{center}
			\begin{subfigure}[t]{.49\textwidth}
				\centering
				\includegraphics[width=1\linewidth,height=0.85\linewidth]{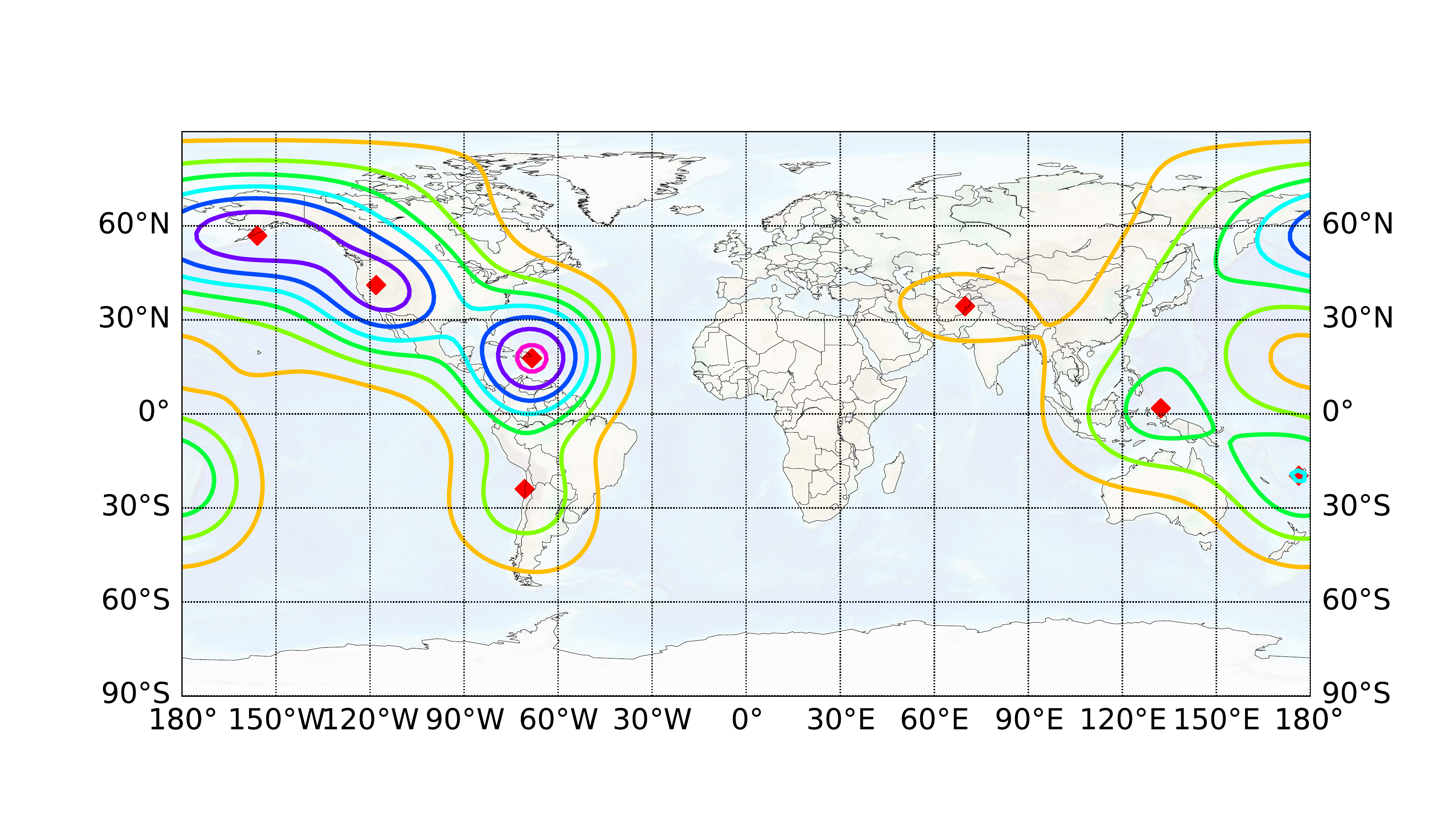}
				\caption{Local modes and contour lines on the world map}
			\end{subfigure}
			\begin{subfigure}[t]{.49\textwidth}
				\centering
				\includegraphics[width=1\linewidth,height=0.85\linewidth]{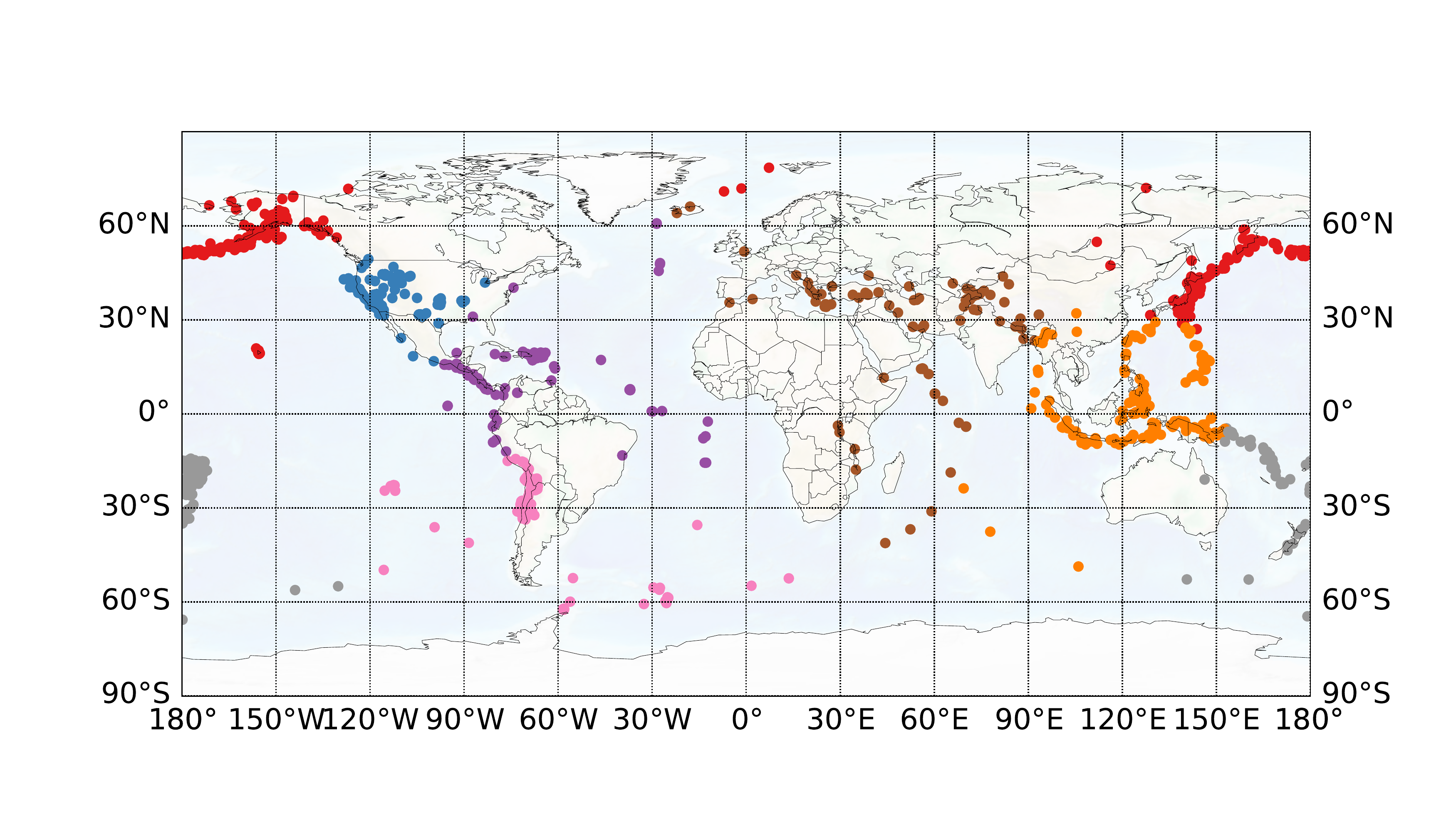}
				\caption{Mode clustering on the world map}
			\end{subfigure}
		\end{center}
		\caption{Directional mean shift algorithm performed on earthquake data for a one-month period. 
		The first two rows display the analysis similar to Figure~\ref{fig:Two_d_ThreeM}.
		{\bf Panel (a)-(c):} Outcomes under different iterations of the algorithm displayed in a cylindrical equidistant view.
		{\bf Panel (d)-(f):} Corresponding locations of points in panels (a)-(c) in an orthographic view.
		{\bf Panel (g):} Contour plots of estimated density. {\bf Panel (h):} Clustering result using the directional mean shift algorithm.}
		\label{fig:Earthquake}
	\end{figure}
	
	\section{Conclusion}
	
	In this paper, we generalize the standard mean shift algorithm to directional data based on a total gradient (or differential) of the directional KDE and formulate it as a fixed-point iteration. 
	We derive the explicit forms of the (Riemannian) gradient and Hessian estimators from a general directional KDE and establish pointwise and uniform rates of convergence for the two derivative estimators. With these powerful uniform consistency results, we demonstrate that the collection of estimated local modes obtained by the directional mean shift algorithm is a statistically consistent estimator of the set of true local modes under some mild regularity conditions.
	Additionally, the ascending property and convergence of the proposed algorithm are proved.
	Finally, given a proper bandwidth parameter (or step size for other general gradient ascent algorithms on $\Omega_q$), we argue that the directional mean shift algorithm (or other general gradient ascent algorithms on $\Omega_q$) converge(s) linearly to the (estimated) local modes within their small neighborhoods regardless of whether a population-level or sample-based version of gradient ascent is applied.
	
    Possible future extensions of our work are as follows.
	\begin{itemize}
		\item \textbf{Bandwidth Selection}. Current studies on bandwidth selectors for directional kernel smoothing settings primarily optimize the directional KDE itself. Research on bandwidth selection for derivatives of the directional KDE, especially gradient and Hessian estimators, has lagged behind. A well-designed bandwidth selector for the first-order derivatives of the directional KDE can further improve the algorithmic convergence rate of our algorithm in real-world applications. There are at least two common approaches to perform such bandwidth selection. The first is to calculate the explicit forms of dominating constants in the bias and stochastic variation terms when we derive pointwise convergence rates of the (Riemannian) gradient and Hessian estimators in Theorem~\ref{pw_conv_tang}. Then, under some assumptions on the underlying directional distribution, such as the von Mises Fisher distribution, a directional analogue to the rule of thumb of \cite{Silverman1986} for gradient and Hessian estimators can be explicated, although the calculations may be heavy. Another approach is to rely on data-adaptive methods, such as cross-validation \citep{KDE_Sphe1987} and bootstrap \citep{KDE_torus2011,Nonp_Dir_HDR2020}, which should be suitable for estimating the derivatives of the directional KDE. In addition, a bandwidth selector that is locally adaptive to the distribution of directional data is of great significance when the dimension is high.
		
		\item \textbf{Accelerated Directional Mean Shift}. Another future direction is to accelerate the current directional mean shift algorithm when the sample size is large, as the number of iterations for convergence is over 150 in one of our real-world applications. There are several feasible approaches mentioned in Section~\ref{Sec:Intro} for the Euclidean mean shift algorithm. One of the most notable methods is the blurring procedure \citep{Fast_GBMS2006,GBMS2008}, in which the (Gaussian) mean shift algorithm is performed with a crucial modification that successively updates the data set for density estimation after each mean shift iteration. It has been demonstrated that the blurring procedure improves the convergence rate of the (Gaussian) mean shift algorithm to be cubic or even higher order with Gaussian clusters and an appropriate step size. 
		We present preliminary results of introducing blurring procedures into the directional mean shift algorithm with the von-Mises kernel in Appendix~\ref{Appendix:BMS}, where the blurring procedures are able to substantially reduce the total number of iterations. However, in addition to those valid estimated local modes identified by the original directional mean shift algorithm, the blurring version also recovers some spurious local mode estimates (see Table~\ref{table:BMS} in Appendix~\ref{Appendix:BMS} for additional details). Because the current stopping criterion applied in the blurring directional mean shift algorithm is adopted from the criterion for Gaussian blurring mean shift \citep{Fast_GBMS2006}, we plan to develop an improved stopping criterion for the blurring directional mean shift algorithm and investigate its rate of convergence.
		
		\item \textbf{Connections to the EM Algorithm}. As pointed out by \cite{MS_EM2007}, the Gaussian mean shift algorithm for Euclidean data is an EM algorithm, while the mean shift algorithm with a non-Gaussian kernel is a generalized EM algorithm. It is unclear whether the directional mean shift algorithm with the von Mises kernel is also an EM algorithm on a mixture of von Mises-Fisher distributions on $\Omega_q$ \citep{spherical_EM} or even a generalized EM algorithm when other kernels are used in Algorithm~\ref{Algo:MS}. Bridging this connection can help establish the linear rate of convergence for the algorithm from a different angle. 
	\end{itemize}
	
%\pagebreak
	
	\acks{We thank the anonymous reviewers and AE for their constructive comments that improved the quality of this paper. We also thank members in the Geometric Data Analysis Reading Group at the University of Washington for their helpful comments. YC is supported by NSF DMS - 1810960 and DMS - 1952781, NIH U01 - AG0169761.}
	
%	\vskip 0.2in
%	\bibliography{Bibliography}

	\newpage
	\appendix
	\begin{appendices}
		
		\section{Additional Experimental Results}
		\label{Appendix:Ad_Experiments}
		
		\subsection{Spherical Case with One Mode (Simulation Study)}
		\label{Appendix:Two_d_OneM}
		
		We simulate 1000 data points from the density $f_2(\bm{x}) = f_{\text{vMF}}(\bm{x};\bm{\mu},\nu)$ with $\bm{\mu}=(1,0,0)$ and $\nu=5$. The bandwidth parameter is selected using \eqref{bw_ROT} and the tolerance level for terminating the algorithm is set to $\epsilon=10^{-7}$. As presented in Figure~\ref{fig:Two_d_OneM}, all the simulated data points converge to the mode of the estimated directional density except for a small portion of outliers. In addition, the misclassification rate in this example is 0.011, because there are some spurious local modes in the low density region. The total number of iterative steps in this case is greater than the case with three local modes in Figure~\ref{fig:Two_d_ThreeM}.
		
		\begin{figure}
			\captionsetup[subfigure]{justification=centering}
			\centering
			\begin{subfigure}[t]{.32\textwidth}
				\centering
				\includegraphics[width=1\linewidth,height=0.75\linewidth]{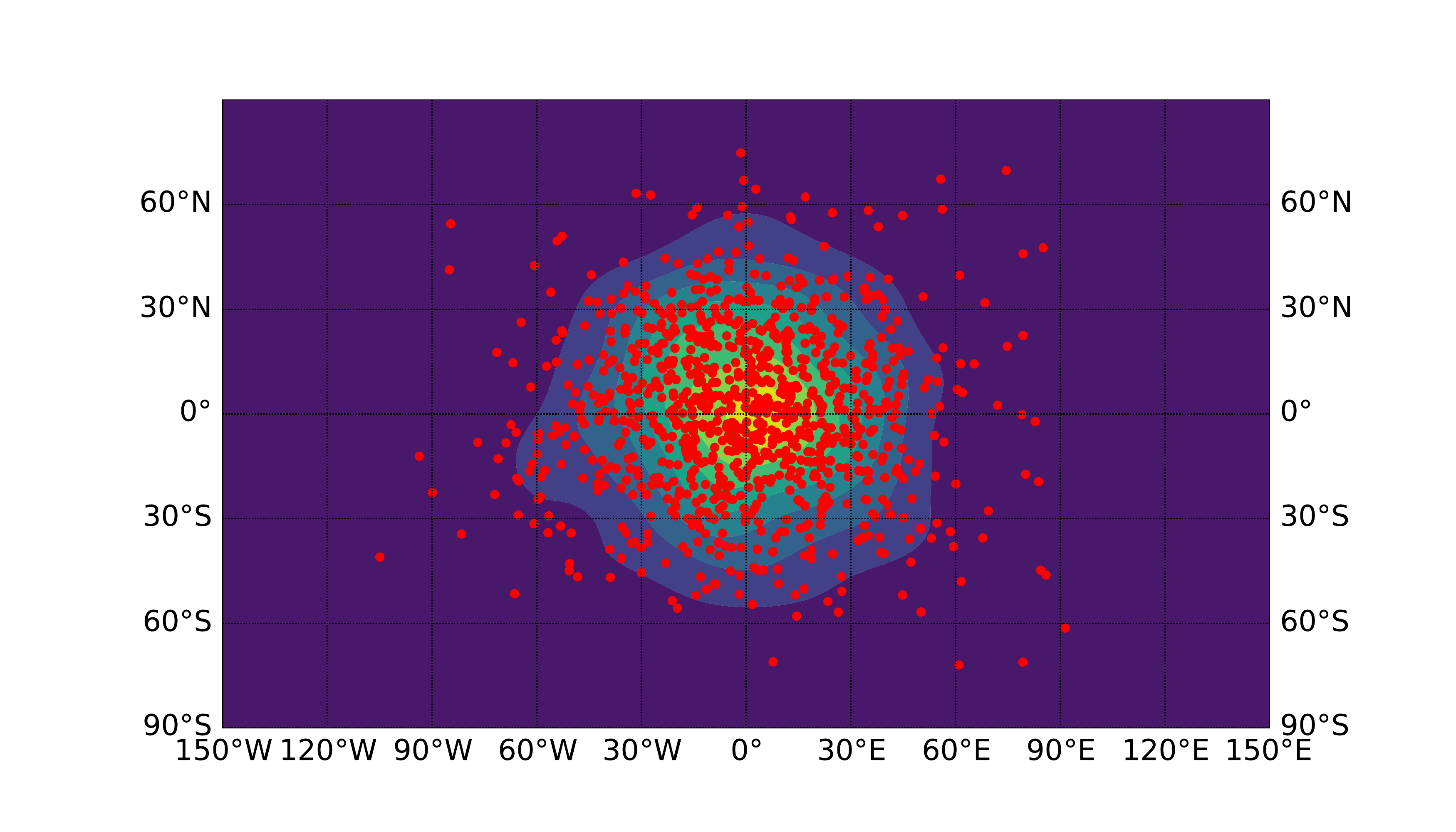}
				\caption{Step 0}
			\end{subfigure}
			\hfil
			\begin{subfigure}[t]{.32\textwidth}
				\centering
				\includegraphics[width=1\linewidth,height=0.75\linewidth]{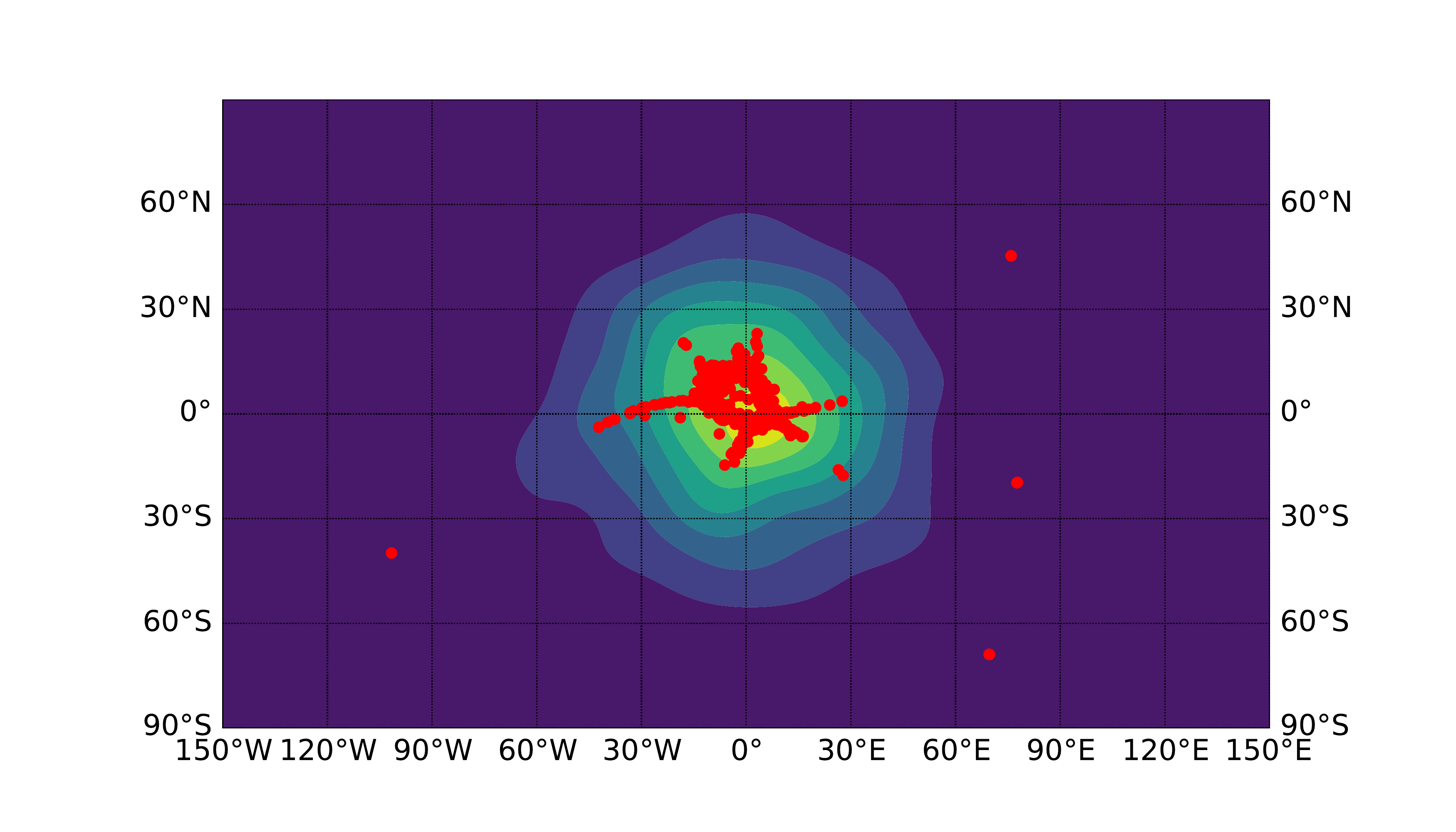}
				\caption{Step 19}
			\end{subfigure}%
			\hfil
			\begin{subfigure}[t]{.32\textwidth}
				\centering
				\includegraphics[width=1\linewidth,height=0.75\linewidth]{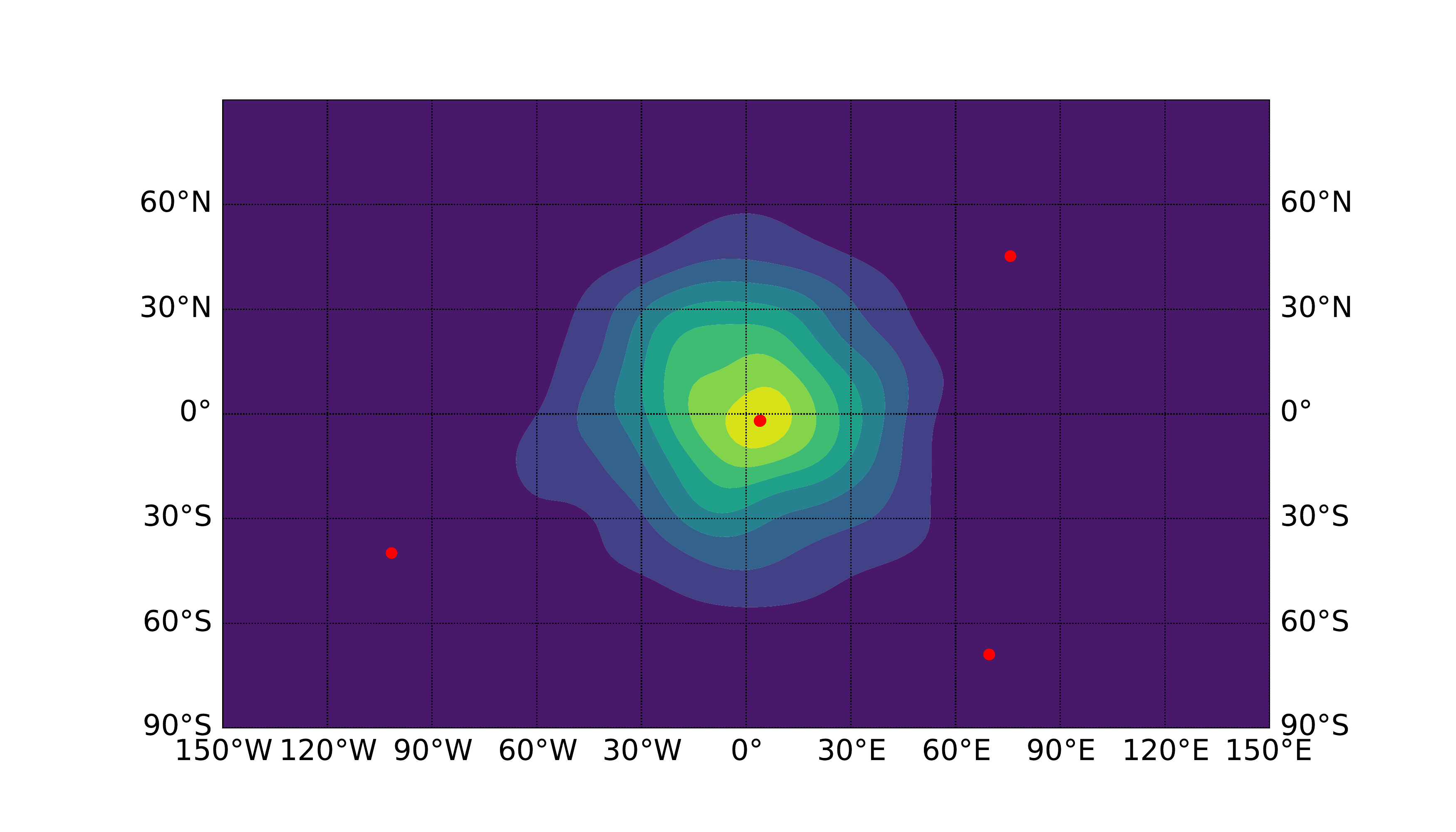}
				\caption{Step 79 (converged)}
			\end{subfigure}%
			
			\begin{subfigure}{.32\textwidth}
				\centering
				\includegraphics[width=1\linewidth]{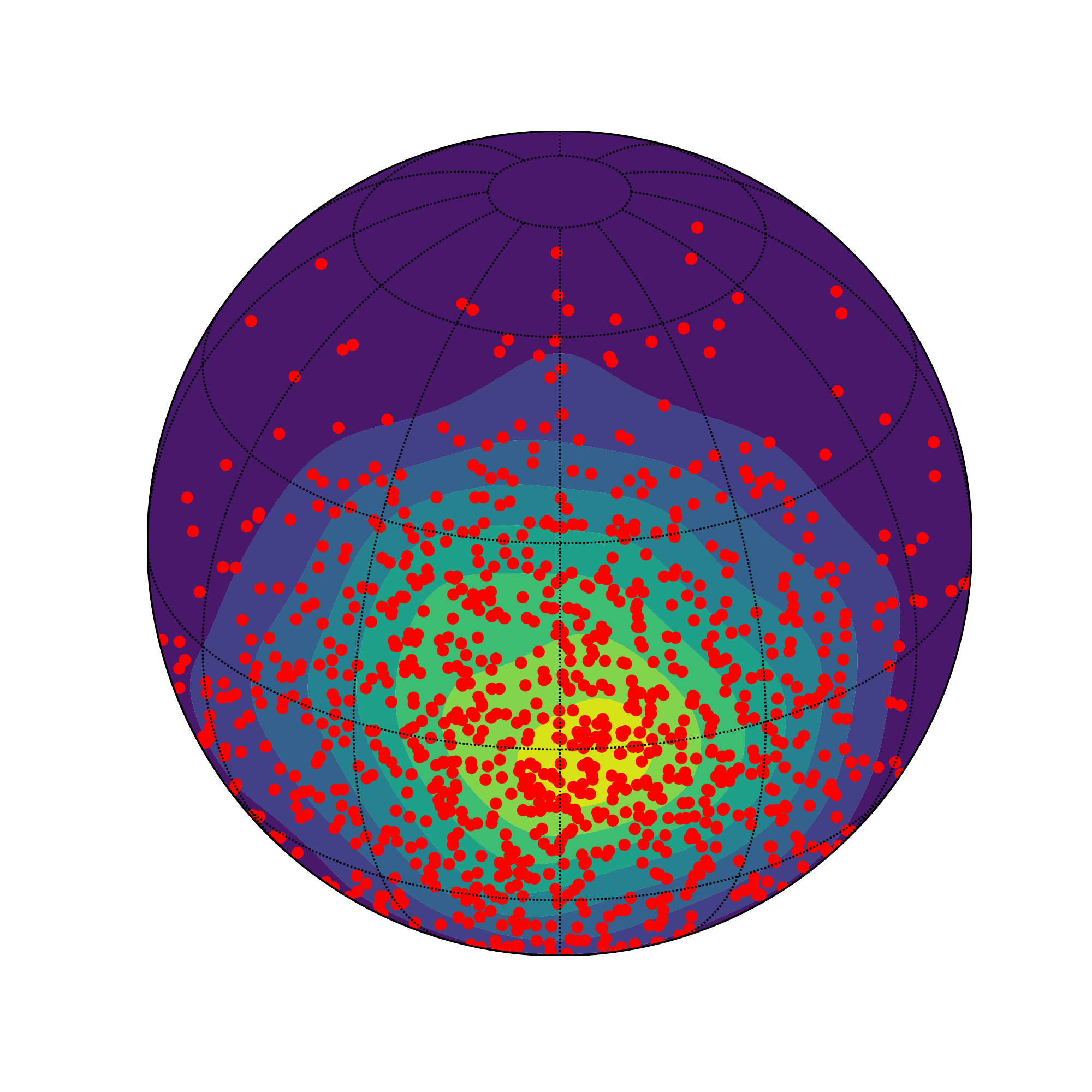}
				\caption{Step 0}
			\end{subfigure}
			\begin{subfigure}{.32\textwidth}
				\centering
				\includegraphics[width=1\linewidth]{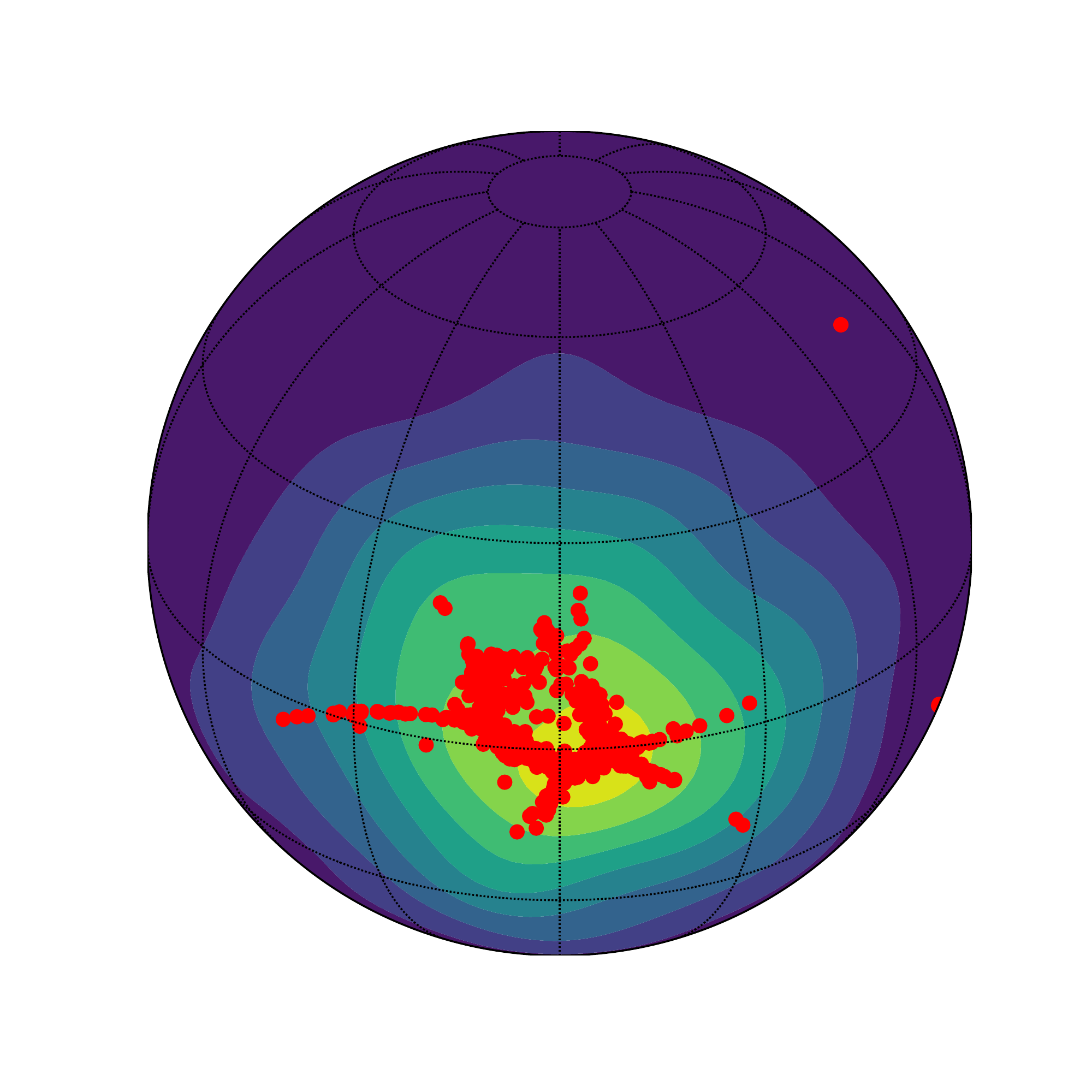}
				\caption{Step 19}
			\end{subfigure}
			\hfil
			\begin{subfigure}{.32\textwidth}
				\centering
				\includegraphics[width=1\linewidth]{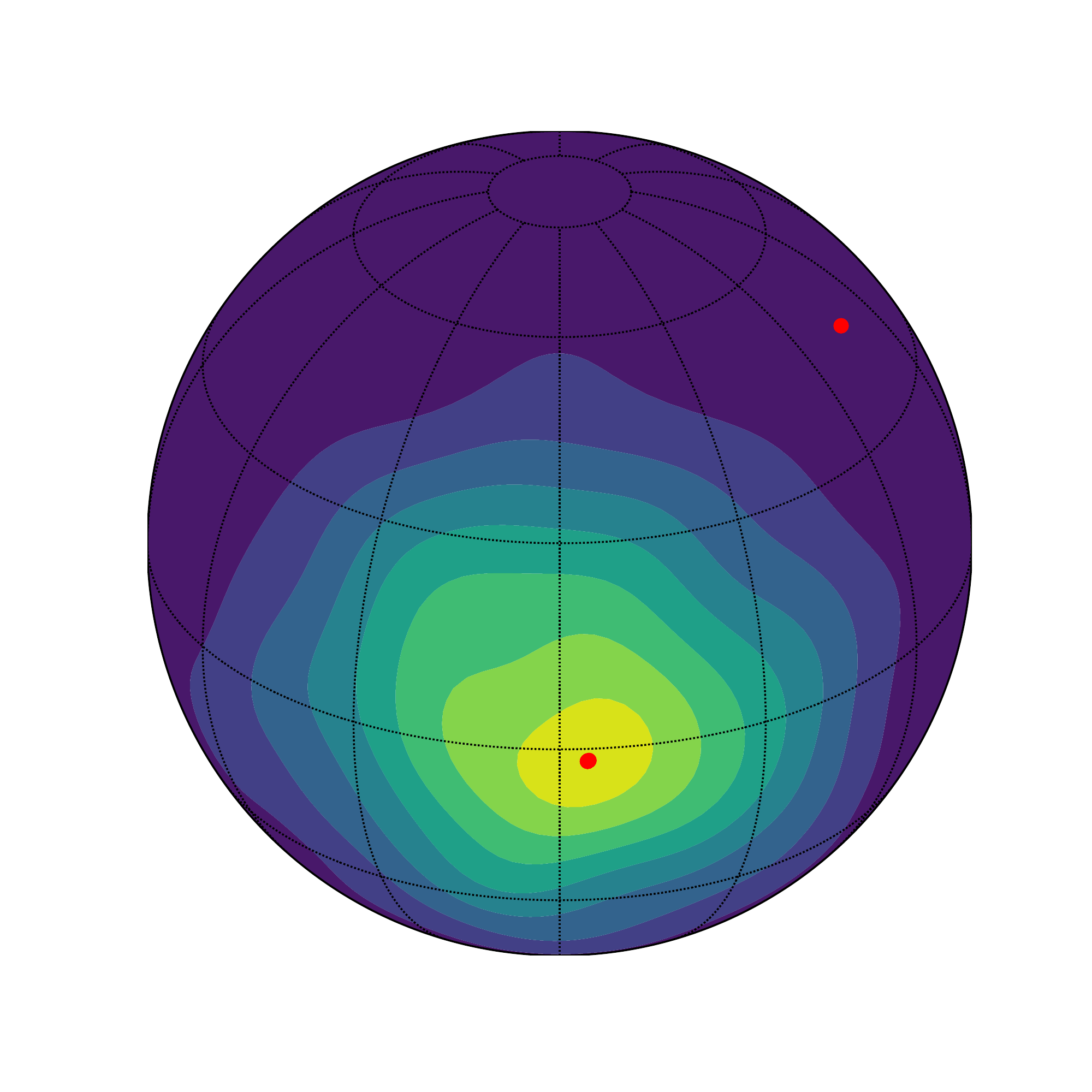}
				\caption{Step 79 (converged)}
			\end{subfigure}
			\hfil
			\begin{center}
				\begin{subfigure}[t]{.8\textwidth}
					\centering
					\includegraphics[width=1\linewidth]{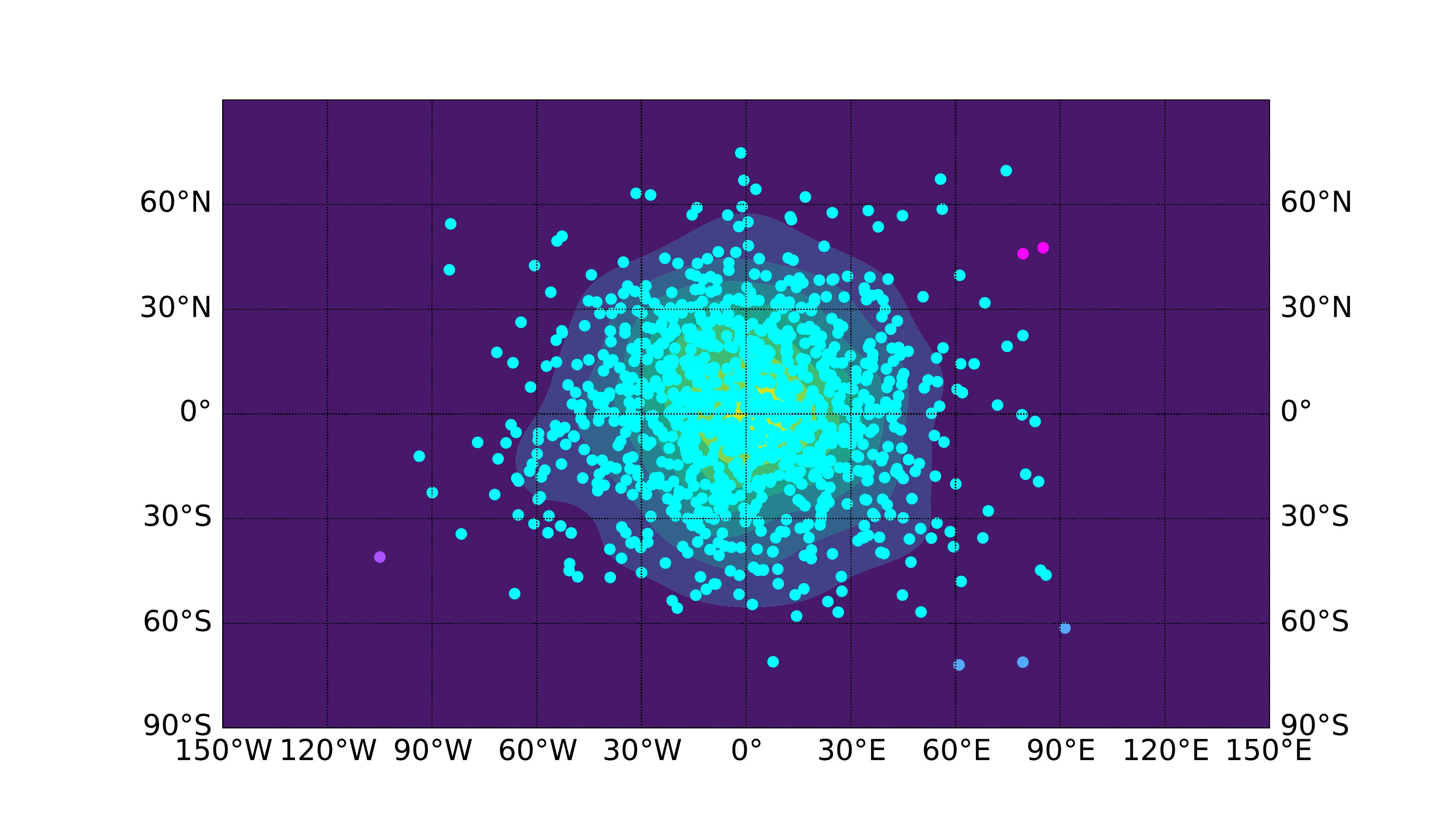}
					\caption{Mode clustering (Cylindrical equidistant view)}
				\end{subfigure}
			\end{center}
			\caption{Directional mean shift algorithm performed on simulated data with one mode on $\Omega_2$. 
		The analysis is displayed similar to Figure~\ref{fig:Two_d_ThreeM}.
		{\bf Panel (a)-(c):} Outcomes under different iterations of the algorithm displayed in a cylindrical equidistant view.
		{\bf Panel (d)-(f):} Corresponding locations of points in panels (a-c) in an orthographic view.
		{\bf Panel (g):} Clustering result in a cylindrical equidistant view.}			
			\label{fig:Two_d_OneM}
		\end{figure}
		
		\subsection{Additional Mode Clustering Results on the Martian Crater Data}
		\label{Appendix:Mars_Data}
		
		We varies the bandwidth parameter $h$ from 0.1 to 0.6 with a step size 0.05 when conducting mode clustering on the trimmed Martian crater data set. The tolerance level for stopping Algorithm~\ref{Algo:MS} is set to $\epsilon=10^{-7}$. The number of crater clusters on Mars yielded from Algorithm~\ref{Algo:MS} ranges from 37 when $h=0.1$ to 1 when $h=0.6$ in Figure~\ref{fig:Mode_clu_Mars}. Those small crater clusters yielded by the directional mean shift algorithm with a small bandwidth parameter are of practical significance, since it may give astronomers more insight into the planetary subsurface structure and geologic processes on Mars.
		
		\begin{figure}
			\captionsetup[subfigure]{justification=centering}
			\centering
			\begin{subfigure}[t]{.49\textwidth}
				\centering
				\includegraphics[width=1\linewidth,height=0.57\linewidth]{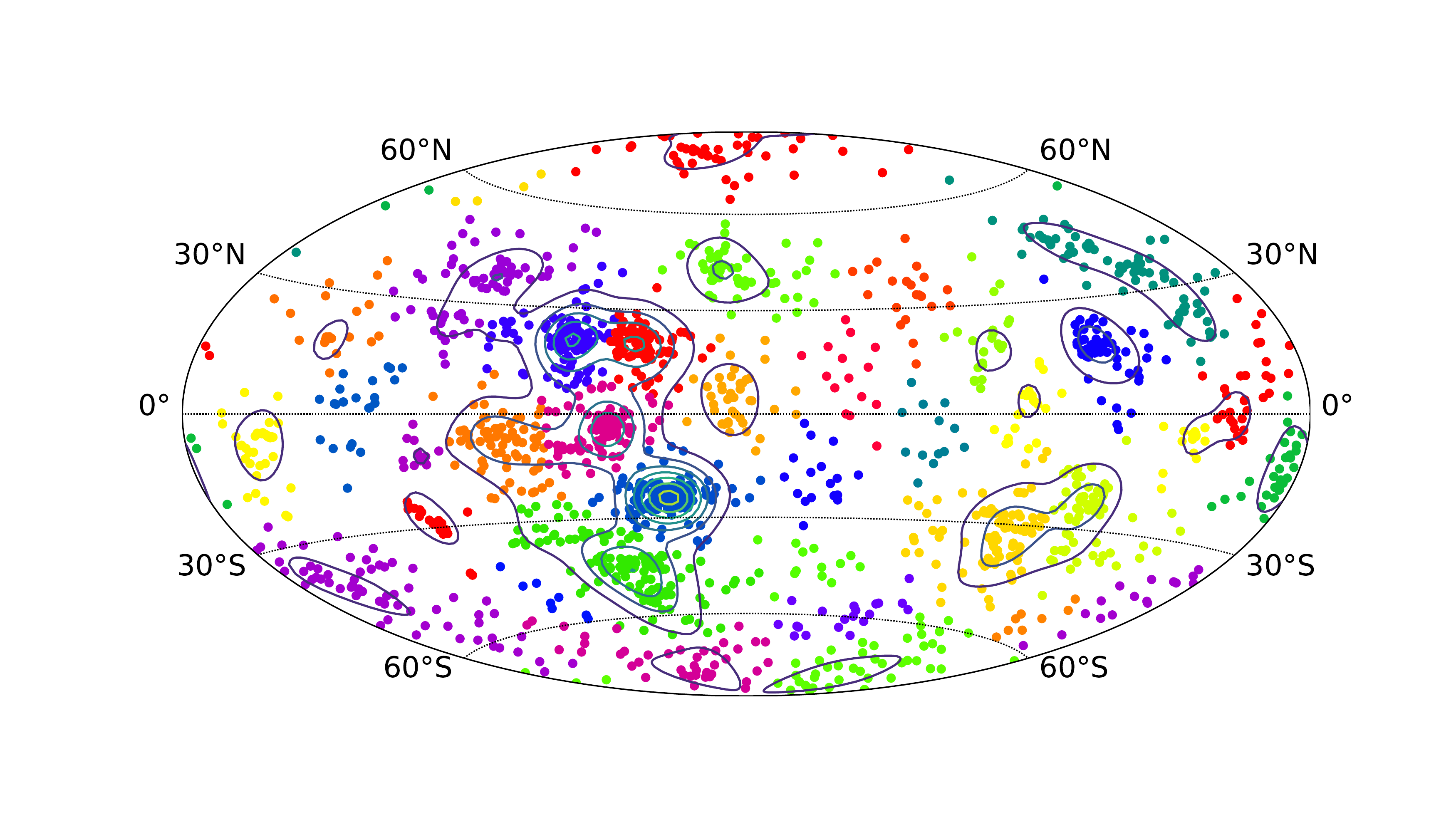}
				\caption{37 clusters (Step 137 (converged), $h=0.1$)}
			\end{subfigure}
			\hfil
			\begin{subfigure}[t]{.49\textwidth}
				\centering
				\includegraphics[width=1\linewidth,height=0.57\linewidth]{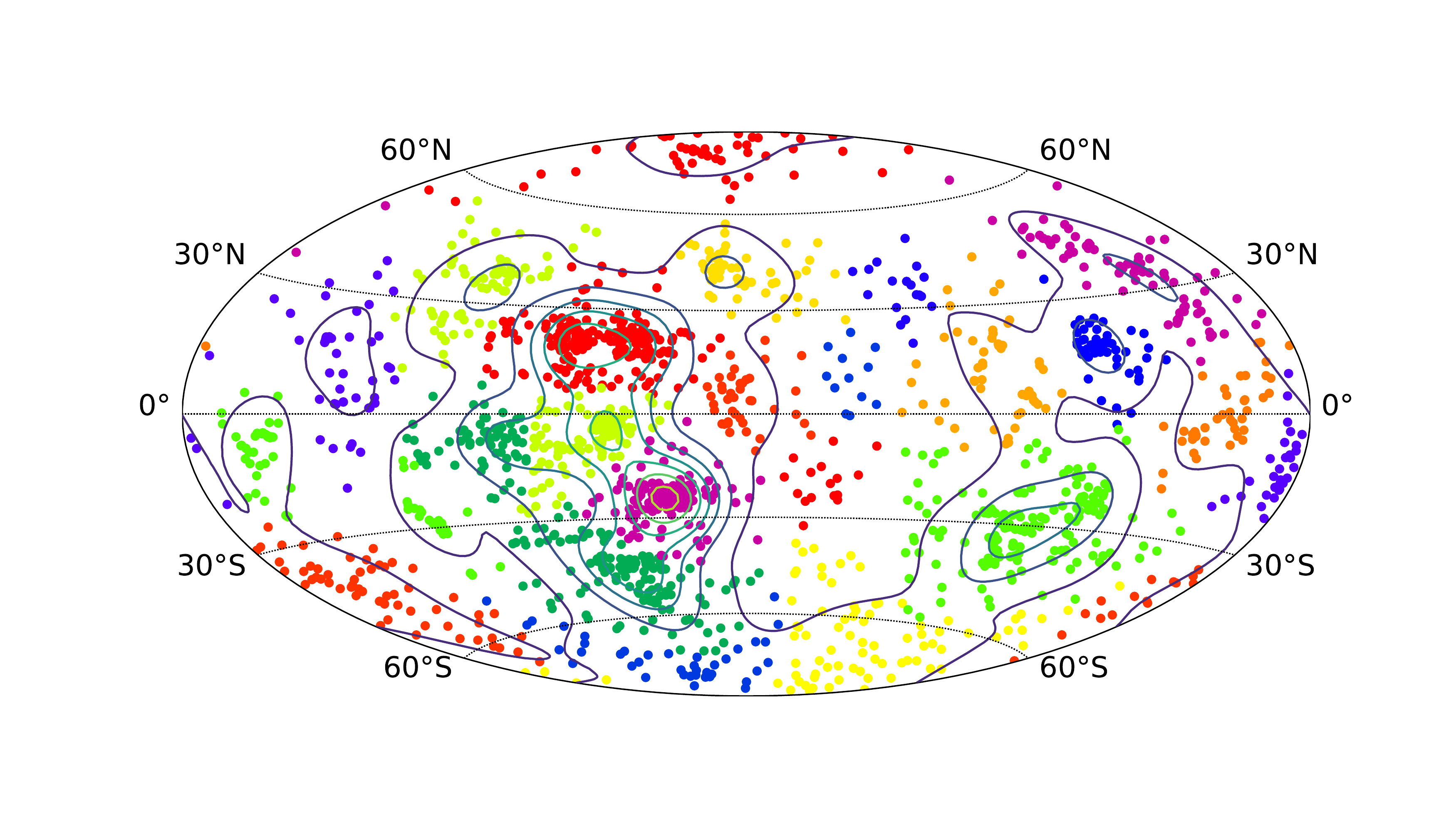}
				\caption{24 clusters (Step 62 (converged), $h=0.15$)}
			\end{subfigure}
			
			\begin{subfigure}[t]{.49\textwidth}
				\centering
				\includegraphics[width=1\linewidth,height=0.57\linewidth]{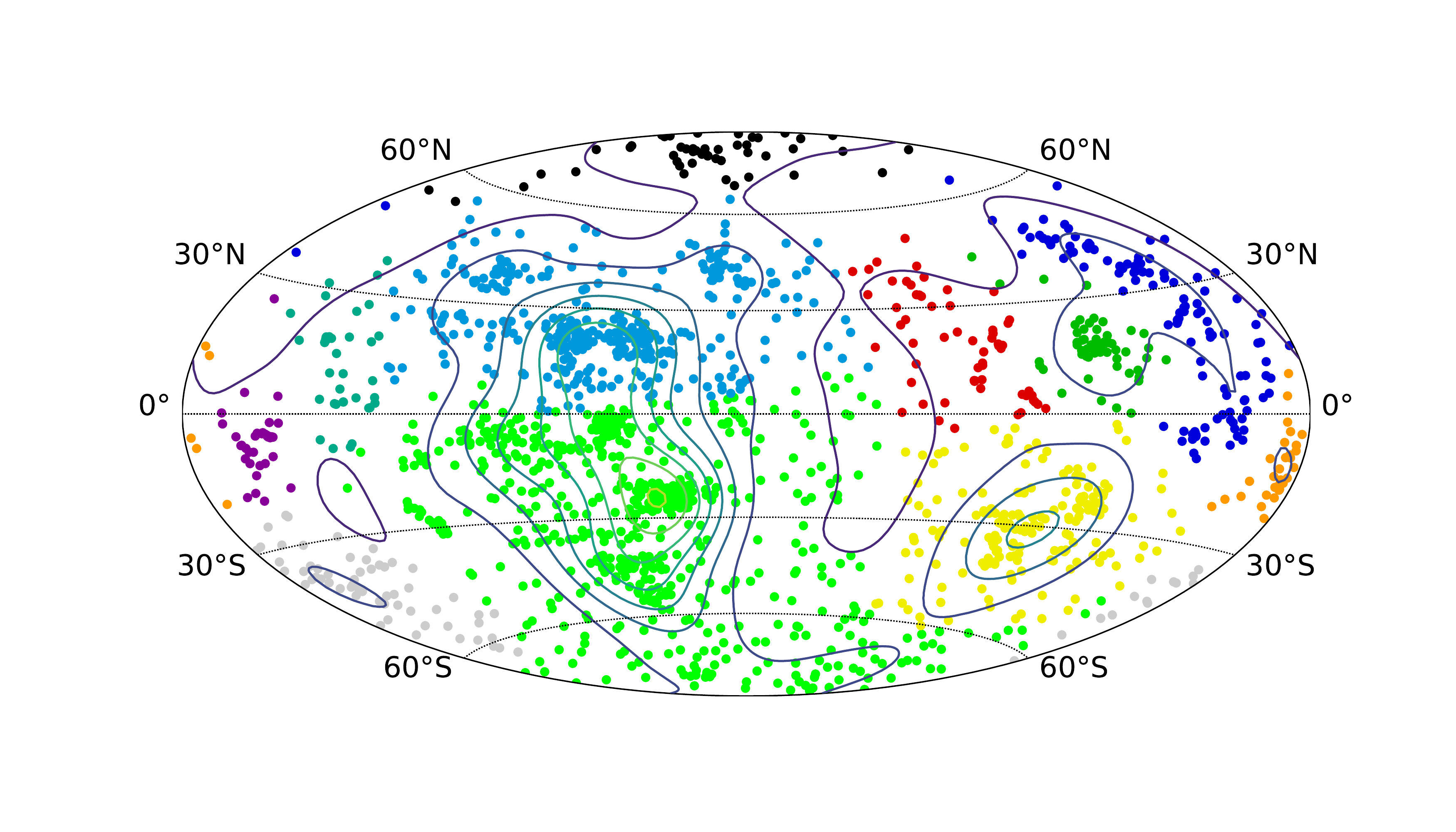}
				\caption{11 clusters (Step 135 (converged), $h=0.2$)}
			\end{subfigure}
			\hfil
			\begin{subfigure}[t]{.49\textwidth}
				\centering
				\includegraphics[width=1\linewidth,height=0.57\linewidth]{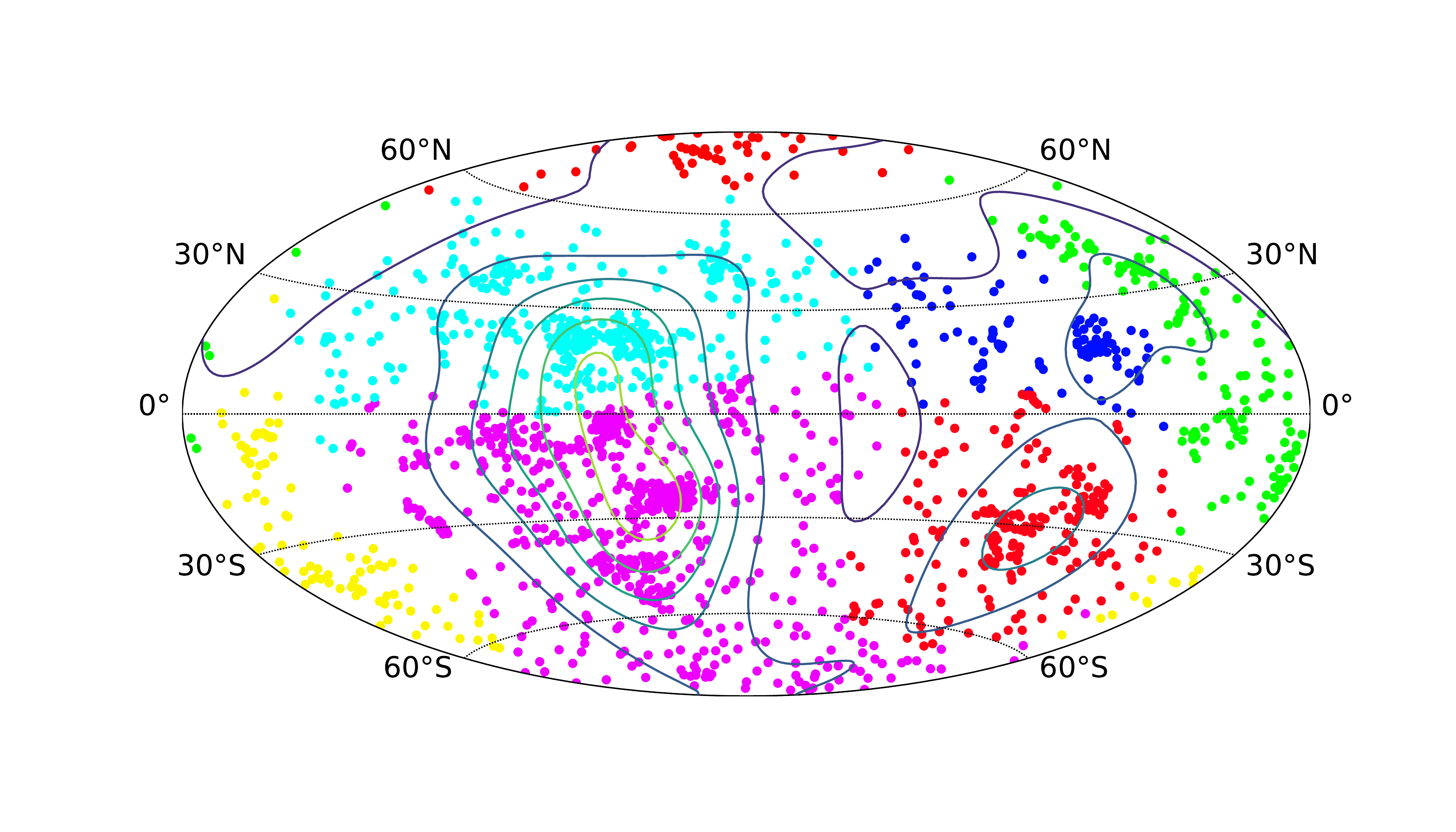}
				\caption{7 clusters (Step 234 (converged), $h=0.25$)}
			\end{subfigure}
			
			\begin{subfigure}[t]{.49\textwidth}
				\centering
				\includegraphics[width=1\linewidth,height=0.57\linewidth]{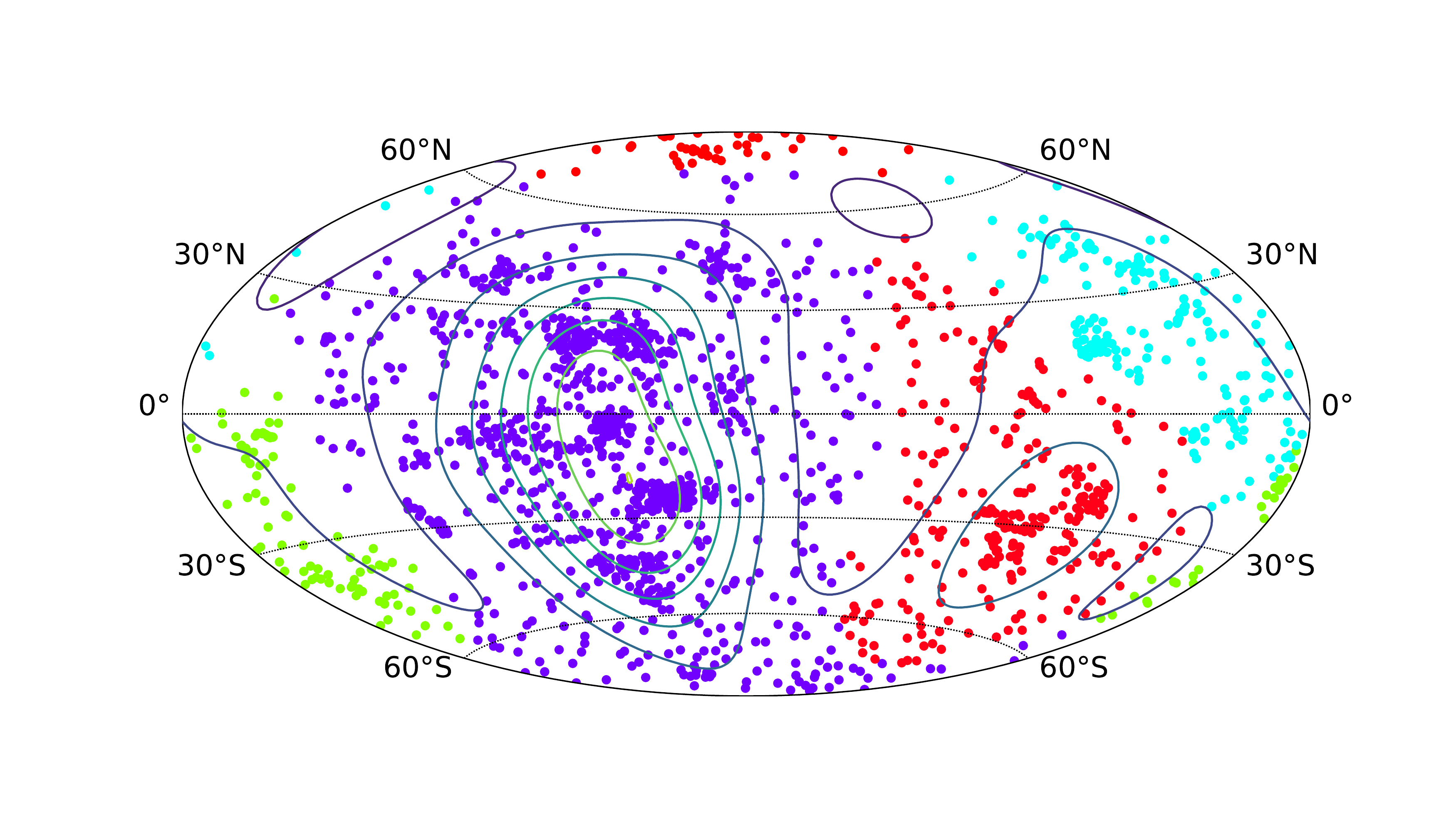}
				\caption{5 clusters (Step 237 (converged), $h=0.3$)}
			\end{subfigure}
			\hfil
			\begin{subfigure}[t]{.49\textwidth}
				\centering
				\includegraphics[width=1\linewidth,height=0.57\linewidth]{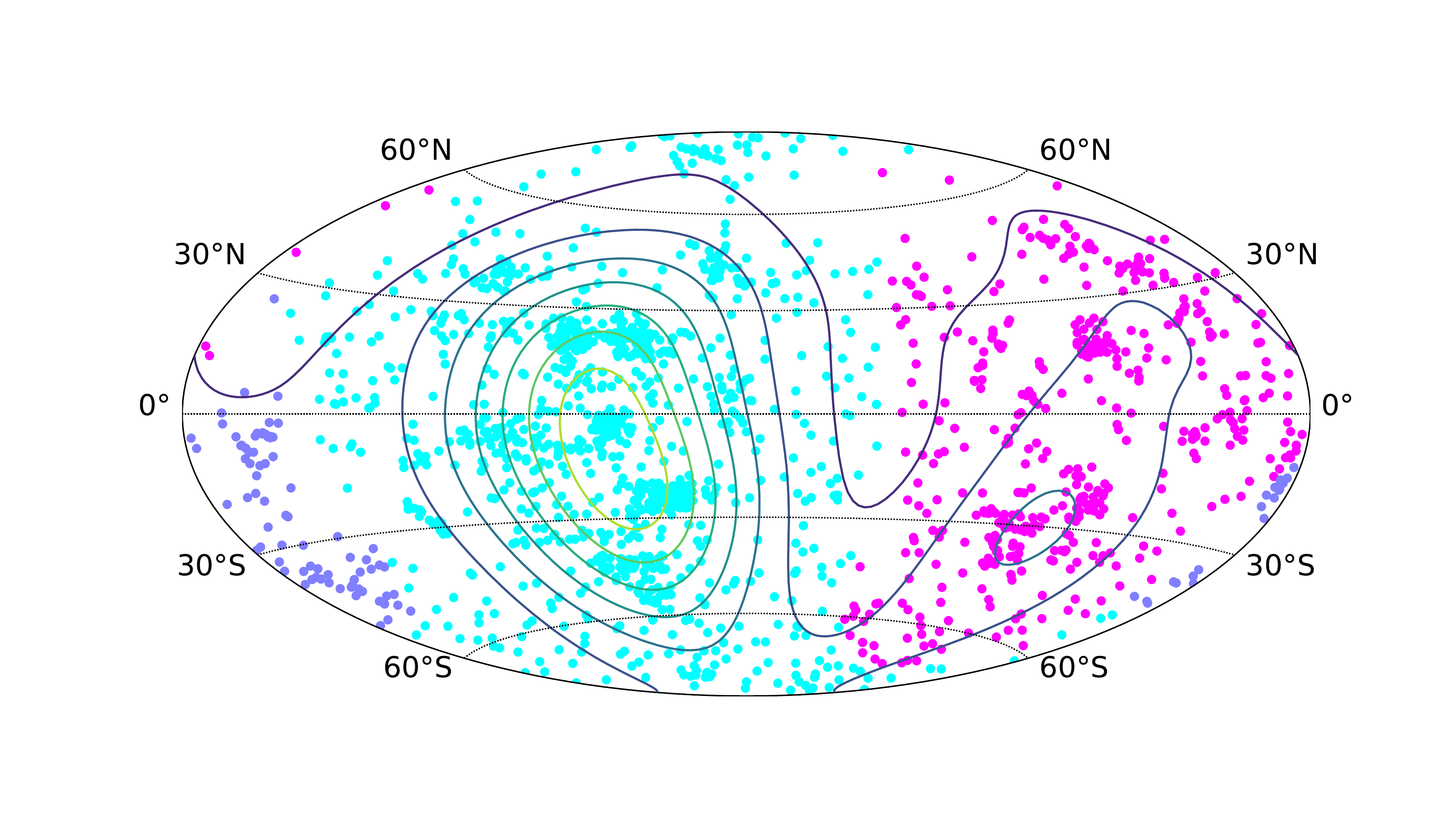}
				\caption{3 clusters (Step 118 (converged), $h=0.35$)}
			\end{subfigure}
			
			\begin{subfigure}[t]{.49\textwidth}
				\centering
				\includegraphics[width=1\linewidth,height=0.57\linewidth]{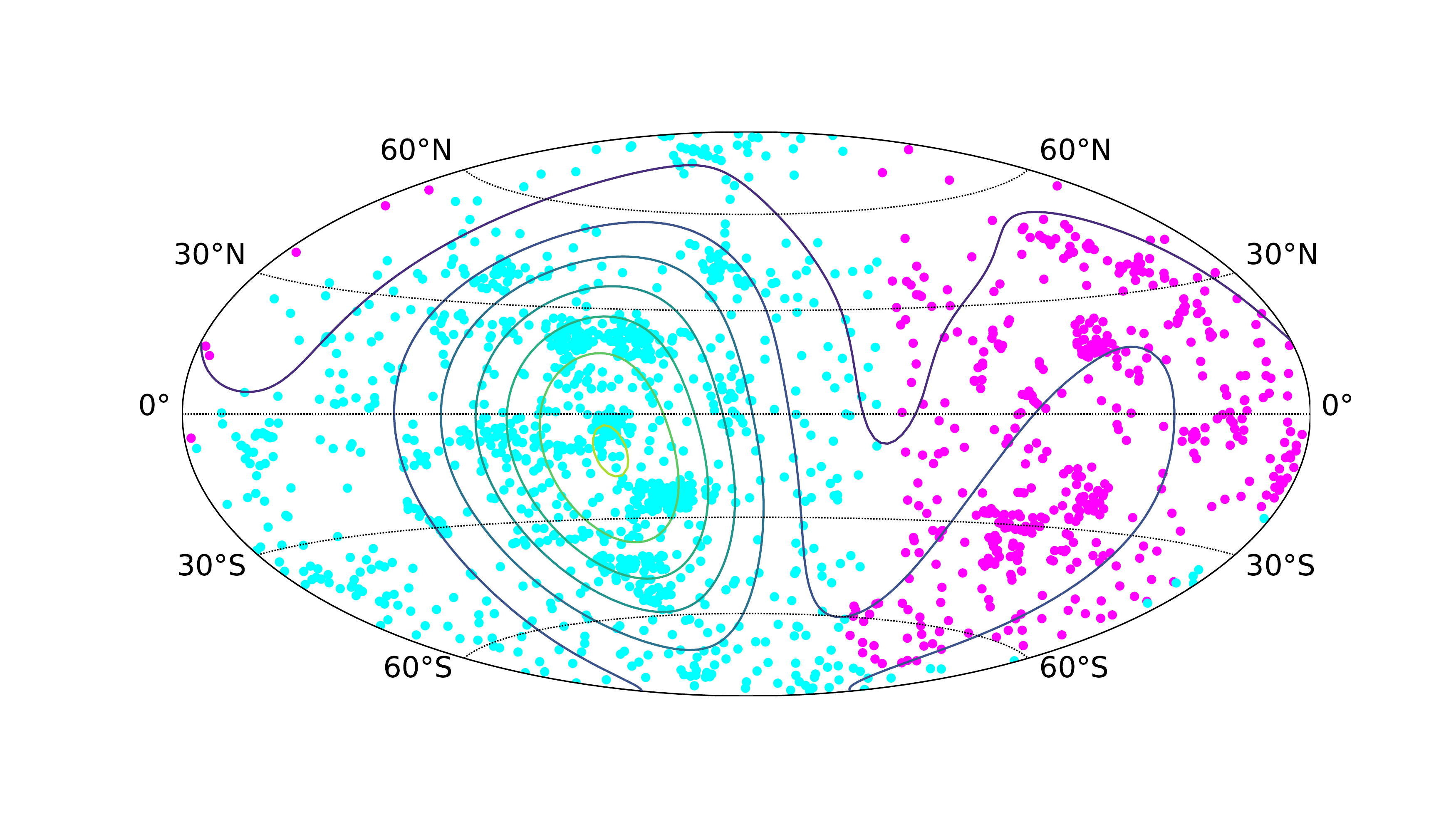}
				\caption{2 clusters (Step 155, 68, 57, and 61 (both converged) when $h=0.4, 0.45, 0.5, 0.55$). The plot here is for $h=0.4$.}
			\end{subfigure}
			\hfil
			\begin{subfigure}[t]{.49\textwidth}
				\centering
				\includegraphics[width=1\linewidth,height=0.57\linewidth]{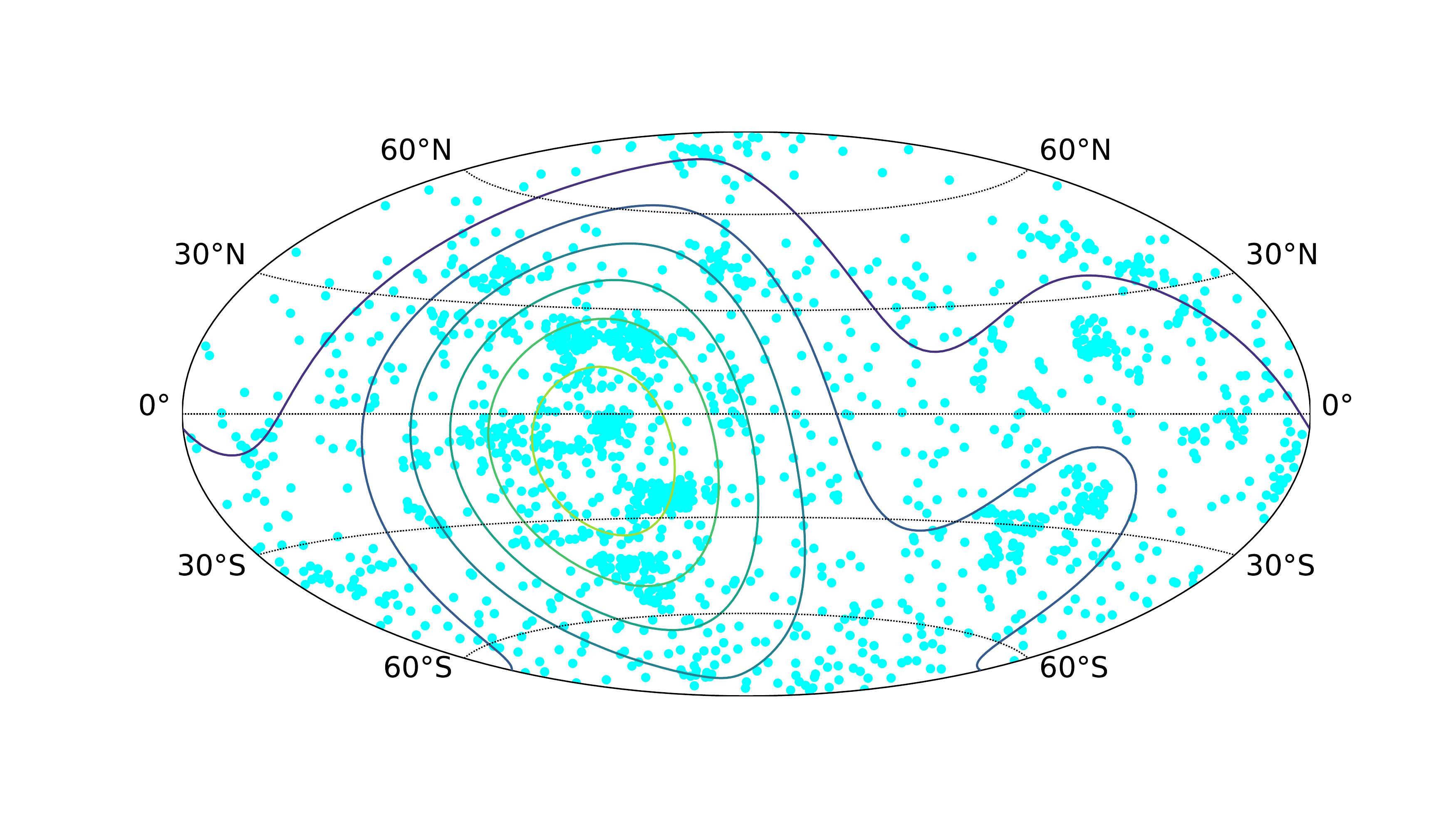}
				\caption{1 cluster (Step 69 (converged), $h=0.6$)}
			\end{subfigure}
			
			\caption{Directional mean shift algorithm with various bandwidth parameters performed on Martian crater data. The figures are visualized in their Hammer projections.}
			\label{fig:Mode_clu_Mars}
		\end{figure}
		
		\subsection{Preliminary Experiments on Blurring Directional Mean Shift Algorithm with the von-Mises Kernel}
		\label{Appendix:BMS}

		We randomly generate 1000 data points from von Mises-Fisher distributions with one ([1,0,0]), two ([0,1,0], [0,0,1]), and three ([0,1,0], [1,0,0], [0,-1,0]) true local modes via rejection sampling, respectively. Both the original directional mean shift algorithm with the von Mises-Fisher kernel and its blurring version are implemented on these simulated data sets. The stopping criterion for the blurring directional mean shift algorithm is adopted from the Gaussian Blurring mean shift algorithm with Euclidean data \citep{Fast_GBMS2006}, that is,
		$$\left(\left|H\left(\bm{e}^{(s+1)}\right) - H\left(\bm{e}^{(s)} \right) \right| \leq 10^{-8} \right) \quad \text{ OR } \quad \left(\frac{1}{n}\sum_{i=1}^n e_i^{(s+1)} \leq \epsilon \right),$$
		where $\bm{e}^{(s)} = (e_1^{(s)},...,e_n^{(s)})$, $e_i^{(s)} = \norm{\hat{\bm{y}}_i^{(s)} - \hat{\bm{y}}_i^{(s+1)}}_2$, $\left\{\hat{\bm{y}}_i^{(0)} \right\}_{i=1}^n = \{\bm{X}_i\}_{i=1}^n$ is the original data set, $H(\bm{e}) = -\sum_{i=1}^B f_i \log f_i$ is the entropy, $f_i$ is the relative frequency of bin $i$ (so $\sum_{i=1}^B f_i=1$), and the bins span the interval $[0, \max(\bm{e})]$. The number of bins $B$ is chosen as $B=0.9n$, where $n$ is the number of data points in the original data set. Among all the experiments, the bandwidth parameter is selected using the rule of thumb \eqref{bw_ROT}. The tolerance level is set to $\epsilon=10^{-7}$. The repeated experimental results are recorded in Table~\ref{table:BMS}, where the column ``Avg. Err. of Est. Modes'' presents the average distances between all the estimated local modes (identified by the original directional mean shift algorithm) and the nearest local mode estimates yielded by the blurring directional mean shift algorithm. As shown by Table 1, the blurring procedure is able to substantially reduce the total number of iterations for the directional mean shift algorithm with the von-Mises kernel. However, besides those valid estimated local modes identified by the original directional mean shift algorithm, the blurring version also recovers some spurious local mode estimates. The number of spurious local mode estimates from the blurring directional mean shift algorithm tends to decrease as the number of true local modes increases, given that the true local modes are well-separated. It illuminates a promising avenue to further accelerate the directional mean shift algorithm if a more delicate stopping criterion is designed.
		
		\begin{table}
			\centering
			\begin{tabular}{@{}|cccc|@{}}
				\hline\hline
				\bf Method (Scenario) & \bf \# Est. Modes & \bf \# Steps & \bf Avg. Err. of Est. Modes \\
				\hline\hline
				DMS (One mode)   & 4.25 (1.670)    & 86.30 (48.774) &  -- \\
				BDMS (One mode)  & 11.95 (2.156) &  17.10 (2.700) & 0.074 (0.0492) \\
				\hline
				DMS (Two modes)   & 2.40 (0.490) & 30.55 (5.757) &  -- \\
				BDMS (Two modes)  & 3.60 (1.114) & 9.90 (1.868) & 0.045 (0.0240)  \\
				\hline
				DMS (Three modes)   & 3.00 (0.000)  & 28.65 (5.790) &  -- \\
				BDMS (Three modes)  & 3.10 (0.300) &  7.75 (0.698) & 0.034 (0.0090)  \\
				\hline\hline
			\end{tabular}
			\caption{Comparisons between the Directional Mean Shift (DMS) and Blurring Directional Mean Shift (BDMS) algorithms. The means and standard errors (within round brackets) are calculated with 20 repeated experiments.}
			\label{table:BMS}
		\end{table}

	\section{Review of Geometry of Riemannian Manifolds}	
	\label{sec::GH}
		
	$\bullet$ \textbf{(Riemannian) Manifold}. A $m$-dimensional \emph{manifold} $M \subset \mathbb{R}^D$ with $D>m$ is a second countable Hausdorff space where each point has a neighborhood that is homeomorphic to the $m$-dimensional Euclidean space. For each point $p\in M$, it is possible to define a \emph{coordinate chart} $(U,\varphi)$ centered at $p$ as a homeomorphism $\varphi: U \to \varphi(U) \subset \mathbb{R}^m$, where $U$ is an open subset of $M$ containing $p$. Somewhat informally, if two coordinate charts $(U,\varphi)$ and $(V,\psi)$ are smoothly compatible, that is, either $U\cap V=\emptyset$ or the transition map $\psi\circ \varphi: \varphi(U\cap V) \to \psi(U \cap V)$ is a diffeomorphism, then $M$ is a smooth manifold. See Chapter 1 in \cite{Lee2012} for more formal definitions and discussions on smooth manifolds. A Riemannian manifold $(M, \mathfrak{g})$ is a real smooth manifold equipped with an inner product $\mathfrak{g}_p$ on the \emph{tangent space} $T_p(M)$ of every point $p\in M$, such that if $u,v$ are two vector fields on $M$ then $p\mapsto \langle u,v \rangle_p := \mathfrak{g}_p(u,v)$ is a smooth function. 
	
	\noindent $\bullet$ \textbf{Curvature}. The curvature of a Riemannian manifold is characterized by its Riemannian metric tensor at each point. \emph{Sectional curvature} is the Gaussian curvature of a two dimensional submanifold formed as the image of a two-dimensional subspace of a tangent space after exponential mapping. See Section 3-2 in \cite{doCarmo} for detailed discussions on the Gaussian curvature. It is known that a two-dimensional submanifold with positive, zero, or negative sectional curvature is locally isometric to a two-dimensional sphere, a Euclidean plane, or a hyperbolic plane with the same Gaussian curvature \citep{Geo_Convex_Op2016}.
	
	\noindent $\bullet$ \textbf{Differential}. Given a smooth $m$-dimensional manifold $M$, the \emph{differential} (or \emph{total gradient}) of a smooth function $f:U \subset M \to \mathbb{R}$ at $p\in U$ is defined as a linear map 
	$$df_p: T_p(M) \to T_{f(p)}(\mathbb{R}) \simeq \mathbb{R},$$ 
	where $U$ is an open subset of $M$,  $T_p(M)$ is the tangent space of $M$ at $p$, and $V_1 \simeq V_2$ means that these two vector spaces are isomorphic. Other commonly used notations for the differential are: $df_p(v) = v(f)(p) = v_p(f) = (v\cdot f)(p)$ for $v\in T_p(M)$. See Section 2-4 in \cite{doCarmo} and Section 3.1 in \cite{Morse_Homology2004} for more details.\\
		
	With an inner product structure on tangent spaces and the definition of differentials, one can define the gradient of a smooth function $f$ on $M$.
	
	\begin{definition}[Riemannian Gradient]
		\label{gradient}
		The (Riemannian) \emph{gradient} of a smooth function $f:M \to \mathbb{R}$ is a differentiable map $\grad f: M \to \mathcal{T} M$ which assigns to each point $p\in M$ a vector $\grad f(p) \in T_p(M) \subset \mathbb{R}^D$ such that
		\begin{equation}
		\label{grad_manifold}
		\langle \grad f(p), v \rangle_p = df_p(v) \quad \text{ for all } v\in T_p(M).
		\end{equation}
		Here $\mathcal{T}M$ is the tangent bundle, that is, the disjoint union of the tangent spaces at all points of $M$.
	\end{definition}

	In terms of the following definition, the Hessian matrices on a manifold are only well-defined at critical points, that is, those points whose differentials vanish, though an extension of the definition to non-critical points is possible.
	
	\begin{definition}[Riemannian Hessian]
		\label{Hessian2}
		The \emph{Hessian} $\mathcal{H}_p f$ of a smooth function $f: M\to \mathbb{R}$ at a critical point $p$ is a symmetric bilinear map
		$$\mathcal{H}_p f: T_p(M) \times T_p(M) \to \mathbb{R}$$
		defined as follows. For any tangent vectors $v,w \in T_p(M)$, we choose extensions $\tilde{v}$ and $\tilde{w}$ to vector fields on an open neighborhood of $p$ and set 
		$$\mathcal{H}_pf(v,w) = \left(\tilde{v} \cdot \left(\tilde{w} \cdot f \right)\right)(p) = v_p \left(\tilde{w} \cdot f \right).$$
		The expression above is \emph{independent} of the extensions $\tilde{v}$ of $v$ and $\tilde{w}$ of $w$, since
		$$\tilde{v} \cdot \left(\tilde{w} \cdot f \right)(p) - \tilde{w} \cdot \left(\tilde{v} \cdot f \right)(p) = [\tilde{v}, \tilde{w}]_p(f)=0$$
		at a critical point $p$, where $[\tilde{v}, \tilde{w}]_p$ is the commutator (or Lie bracket) of $\tilde{v}$ and $\tilde{w}$ at the point $p$. Thus, $\mathcal{H}_p f$ is a well-defined symmetric bilinear form on $T_p(M)$ at the critical point $p$.
	\end{definition}
	
	\begin{remark}
		Note that in general, $\tilde{v} \cdot \left(\tilde{w} \cdot f \right)(p)$ and $\tilde{w} \cdot \left(\tilde{v} \cdot f \right)(p)$ might be of different values when $p$ is not a critical point. This is essentially the definition of the vector $[\tilde{v},\tilde{w}]_p = \tilde{v} \cdot \left(\tilde{w} \cdot f \right)(p) - \tilde{w} \cdot \left(\tilde{v} \cdot f \right)(p)$.
	\end{remark}
	
	Given a coordinate chart $(U,\varphi)$ around $p\in M$, $\left\{\frac{\partial}{\partial x_1}\big|_p,..., \frac{\partial}{\partial x_m}\big|_p \right\}$ forms a basis for $T_p(M)$, and the matrix of $\mathcal{H}_pf$ with respect to this basis can be expressed by the $m\times m$ matrix of second partial derivatives:
	$$Q_pf := \left(\frac{\partial^2(f\circ \phi^{-1})}{\partial x_i \partial x_j} \phi(p) \right).$$
	
	It is possible to extend the definition of Hessian matrices of a smooth function $f:M \to \mathbb{R}$ to non-critical points based on the current definition \citep{Morse1963}. Given a local coordinate chart $(U,\varphi)$ near a non-critical point $q$ and $v = \sum\limits_{i=1}^m a_i \frac{\partial}{\partial x_i}\big|_q$, $w = \sum\limits_{j=1}^m b_j \frac{\partial}{\partial x_j}\big|_q$, we take $\tilde{w} = \sum\limits_{j=1}^m b_j \frac{\partial}{\partial x_j}\big|_q$, where $b_j$ now denotes a constant function. Then 
	$$\mathcal{H}_q f(v,w) = v(\tilde{w}(f))(q) = v\left(\sum_{j=1}^m b_j \frac{\partial f}{\partial x_j}\Big|_q \right) = \sum_{i=1}^m \sum_{j=1}^m a_i b_j \frac{\partial^2 f}{\partial x_i \partial x_j}(q);$$
	so the matrix $\left(\frac{\partial^2 f}{\partial x_i \partial x_j}(q) \right)_{i,j=1}^m$ represents the bilinear function $\mathcal{H}_q f$ with respect to the basis $\frac{\partial}{\partial x_1}\big|_q,..., \frac{\partial}{\partial x_m}\big|_q$. Another feasible avenue to define the Hessian on a Riemannian manifold starts from the notion of Riemannian gradient (Definition~\ref{gradient}) and covariant derivative (or affine connection). See \cite{Extrinsic_Look_Riem_Manifold} for more details.
	
	\begin{definition}[Non-degenerate Critical Points and Morse Functions (Definition 3.1 in \citealt{Morse_Homology2004})]
		\label{morse_fun}
		A critical point $p\in M$ of a differentiable function $f: M \to \mathbb{R}$ is \emph{non-degenerate} if the Hessian $\mathcal{H}_p f$ is non-degenerate. In other words, the determinant of $Q_pf$ is non-zero. Otherwise, $p$ is a \emph{degenerate} critical point. A differentiable function on $M$ is a \emph{Morse function} if all its critical points are non-degenerate.
	\end{definition}
	
	A standard result for a Morse function on a finite dimensional compact smooth manifold $M$, including $\Omega_q$, is that it has a finite number of critical points (Corollary 3.3 in \citealt{Morse_Homology2004}). Another remarkable fact in Morse theory is that integral curves on $M$ (equivalently, gradient ascent paths with infinitely small step sizes) never intersect except at critical points, so they partition the space \citep{Morse1925,Morse1930,Morse_Homology2004}. It thus serves as the backbone of mode clustering \citep{Mode_clu2016}. We have presented some mode clustering results on $\Omega_q$ using both synthetic and real-world data in Section~\ref{Sec:Experiments}.

	\subsection{Function Classes on Riemannian Manifolds}
	
	The key definitions in this subsection are modified from Section 2 in \cite{Geo_Convex_Op2016}. 
	
	\begin{definition}[Geodesic Concavity]
		A function $f: M \to \mathbb{R}$ is said to be \emph{geodesically concave} (or g-concave) if for any $p,q \in M$, a geodesic $\gamma$ such that $\gamma(0) =p$ and $\gamma(1)=q$, and $t\in [0,1]$, it holds that
		$$f(\gamma(t)) \geq (1-t) f(p) + tf(q).$$
		Equivalently, it can be shown that there exists a tangent vector $g_p \in T_p(M)$ such that 
		\begin{equation}
		\label{Geo_concave}
		f(q) \leq f(p) + \langle g_p, \Exp_p^{-1}(q) \rangle_p,
		\end{equation}
		where $g_p$ is called a \emph{subgradient} of $f$ at $p$, or the \emph{gradient} if $f$ is differentiable, and $\langle\cdot, \cdot \rangle_p$ denotes the inner product in the tangent space of $p$ induced by the Riemannian metric. 
	\end{definition}
	
	See, for instance, Section 1.2 in \cite{SB2015} for the definition of subgradients of convex functions.
	
	\begin{definition}[Geodesically Strong Concavity]
		A function $f: M \to \mathbb{R}$ is said to be geodesically $\mu$-strongly concave if for any $p,q \in M$,
		\begin{equation}
		\label{strong_Geo_concave}
		f(q) \leq f(p) + \langle g_p, \Exp_p^{-1}(q) \rangle_p - \frac{\mu}{2} \cdot d^2(p,q),
		\end{equation}
		where $d(p,q)= \sqrt{\langle \Exp_q^{-1}(q), \Exp_p^{-1}(q) \rangle_p}=\norm{\Exp_p^{-1}(q)}$.
	\end{definition}
	
	\begin{definition}[Lipschitzness]
		A function $f:M \to \mathbb{R}$ is said to be geodesically $L_f$-Lipschitz if for any $p,q\in M$,
		\begin{equation}
		\label{Lipschitz}
		|f(p) -f(q)| \leq L_f\cdot d(p,q).
		\end{equation}
	\end{definition}
	
	\begin{definition}[$\beta$-Smoothness]
		A differentiable function $f: M \to \mathbb{R}$ is said to be geodesically $\beta$-smooth if its gradient is $\beta$-Lipschitz. That is, for any $p,q \in M$,
		\begin{equation}
		\label{L_smooth}
		||g_p - \Gamma_q^p(g_q)|| \leq \beta \cdot d(x,y),
		\end{equation}
		where $\Gamma_q^p$ is the parallel transport from $q$ to $p$ and $\beta >0$ is a constant.
	\end{definition}

		\section{An alternative derivation of Algorithm \ref{Algo:MS}}
		\label{Appendix:Tang_MS_DR}
		
		From the expression of the Riemannian/tangent gradient estimator \eqref{tang_grad}, we obtain that
		\begin{align*}
		\grad \hat{f}_h(\bm{x}) &\equiv \Tang\left(\nabla \hat{f}_h(\bm{x}) \right) \\
		&= \frac{c_{h,q}(L)}{nh^2} \sum_{i=1}^n \left(\bm{x}^T \bm{X}_i \cdot \bm{x} - \bm{X}_i \right) \cdot L'\left(\frac{1-\bm{x}^T\bm{X}_i}{h^2} \right)\\
		&= \left[-\frac{c_{h,q}(L)}{nh^2} \sum_{i=1}^n \bm{x}^T \bm{X}_i L'\left(\frac{1-\bm{x}^T \bm{X}_i}{h^2} \right) \right] \cdot \left[\frac{\sum_{i=1}^n \bm{X}_i L'\left(\frac{1-\bm{x}^T \bm{X}_i}{h^2} \right)}{\sum_{i=1}^n \bm{x}^T \bm{X}_i L'\left(\frac{1-\bm{x}^T \bm{X}_i}{h^2} \right)} -\bm{x}\right]\\
		&= \left[\bm{x}^T \nabla \hat{f}_h(\bm{x}) \right] \cdot \left[\frac{\nabla \hat{f}_h(\bm{x})}{\bm{x}^T \nabla \hat{f}_h(\bm{x})} -\bm{x} \right],
		\end{align*}
		where we need to assume that $\bm{x}^T \nabla \hat{f}_h(\bm{x}) \neq 0$. (This is true in small neighborhoods of estimated local modes under condition (C2), which in turn holds with high probability as the sample size increases and bandwidth parameter decreases accordingly. This is guaranteed by Lemma~\ref{Rad_grad}.) By equating the alternative directional mean shift vector $\Xi_h'(\bm{x}) = \frac{\nabla \hat{f}_h(\bm{x})}{\bm{x}^T \nabla \hat{f}_h(\bm{x})} -\bm{x}$ to 0, we obtain that
		\begin{equation}
		\label{fix_point2}
		\hat{\bm{y}}_{s+1}' = \frac{\nabla \hat{f}_h(\hat{\bm{y}}_s)}{\hat{\bm{y}}_s^T \nabla \hat{f}_h(\hat{\bm{y}}_s)} \quad \text{ and } \quad \hat{\bm{y}}_{s+1} = \frac{\hat{\bm{y}}_{s+1}'}{\norm{\hat{\bm{y}}_{s+1}'}_2} = \text{sgn} \left(\hat{\bm{y}}_s^T \nabla \hat{f}_h(\hat{\bm{y}}_s) \right) \cdot \frac{\nabla \hat{f}_h(\hat{\bm{y}}_s)}{\norm{\nabla \hat{f}_h(\hat{\bm{y}}_s)}_2},
		\end{equation}
		where $\text{sgn}(x) = \mathbbm{1}_{\{x\geq 0\}} - \mathbbm{1}_{\{x\leq 0\}}$. Now, we discuss two mutually exclusive cases.
		\begin{itemize}
			\item (Case 1) If $\hat{\bm{y}}_s^T \nabla \hat{f}_h(\hat{\bm{y}}_s) >0$, then the directional mean shift vector $\Xi_h'(\hat{\bm{y}}_s) = \frac{\nabla \hat{f}_h(\hat{\bm{y}}_s)}{\hat{\bm{y}}_s^T \nabla \hat{f}_h(\hat{\bm{y}}_s)} -\hat{\bm{y}}_s$ is parallel to the Riemannian gradient at $\hat{\bm{y}}_s$ after being projected to the tangent space and points toward the direction of increasing the estimated density. Then, the preceding fixed-point iteration \eqref{fix_point2} is correct and can be simplified as
			$$\hat{\bm{y}}_{s+1} =  \frac{\nabla \hat{f}_h(\hat{\bm{y}}_s)}{\norm{\nabla \hat{f}_h(\hat{\bm{y}}_s)}_2} = -\frac{\sum_{i=1}^n \bm{X}_i L'\left(\frac{1-\hat{\bm{y}}_s \bm{X}_i}{h^2} \right)}{\sum_{i=1}^n L'\left(\frac{1-\hat{\bm{y}}_s \bm{X}_i}{h^2} \right)}.$$
			\item (Case 2) If $\hat{\bm{y}}_s^T \nabla \hat{f}_h(\hat{\bm{y}}_s) <0$, then the mean shift vector $\Xi_h'(\hat{\bm{y}}_s)$ is still parallel to the Riemannian gradient at $\hat{\bm{y}}_s$ after being projected to the tangent space but points toward the direction of decreasing the estimated density. Thus, the preceding fixed-point equation \eqref{fix_point2} goes as
			$$\hat{\bm{y}}_{s+1} = -\frac{\nabla \hat{f}_h(\hat{\bm{y}}_s)}{\norm{\nabla \hat{f}_h(\hat{\bm{y}}_s)}_2}$$
			but is \emph{not correct} in this case. We need to flip the sign of the fixed-point function and obtain that
			$$\hat{\bm{y}}_{s+1} =  \frac{\nabla \hat{f}_h(\hat{\bm{y}}_s)}{\norm{\nabla \hat{f}_h(\hat{\bm{y}}_s)}_2} = -\frac{\sum_{i=1}^n \bm{X}_i L'\left(\frac{1-\hat{\bm{y}}_s \bm{X}_i}{h^2} \right)}{\sum_{i=1}^n L'\left(\frac{1-\hat{\bm{y}}_s \bm{X}_i}{h^2} \right)}.$$
		\end{itemize}
		In both cases, the final fixed-point iteration equations coincide with our previous result in Equation \eqref{fix_point_eq} or \eqref{fix_point_grad}.

    \section{Proofs of Lemmas and Theorems}
	\label{Appendix:proofs}
	
	This section includes the proofs of our lemmas and theorems. Other auxiliary results are also presented along the way.
	
	\subsection{Proof of Lemma~\ref{lem:Hessian}}
	\label{Appendix:lem1_pf}
	
	\begin{customlem}{1}
	Assume that kernel $L$ is twice continuously differentiable. Then,
	$$\mathcal{H}\tilde f_h(\bm x)=\mathcal{H}\hat f_h(\bm x)$$
	for any point $\bm{x}\in\Omega_q$.
	\end{customlem}
		
	% \noindent \textsc{Proof of Lemma \ref{lem:Hessian}}. 
	\begin{proof}
	Some straightforward matrix calculus shows that
	\begin{align*}
	\nabla \nabla \tilde{f}_h(\bm{x}) &= \frac{c_{h,q}(L)}{nh^2} \sum_{i=1}^n I_{q+1} \cdot L'\left(\frac{1-\bm{x}^T\bm{X}_i}{h^2} \right) \\
	&\quad + \frac{c_{h,q}(L)}{nh^4} \sum_{i=1}^n (\bm{x} -\bm{X}_i) (\bm{x} -\bm{X}_i)^T \cdot L''\left(\frac{1-\bm{x}^T\bm{X}_i}{h^2} \right)\\
	& :=\hat{\mathcal{A}}_{\bm{x}} f
	\end{align*}
	and 
	$$\nabla\nabla \hat{f}_h(\bm{x}) = \frac{c_{h,q}(L)}{nh^4} \sum_{i=1}^n \bm{X}_i\bm{X}_i^T L''\left(\frac{1-\bm{x}^T\bm{X}_i}{h^2} \right).$$
	According to the generalized form of the Hessian matrix on $\Omega_q$ in \eqref{Hess_Dir}, we derive the Hessian estimator of the directional density $f$ as
	
	\begin{align*}
	&\left(I_{q+1} -\bm{x}\bm{x}^T \right) \left[\nabla \nabla \tilde{f}_h(\bm{x}) - \bm{x}^T \nabla \tilde{f}_h(\bm{x})\right]\left(I_{q+1} -\bm{x}\bm{x}^T \right)\\
	&= \frac{c_{h,q}(L)}{nh^2} \sum_{i=1}^n \left(I_{q+1} -\bm{x}\bm{x}^T \right) L'\left(\frac{1-\bm{x}^T\bm{X}_i}{h^2} \right) \\
	& \quad + \frac{c_{h,q}(L)}{nh^4} \sum_{i=1}^n \left(I_{q+1} -\bm{x}\bm{x}^T \right) \bm{X}_i\bm{X}_i^T \left(I_{q+1} -\bm{x}\bm{x}^T \right) L''\left(\frac{1-\bm{x}^T\bm{X}_i}{h^2} \right) \\
	& \quad - \frac{c_{h,q}(L)}{nh^2} \sum_{i=1}^n (1-\bm{x}^T \bm{X}_i)\left(I_{q+1} -\bm{x}\bm{x}^T \right) L'\left(\frac{1-\bm{x}^T\bm{X}_i}{h^2} \right)\\
	& = \left(I_{q+1} -\bm{x}\bm{x}^T \right) \Bigg[\frac{c_{h,q}(L)}{nh^4} \sum_{i=1}^n \bm{X}_i\bm{X}_i^T L''\left(\frac{1-\bm{x}^T\bm{X}_i}{h^2} \right) \\
	&\quad + \frac{c_{h,q}(L)}{nh^2} \sum_{i=1}^n \bm{x}^T \bm{X}_i I_{q+1}\cdot L'\left(\frac{1-\bm{x}^T\bm{X}_i}{h^2} \right) \Bigg] \left(I_{q+1} -\bm{x}\bm{x}^T \right)\\
	&= \left(I_{q+1} -\bm{x}\bm{x}^T \right) \left[\nabla \nabla \hat{f}_h(\bm{x}) - \bm{x}^T \nabla \hat{f}_h(\bm{x})\right]\left(I_{q+1} -\bm{x}\bm{x}^T \right),
	\end{align*}
	where we recall that $\nabla\tilde{f}_h(\bm{x}) = \frac{c_{h,q}(L)}{nh^2} \sum\limits_{i=1}^n (\bm{x} - \bm{X}_i) \cdot L'\left(\frac{1-\bm{x}^T\bm{X}_i}{h^2} \right)$ from \eqref{Dir_KDE_grad1} in the first equality. Thus, we conclude that the directional Hessian estimator at a point $\bm{x}\in \Omega_q$ is defined to be
	\begin{align}
	\label{Hess_KDE}
	\begin{split}
	\mathcal{H} \hat{f}_h(\bm x) &= \left(I_{q+1} -\bm{x}\bm{x}^T \right) \Bigg[\frac{c_{h,q}(L)}{nh^4} \sum_{i=1}^n \bm{X}_i\bm{X}_i^T L''\left(\frac{1-\bm{x}^T\bm{X}_i}{h^2} \right) \\
	&\quad + \frac{c_{h,q}(L)}{nh^2} \sum_{i=1}^n \bm{x}^T \bm{X}_i I_{q+1} \cdot L'\left(\frac{1-\bm{x}^T\bm{X}_i}{h^2} \right) \Bigg] \left(I_{q+1} -\bm{x}\bm{x}^T \right)\\
	&= \mathcal{H} \tilde{f}_h(\bm{x}).
	\end{split}
	\end{align} 
	The result follows.
	\end{proof}
		
	\subsection{Proof of Theorem~\ref{pw_conv_tang}}
	\label{Appendix:Thm2_pf}
		
	Before we dive into the (pointwise and uniform) consistency of the Riemannian gradient and Hessian estimators, we reiterate some common notation and terminology in directional data. For a variable $\bm{x}\in \Omega_q$ and a fixed point $\bm{y}\in \Omega_q$, we denote $t=\bm{x}^T \bm{y}$ the inner product between $\bm{x}$ and $\bm{y}$ and write
	$$\bm{x} = t\bm{y} + (1-t^2)^{\frac{1}{2}} \bm{\xi},$$
	where $\bm{\xi} \in \Omega_q$ is a unit vector orthogonal to $\bm{y}$. Further, an area element on $\Omega_q$ can be written as
	$$\omega_q(d\bm{x}) = (1-t^2)^{\frac{q}{2}-1} dt\, \omega_{q-1}(d\bm{\xi}).$$
	We will make extensive use of Lemmas 1, 2 and 3 in \cite{Dir_Linear2013} as well as their small extensions. Thus, we synthesize them in the following lemma.
		
	\begin{lemma}[A Change of Variables and Orthogonality in $\Omega_q$]
		\label{integ_lemma}
		The following results are extended from Lemmas 2 and 3 in \cite{Dir_Linear2013}:
			\begin{enumerate}[label=(\alph*)]
				\item Under condition (D2) or the stronger condition (D2'), we have that
				$$\lim_{h\to 0} \lambda_{h,q}(L) =\lambda_q(L) = 2^{\frac{q}{2}-1} \bar{\omega}_{q-1} \int_0^{\infty} L(r) r^{\frac{q}{2}-1} dr, $$
				where $\lambda_{h,q}(L) = \bar{\omega}_{q-1} \int_0^{2h^{-2}} L(r) r^{\frac{q}{2}-1} (2-rh^2)^{\frac{q}{2}-1} dr$ and $\bar{\omega}_q\equiv \omega_q(\Omega_q)$ is the surface area of $\Omega_q$ for $q\geq 1$. In other words, $\lambda_{h,q}(L)=\lambda_q(L) + o(1)$ as $h\to 0$.
				\item Let $f$ be a function defined in $\Omega_q$, and let $\bm{y} \in \Omega_q$ be a fixed point. The integral $\int_{\Omega_q} f(\bm{x}) \omega_q(d\bm{x})$ can be expressed in one of the following equivalent integrals:
				\begin{align}
				\label{change_of_var}
				\begin{split}
				\int_{\Omega_q} f(\bm{x}) \, \omega_q(d\bm{x}) &= \int_{-1}^1 \int_{\Omega_{q-1}} f\left(t, (1-t^2)^{\frac{1}{2}} \bm{\xi} \right) (1-t^2)^{\frac{q}{2}-1} \omega_{q-1}(d\bm{\xi}) dt\\
				&= \int_{-1}^1 \int_{\Omega_{q-1}} f\left(t\bm{y} + (1-t^2)^{\frac{1}{2}} \bm{B_y}\bm{\xi} \right) (1-t^2)^{\frac{q}{2}-1} \omega_{q-1}(d\bm{\xi}) dt,
				\end{split}
				\end{align}
				where $\bm{B_y} = (\bm{b}_1,...,\bm{b}_q)_{(q+1)\times q}$ is the semi-orthonormal matrix ($\bm{B_y}^T \bm{B_y} =I_q$ and $\bm{B_y} \bm{B_y}^T = I_{q+1}$) resulting from the completion of $\bm{y}$ to the orthonormal basis $\{\bm{y},\bm{b}_1,...,\bm{b}_q \}$.
				\item For any variable $\bm{x}=(x_1,...,x_{q+1})^T \in \Omega_q$, it holds that
				\[
				\int_{\Omega_q} x_i \omega_q(d\bm{x}) =0, \quad \int_{\Omega_q} x_ix_j\, \omega_q(d\bm{x}) = 
				\begin{cases}
				0, & i\neq j,\\
				\frac{\bar{\omega}_q}{q+1}, & i=j,
				\end{cases}
				\quad
				\int_{\Omega_q} x_ix_jx_k\, \omega_q(d\bm{x}) =0,
				\]
				\[
				\int_{\Omega_q} x_ix_j x_k x_m\, \omega_q(d\bm{x}) =
				\begin{cases}
				\frac{3\bar{\omega}_q}{(q+1)(q+3)}, & i=j=k=m,\\
				\frac{\bar{\omega}_q}{(q+1)(q+3)}, & i=k,j=m, i\neq j,\\
				0 & \text{otherwise},
				\end{cases} 
				\quad
				\int_{\Omega_q} x_ix_jx_k x_m x_{\ell} \, \omega_q(d\bm{x})=0
				\]
				for all $i,j,k,m,\ell =1,...,q+1$, where $\bar{\omega}_q$ is the surface area of $\Omega_q$ for $q\geq 1$. In particular, using the notation in (b), we have that 
				$$\int_{\Omega_{q-1}} \bm{B_x} \bm{\xi} \, \omega_{q-1}(d\bm{\xi}) = 0.$$
			\end{enumerate}
		\end{lemma}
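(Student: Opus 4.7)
The plan is to treat the three parts separately, leveraging standard spherical integration together with the symmetries of $\omega_q$.

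For part (a), I would start from the defining integral
$$\lambda_{h,q}(L) = \bar{\omega}_{q-1} \int_0^{2h^{-2}} L(r)\, r^{\frac{q}{2}-1} (2-rh^2)^{\frac{q}{2}-1}\, dr,$$
and observe that pointwise $(2-rh^2)^{q/2-1} \to 2^{q/2-1}$ as $h \to 0$ while the upper limit diverges. When $q \geq 2$ the factor is bounded by $2^{q/2-1}$, so $2^{q/2-1} L(r)\, r^{q/2-1}$ dominates the integrand and is integrable by condition (D2); dominated convergence then gives $\lambda_{h,q}(L) \to \lambda_q(L)$. For $q = 1$ the exponent $q/2 - 1 = -1/2$ is negative and creates a singularity as $rh^2 \uparrow 2$, so I would split the integral at any fixed $R > 0$: on $[0,R]$ apply DCT as above, and on $[R, 2h^{-2}]$ substitute $s = 2 - rh^2$ to rewrite the tail as $h^{-1} \int_0^{2-Rh^2} L((2-s)/h^2)\,(2-s)^{-1/2} s^{-1/2}\, ds$, which vanishes as $h \to 0$ because the rapid decay of $L$ forces $L((2-s)/h^2)$ to be uniformly small off a shrinking neighborhood of $s = 2$.

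For part (b), the first equality follows from the standard polar-type parametrization of $\Omega_q$ in which any $\bm{x}\in\Omega_q$ is written as $(t, (1-t^2)^{1/2}\bm{\xi})$ with $t = x_1 \in [-1,1]$ and $\bm{\xi} \in \Omega_{q-1}$; the Riemannian volume form separates as $(1-t^2)^{q/2-1} dt\, \omega_{q-1}(d\bm{\xi})$, a computation recorded, e.g., in \cite{Sphe_Harm}. The second equality is obtained by transporting this parametrization along the isometry sending $\bm{e}_1$ to $\bm{y}$, realized concretely by the semi-orthonormal matrix $\bm{B}_{\bm{y}}$ completing $\bm{y}$ to an orthonormal basis; invariance of $\omega_q$ under this orthogonal transformation gives the claim.

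For part (c), I would exploit symmetries of $\omega_q$. The antipodal symmetry $\bm{x} \mapsto -\bm{x}$ kills every odd-degree monomial, handling the first-, third-, and fifth-moment claims. Permutation invariance combined with $\|\bm{x}\|^2 \equiv 1$ gives the second-moment identity: $(q+1) \int x_i^2\, \omega_q(d\bm{x}) = \int \|\bm{x}\|^2 \omega_q(d\bm{x}) = \bar{\omega}_q$, while reflection in a single coordinate kills $\int x_i x_j$ for $i \ne j$. For the fourth-order moments, the cleanest route is the closed-form Dirichlet identity
$$\int_{\Omega_q} x_1^{2a_1}\cdots x_{q+1}^{2a_{q+1}}\, \omega_q(d\bm{x}) = \frac{2 \prod_j \Gamma(a_j + 1/2)}{\Gamma(\sum_j a_j + (q+1)/2)},$$
specialized to $(a_1) = (2)$ and $(a_1,a_2) = (1,1)$, which after using $\Gamma(3/2)=\sqrt{\pi}/2$, $\Gamma(5/2)=3\sqrt{\pi}/4$, and the recursion $\Gamma((q+5)/2) = \frac{(q+3)(q+1)}{4} \Gamma((q+1)/2)$ yields $3\bar{\omega}_q/[(q+1)(q+3)]$ and $\bar{\omega}_q/[(q+1)(q+3)]$ respectively. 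Mixed fourth-order terms with any index appearing an odd number of times again vanish by antipodal symmetry, and the final claim $\int_{\Omega_{q-1}} \bm{B}_{\bm{x}} \bm{\xi}\, \omega_{q-1}(d\bm{\xi}) = 0$ follows from the first-moment vanishing and linearity of $\bm{\xi} \mapsto \bm{B}_{\bm{x}} \bm{\xi}$. The main obstacle I anticipate is the singular-exponent case of part (a) for $q = 1$; once that is handled, the rest is bookkeeping of gamma functions and symmetry arguments that can be copied from \cite{Dir_Linear2013}.
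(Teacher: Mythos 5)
Your treatment of parts (b) and (c) is sound, and for (c) you take a genuinely different route from the paper: where the paper kills the odd-degree moments by writing out explicit $q$-spherical coordinates and computes $\int_{\Omega_q} x_i^4$ via the divergence theorem with the field $(x_1^3,\dots,x_{q+1}^3)$, you use coordinate reflections and the closed-form Dirichlet integral. Both are correct; your reduction via $\Gamma((q+5)/2)=\frac{(q+3)(q+1)}{4}\Gamma((q+1)/2)$ does return $3\bar{\omega}_q/[(q+1)(q+3)]$ and $\bar{\omega}_q/[(q+1)(q+3)]$, and the Dirichlet formula avoids the paper's case bookkeeping. (Part (b) is cited directly from \cite{Dir_Linear2013} in the paper, so no comparison is needed there; one small wording slip in your (c): mixed terms with an odd exponent are killed by reflection in a single coordinate, not by the antipodal map, which preserves every even-degree monomial.)

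Your argument for part (a) with $q=1$, however, has a real gap. You split at a \emph{fixed} $R>0$ and assert that the tail $\int_R^{2h^{-2}}L(r)r^{-1/2}(2-rh^2)^{-1/2}\,dr$ vanishes. It does not: on $r\in[R,h^{-2}]$ the factor $(2-rh^2)^{-1/2}$ lies between $2^{-1/2}$ and $1$, so this portion alone is at least $2^{-1/2}\int_R^{h^{-2}}L(r)r^{-1/2}\,dr\to 2^{-1/2}\int_R^{\infty}L(r)r^{-1/2}\,dr>0$. Put differently, if your tail vanished, the two pieces would sum to $2^{-1/2}\bar{\omega}_0\int_0^R L(r)r^{-1/2}\,dr$, missing all mass beyond $R$. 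The intuition that $L((2-s)/h^2)$ is small ``off a shrinking neighborhood of $s=2$'' fails because that neighborhood, transported back to the $r$-variable, \emph{expands} to $[R,\infty)$ as $h\to 0$ --- which is precisely where the surviving mass lives.

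The correct split, and the one the paper uses, is at $r=h^{-2}$: on $[0,h^{-2})$ one has $2-rh^2\ge 1$, so $L(r)r^{-1/2}$ already dominates the integrand over the entire (expanding) range, and dominated convergence gives the \emph{full} limit $2^{-1/2}\int_0^{\infty}L(r)r^{-1/2}\,dr$ in one step; on $[h^{-2},2h^{-2})$, the integrability in (D2) across all $q\ge 1$ forces $L(r)\le r^{-1}$ there when $h$ is small, and the substitution $u=rh^2$ gives $\int_{h^{-2}}^{2h^{-2}}r^{-3/2}(2-rh^2)^{-1/2}\,dr=h\int_1^2u^{-3/2}(2-u)^{-1/2}\,du=O(h)\to 0$. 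Alternatively, you could salvage the fixed-$R$ split by showing the tail converges to $2^{-1/2}\bar{\omega}_0\int_R^{\infty}L(r)r^{-1/2}\,dr$ and then letting $R\to\infty$, but the claim that it vanishes is false as written.
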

		
		\begin{proof}
			As we will use the argument of (a) in our proof of Theorem~\ref{pw_conv_tang}, we reproduce the proof of Lemma 1 in \cite{Dir_Linear2013} here.\\
			\noindent (a) Consider the functions
			\begin{align*}
			\varpi_h(r) &= L(r) r^{\frac{q}{2}-1} (2-h^2r)^{\frac{q}{2}-1} \mathbbm{1}_{[0,2h^{-2})}(r),\\
			\varpi(r) &= \lim_{h\to 0} \varpi_h(r) =L(r) r^{\frac{q}{2}-1} 2^{\frac{q}{2}-1} \mathbbm{1}_{[0,\infty)}(r).
			\end{align*}
			Then, proving $\lim\limits_{h\to 0} \lambda_{h,q}(L) = \lambda_q(L)$ is equivalent to proving $\lim\limits_{h\to 0} \int_0^{\infty} \varpi_h(r) \,dr=\int_0^{\infty} \varpi(r) \,dr$.
			
			Consider first the case $q\geq 2$. As $\frac{q}{2}-1 \geq 0$, then $(2-h^2r)^{\frac{q}{2}-1} \leq 2^{\frac{q}{2}-1}$, $\forall h\geq 0, \forall r\in [0,2h^{-2})$. Then,
			$$|\varpi_h(r)| \leq L(r) r^{\frac{q}{2}-1} 2^{\frac{q}{2}-1} \mathbbm{1}_{[0,2h^{-2})}(r) \leq \varpi(r), \quad \forall r\in [0,\infty), \forall h >0.$$
			Since $\int_0^{\infty} \varpi(r) dr <\infty$ by condition (D2) on kernel $L$, by the Dominated Convergence Theorem, it follows that $\lim\limits_{h\to 0} \int_0^{\infty} \varpi_h(r) dr =\int_0^{\infty} \varpi(r) dr$.
			
			For the case $q=1$, $\varpi_h(r)=L(r) r^{-\frac{1}{2}} (2-h^2r)^{-\frac{1}{2}} \mathbbm{1}_{[0,2h^{-2})}(r)$. Consider now the following decomposition:
			\begin{align*}
			\int_0^{\infty} \varpi_h(r) dr &= \int_0^{\infty} L(r) r^{-\frac{1}{2}} (2-h^2r)^{-\frac{1}{2}} \mathbbm{1}_{[0,h^{-2})}(r) dr \\
			&\quad + \int_0^{\infty} L(r) r^{-\frac{1}{2}} (2-h^2r)^{-\frac{1}{2}} \mathbbm{1}_{[h^{-2},2h^{-2})}(r) dr.
			\end{align*}
			The limit of the first integral can be derived analogously with the Dominated Convergence Theorem. As $(2-h^2r)^{-\frac{1}{2}}$ is monotonically increasing with respect to $r \in [0,h^{-2})$, we know that $(2-h^2r)^{-\frac{1}{2}} \leq 1$, $\forall r\in [0,h^{-2})$, $\forall h>0$. Therefore,
			$$\left| L(r) r^{-\frac{1}{2}} (2-h^2r)^{-\frac{1}{2}} \mathbbm{1}_{[0,h^{-2})}(r) \right| \leq L(r) r^{-\frac{1}{2}} \mathbbm{1}_{[0,h^{-2})}(r) \leq \varpi(r), \quad \forall r\in [0,\infty), \forall h>0.$$
			Then, as $\lim\limits_{h\to 0} L(r) r^{-\frac{1}{2}} (2-h^2r)^{-\frac{1}{2}} \mathbbm{1}_{[0,h^{-2})}(r) =\varpi(r)$ and $\int_0^{\infty} \varpi(r) dr < \infty$ by condition (D2), the Dominated Convergence Theorem guarantees that 
			$$\lim\limits_{h\to 0} \int_0^{\infty} L(r) r^{-\frac{1}{2}} (2-h^2r)^{-\frac{1}{2}} \mathbbm{1}_{[0,h^{-2})}(r) dr = \int_0^{\infty} \varpi(r) dr.$$
			For the second integral, as a consequence of condition (D2), $L$ must be decrease faster than any power function in order for $0<\int_0^{\infty} L^k(r)r^{\frac{q}{2}-1} dr < \infty$ for all $q\geq 1$ and $k=1,2$. In particular, for some fixed $h_0 >0$, $L(r) \leq r^{-1}$, $\forall r\in [h^{-2}, 2h^{-2})$, $\forall h\in (0,h_0)$. Using this, it results in:
			$$\lim_{h\to 0} \int_{h^{-2}}^{2h^{-2}} L(r) r^{-\frac{1}{2}} (2-h^2r)^{-\frac{1}{2}} dr \leq \lim_{h\to 0}  \int_{h^{-2}}^{2h^{-2}} r^{-\frac{3}{2}} (2-h^2 r)^{-\frac{1}{2}} dr =\lim_{h\to 0} h=0.$$
			This completes the proof.\\
			
			The proofs of (b) and the first two integral results in (c) can be found in \cite{Dir_Linear2013} and thus omitted. We adopt some of the argument of Lemma 3 in \cite{Dir_Linear2013} to prove the last three integrals in (c).\\
			Recall that the $n$-dimensional spherical coordinates of $\bm{x}=(x_1,...,x_n)^T$ with norm $r:=\norm{\bm{x}}_2$ are given by
			\begin{equation}
			\label{sphe_coord}
			\begin{cases}
			x_1=r\cos \phi_1,\\
			x_j = r\cos \phi_j \prod\limits_{k=1}^{j-1} \sin \phi_k, \quad j=2,...,n-2,\\
			x_{n-1} = r\sin\theta \prod\limits_{k=1}^{n-2} \sin \phi_k,\\
			x_n = r\cos\theta \prod\limits_{k=1}^{n-2} \sin\phi_k,
			\end{cases}
			\quad J=r^{n-1} \prod\limits_{k=1}^{n-2} \sin^k \phi_{n-1-k},
			\end{equation}
			where $0\leq \phi_j \leq \pi$, $j=1,...,n-2$, $0\leq \theta \leq 2\pi$, and $0\leq r<\infty$. $J$ denotes the Jacobian of the transformation. Without loss of generality, we assume, by the $q$-spherical coordinates \eqref{sphe_coord}, that $x_i=\cos \phi_1$, $x_j=\cos\phi_2 \sin\phi_1$, and $x_k = \cos\phi_3 \sin\phi_2 \sin\phi_1$. Then,
			\begin{align*}
			\int_{\Omega_q} x_i^3 \, \omega_q(d\bm{x}) &= \int_0^{2\pi} \int_0^{\pi} \times \stackrel{(q-1)}{\cdots} \times \int_0^{\pi} \cos^3 \phi_1 \prod_{k=1}^{q-2} \sin^k \phi_{q-k} \sin^{q-1}\phi_1 \prod_{j=q-1}^1 d\phi_j d\theta\\
			&\hspace{-15mm}= \int_0^{2\pi} \int_0^{\pi} \times \stackrel{(q-2)}{\cdots} \times \int_0^{\pi} \prod_{k=1}^{q-2} \sin^k \phi_{q-k}  \prod_{j=q-1}^2 d\phi_j d\theta \times \int_0^{\pi} \cos^3\phi_1 \sin^{q-1} \phi_1 d\phi_1\\
			&\hspace{-15mm}= \int_0^{2\pi} \int_0^{\pi} \times \stackrel{(q-2)}{\cdots} \times \int_0^{\pi} \prod_{k=1}^{q-2} \sin^k \phi_{q-k}  \prod_{j=q-1}^2 d\phi_j d\theta \times \int_0^{\pi} (1-\sin^2\phi_1) \sin^{q-1} \phi_1 d(\sin\phi_1)\\
			&\hspace{-15mm}= \bar{\omega}_{q-1} \times 0= 0,
			\end{align*}
			\begin{align*}
			&\int_{\Omega_q} x_i^2x_j \omega_q(d\bm{x}) \\
			&= \int_0^{2\pi} \int_0^{\pi} \times \stackrel{(q-1)}{\cdots} \times \int_0^{\pi} \cos^2 \phi_1 \cos\phi_2 \sin\phi_1 \prod_{k=1}^{q-3} \sin^k \phi_{q-k} \sin^{q-2}\phi_2 \sin^{q-1} \phi_1 \prod_{j=q-1}^1 d\phi_j d\theta\\
			&= \int_0^{2\pi} \int_0^{\pi} \times \stackrel{(q-3)}{\cdots} \times \int_0^{\pi} \prod_{k=1}^{q-3} \sin^k \phi_{q-k}  \prod_{j=q-1}^3 d\phi_j d\theta \\
			&\quad \times \int_0^{\pi} \cos^2\phi_1 \sin^q \phi_1 d\phi_1 \int_0^{\pi} \cos\phi_2 \sin^{q-2} \phi_2 d\phi_2\\
			&=\bar{\omega}_{q-2} \times \int_0^{\pi} \cos^2\phi_1 \sin^q \phi_1 d\phi_1 \times 0 = 0,
			\end{align*}
			and 
			\begin{align*}
			\int_{\Omega_q} x_ix_j x_k \omega_q(d\bm{x}) &= \int_0^{2\pi} \int_0^{\pi} \times \stackrel{(q-1)}{\cdots} \times \int_0^{\pi} \cos\phi_1 \cos\phi_2 \sin\phi_1 \cos\phi_3 \sin\phi_2 \sin\phi_1 \\
			&\quad \times \prod_{k=1}^{q-4} \sin^k \phi_{q-k} \sin^{q-3}\phi_3 \sin^{q-2}\phi_2 \sin^{q-1} \phi_1 \prod_{j=q-1}^1 d\phi_j d\theta\\
			&= \int_0^{2\pi} \int_0^{\pi} \times \stackrel{(q-4)}{\cdots} \times \int_0^{\pi} \prod_{k=1}^{q-4} \sin^k \phi_{q-k}  \prod_{j=q-1}^4 d\phi_j d\theta \\
			&\quad \times \int_0^{\pi} \cos\phi_1 \sin^q \phi_1 d\phi_1 \int_0^{\pi} \cos\phi_2 \sin^{q-1} \phi_2 d\phi_2 \int_0^{\pi} \cos\phi_3 \sin^{q-2} \phi_3 d\phi_3\\
			&= \bar{\omega}_{q-3}\times 0 \times 0 \times 0 =0.
			\end{align*}
			The preceding argument teaches us that 
			$$\int_{\Omega_q} x_ix_jx_kx_m \, \omega_q(d\bm{x})= \int_{\Omega_q} x_ix_jx_kx_m x_{\ell} \, \omega_q(d\bm{x}) =0$$
			as long as one of the unique factors in the integrand has an odd multiplicity. (Indeed, any integration of a monomial with an odd degree on $\Omega_q$ will yield 0.) Thus, the only nonzero integrals in $\int_{\Omega_q} x_ix_jx_kx_m \, \omega_q(d\bm{x})$ and $\int_{\Omega_q} x_ix_jx_kx_m x_{\ell} \, \omega_q(d\bm{x})$ are
			$$\int_{\Omega_q} x_i^4 \, \omega_q(d\bm{x}) \quad \text{ and } \quad \int_{\Omega_q} x_i^2x_j^2 \, \omega_q(d\bm{x})$$
			with $i\neq j$. To compute the first integral, we define a vector field as 
			$$\bm{F}(\bm{x})=\left(F_1(\bm{x}),...,F_{q+1}(\bm{x}) \right)=(x_1^3,...,x_{q+1}^3)$$
			with $\bm{x}=(x_1,...,x_{q+1})\in \Omega_q$. By the divergence theorem (Theorem 10.51 in \citealt{Rudin1976}),
			\begin{align*}
			\int_{\Omega_q} x_i^4\, \omega_q(d\bm{x}) &= \frac{1}{q+1} \int_{\Omega_q} \left(\sum_{i=1}^{q+1} x_i^4 \right) \omega_q(d\bm{x})\\
			&= \frac{1}{q+1} \int_{\Omega_q} \langle \bm{F}, \bm{x} \rangle\, \omega_q(d\bm{x})\\
			&= \frac{1}{q+1} \int_{V_q} \mathtt{div}\,\bm{F} \, dV\\
			&= \frac{3}{q+1} \int_0^1 r^2 \cdot r^q\bar{\omega}_q dr = \frac{3\bar{\omega}_q}{(q+1)(q+3)},
			\end{align*}
			where $\langle \cdot,\cdot \rangle$ is the usual inner product in $\mathbb{R}^{q+1}$, $\mathtt{div}\,\bm{F} = \sum\limits_{i=1}^{q+1} \frac{\partial F_i}{\partial x_i}$, and $\int_{V_q} \cdots dV$ is integrating the solid $q$-dimensional sphere $V_q$ in $\mathbb{R}^{q+1}$. The second integral can be evaluated based on the preceding results as
			\begin{align*}
			\int_{\Omega_q} x_i^2x_j^2\, \omega_q(d\bm{x}) &= \frac{1}{q} \int_{\Omega_q} x_i^2\left(\sum_{j\neq i} x_j^2 \right) \, \omega_q(d\bm{x})\\
			&= \frac{1}{q} \int_{\Omega_q} (x_i^2-x_i^4) \, \omega_q(d\bm{x})\\
			&= \frac{1}{q}\left[\frac{\bar{\omega}_q}{q+1}- \frac{3\bar{\omega}_q}{(q+1)(q+3)} \right]= \frac{\bar{\omega}_q}{(q+1)(q+3)}. 
			\end{align*}
			As a specific application of our above results, we know that $\int_{\Omega_{q-1}} \bm{B_x} \bm{\xi} \, \omega_{q-1}(d\bm{\xi}) = 0$.
		\end{proof}
	
	\begin{remark}
		\label{integ_lemma_remark}
		\cite{Dir_Linear2013} also provided a key remark about how to generalize the arguments in (a) of Lemma~\ref{integ_lemma}. Under condition (D2'), one can apply the same techniques in (a) to prove the result with the functions
		\[
		\begin{cases}
		\varpi_{h,i,j,k}(r) = L^k(r) r^{\frac{q}{2}+i} (2-h^2r)^{\frac{q}{2}-j} \mathbbm{1}_{[0,2h^{-2})}(r),\\
		\varpi_{i,j,k}(r) = \lim\limits_{h\to 0} \varpi_{h,i,j,k}(r) = L^k(r) r^{\frac{q}{2}+i} 2^{\frac{q}{2}-j} \mathbbm{1}_{[0,\infty)}(r);
		\end{cases}
		\]
		\[
		\begin{cases}
		\varpi_{h,i,j,k}'(r) = [L'(r)]^k r^{\frac{q}{2}+i} (2-h^2r)^{\frac{q}{2}-j} \mathbbm{1}_{[0,2h^{-2})}(r),\\
		\varpi_{i,j,k}'(r) = \lim\limits_{h\to 0} \varpi_{h,i,j,k}'(r) = [L'(r)]^k r^{\frac{q}{2}+i} 2^{\frac{q}{2}-j} \mathbbm{1}_{[0,\infty)}(r);
		\end{cases}
		\]
		\[
		\begin{cases}
		\varpi_{h,i,j,k}''(r) = [L''(r)]^k r^{\frac{q}{2}+i} (2-h^2r)^{\frac{q}{2}-j} \mathbbm{1}_{[0,2h^{-2})}(r),\\
		\varpi_{i,j,k}''(r) = \lim\limits_{h\to 0} \varpi_{h,i,j,k}''(r) = [L''(r)]^k r^{\frac{q}{2}+i} 2^{\frac{q}{2}-j} \mathbbm{1}_{[0,\infty)}(r)
		\end{cases}
		\]
		with $i\geq -1$, $j\leq 1$, and $k=1,2$. For the case where $\frac{q}{2} -j \geq 0$, use the Dominated Convergence Theorem. For the other cases, subdivide the integral over $[0,2h^{-2})$ into the intervals $[0,h^{-2})$ and $[h^{-2},2h^{-2})$. Then apply the Dominated Convergence Theorem in the former and use a suitable power function to make the latter tend to 0 in the same way as described in the proof of (a) in Lemma~\ref{integ_lemma}.
	\end{remark}
		
	\begin{customthm}{2}
		Assume conditions (D1) and (D2').
		For any fixed $\bm{x}\in \Omega_q$, we have
		$$\grad \hat{f}_h(\bm{x})- \grad f(\bm{x}) = O(h^2) + O_P\left(\sqrt{\frac{1}{nh^{q+2}}} \right)$$
		as $h\to 0$ and $nh^{q+2} \to \infty$.\\
		Under the same condition, for any fixed $\bm{x}\in \Omega_q$, we have
		$$\mathcal{H} \hat{f}_h(\bm x) - \mathcal{H} f(\bm x) = O(h^2) + O_P\left(\sqrt{\frac{1}{nh^{q+4}}} \right)$$
		as $h\to 0$ and $nh^{q+4} \to \infty$.
	\end{customthm}

	\begin{proof}
		{\bf Part A: Pointwise convergence rate of the Riemannian gradient estimator $\grad \hat{f}_h(\bm{x})$}. Recall from Section~\ref{Sec:grad_Hess_Dir} that the tangent/Riemannian gradient estimator of a directional KDE is uniquely defined under a given kernel function $L$. Thus, we can establish the pointwise convergence rate under any total gradient (or differential) estimator, that is, $\grad \hat{f}_h(\bm{x})=\grad \tilde{f}_h(\bm{x}) \equiv \Tang\left(\nabla\tilde{f}_h(\bm{x}) \right)$. Here, we stick on the differential form \eqref{Dir_KDE_grad1}, $\nabla \tilde{f}_h(\bm{x})$. (One may also prove Theorem~\ref{pw_conv_tang} with $\nabla \hat{f}_h(\bm{x})$. The proof of Lemma~\ref{Rad_grad} provides a starting point for this direction.) 
		
		\noindent $\bullet$ {\bf Result 1}: The expectation of the Riemannian gradient estimator, $\mathbb{E}\left[\grad \hat{f}_h(\bm{x}) \right]$, has the following asymptotic behavior as $h\to 0$:
		\begin{align*}
		\mathbb{E}\left[\grad \tilde{f}_h(\bm{x})\right] &= \mathbb{E}\left[\Tang\left(\nabla \tilde{f}_h(\bm{x}) \right) \right] = (I_{q+1} -\bm{x}\bm{x}^T) \mathbb{E}\left[\nabla \tilde{f}_h(\bm{x}) \right]\\ 
		&\hspace{-20mm} = \left(I_{q+1} - \bm{x}\bm{x}^T \right) \nabla f(\bm{x}) + \frac{h^2}{2} \left(I_{q+1} - \bm{x}\bm{x}^T \right) \nabla f(\bm{x}) \cdot \frac{\int_0^{\infty} L(r) r^{\frac{q}{2}} dr}{\int_0^{\infty} L(r) r^{\frac{q}{2}-1} dr} \\
		&\hspace{-20mm} \quad + \frac{2h^2}{q} \sum_{i=1}^q \left(\bm{x}^T \nabla\nabla f(\bm{x}) \bm{b}_i \right) \bm{b}_i \cdot  \frac{\int_0^{\infty} L'(r) r^{\frac{q}{2}+1} dr}{\int_0^{\infty} L(r) r^{\frac{q}{2}-1} dr} +O(h^2) + o(h^2)\\
		&\hspace{-20mm} = \left(I_{q+1} - \bm{x}\bm{x}^T \right) \nabla f(\bm{x}) + O(h^2).
		\end{align*}
		
		\noindent \textit{Derivation of Result 1}. With the definition of $\bm{B_x}$ from Lemma~\ref{integ_lemma}, the expected value of $\nabla \tilde{f}_h(\bm{x})$ is
		\begin{align}
			\label{grad_expected1}
			\begin{split}
			\mathbb{E}\left[\nabla \tilde{f}_h(\bm{x}) \right]& = \frac{c_{h,q}(L)}{h^2} \int_{\Omega_q} (\bm{x} -\bm{y})\cdot  L'\left(\frac{1-\bm{x}^T \bm{y}}{h^2} \right) f(\bm{y}) \, \omega_q(d\bm{y})\\
			&= \frac{c_{h,q}(L)}{h^2} \int_{-1}^1 \int_{\Omega_{q-1}} \left(\bm{x} -t\bm{x} -\sqrt{1-t^2} \bm{B_x}\bm{\xi} \right) L'\left(\frac{1-t}{h^2} \right)\\
			&\quad \times f\left(t\bm{x} + \sqrt{1-t^2} \bm{B_x}\bm{\xi} \right) (1-t^2)^{\frac{q}{2}-1} \, \omega_{q-1}(d\bm{\xi}) dt\\
			&= c_{h,q}(L) h^{q-2} \int_0^{2h^{-2}} \int_{\Omega_{q-1}} \left(rh^2 \bm{x} -h\sqrt{r(2-h^2 r)} \bm{B_x}\bm{\xi} \right) L'(r)\\
			&\quad \times f(\bm{x} + \alpha_{\bm{x},\bm{\xi}}) \cdot r^{\frac{q}{2}-1} (2-h^2r)^{\frac{q}{2}-1} \omega_{q-1}(d\bm{\xi}) dr
			\end{split}
			\end{align}
			by (a) in Lemma \ref{integ_lemma} and a change of variable $r=\frac{1-t}{h^2}$, where $\alpha_{\bm{x},\bm{\xi}} = -rh^2 \bm{x} +h\sqrt{r(2-h^2 r)} \bm{B_x} \bm{\xi}$. By condition (D1), the Taylor's expansion of $f$ at $\bm{x}$ is
			\begin{align*}
			&f(\bm{x}+\alpha_{\bm{x},\bm{\xi}}) \\
			&= f(\bm{x}) + \alpha_{\bm{x},\bm{\xi}}^T \nabla f(\bm{x}) + \frac{1}{2} \alpha_{\bm{x},\bm{\xi}}^T \nabla\nabla f(\bm{x}) \alpha_{\bm{x},\bm{\xi}} + \frac{1}{6} \left(\sum_{i=1}^{q+1} (\alpha_{\bm{x},\bm{\xi}})_i \cdot \frac{\partial}{\partial x_i} \right)^3 f(\bm{x}) + o\left(||\alpha_{\bm{x},\bm{\xi}}||_2^3 \right)\\
			&\equiv \text{(I) + (II) + (III) + (IV)} + o(h^3),
			\end{align*}
			where $\norm{\alpha_{\bm{x},\bm{\xi}}}_2^2 = r^2h^4 + h^2r(2-h^2r) = 2rh^2$ by the orthogonality of $\bm{x}$ and columns of $\bm{B_x}$, and $(\alpha_{\bm{x},\bm{\xi}})_i$ stands for the $i^{th}$ entry of the vector $\alpha_{\bm{x},\bm{\xi}}$. Now we plug (I), (II), (III), (IV), and $o(h^3)$ back into (\ref{grad_expected1}) respectively to compute the dominating term of $\mathbb{E}\left[\nabla \tilde{f}_h(\bm{x}) \right]$.
			\begin{align*}
			&\text{Plug in (I)} \\
			&= c_{h,q}(L) h^{q-2} f(\bm{x}) \int_0^{2h^{-2}} \int_{\Omega_{q-1}} \left(rh^2 \bm{x} -h\sqrt{r(2-h^2 r)} \bm{B_x}\bm{\xi} \right) \\
			&\quad \quad \times L'(r)\, r^{\frac{q}{2}-1} (2-h^2r)^{\frac{q}{2}-1} \omega_{q-1}(d\bm{\xi}) dr\\
			&\stackrel{\text{(i)}}{=} \bar{\omega}_{q-1}\cdot \bm{x} f(\bm{x}) \int_0^{2h^{-2}} c_{h,q}(L) h^q L'(r) \cdot r^{\frac{q}{2}} (2-h^2r)^{\frac{q}{2}-1} dr +0 \\
			&\stackrel{\text{(ii)}}{=} \bar{\omega}_{q-1}\cdot \bm{x} f(\bm{x}) \cdot c_{h,q}(L) h^q \bigg\{L(r) r^{\frac{q}{2}} (2-h^2r)^{\frac{q}{2}-1} \Big|_{0}^{2h^{-2}} \\
			&\hspace{30mm} - \int_0^{2h^{-2}} L(r) \left[\frac{q}{2}\cdot r^{\frac{q}{2}-1}(2-h^2r)^{\frac{q}{2}-1} -h^2\left(\frac{q}{2}-1 \right) r^{\frac{q}{2}}(2-h^2r)^{\frac{q}{2}-2} \right] dr \bigg\} \\
			&= -\frac{q}{2} \cdot \bar{\omega}_{q-1} \cdot \bm{x} f(\bm{x}) \cdot c_{h,q}(L) h^q \int_0^{2h^{-2}} L(r) r^{\frac{q}{2}-1} (2-h^2r)^{\frac{q}{2}-1} dr \\
			&\quad + \left(\frac{q-2}{2}\right) \bar{\omega}_{q-1} \cdot \bm{x} f(\bm{x}) \cdot c_{h,q}(L) h^{q+2} \int_0^{2h^{-2}} L(r) r^{\frac{q}{2}} (2-h^2r)^{\frac{q}{2}-2} dr \\
			&\stackrel{\text{(iii)}}{=} -\frac{q}{2} \cdot \bm{x} f(\bm{x}) + \left(\frac{q-2}{2}\right)\bm{x} f(\bm{x}) h^2 \cdot \frac{\int_0^{2h^{-2}} L(r) r^{\frac{q}{2}} (2-h^2r)^{\frac{q}{2}-2} dr}{\int_0^{2h^{-2}} L(r) r^{\frac{q}{2}-1} (2-h^2r)^{\frac{q}{2}-1} dr}\\
			&\stackrel{\text{(iv)}}{=} -\frac{q}{2} \cdot \bm{x} f(\bm{x}) + \left(\frac{q-2}{4}\right)\bm{x} f(\bm{x}) h^2 \cdot \frac{\int_0^{\infty} L(r) r^{\frac{q}{2}} dr}{\int_0^{\infty} L(r) r^{\frac{q}{2}-1} dr} +o(h^2),
			\end{align*}
			as $h\to 0$, where we use (c) of Lemma~\ref{integ_lemma} with $\bm{B_x}\bm{\xi}=\sum_{i=1}^q \xi_i\bm{b}_i$ in (i), conduct integration by parts in (ii), plug in the expression \eqref{asym_norm_const} of $c_{h,q}(L)$ in (iii), and take $h\to 0$ with our argument in (a) of Lemma~\ref{integ_lemma} and Remark~\ref{integ_lemma_remark} to obtain (iv). The $o(h^2)$-term in (iv) takes into account those small error terms as $h\to 0$. Likewise,
			\begin{align*}
			&\text{Plug in (II)} \\
			&= c_{h,q}(L) h^{q-2} \int_0^{2h^{-2}} \int_{\Omega_{q-1}} \left(rh^2 \bm{x} -h\sqrt{r(2-h^2 r)} \bm{B_x}\bm{\xi} \right) \alpha_{\bm{x},\bm{\xi}}^T f(\bm{x}) \\
			&\quad \quad \times L'(r) \cdot r^{\frac{q}{2}-1} (2-h^2r)^{\frac{q}{2}-1} \omega_{q-1}(d\bm{\xi}) dr\\
			&= - c_{h,q}(L) h^{q+2} \int_0^{2h^{-2}} \int_{\Omega_{q-1}} \bm{x}\bm{x}^T \nabla f(\bm{x}) L'(r) r^{\frac{q}{2}+1} (2-h^2r)^{\frac{q}{2}-1} \omega_{q-1}(d\bm{\xi}) dr \\
			&\quad + c_{h,q}(L) h^{q+1} \int_0^{2h^{-2}} \int_{\Omega_{q-1}} \bm{x} \bm{\xi}^T \bm{B_x}^T \nabla f(\bm{x}) L'(r) r^{\frac{q+1}{2}} (2-h^2r)^{\frac{q-1}{2}} \omega_{q-1}(d\bm{\xi}) dr\\
			& \quad + c_{h,q}(L) h^{q+1} \int_0^{2h^{-2}} \int_{\Omega_{q-1}} \bm{B_x}\bm{\xi} \cdot \bm{x}^T \nabla f(\bm{x}) L'(r) r^{\frac{q+1}{2}} (2-h^2r)^{\frac{q-1}{2}} \omega_{q-1}(d\bm{\xi}) dr\\
			& \quad - c_{h,q}(L) h^q \int_0^{2h^{-2}} \int_{\Omega_{q-1}} \bm{B_x}\bm{\xi} \cdot \bm{\xi}^T \bm{B_x}^T \nabla f(\bm{x}) L'(r) r^{\frac{q}{2}} (2-h^2r)^{\frac{q}{2}} \omega_{q-1}(d\bm{\xi}) dr\\
			&\stackrel{\text{(i)}}{=} -c_{h,q}(L) h^{q+2}\cdot \bm{x}\bm{x}^T \nabla f(\bm{x}) \cdot \bar{\omega}_{q-1} \int_0^{2h^{-2}}   L'(r) r^{\frac{q}{2}+1} (2-h^2r)^{\frac{q}{2}-1} dr + 0 + 0\\
			&\quad - c_{h,q}(L) h^q \int_0^{2h^{-2}} \int_{\Omega_{q-1}} \left(\sum_{i=1}^q \xi_i \bm{b}_i \right) \left(\sum_{i=1}^q \xi_i \bm{b}_i^T \nabla f(\bm{x}) \right)  L'(r) r^{\frac{q}{2}} (2-h^2r)^{\frac{q}{2}} \omega_{q-1}(d\bm{\xi}) dr \\
			&\stackrel{\text{(ii)}}{=} -c_{h,q}(L) h^{q+2}\cdot \bm{x}\bm{x}^T \nabla f(\bm{x}) \cdot \bar{\omega}_{q-1} \Bigg\{r^{\frac{q}{2}+1} (2-h^2r)^{\frac{q}{2}-1} L(r) \Big|_0^{2h^{-2}} \\
			&\hspace{20mm} - \int_0^{2h^{-2}} L(r) \left[\left(\frac{q+2}{2}\right)r^{\frac{q}{2}} (2-h^2r)^{\frac{q}{2}-1} -h^2\left(\frac{q-2}{2} \right) r^{\frac{q}{2}+1} (2-h^2r)^{\frac{q}{2}-2} \right] dr \Bigg\}\\
			& \quad - \frac{\bar{\omega}_{q-1}}{q} \left(\sum_{i=1}^q \bm{b}_i \bm{b}_i^T \right) \nabla f(\bm{x}) \cdot  c_{h,q}(L) h^q \int_0^{2h^{-2}}  L'(r) r^{\frac{q}{2}} (2-h^2r)^{\frac{q}{2}} dr\\
			&\stackrel{\text{(iii)}}{=} c_{h,q}(L) h^{q+2}\cdot \bm{x}\bm{x}^T \nabla f(\bm{x}) \cdot \bar{\omega}_{q-1} \left(\frac{q+2}{2}\right)\int_0^{2h^{-2}} L(r) r^{\frac{q}{2}} (2-h^2r)^{\frac{q}{2}-1} dr \\
			&\quad - c_{h,q}(L) h^{q+4}\cdot \bm{x}\bm{x}^T \nabla f(\bm{x}) \cdot \bar{\omega}_{q-1} \left(\frac{q-2}{2}\right)\int_0^{2h^{-2}} r^{\frac{q}{2}+1} (2-h^2r)^{\frac{q}{2}-2} dr\\
			&\quad - \frac{\bar{\omega}_{q-1}}{q} (I_{q+1} - \bm{x}\bm{x}^T) \nabla f(\bm{x}) \cdot c_{h,q}(L) h^q \Bigg[L(r) r^{\frac{q}{2}} (2-h^2r)^{\frac{q}{2}} \Big|_0^{2h^{-2}} \\
			&\hspace{40mm} - \frac{q}{2} \int_0^{2h^{-2}}L(r) \left(r^{\frac{q}{2}-1}(2-h^2r)^{\frac{q}{2}} -h^2r^{\frac{q}{2}} (2-h^2 r)^{\frac{q}{2}-1} \right)dr \Bigg]\\
			&\stackrel{\text{(iv)}}{=} \left(\frac{q+2}{2} \right) h^2 \bm{x}\bm{x}^T \nabla f(\bm{x}) \cdot \frac{\int_0^{2h^{-2}} L(r) r^{\frac{q}{2}} (2-h^2r)^{\frac{q}{2}-1} dr}{\int_0^{2h^{-2}} L(r) r^{\frac{q}{2}-1} (2-h^2r)^{\frac{q}{2}-1} dr}\\
			&\quad - \left(\frac{q-2}{2} \right) h^4 \bm{x}\bm{x}^T \nabla f(\bm{x}) \cdot \frac{\int_0^{2h^{-2}} L(r) r^{\frac{q}{2}+1} (2-h^2r)^{\frac{q}{2}-2} dr}{\int_0^{2h^{-2}} L(r) r^{\frac{q}{2}-1} (2-h^2r)^{\frac{q}{2}-1} dr}\\
			&\quad + \frac{1}{2}\left(I_{q+1}-\bm{x}\bm{x}^T\right) \nabla f(\bm{x}) \Bigg[\frac{\int_0^{2h^{-2}} L(r) r^{\frac{q}{2}-1} (2-h^2r)^{\frac{q}{2}} dr}{\int_0^{2h^{-2}} L(r) r^{\frac{q}{2}-1} (2-h^2r)^{\frac{q}{2}-1} dr} \\
			&\hspace{50mm} - h^2 \cdot \frac{\int_0^{2h^{-2}} L(r) r^{\frac{q}{2}} (2-h^2r)^{\frac{q}{2}-1} dr}{\int_0^{2h^{-2}} L(r) r^{\frac{q}{2}-1} (2-h^2r)^{\frac{q}{2}-1} dr}\Bigg]\\
			&\stackrel{\text{(v)}}{=} \left(\frac{q+2}{2} \right) h^2 \cdot \bm{x}\bm{x}^T \nabla f(\bm{x}) \cdot \frac{\int_0^{\infty} L(r) r^{\frac{q}{2}} dr}{\int_0^{\infty} L(r) r^{\frac{q}{2}-1} dr} - \left(\frac{q-2}{4} \right) h^4 \cdot \bm{x}\bm{x}^T \nabla f(\bm{x}) \cdot \frac{\int_0^{\infty} L(r) r^{\frac{q}{2}+1} dr}{\int_0^{\infty} L(r) r^{\frac{q}{2}-1} dr}\\ 
			&\quad + \left(I_{q+1} - \bm{x}\bm{x}^T \right) \nabla f(\bm{x}) + O(h^2) - \frac{h^2}{2} \left(I_{q+1} - \bm{x}\bm{x}^T \right) \nabla f(\bm{x}) \cdot \frac{\int_0^{\infty} L(r) r^{\frac{q}{2}} dr}{\int_0^{\infty} L(r) r^{\frac{q}{2}-1} dr} + o(h^2) \\
			&= (I_{q+1} - \bm{x}\bm{x}^T) \nabla f(\bm{x}) + h^2\left(\frac{q+2}{2} \right) \bm{x}\bm{x}^T \nabla f(\bm{x}) \cdot \frac{\int_0^{\infty} L(r) r^{\frac{q}{2}} dr}{\int_0^{\infty} L(r) r^{\frac{q}{2}-1} dr} \\
			&\quad +\frac{h^2}{2} \left(I_{q+1} - \bm{x}\bm{x}^T \right) \nabla f(\bm{x}) \cdot \frac{\int_0^{\infty} L(r) r^{\frac{q}{2}} dr}{\int_0^{\infty} L(r) r^{\frac{q}{2}-1} dr} +O(h^2) + o(h^2),
			\end{align*}
			where we use (c) of Lemma~\ref{integ_lemma} in (i) and (ii), leverage the fact that $\sum_{i=1}^q \bm{b}_i \bm{b}_i^T = \bm{B_x} \bm{B_x}^T = I_{q+1} - \bm{x}\bm{x}^T$ in (iii), plug in the expression \eqref{asym_norm_const} of $c_{h,q}(L)$ in (iv), and take $h\to 0$ with arguments in Lemma~\ref{integ_lemma} and Remark~\ref{integ_lemma_remark} to obtain (v). The $o(h^2)$-term incorporates higher-order error terms, while the $O(h^2)$-term in (v) comes from the following arguments:
			% $$\frac{1}{2}\left(I_{q+1}-\bm{x}\bm{x}^T\right) \nabla f(\bm{x})\cdot \frac{\int_0^{2h^{-2}} L(r) r^{\frac{q}{2}-1} (2-h^2r)^{\frac{q}{2}} dr}{\int_0^{2h^{-2}} L(r) r^{\frac{q}{2}-1} (2-h^2r)^{\frac{q}{2}-1} dr} \to \left(I_{q+1}-\bm{x}\bm{x}^T\right) \nabla f(\bm{x}) \quad \text{ as } \,\, h\to 0,$$
			\begin{equation}
			\label{kernel_asymp_rate}
			\frac{\int_0^{2h^{-2}} L(r) r^{\frac{q}{2}-1} (2-h^2r)^{\frac{q}{2}} dr}{\int_0^{2h^{-2}} L(r) r^{\frac{q}{2}-1} (2-h^2r)^{\frac{q}{2}-1} dr} - 2 
			= -h^2 \cdot \frac{\int_0^{2h^{-2}} L(r) r^{\frac{q}{2}} (2-h^2r)^{\frac{q}{2}-1} dr}{\int_0^{2h^{-2}} L(r) r^{\frac{q}{2}-1} (2-h^2r)^{\frac{q}{2}-1} dr} = O(h^2).
			\end{equation}
			We now move on to the calculation of (III), which is more complicated.
			\begin{align}
			\label{eq_III}
			\begin{split}
			\text{Plug in (III)} &= c_{h,q}(L) h^q \int_0^{2h^{-2}} \int_{\Omega_{q-1}} \frac{1}{2} \alpha_{\bm{x},\bm{\xi}}^T \nabla\nabla f(\bm{x}) \alpha_{\bm{x},\bm{\xi}} \cdot \bm{x} L'(r) r^{\frac{q}{2}} (2-h^2r)^{\frac{q}{2}-1} \omega_{q-1}(d\bm{\xi}) dr\\
			&\hspace{-15mm} - c_{h,q}(L) h^{q-1} \int_0^{2h^{-2}} \int_{\Omega_{q-1}} \frac{1}{2} \alpha_{\bm{x},\bm{\xi}}^T \nabla\nabla f(\bm{x}) \alpha_{\bm{x},\bm{\xi}} \cdot \bm{B_x}\bm{\xi} L'(r) r^{\frac{q-1}{2}} (2-h^2r)^{\frac{q-1}{2}} \omega_{q-1}(d\bm{\xi}) dr.
			\end{split}
			\end{align}
			Notice that 
			\begin{align}
			\label{eq_III1}
			\begin{split}
			&\int_{\Omega_{q-1}} \alpha_{\bm{x},\bm{\xi}}^T \nabla\nabla f(\bm{x}) \alpha_{\bm{x},\bm{\xi}} \cdot \bm{x} \,\omega_{q-1}(\bm{\xi}) \\
			&= r^2h^4 \int_{\Omega_{q-1}} \bm{x}^T \nabla\nabla f(\bm{x}) \bm{x} \cdot \bm{x} \,\omega_{q-1}(d\bm{\xi}) \\
			&\quad -2rh^3 \sqrt{r(2-h^2 r)} \int_{\Omega_{q-1}} \bm{x}^T \nabla\nabla f(\bm{x}) \bm{B_x} \bm{\xi} \cdot \bm{x} \, \omega_{q-1}(d\bm{\xi})\\
			&\quad + h^2r(2-h^2r) \int_{\Omega_{q-1}} \bm{\xi}^T \bm{B_x}^T \nabla\nabla f(\bm{x}) \bm{B_x}\bm{\xi} \cdot \bm{x} \, \omega_{q-1}(d\bm{\xi})\\
			&= r^2h^4 \bar{\omega}_{q-1} \bm{x}^T \nabla\nabla f(\bm{x}) \bm{x} \cdot \bm{x} + h^2r(2-h^2r) \int_{\Omega_{q-1}} \left(\sum_{i,j=1}^q \bm{b}_i^T \nabla\nabla f(\bm{x}) \bm{b}_j \xi_i\xi_j \right) \bm{x}\, \omega_{q-1}(d\bm{\xi})\\
			&= r^2h^4 \bar{\omega}_{q-1} \bm{x}^T \nabla\nabla f(\bm{x}) \bm{x} \cdot \bm{x} + h^2r(2-h^2r) \int_{\Omega_{q-1}} \left(\sum_{i=1}^q \bm{b}_i^T \nabla\nabla f(\bm{x}) \bm{b}_i \xi_i^2 \right) \bm{x}\, \omega_{q-1}(d\bm{\xi})\\
			&= r^2h^4 \bar{\omega}_{q-1} \bm{x}^T \nabla\nabla f(\bm{x}) \bm{x} \cdot \bm{x} + h^2r(2-h^2r) \cdot \frac{\bar{\omega}_{q-1}}{q} \left[\Delta f(\bm{x}) -\bm{x}^T \nabla\nabla f(\bm{x}) \bm{x} \right] \cdot \bm{x},
			\end{split}
			\end{align}
			where we use (c) of Lemma~\ref{integ_lemma} in the second, third, and fourth equations and the fact that 
			\begin{align*}
			\sum_{i=1}^q \bm{b}_i^T \nabla\nabla f(\bm{x}) \bm{b}_i = \text{tr}\left[\nabla\nabla f(\bm{x}) \sum_{i=1}^q \bm{b}_i\bm{b}_i^T \right] &= \text{tr}\left[\nabla\nabla f(\bm{x}) (I_{q+1} -\bm{x}\bm{x}^T)\right] \\
			&= \Delta f(\bm{x}) -\bm{x}^T \nabla\nabla f(\bm{x}) \bm{x}.
			\end{align*}
			Here, $\Delta f(\bm{x}) = \sum_{i=1}^{q+1} \frac{\partial^2}{\partial x_i^2} f(\bm{x})$ is the Laplace of function $f$. At the same time,
			\begin{align*}
			%\label{eq_III2}
			%\begin{split}
			&\int_{\Omega_{q-1}} \alpha_{\bm{x},\bm{\xi}}^T \nabla\nabla f(\bm{x}) \alpha_{\bm{x},\bm{\xi}} \cdot \bm{B_x}\bm{\xi} \, \omega_{q-1}(d\bm{\xi}) \\
			&= r^2h^4 \int_{\Omega_{q-1}} \bm{x}^T \nabla\nabla f(\bm{x}) \bm{x} \cdot \bm{B_x}\bm{\xi} \, \omega_{q-1}(d\bm{\xi})\\
			&\quad - 2rh^3 \sqrt{r(2-h^2r)} \int_{\Omega_{q-1}} \bm{x}^T \nabla\nabla f(\bm{x}) \bm{B_x}\bm{\xi} \cdot \bm{B_x}\bm{\xi} \, \omega_{q-1}(d\bm{\xi})\\
			&\quad + h^2r(2-h^2r) \int_{\Omega_{q-1}} \bm{\xi}^T \bm{B_x}^T \nabla\nabla f(\bm{x}) \bm{B_x} \bm{\xi} \cdot \bm{B_x}\bm{\xi} \, \omega_{q-1}(d\bm{\xi})\\
			&= - 2rh^3 \sqrt{r(2-h^2r)} \int_{\Omega_{q-1}} \left(\sum_{i=1}^q \bm{x}^T \nabla\nabla f(\bm{x}) \bm{b}_i \cdot \bm{b}_i \xi_i^2 \right) \omega_{q-1}(d\bm{\xi}) \\
			&\quad + h^2r(2-h^2r) \int_{\Omega_{q-1}} \left(\sum_{i,j=1}^q \bm{b}_i^T \nabla\nabla f(\bm{x}) \bm{b}_j \xi_i\xi_j \right) \left(\sum_{k=1}^q \bm{b}_k \xi_k\right) \, \omega_{q-1}(d\bm{\xi})\\
			&= - 2rh^3 \sqrt{r(2-h^2r)} \cdot  \frac{\bar{\omega}_{q-1}}{q} \sum_{i=1}^q \left(\bm{x}^T \nabla\nabla f(\bm{x}) \bm{b}_i \right)  \bm{b}_i,
			%\end{split}
			\end{align*}
			where we apply (c) of Lemma~\ref{integ_lemma} in the last two equations. That is,
			\begin{align}
			\label{eq_III2}
			\begin{split}
			\int_{\Omega_{q-1}} \alpha_{\bm{x},\bm{\xi}}^T \nabla\nabla f(\bm{x}) \alpha_{\bm{x},\bm{\xi}} \cdot \bm{B_x}\bm{\xi} \, \omega_{q-1}(d\bm{\xi}) 
			&= - 2rh^3 \sqrt{r(2-h^2r)} \cdot  \frac{\bar{\omega}_{q-1}}{q} \sum_{i=1}^q \left(\bm{x}^T \nabla\nabla f(\bm{x}) \bm{b}_i \right)  \bm{b}_i.
			\end{split}
			\end{align}
			Plugging \eqref{eq_III1} and \eqref{eq_III2} back into \eqref{eq_III}, we proceed ``Plug in (III)'' as
			\begin{align*}
			\text{Plug in (III)} &= \frac{c_{h,q}(L) h^{q+4}}{2} \cdot \bar{\omega}_{q-1} \bm{x}^T \nabla\nabla f(\bm{x}) \bm{x}\cdot \bm{x} \int_0^{2h^{-2}} r^{\frac{q}{2}+2} (2-h^2r)^{\frac{q}{2}-1} L'(r) dr\\
			&\quad + \frac{c_{h,q}(L) h^{q+2}}{2q} \cdot \bar{\omega}_{q-1} \left[\Delta f(\bm{x}) -\bm{x}^T \nabla\nabla f(\bm{x}) \bm{x} \right] \bm{x} \int_0^{2h^{-2}} L'(r) r^{\frac{q}{2}+1} (2-h^2r)^{\frac{q}{2}} dr\\
			&\quad + \frac{c_{h,q}(L) h^{q+2}}{q} \cdot \bar{\omega}_{q-1} \sum_{i=1}^q \left(\bm{x}^T \nabla\nabla f(\bm{x}) \bm{b}_i \right) \bm{b}_i \int_0^{2h^{-2}} L'(r) r^{\frac{q}{2}+1} (2-h^2r)^{\frac{q}{2}} dr\\
			&\stackrel{\text{(i)}}{=} \frac{h^4}{2} \cdot \bm{x}^T \nabla\nabla f(\bm{x}) \bm{x}\cdot \bm{x} \cdot \frac{\int_0^{\infty} L'(r) r^{\frac{q}{2}+2}(2-h^2r)^{\frac{q}{2}-1} dr}{\int_0^{\infty} L(r) r^{\frac{q}{2}-1}(2-h^2r)^{\frac{q}{2}-1} dr}\\
			&\quad + \frac{h^2}{2q} \left[\Delta f(\bm{x}) -\bm{x}^T \nabla\nabla f(\bm{x}) \bm{x} \right] \bm{x} \cdot \frac{\int_0^{\infty} L'(r) r^{\frac{q}{2}+1}(2-h^2r)^{\frac{q}{2}} dr}{\int_0^{\infty} L(r) r^{\frac{q}{2}-1}(2-h^2r)^{\frac{q}{2}-1} dr} \\
			&\quad + \frac{h^2}{q} \sum_{i=1}^q \left(\bm{x}^T \nabla\nabla f(\bm{x}) \bm{b}_i \right)  \bm{b}_i \cdot \frac{\int_0^{\infty} L'(r) r^{\frac{q}{2}+1}(2-h^2r)^{\frac{q}{2}} dr}{\int_0^{\infty} L(r) r^{\frac{q}{2}-1}(2-h^2r)^{\frac{q}{2}-1} dr}\\
			&\stackrel{\text{(ii)}}{=} \frac{h^2}{q} \left[\Delta f(\bm{x}) -\bm{x}^T \nabla\nabla f(\bm{x}) \bm{x} \right] \bm{x} \cdot \frac{\int_0^{\infty} L'(r) r^{\frac{q}{2}+1} dr}{\int_0^{\infty} L(r) r^{\frac{q}{2}-1} dr}\\
			&\quad + \frac{2h^2}{q} \sum_{i=1}^q \left(\bm{x}^T \nabla\nabla f(\bm{x}) \bm{b}_i \right) \bm{b}_i \cdot  \frac{\int_0^{\infty} L'(r) r^{\frac{q}{2}+1} dr}{\int_0^{\infty} L(r) r^{\frac{q}{2}-1} dr} + o(h^2),
			\end{align*}
			where we plug in the expression \eqref{asym_norm_const} of $c_{h,q}(L)$ in (i) and take $h\to 0$ with arguments in Lemma~\ref{integ_lemma} and Remark~\ref{integ_lemma_remark} in (ii).
			
			We argue that after plugging (IV)+$o(h^3)$ back into \eqref{grad_expected1}, it yields a $o(h^2)$ term.
			\begin{align*}
			&\text{Plug in (IV)}+o(h^3) \\
			&= c_{h,q}(L) h^q \int_0^{2h^{-2}} \int_{\Omega_{q-1}} \frac{1}{6} \left[\left(\sum_{i=1}^{q+1} (\alpha_{\bm{x},\bm{\xi}})_i \cdot \frac{\partial}{\partial x_i} \right)^3 f(\bm{x}) \right] \bm{x} L'(r) r^{\frac{q}{2}} (2-h^2r)^{\frac{q}{2}-1} \omega_{q-1}(d\bm{\xi}) dr\\
			& + c_{h,q}(L) h^{q-1} \int_0^{2h^{-2}} \int_{\Omega_{q-1}} \frac{1}{6} \left[\left(\sum_{i=1}^{q+1} (\alpha_{\bm{x},\bm{\xi}})_i \cdot \frac{\partial}{\partial x_i} \right)^3 f(\bm{x}) \right] \bm{B_x}\bm{\xi} \\
			& \hspace{60mm} \times  L'(r) r^{\frac{q-1}{2}} (2-h^2r)^{\frac{q-1}{2}} \omega_{q-1}(d\bm{\xi}) dr\\
			& + c_{h,q}(L) \cdot o(h^{q+1}) \int_0^{2h^{-2}} \int_{\Omega_{q-1}} \left(rh^2\bm{x} -h\sqrt{r(2-h^2r)} \bm{B_x}\bm{\xi} \right) L'(r) r^{\frac{q-1}{2}} (2-h^2r)^{\frac{q-1}{2}} \omega_{q-1}(d\bm{\xi}) dr\\
			&= c_{h,q}(L) h^q \int_0^{2h^{-2}} \int_{\Omega_{q-1}} \frac{1}{6} \left[\left(\sum_{i=1}^{q+1} (\alpha_{\bm{x},\bm{\xi}})_i \cdot \frac{\partial}{\partial x_i} \right)^3 f(\bm{x}) \right] \bm{x} L'(r) r^{\frac{q}{2}} (2-h^2r)^{\frac{q}{2}-1} \omega_{q-1}(d\bm{\xi}) dr\\
			& + c_{h,q}(L) h^{q-1} \int_0^{2h^{-2}} \int_{\Omega_{q-1}} \frac{1}{6} \left[\left(\sum_{i=1}^{q+1} (\alpha_{\bm{x},\bm{\xi}})_i \cdot \frac{\partial}{\partial x_i} \right)^3 f(\bm{x}) \right] \bm{B_x}\bm{\xi} \\
			&\hspace{60mm} \times L'(r) r^{\frac{q-1}{2}} (2-h^2r)^{\frac{q-1}{2}} \omega_{q-1}(d\bm{\xi}) dr\\
			& + c_{h,q}(L) \cdot o(h^{q+3}) \cdot \bar{\omega}_{q-1} \bm{x} \int_0^{2h^{-2}} L'(r) r^{\frac{q+1}{2}} (2-h^2r)^{\frac{q-1}{2}} dr
			\end{align*}
			by (c) of Lemma~\ref{integ_lemma} in the last equality. As $c_{h,q}(L) = h^q \lambda_{h,q}(L) =O(h^q)$ by \eqref{asym_norm_const} and (a) of Lemma~\ref{integ_lemma}, we know that the third integral is of the order $o(h^3)$. Since $\bm{B_x}\bm{\xi}= \sum_{i=1}^q \xi_i \bm{b}_i$ and $\alpha_{\bm{x},\bm{\xi}} = -rh^2\bm{x} + h\sqrt{r(2-h^2r)} \bm{B_x}\bm{\xi}$, we derive that
			\begin{align*}
			&\left(\sum_{i=1}^{q+1} (\alpha_{\bm{x},\bm{\xi}})_i \cdot \frac{\partial}{\partial x_i} \right)^3 f(\bm{x}) \\
			&= A_{f,1}\cdot h^3 r^{\frac{3}{2}} (2-h^2r)^{\frac{3}{2}} \sum_{i,j,k} \bm{b}_i\bm{b}_j\bm{b}_k \xi_i\xi_j \xi_k + A_{f,2} \cdot h^4 r^2 (2-h^2r) \sum_{i,j} \bm{b}_i\bm{b}_j \xi_i\xi_j \\
			&\quad + A_{f,3} \cdot h^5 r^{\frac{5}{2}} (2-h^2r)^{\frac{1}{2}} \sum_i \bm{b}_i \xi_i +A_{f,4} h^6r^3
			\end{align*}
			and
			\begin{align*}
			&\left[\left(\sum_{i=1}^{q+1} (\alpha_{\bm{x},\bm{\xi}})_i \cdot \frac{\partial}{\partial x_i} \right)^3 f(\bm{x}) \right] \bm{B_x}\bm{\xi}\\
			&= \tilde{A}_{f,1}\cdot h^3 r^{\frac{3}{2}} (2-h^2r)^{\frac{3}{2}} \sum_{i,j,k,\ell} \bm{b}_i\bm{b}_j\bm{b}_k \bm{b}_{\ell} \xi_i\xi_j \xi_k \xi_{\ell} + \tilde{A}_{f,2} \cdot h^4 r^2 (2-h^2r) \sum_{i,j,k} \bm{b}_i\bm{b}_j \bm{b}_k \xi_i\xi_j \xi_k \\
			&\quad + \tilde{A}_{f,3} \cdot h^5 r^{\frac{5}{2}} (2-h^2r)^{\frac{1}{2}} \sum_{i,j} \bm{b}_i \bm{b}_j\xi_i \xi_j + \tilde{A}_{f,4} h^6r^3 \sum_i \bm{b}_i \xi_i,
			\end{align*}
			where $A_{f,i}, i=1,...,4$ and $\tilde{A}_{f,i}, i=1,...,4$ are some ``constants'' that depends on the partial derivatives of $f(\bm{x})$. Thus, by (c) of Lemma~\ref{integ_lemma} (that is, any integration of a monomial of $\bm{\xi}$ with an odd degree on $\Omega_{q-1}$ will yield 0), we know that
			$$\int_{\Omega_{q-1}} \left[\left(\sum_{i=1}^{q+1} (\alpha_{\bm{x},\bm{\xi}})_i \cdot \frac{\partial}{\partial x_i} \right)^3 f(\bm{x}) \right] \omega_{q-1}(d\bm{\xi}) \asymp h^4r^2(2-h^2r) + o(h^4),$$
			$$\int_{\Omega_{q-1}} \left[\left(\sum_{i=1}^{q+1} (\alpha_{\bm{x},\bm{\xi}})_i \cdot \frac{\partial}{\partial x_i} \right)^3 f(\bm{x}) \right] \bm{B_x}\bm{\xi}\, \omega_{q-1}(d\bm{\xi}) \asymp h^3 r^{\frac{3}{2}}(2-h^2r)^{\frac{3}{2}} + h^5r\sqrt{2-h^2r} + o(h^3),$$
			where ``$\asymp$'' means an asymptotic equivalence. With condition (D2') and our arguments of (a) in Lemma~\ref{integ_lemma} and Remark~\ref{integ_lemma_remark}, we obtain that ``Plug in (IV)+$o(h^3)$'' yields a $o(h^2)$ term. Therefore,
			\begin{equation}
			\label{grad_expect_asym}
			\begin{split}
			\mathbb{E}\left[\nabla \tilde{f}_h(\bm{x}) \right] &= -\frac{q}{2} \cdot \bm{x} f(\bm{x}) + \left(\frac{q-2}{4}\right)\bm{x} f(\bm{x}) h^2 \cdot \frac{\int_0^{\infty} L(r) r^{\frac{q}{2}} dr}{\int_0^{\infty} L(r) r^{\frac{q}{2}-1} dr} \\
			& \quad + (I_{q+1} - \bm{x}\bm{x}^T) \nabla f(\bm{x}) + h^2\left(\frac{q+2}{2} \right) \bm{x}\bm{x}^T \nabla f(\bm{x}) \cdot \frac{\int_0^{\infty} L(r) r^{\frac{q}{2}} dr}{\int_0^{\infty} L(r) r^{\frac{q}{2}-1} dr} \\
			&\quad +\frac{h^2}{2} \left(I_{q+1} - \bm{x}\bm{x}^T \right) \nabla f(\bm{x}) \cdot \frac{\int_0^{\infty} L(r) r^{\frac{q}{2}} dr}{\int_0^{\infty} L(r) r^{\frac{q}{2}-1} dr}\\
			&\quad + \frac{h^2}{q} \left[\Delta f(\bm{x}) -\bm{x}^T \nabla\nabla f(\bm{x}) \bm{x} \right] \bm{x} \cdot \frac{\int_0^{\infty} L'(r) r^{\frac{q}{2}+1} dr}{\int_0^{\infty} L(r) r^{\frac{q}{2}-1} dr}\\
			&\quad + \frac{2h^2}{q} \sum_{i=1}^q \left(\bm{x}^T \nabla\nabla f(\bm{x}) \bm{b}_i \right) \bm{b}_i \cdot  \frac{\int_0^{\infty} L'(r) r^{\frac{q}{2}+1} dr}{\int_0^{\infty} L(r) r^{\frac{q}{2}-1} dr} +O(h^2)+ o(h^2),
			\end{split}
			\end{equation}
			which in turn shows that
			\begin{align*}
			\mathbb{E}\left[\grad \tilde{f}_h(\bm{x})\right] &= \mathbb{E}\left[\Tang\left(\nabla \tilde{f}_h(\bm{x}) \right) \right] = (I_{q+1} -\bm{x}\bm{x}^T) \mathbb{E}\left[\nabla \tilde{f}_h(\bm{x}) \right]\\ 
			&\hspace{-20mm} = \left(I_{q+1} - \bm{x}\bm{x}^T \right) \nabla f(\bm{x}) + \frac{h^2}{2} \left(I_{q+1} - \bm{x}\bm{x}^T \right) \nabla f(\bm{x}) \cdot \frac{\int_0^{\infty} L(r) r^{\frac{q}{2}} dr}{\int_0^{\infty} L(r) r^{\frac{q}{2}-1} dr} \\
			&\hspace{-20mm} \quad + \frac{2h^2}{q} \sum_{i=1}^q \left(\bm{x}^T \nabla\nabla f(\bm{x}) \bm{b}_i \right) \bm{b}_i \cdot  \frac{\int_0^{\infty} L'(r) r^{\frac{q}{2}+1} dr}{\int_0^{\infty} L(r) r^{\frac{q}{2}-1} dr} +O(h^2)+ o(h^2)\\
			&\hspace{-20mm} = \left(I_{q+1} - \bm{x}\bm{x}^T \right) \nabla f(\bm{x}) + O(h^2)
			\end{align*}
			as $h\to 0$. {\bf Result 1} thus follows and the Riemannian gradient estimator is unbiased. 
			
			\noindent $\bullet$ {\bf Result 2} The covariance matrix of $\nabla \tilde{f}_h(\bm{x})$ has the following asymptotic rate as $h\to 0$ and $nh^{q+2} \to \infty$:
			$$\text{Cov}\left[\nabla \tilde{f}_h(\bm{x}) \right] = \frac{1}{nh^{q+2}} \cdot R(f,L) + o\left(\frac{1}{nh^{q+2}} \right),$$
			where $R(f,L) = \frac{f(\bm{x}) \int_0^{\infty} L'(r)^2 r^{\frac{q}{2}} dr}{2^{\frac{q}{2}-2}q \cdot \bar{\omega}_{q-1} \left(\int_0^{\infty} L(r) r^{\frac{q}{2}-1} dr \right)^2} \cdot (I_{q+1} -\bm{x}\bm{x}^T)$.
			
			\noindent \emph{Derivation of Result 2}. By \eqref{grad_expect_asym}, the covariance matrix of $\nabla \tilde{f}_h(\bm{x})$ can be calculated as
			\begin{align*}
			&\text{Cov}\left[\nabla \tilde{f}_h(\bm{x}) \right] \\
			&= \frac{c_{h,q}(L)^2}{nh^4} \cdot \text{Cov}\left[(\bm{x}-\bm{X}_1) L'\left(\frac{1-\bm{x}^T \bm{X}_1}{h^2} \right)\right]\\
			&= \frac{c_{h,q}(L)^2}{nh^4} \cdot \mathbb{E}\left[(\bm{x}-\bm{X}_1) (\bm{x}-\bm{X}_1)^T L'\left(\frac{1-\bm{x}^T \bm{X}_1}{h^2} \right)^2 \right] - \frac{1}{n}\cdot \mathbb{E}\left[\nabla \tilde{f}_h(\bm{x}) \right] \cdot \mathbb{E}\left[\nabla \tilde{f}_h(\bm{x}) \right]^T\\
			&= \frac{c_{h,q}(L)^2}{nh^4} \int_{\Omega_q} (\bm{x}-\bm{y}) (\bm{x}-\bm{y})^T L'\left(\frac{1-\bm{x}^T \bm{y}}{h^2} \right)^2 f(\bm{y}) \, \omega_q(d\bm{y}) + O\left(\frac{1}{n} \right) \\
			&= \frac{c_{h,q}(L)^2}{n} h^{q-4} \int_0^{2h^{-2}} \int_{\Omega_{q-1}} \alpha_{\bm{x},\bm{\xi}} \alpha_{\bm{x},\bm{\xi}}^T L'(r)^2 f(\bm{x} + \alpha_{\bm{x},\bm{\xi}}) r^{\frac{q}{2}-1} (2-h^2 r)^{\frac{q}{2}-1} \omega_{q-1}(d\bm{\xi}) dr\\
			&\quad + O\left(\frac{1}{n} \right),
			\end{align*}
			where $\alpha_{\bm{x},\bm{\xi}} = -rh^2\bm{x}+h\sqrt{r(2-h^2r)} \bm{B_x}\bm{\xi}$. By condition (D1), the first-order Taylor's expansion of $f$ at $\bm{x}\in \Omega_q$ is
			$$f(\bm{x}+ \alpha_{\bm{x},\bm{\xi}}) = f(\bm{x}) + O(||\alpha_{\bm{x},\bm{\xi}}||_2) = f(\bm{x}) + O(h).$$
			Thus,
			\begin{align*}
			&\text{Cov}\left[\nabla \tilde{f}_h(\bm{x}) \right] \\
			&= \frac{c_{h,q}(L)^2}{n} h^{q-4} f(\bm{x}) \int_0^{2h^{-2}} \int_{\Omega_{q-1}} \Big[r^2h^4 \bm{x}\bm{x}^T -rh^3 \sqrt{r(2-h^2r)} \bm{x} (\bm{B_x} \bm{\xi})^T \\
			& \quad - rh^3 \sqrt{r(2-h^2r)} (\bm{B_x} \bm{\xi}) \bm{x}^T + h^2 r(2-h^2r) \bm{B_x}\bm{\xi} (\bm{B_x}\bm{\xi})^T \Big] L'(r)^2 r^{\frac{q}{2}-1} (2-h^2r)^{\frac{q}{2}-1} \omega_{q-1}(d\bm{\xi}) dr\\
			&\quad + o\left(\frac{1}{nh^{q+2}} \right)\\
			& \stackrel{\text{(i)}}{=} \frac{c_{h,q}(L)^2}{n} h^q f(\bm{x}) \bm{x}\bm{x}^T \bar{\omega}_{q-1} \int_0^{2h^{-2}} L'(r)^2 r^{\frac{q}{2}+1} (2-h^2r)^{\frac{q}{2}-1} dr\\
			&\quad + \frac{c_{h,q}(L)^2}{n} h^{q-2} f(\bm{x}) \int_0^{2h^{-2}} \int_{\Omega_{q-1}} \left(\sum_{i=1}^q \xi_i\bm{b}_i \right) \left(\sum_{i=1}^q \xi_i\bm{b}_i^T \right) L'(r)^2 r^{\frac{q}{2}} (2-h^2r)^{\frac{q}{2}} \omega_{q-1}(d\bm{\xi}) dr\\
			& \quad + o\left(\frac{1}{nh^{q+2}} \right) \\
			& \stackrel{\text{(ii)}}{=} \frac{c_{h,q}(L)^2}{n} h^q f(\bm{x}) \bm{x}\bm{x}^T \bar{\omega}_{q-1} \int_0^{2h^{-2}} L'(r)^2 r^{\frac{q}{2}+1} (2-h^2r)^{\frac{q}{2}-1} dr \\
			& \quad + \frac{c_{h,q}(L)^2 \bar{\omega}_{q-1}}{nq} \cdot h^{q-2} f(\bm{x}) \int_0^{2h^{-2}} \left(\sum_{i=1}^q \bm{b}_i\bm{b}_i^T \right) L'(r)^2 r^{\frac{q}{2}} (2-h^2r)^{\frac{q}{2}} dr + o\left(\frac{1}{nh^{q+2}} \right) \\
			&\stackrel{\text{(iii)}}{=} \frac{c_{h,q}(L)}{n} f(\bm{x}) \bm{x}\bm{x}^T \cdot \frac{\int_0^{2h^{-2}} L'(r)^2 r^{\frac{q}{2}+1} (2-h^2r)^{\frac{q}{2}-1} dr}{\int_0^{2h^{-2}} L(r) r^{\frac{q}{2}-1} (2-h^2r)^{\frac{q}{2}-1} dr}\\
			& \quad + \frac{c_{h,q}(L)}{nq} \cdot h^{q-2} f(\bm{x}) \left(I_{q+1} -\bm{x}\bm{x}^T \right)\frac{\int_0^{2h^{-2}}  L'(r)^2 r^{\frac{q}{2}} (2-h^2r)^{\frac{q}{2}} dr}{\int_0^{2h^{-2}}  L(r) r^{\frac{q}{2}-1} (2-h^2r)^{\frac{q}{2}-1} dr} + o\left(\frac{1}{nh^{q+2}} \right) \\
			&\stackrel{\text{(iv)}}{=} \frac{1}{nh^{q+2}} \cdot R(f,L) + o\left(\frac{1}{nh^{q+2}} \right),
			\end{align*}
			where $R(f,L) = \frac{f(\bm{x}) \int_0^{\infty} L'(r)^2 r^{\frac{q}{2}} dr}{2^{\frac{q}{2}-2}q \cdot \bar{\omega}_{q-1} \left(\int_0^{\infty} L(r) r^{\frac{q}{2}-1} dr \right)^2} \cdot (I_{q+1} -\bm{x}\bm{x}^T)$ is a matrix whose columns lie in the tangent space of $\Omega_q$ at $\bm{x}$. During the derivation, we use (c) of Lemma~\ref{integ_lemma} in (i) and (ii), plug in the expression \eqref{asym_norm_const} of $c_{h,q}(L)$ in (iii), and take $h\to 0$ with arguments in (a) of Lemma~\ref{integ_lemma} and Remark~\ref{integ_lemma_remark} in (iv). {\bf Result 2} thus follows.
			
		    By the central limit theorem,
			\begin{align*}
			\nabla \tilde{f}_h(\bm{x}) - \mathbb{E}\left[ \nabla \tilde{f}_h(\bm{x}) \right] &= \text{Cov}\left[\nabla \tilde{f}_h(\bm{x}) \right]^{\frac{1}{2}} \cdot \text{Cov}\left[\nabla \tilde{f}_h(\bm{x}) \right]^{-\frac{1}{2}} \left\{ \nabla \tilde{f}_h(\bm{x}) - \mathbb{E}\left[ \nabla \tilde{f}_h(\bm{x}) \right] \right\}\\
			&= \left[\frac{1 }{nh^{q+2}} \cdot R(f,L) + o\left(\frac{1}{nh^{q+2}} \right) \right]^{\frac{1}{2}} \cdot \bm{Z}_n(\bm{x})\\
			&= O_P\left(\sqrt{\frac{1}{nh^{q+2}}} \right),
			\end{align*}
			where $\bm{Z}_n(\bm{x}) \stackrel{d}{\to} N_{q+1}(\bm{0},I_{q+1})$.\\ 
			The asymptotic rate for $\grad \tilde{f}_h(\bm{x}) - \mathbb{E}\left[\grad \tilde{f}_h(\bm{x}) \right] = \Tang\left(\nabla \tilde{f}_h(\bm{x}) \right) - \mathbb{E}\left[\Tang \left(\nabla \tilde{f}_h(\bm{x}) \right) \right]$ remains unchanged, since the dominating constant $R(f,L)$ are within the tangent space of $\Omega_q$ at $\bm{x}$.\\
			In a nutshell, we conclude with bias ({\bf Result 1}) and variance ({\bf Result 2}) estimation that
			$$\grad \hat{f}_h(\bm{x}) - \grad f(\bm{x})= \Tang\left(\nabla \hat{f}_h(\bm{x}) \right) - \Tang\left(\nabla f(\bm{x}) \right) = O(h^2) + O_P\left(\sqrt{\frac{1}{nh^{q+2}}} \right)$$
			for any fixed $\bm{x}\in \Omega_q$, as $h\to 0$ and $nh^{q+2} \to \infty$.\\
			
		\noindent {\bf Part B: Pointwise convergence rate of the Riemannian Hessian estimator $\mathcal{H} \hat{f}_h(\bm x)$}. As shown in Lemma~\ref{lem:Hessian}, $\mathcal{H} \hat{f}_h(\bm{x}) = \mathcal{H} \tilde{f}_h(\bm{x})$ for any $\bm{x}\in \Omega_q$ and we can establish the pointwise convergence rate using either Riemannian Hessian estimator. Here, we stick to the Riemannian Hessian estimator $\mathcal{H} \hat{f}_h(\bm x)$ in \eqref{Hess_KDE}. 
		
		\noindent $\bullet$ {\bf Result 3}. The expectation of the Riemannian Hessian estimator, $\mathbb{E}\left[\mathcal{H} \hat{f}_h(\bm{x}) \right]$, has the following asymptotic behavior as $h\to 0$:
		\begin{align*}
		\mathbb{E}\left[\mathcal{H} \hat{f}_h(\bm{x}) \right] &= \left(I_{q+1} -\bm{x}\bm{x}^T\right) \mathbb{E}\left[\mathcal{A} \hat{f}_h(\bm{x}) \right] \left(I_{q+1} -\bm{x}\bm{x}^T\right)\\
		&= \left(I_{q+1} -\bm{x}\bm{x}^T\right) \left[\nabla\nabla f(\bm{x}) -\bm{x}^T \nabla f(\bm{x}) \right] \left(I_{q+1} -\bm{x}\bm{x}^T\right) + O(h^2),
		\end{align*}
		where $\mathcal{A} \hat{f}_h(\bm{x})=\frac{c_{h,q}(L)}{nh^4} \sum\limits_{i=1}^n \bm{X}_i\bm{X}_i^T L''\left(\frac{1-\bm{x}^T\bm{X}_i}{h^2} \right) + \frac{c_{h,q}(L)}{nh^2} \sum\limits_{i=1}^n \bm{x}^T \bm{X}_i I_{q+1} \cdot L'\left(\frac{1-\bm{x}^T\bm{X}_i}{h^2} \right)$.
		
		\noindent\emph{Derivation of Result 3}. We first compute the expectation of $\mathcal{A} \hat{f}_h(\bm{x})$ and apply the left and right multiplications of $\left(I_{q+1} -\bm{x}\bm{x}^T\right)$ to simplify our calculation. Notice that
		\begin{align*}
			&\mathbb{E}\left[\mathcal{A} \hat{f}_h(\bm{x})\right] \\
			&= \frac{c_{h,q}(L)}{h^4}\int_{\Omega_q} \bm{y}\bm{y}^T L''\left(\frac{1-\bm{x}^T\bm{y}}{h^2}\right) f(\bm{y})\, \omega_q(d\bm{y}) \\
			&\quad + \frac{c_{h,q}(L)}{h^2}\int_{\Omega_q} \bm{x}^T\bm{y} \cdot I_{q+1} \cdot L'\left(\frac{1-\bm{x}^T\bm{y}}{h^2}\right) f(\bm{y})\, \omega_q(d\bm{y})\\
			&= \frac{c_{h,q}(L)}{h^4}\int_{-1}^1 \int_{\Omega_{q-1}} \left(t\bm{x} + \sqrt{1-t^2} \bm{B_x}\bm{\xi} \right) \left(t\bm{x} + \sqrt{1-t^2} \bm{B_x}\bm{\xi} \right)^T \\ 
			& \hspace{40mm} \times L''\left(\frac{1-t}{h^2} \right) \cdot f\left(t\bm{x} + \sqrt{1-t^2} \bm{B_x}\bm{\xi} \right) (1-t^2)^{\frac{q}{2}-1} \omega_{q-1}(d\bm{\xi})dt\\
			& \quad + \frac{c_{h,q}(L)}{h^2}\int_{-1}^1 \int_{\Omega_{q-1}} tI_{q+1} \cdot L'\left(\frac{1-t}{h^2} \right) \cdot f\left(t\bm{x} + \sqrt{1-t^2} \bm{B_x}\bm{\xi} \right) (1-t^2)^{\frac{q}{2}-1} \omega_{q-1}(d\bm{\xi})dt\\
			&= c_{h,q}(L) h^{q-4} \int_0^{2h^{-2}} \int_{\Omega_{q-1}} \left(\bm{x} +\alpha_{\bm{x},\bm{\xi}} \right)\left(\bm{x} +\alpha_{\bm{x},\bm{\xi}} \right)^T L''(r)\\
			&\hspace{60mm} \times f\left(\bm{x} +\alpha_{\bm{x},\bm{\xi}} \right) r^{\frac{q}{2}-1} (2-h^2r)^{\frac{q}{2}-1} \omega_{q-1}(d\bm{\xi})dr\\
			&\quad + c_{h,q}(L) h^{q-2} \int_0^{2h^{-2}} \int_{\Omega_{q-1}} (1-h^2r) I_{q+1} \cdot L'(r) f\left(\bm{x} +\alpha_{\bm{x},\bm{\xi}} \right) r^{\frac{q}{2}-1} (2-h^2r)^{\frac{q}{2}-1} \omega_{q-1}(d\bm{\xi})dr
			\end{align*}
			by $r=\frac{1-t}{h^2}$ and $\alpha_{\bm{x},\bm{\xi}} = -rh^2\bm{x} +h\sqrt{r(2-h^2r)} \bm{B_x} \bm{\xi}$. Since
			\begin{align*}
			\left(\bm{x} +\alpha_{\bm{x},\bm{\xi}} \right)\left(\bm{x} +\alpha_{\bm{x},\bm{\xi}} \right)^T &= (1-rh^2)^2 \bm{x}\bm{x}^T + h(1-rh^2) \sqrt{r(2-h^2 r)} \left[\bm{x} (\bm{B_x}\bm{\xi})^T + (\bm{B_x}\bm{\xi}) \bm{x}^T \right] \\
			&\quad + h^2r(2-h^2r)\cdot (\bm{B_x}\bm{\xi})(\bm{B_x}\bm{\xi})^T,
			\end{align*}
			the preceding calculation proceeds as
			\begin{align}
			\label{Hess_Expect1}
			\begin{split}
			&\mathbb{E}\left[{\mathcal{A}}\hat f_h (\bm x)\right] \\
			&= c_{h,q}(L) h^{q-4} \int_0^{2h^{-2}} \int_{\Omega_{q-1}} \bm{x}\bm{x}^T L''(r) \cdot f\left(\bm{x} +\alpha_{\bm{x},\bm{\xi}} \right) r^{\frac{q}{2}-1} (1-rh^2)^2 (2-h^2r)^{\frac{q}{2}-1} \omega_{q-1}(d\bm{\xi})dr\\
			&\quad + c_{h,q}(L) h^{q-3} \int_0^{2h^{-2}} \int_{\Omega_{q-1}} \left[\bm{x} (\bm{B_x}\bm{\xi})^T + (\bm{B_x}\bm{\xi}) \bm{x}^T \right] L''(r) \\
			&\hspace{50mm} \times f\left(\bm{x} +\alpha_{\bm{x},\bm{\xi}} \right) r^{\frac{q-1}{2}} (1-rh^2) (2-h^2r)^{\frac{q-1}{2}} \omega_{q-1}(d\bm{\xi})dr\\
			&\quad + c_{h,q}(L) h^{q-2} \int_0^{2h^{-2}} \int_{\Omega_{q-1}} (\bm{B_x}\bm{\xi})(\bm{B_x}\bm{\xi})^T L''(r) \cdot f\left(\bm{x} +\alpha_{\bm{x},\bm{\xi}} \right) r^{\frac{q}{2}} (2-h^2r)^{\frac{q}{2}}\, \omega_{q-1}(d\bm{\xi})dr\\
			&\quad + c_{h,q}(L) h^{q-2} \int_0^{2h^{-2}} \int_{\Omega_{q-1}} (1-h^2r) I_{q+1} \cdot L'(r) f\left(\bm{x} +\alpha_{\bm{x},\bm{\xi}} \right) r^{\frac{q}{2}-1} (2-h^2r)^{\frac{q}{2}-1} \omega_{q-1}(d\bm{\xi})dr\\
			&\equiv \text{(I) + (II) +(III) + (IV)}.
			\end{split}
			\end{align}
			The above terms (I) and (II), after we apply the congruence operation 
			$$\left(I_{q+1} -\bm{x}\bm{x}^T \right) \mathbb{E}\left[{\mathcal{A}}\hat f_h (\bm x)\right] \left(I_{q+1} -\bm{x}\bm{x}^T \right),$$ 
			yield zero. Hence, we will not continue to compute them. By condition (D1), the Taylor's expansion of $f$ at $\bm{x}$ is
			\begin{align*}
			&f(\bm{x}+\alpha_{\bm{x},\bm{\xi}}) \\
			&= f(\bm{x}) + \alpha_{\bm{x},\bm{\xi}}^T \nabla f(\bm{x}) + \frac{1}{2} \alpha_{\bm{x},\bm{\xi}}^T \nabla\nabla f(\bm{x}) \alpha_{\bm{x},\bm{\xi}} + \frac{1}{6} \left(\sum_{i=1}^{q+1} (\alpha_{\bm{x},\bm{\xi}})_i \cdot \frac{\partial}{\partial x_i} \right)^3 f(\bm{x}) + O\left(||\alpha_{\bm{x},\bm{\xi}}||_2^4 \right)\\
			&= f(\bm{x}) + \alpha_{\bm{x},\bm{\xi}}^T \nabla f(\bm{x}) + \frac{1}{2} \alpha_{\bm{x},\bm{\xi}}^T \nabla\nabla f(\bm{x}) \alpha_{\bm{x},\bm{\xi}} + \frac{1}{6} \left(\sum_{i=1}^{q+1} (\alpha_{\bm{x},\bm{\xi}})_i \cdot \frac{\partial}{\partial x_i} \right)^3 f(\bm{x}) + O(h^4),
			\end{align*}
			where $\norm{\alpha_{\bm{x},\bm{\xi}}}_2^2 = r^2h^4 + h^2r(2-h^2r) = 2rh^2$ by the orthogonality of $\bm{x}$ and columns of $\bm{B_x}$, and $(\alpha_{\bm{x},\bm{\xi}})_i$ stands for the $i^{th}$ entry of the vector $\alpha_{\bm{x},\bm{\xi}}$. Note that plugging $O(h^4)$ into (III) or (IV) in \eqref{Hess_Expect1} both leads to a $O(h^2)$ after integration, since with condition (D2') and our arguments in (a) of Lemma~\ref{integ_lemma} and Remark~\ref{integ_lemma_remark},
			$$O(h^4)\cdot c_{h,q}(L)h^{q-2} \int_0^{2h^{-2}} \int_{\Omega_{q-1}} \phi(r,\bm{\xi}) L'(r) \,\omega_{q-1}(d\bm{\xi}) dr \asymp O(h^2),$$
			where $\phi(r,\bm{\xi})$ is a square integrable function of $(r,\bm{\xi})$ and ``$\asymp$'' stands for the asymptotic equivalence. It shows that carrying out the Taylor's expansion of $f$ at $\bm{x}$ to the third order is sufficient in our context.\\ 
			More importantly, plugging the term $\frac{1}{6} \left(\sum_{i=1}^{q+1} (\alpha_{\bm{x},\bm{\xi}})_i \cdot \frac{\partial}{\partial x_i} \right)^3 f(\bm{x})$ into (II) and (III) also gives rise to a $O(h^2)$ term. Because $\bm{B_x}\bm{\xi} =\sum_{i=1}^q \xi_i\bm{b}_i$ and $\alpha_{\bm{x},\bm{\xi}} = -rh^2\bm{x}+ h\sqrt{r(2-h^2r)} \bm{B_x}\bm{\xi}$,
			\begin{align}
			\label{O_h2_arg1}
			\begin{split}
			&\int_{\Omega_{q-1}} \left(\sum_{i=1}^q \bm{b}_i\xi_i \right) \left(\sum_{i=1}^q \bm{b}_i\xi_i \right)^T \left[\frac{1}{6} \left(\sum_{i=1}^{q+1} (\alpha_{\bm{x},\bm{\xi}})_i \cdot \frac{\partial}{\partial x_i} \right)^3 f(\bm{x}) \right] \omega_{q-1}(d\bm{\xi}) \\
			&= \int_{\Omega_{q-1}} \left[P_r(\bm{\xi},2) h^6 + P_r(\bm{\xi},3) h^5 + P_r(\bm{\xi},4) h^4 + P_r(\bm{\xi},5) h^3 \right] \omega_{q-1}(d\bm{\xi})\\
			&= O(h^4) + \underbrace{\int_{\Omega_{q-1}} h^3P_r(\bm{\xi},5)\, \omega_{q-1}(d\bm{\xi})}_{=0},
			\end{split}
			\end{align}
			\begin{align}
			\label{O_h2_arg2}
			\begin{split}
			&\int_{\Omega_{q-1}} \left[\frac{1}{6} \left(\sum_{i=1}^{q+1} (\alpha_{\bm{x},\bm{\xi}})_i \cdot \frac{\partial}{\partial x_i} \right)^3 f(\bm{x}) \right] \omega_{q-1}(d\bm{\xi}) \\
			&= \int_{\Omega_{q-1}} \left[P_r(\bm{\xi},0)h^6 + P_r(\bm{\xi},1)h^5 + P_r(\bm{\xi},2)h^4 + P_r(\bm{\xi},3)h^3 \right] \omega_{q-1}(d\bm{\xi})\\
			&= O(h^4) + \underbrace{\int_{\Omega_{q-1}} h^3P_r(\bm{\xi},3)\, \omega_{q-1}(d\bm{\xi})}_{=0},
			\end{split}
			\end{align}
			where $P_r(\bm{\xi}, n)$ is a polynomial of elements of $\bm{\xi}=(\xi_1,...,\xi_q)$ with only degree $n$ terms, whose coefficients may involve the variable $r$. The integral 
			$\int_{\Omega_{q-1}} h^3P_r(\bm{\xi},5)\, \omega_{q-1}(d\bm{\xi})= \int_{\Omega_{q-1}} h^3P_r(\bm{\xi},3)\, \omega_{q-1}(d\bm{\xi}) = 0$
			is due to (c) of Lemma~\ref{integ_lemma} and the fact that the integrand is a linear combination of degree 5 or 3 monomials of elements of $\bm{\xi}$. With condition (D2') and our arguments in (a) of Lemma~\ref{integ_lemma} and Remark~\ref{integ_lemma_remark}, the final $O(h^4)$ terms in \eqref{O_h2_arg1} and \eqref{O_h2_arg2} both yield $O(h^2)$ terms after being plugged into (III) and (IV).\\
			We now plug the Taylor's expansion of $f(\bm{x}+\alpha_{\bm{x},\bm{\xi}})$ back into (III) and obtain that
			\begin{align}
			\label{II_integral}
			\begin{split}
			&\text{Plug in (III) in \eqref{Hess_Expect1}} \\
			&= c_{h,q}(L) h^{q-2} \int_0^{2h^{-2}} \int_{\Omega_{q-1}} \left(\sum_{i=1}^q \bm{b}_i\xi_i \right) \left(\sum_{i=1}^q \bm{b}_i\xi_i \right)^T L''(r) f(\bm{x}) r^{\frac{q}{2}} (2-h^2r)^{\frac{q}{2}} \, \omega_{q-1}(d\bm{\xi}) dr\\
			&\quad + c_{h,q}(L) h^{q-2} \int_0^{2h^{-2}} \int_{\Omega_{q-1}} \left(\sum_{i=1}^q \bm{b}_i\xi_i \right) \left(\sum_{i=1}^q \bm{b}_i\xi_i \right)^T L''(r) \nabla f(\bm{x})^T \alpha_{\bm{x},\bm{\xi}}\\
			&\hspace{80mm} \times r^{\frac{q}{2}} (2-h^2r)^{\frac{q}{2}} \, \omega_{q-1}(d\bm{\xi}) dr\\
			&\quad + c_{h,q}(L) h^{q-2} \int_0^{2h^{-2}}\int_{\Omega_{q-1}} \left(\sum_{i=1}^q \bm{b}_i\xi_i \right) \left(\sum_{i=1}^q \bm{b}_i\xi_i \right)^T L''(r)\\
			&\hspace{40mm} \times \frac{1}{2} \alpha_{\bm{x},\bm{\xi}}^T \nabla\nabla f(\bm{x}) \alpha_{\bm{x},\bm{\xi}} \cdot r^{\frac{q}{2}} (2-h^2r)^{\frac{q}{2}} \, \omega_{q-1}(d\bm{\xi}) dr +O(h^2)\\
			&= c_{h,q}(L) h^{q-2} \int_0^{2h^{-2}} \frac{\bar{\omega}_{q-1}}{q} \left(I_{q+1} -\bm{x}\bm{x}^T \right) L''(r) f(\bm{x}) r^{\frac{q}{2}} (2-h^2r)^{\frac{q}{2}} \, dr\\
			&\quad - c_{h,q}(L) h^q \int_0^{2h^{-2}} \frac{\bar{\omega}_{q-1}}{q} \left(I_{q+1} -\bm{x}\bm{x}^T \right) L''(r) \nabla f(\bm{x})^T \bm{x} r^{\frac{q}{2}+1} (2-h^2r)^{\frac{q}{2}} \, dr\\
			&\quad + c_{h,q}(L) h^{q+2} \int_0^{2h^{-2}} \int_{\Omega_{q-1}} \left(\sum_{i=1}^q \bm{b}_i\xi_i \right) \left(\sum_{i=1}^q \bm{b}_i\xi_i \right)^T L''(r) \\
			&\hspace{40mm} \times \frac{1}{2} \bm{x}^T \nabla\nabla f(\bm{x}) \bm{x} \cdot r^{\frac{q}{2}+2} (2-h^2r)^{\frac{q}{2}}\, \omega_{q-1}(d\bm{\xi}) dr\\
			& \quad - c_{h,q}(L) h^{q+1} \int_0^{2h^{-2}} \int_{\Omega_{q-1}} \left(\sum_{i=1}^q \bm{b}_i\xi_i \right) \left(\sum_{i=1}^q \bm{b}_i\xi_i \right)^T L''(r) \\
			&\hspace{40mm} \times  \bm{x}^T \nabla\nabla f(\bm{x}) (\bm{B_x}\bm{\xi}) \cdot r^{\frac{q+3}{2}} (2-h^2r)^{\frac{q+1}{2}}\, \omega_{q-1}(d\bm{\xi}) dr\\
			&\quad + c_{h,q}(L) h^q \int_0^{2h^{-2}} \int_{\Omega_{q-1}} \left(\sum_{i=1}^q \bm{b}_i\xi_i \right) \left(\sum_{i=1}^q \bm{b}_i\xi_i \right)^T L''(r) \\
			&\hspace{40mm} \times \frac{1}{2} (\bm{B_x}\bm{\xi})^T \nabla\nabla f(\bm{x}) (\bm{B_x}\bm{\xi}) \cdot r^{\frac{q}{2}+1} (2-h^2r)^{\frac{q}{2}+1}\, \omega_{q-1}(d\bm{\xi}) dr + O(h^2)\\
			&= c_{h,q}(L) h^{q-2} \int_0^{2h^{-2}} \frac{\bar{\omega}_{q-1}}{q} \left(I_{q+1} -\bm{x}\bm{x}^T \right) L''(r) f(\bm{x})\cdot r^{\frac{q}{2}} (2-h^2r)^{\frac{q}{2}} \, dr\\
			&\quad - c_{h,q}(L) h^q \int_0^{2h^{-2}} \frac{\bar{\omega}_{q-1}}{q} \left(I_{q+1} -\bm{x}\bm{x}^T \right) L''(r) \nabla f(\bm{x})^T \bm{x} \cdot r^{\frac{q}{2}+1} (2-h^2r)^{\frac{q}{2}} \, dr \quad - \quad 0\\
			&\quad + c_{h,q}(L) h^q \int_0^{2h^{-2}} \int_{\Omega_{q-1}} \left(\sum_{i=1}^q \bm{b}_i\xi_i \right) \left(\sum_{i=1}^q \bm{b}_i\xi_i \right)^T L''(r)\\
			&\hspace{30mm} \times \frac{1}{2} (\bm{B_x}\bm{\xi})^T \nabla\nabla f(\bm{x}) (\bm{B_x}\bm{\xi}) \cdot r^{\frac{q}{2}+1} (2-h^2r)^{\frac{q}{2}+1}\, \omega_{q-1}(d\bm{\xi}) dr + O(h^2),
			\end{split}
			\end{align}
			where we apply (c) of Lemma \ref{integ_lemma} and the fact that $\sum_{i=1}^q \bm{b}_i\bm{b}_i^T =I_{q+1}- \bm{x}\bm{x}^T$. We also absorb the third integral in the second equality into $O(h^2)$ to obtain the third equality, given condition (D2') and our arguments in (a) of Lemma~\ref{integ_lemma}. The ``0'' term in the third equality is due to the fact that 
			\begin{align*}
			\int_{\Omega_{q-1}} \left(\sum_{i=1}^q \bm{b}_i\xi_i \right) \left(\sum_{i=1}^q \bm{b}_i\xi_i \right)^T \bm{x}^T \nabla\nabla f(\bm{x}) (\bm{B_x}\bm{\xi}) \, \omega_{q-1}(d\bm{\xi}) = \int_{\Omega_{q-1}} P_r(\bm{\xi},3) \, \omega_{q-1}(d\bm{\xi}) = 0
			\end{align*}
			by (c) of Lemma~\ref{integ_lemma}, where the notation $P_r(\bm{\xi},3)$ is defined in \eqref{O_h2_arg2}.
			Now, we consider the inner integral inside the last integration.
			\begin{align*}
			&\int_{\Omega_{q-1}} \left(\sum_{i=1}^q \bm{b}_i\xi_i \right) \left(\sum_{i=1}^q \bm{b}_i\xi_i \right)^T \cdot \left(\sum_{i=1}^q \bm{b}_i\xi_i \right)^T \nabla\nabla f(\bm{x}) \left(\sum_{i=1}^q \bm{b}_i\xi_i \right) \omega_{q-1}(d\bm{\xi})\\
			&= \int_{\Omega_{q-1}} \left(\sum_{i=1}^q \bm{b}_i\bm{b}_i^T \xi_i^2 + \sum_{i\neq j} \bm{b}_i\bm{b}_j^T \xi_i \xi_j \right) \left(\sum_{i=1}^q \bm{b}_i^T \nabla\nabla f(\bm{x}) \bm{b}_i \xi_i^2 + \sum_{i\neq j} \bm{b}_i^T \nabla\nabla f(\bm{x}) \bm{b}_j \xi_i\xi_j \right) \omega_{q-1}(d\bm{\xi})\\
			&\stackrel{(*)}{=} \int_{\Omega_{q-1}} \Bigg[\left(\sum_{i=1}^q \bm{b}_i\bm{b}_i^T \xi_i^2 \right) \left(\sum_{i=1}^q \bm{b}_i^T \nabla\nabla f(\bm{x}) \bm{b}_i \xi_i^2 \right) \\
			&\quad + \left(\sum_{i\neq j} \bm{b}_i\bm{b}_j^T \xi_i \xi_j \right) \left(\sum_{i\neq j} \bm{b}_i^T \nabla\nabla f(\bm{x}) \bm{b}_j \xi_i\xi_j \right)\Bigg] \omega_{q-1}(d\bm{\xi})\\
			& = \int_{\Omega_{q-1}} \Bigg[\sum_{i=1}^q \bm{b}_i\bm{b}_i^T \cdot \bm{b}_i^T \nabla\nabla f(\bm{x})\bm{b}_i \xi_i^4 + \sum_{i\neq j} \bm{b}_i\bm{b}_i^T\cdot \bm{b}_j^T \nabla\nabla f(\bm{x})\bm{b}_j \xi_i^2\xi_j^2 \\
			&\quad + 2\sum_{i\neq j} \bm{b}_i\bm{b}_j^T \cdot \bm{b}_i^T\nabla\nabla f(\bm{x}) \bm{b}_j \xi_i^2\xi_j^2 \Bigg] \omega_{q-1}(d\bm{\xi}),
			\end{align*}
			where the cross product terms vanish after integration by Lemma \ref{integ_lemma} and the factor 2 in front of the last summation emerges because any fixed $(i,j)$ term ($i\neq j$) in the first factor of the second product in equality $(*)$ can be matched up with both $(i,j)$ and $(j,i)$ terms in the second factor to yield a summand. Using Lemma \ref{integ_lemma} and the facts that $\sum_{i=1}^q \bm{b}_i\bm{b}_i^T=I_{q+1}-\bm{x}\bm{x}^T$ and 
			$$\sum_{i=1}^q \bm{b}_i^T\nabla\nabla f(\bm{x}) \bm{b}_i= \text{tr}\left[\nabla\nabla f(\bm{x})\sum_{i=1}^q \bm{b}_i\bm{b}_i^T \right] = \Delta f(\bm{x}) -\bm{x}^T \nabla\nabla f(\bm{x}) \bm{x},$$ 
			the preceding display continues as
			\begin{align*}
			&\int_{\Omega_{q-1}} \left(\sum_{i=1}^q \bm{b}_i\xi_i \right) \left(\sum_{i=1}^q \bm{b}_i\xi_i \right)^T \cdot \left(\sum_{i=1}^q \bm{b}_i\xi_i \right)^T \nabla\nabla f(\bm{x}) \left(\sum_{i=1}^q \bm{b}_i\xi_i \right) \omega_{q-1}(d\bm{\xi})\\
			&= \frac{3\bar{\omega}_{q-1}}{q(q+2)} \left(\sum_{i=1}^q \bm{b}_i\bm{b}_i^T \cdot \bm{b}_i^T \nabla\nabla f(\bm{x})\bm{b}_i\right) + \frac{\bar{\omega}_{q-1}}{q(q+2)} \left(\sum_{i\neq j} \bm{b}_i\bm{b}_i^T \cdot \bm{b}_j^T \nabla\nabla f(\bm{x})\bm{b}_j\right)\\ 
			&\quad + \frac{2\bar{\omega}_{q-1}}{q(q+2)} \left(\sum_{i\neq j} \bm{b}_i\bm{b}_j^T \cdot \bm{b}_i^T \nabla\nabla f(\bm{x})\bm{b}_j\right)\\
			&= \frac{\bar{\omega}_{q-1}}{q(q+2)} \left[\sum_{i=1}^q \bm{b}_i\bm{b}_i^T \left(\sum_{j=1}^q\bm{b}_j^T \nabla\nabla f(\bm{x})\bm{b}_j \right)\right] \\
			&\quad + \frac{2\bar{\omega}_{q-1}}{q(q+2)} \left[\sum_{i=1}^q \bm{b}_i\left(\bm{b}_i^T \nabla\nabla f(\bm{x}) \bm{b}_i \right) \bm{b}_i^T + \sum_{i\neq j} \bm{b}_i\left(\bm{b}_i^T \nabla\nabla f(\bm{x})\bm{b}_j \right)\bm{b}_j^T \right]\\
			&= \frac{\bar{\omega}_{q-1}}{q(q+2)} \left(I_{q+1}-\bm{x}\bm{x}^T \right) \left[\Delta f(\bm{x}) -\bm{x}^T \nabla\nabla f(\bm{x}) \bm{x} \right] \\
			&\quad + \frac{2\bar{\omega}_{q-1}}{q(q+2)} \left(I_{q+1}-\bm{x}\bm{x}^T \right) \nabla\nabla f(\bm{x}) \left(I_{q+1}-\bm{x}\bm{x}^T \right).
			\end{align*}
			Plugging this result back into \eqref{II_integral} and conduct some integration by parts, we obtain that
			\begin{align*}
			&\text{Plug in (III) in \eqref{Hess_Expect1}} \\
			&= c_{h,q}(L) h^{q-2} \int_0^{2h^{-2}} \frac{\bar{\omega}_{q-1}}{q} \left(I_{q+1} -\bm{x}\bm{x}^T \right) f(\bm{x}) L''(r)\cdot r^{\frac{q}{2}} (2-h^2r)^{\frac{q}{2}} \, dr\\
			&\quad - c_{h,q}(L) h^q \int_0^{2h^{-2}} \frac{\bar{\omega}_{q-1}}{q} \left(I_{q+1} -\bm{x}\bm{x}^T \right) \nabla f(\bm{x})^T \bm{x} L''(r) \cdot r^{\frac{q}{2}+1} (2-h^2r)^{\frac{q}{2}} \, dr\\
			&\quad + c_{h,q}(L) h^q \cdot\frac{\bar{\omega}_{q-1}}{2q(q+2)} \int_0^{2h^{-2}} \left(I_{q+1}-\bm{x}\bm{x}^T \right) \left[\Delta f(\bm{x}) -\bm{x}^T \nabla\nabla f(\bm{x}) \bm{x} \right] L''(r) r^{\frac{q}{2}+1}(2-h^2r)^{\frac{q}{2}+1} dr\\
			&\quad + c_{h,q}(L) h^q \cdot\frac{\bar{\omega}_{q-1}}{q(q+2)} \int_0^{2h^{-2}} \left(I_{q+1}-\bm{x}\bm{x}^T \right) \nabla\nabla f(\bm{x}) \left(I_{q+1}-\bm{x}\bm{x}^T \right) L''(r) r^{\frac{q}{2}+1} (2-h^2r)^{\frac{q}{2}+1} dr\\
			&\quad +O(h^2)\\
			&= c_{h,q}(L) h^{q-2} \cdot \frac{\bar{\omega}_{q-1}}{q} \left(I_{q+1} -\bm{x}\bm{x}^T \right) L'(r) f(\bm{x})\cdot r^{\frac{q}{2}} (2-h^2r)^{\frac{q}{2}} \Big|_0^{2h^{-2}}\\
			&\quad - c_{h,q}(L) h^{q-2} \cdot \frac{\bar{\omega}_{q-1}}{2}\int_0^{2h^{-2}} \left(I_{q+1} -\bm{x}\bm{x}^T \right) f(\bm{x}) L'(r)\left[r^{\frac{q}{2}-1}(2-h^2r)^{\frac{q}{2}} -h^2 r^{\frac{q}{2}}(2-h^2r)^{\frac{q}{2}-1} \right]dr\\
			&\quad - c_{h,q}(L) h^q \cdot \frac{\bar{\omega}_{q-1}}{q} \left(I_{q+1} -\bm{x}\bm{x}^T \right) \nabla f(\bm{x})^T \bm{x} L'(r) \cdot r^{\frac{q}{2}+1} (2-h^2r)^{\frac{q}{2}} \Big|_0^{2h^{-2}}\\
			&\quad + c_{h,q}(L) h^q \int_0^{2h^{-2}} \frac{\bar{\omega}_{q-1}}{q} \left(I_{q+1} -\bm{x}\bm{x}^T \right) \nabla f(\bm{x})^T \bm{x} L'(r)\\
			&\hspace{50mm} \times \left[\left(\frac{q+2}{2} \right) r^{\frac{q}{2}} (2-h^2r)^{\frac{q}{2}} - \frac{qh^2}{2} r^{\frac{q}{2}+1}(2-h^2r)^{\frac{q}{2}-1} \right] dr\\
			&\quad + c_{h,q}(L) h^q \cdot \frac{\bar{\omega}_{q-1}}{2q(q+2)} \left(I_{q+1}-\bm{x}\bm{x}^T \right) \left[\Delta f(\bm{x}) -\bm{x}^T \nabla\nabla f(\bm{x}) \bm{x} \right] L'(r) r^{\frac{q}{2}+1}(2-h^2r)^{\frac{q}{2}+1} \Big|_0^{2h^{-2}}\\
			&\quad - c_{h,q}(L) h^q \cdot \frac{\bar{\omega}_{q-1}}{4q} \int_0^{2h^{-2}} \left(I_{q+1}-\bm{x}\bm{x}^T \right) \left[\Delta f(\bm{x}) -\bm{x}^T \nabla\nabla f(\bm{x}) \bm{x} \right] L'(r)\\
			&\hspace{50mm} \times \left[r^{\frac{q}{2}}(2-h^2r)^{\frac{q}{2}+1} -h^2 r^{\frac{q}{2}+1} (2-h^2r)^{\frac{q}{2}} \right] dr\\
			&\quad + c_{h,q}(L) h^q \cdot \frac{\bar{\omega}_{q-1}}{q(q+2)} \left(I_{q+1}-\bm{x}\bm{x}^T \right) \nabla\nabla f(\bm{x}) \left(I_{q+1}-\bm{x}\bm{x}^T \right) L'(r) r^{\frac{q}{2}+1} (2-h^2r)^{\frac{q}{2}+1} \Big|_0^{2h^{-2}}\\
			&\quad - c_{h,q}(L) h^q \cdot \frac{\bar{\omega}_{q-1}}{2q} \int_0^{2h^{-2}} \left(I_{q+1}-\bm{x}\bm{x}^T \right) \nabla\nabla f(\bm{x}) \left(I_{q+1}-\bm{x}\bm{x}^T \right) L'(r)\\
			&\hspace{50mm} \times \left[r^{\frac{q}{2}}(2-h^2r)^{\frac{q}{2}+1} -h^2 r^{\frac{q}{2}+1} (2-h^2r)^{\frac{q}{2}} \right] dr \\
			&\quad +O(h^2)\\
			&= - c_{h,q}(L) h^{q-2} \bar{\omega}_{q-1} \int_0^{2h^{-2}} \left(I_{q+1}-\bm{x}\bm{x}^T \right) f(\bm{x}) L'(r) r^{\frac{q}{2}-1}(2-h^2r)^{\frac{q}{2}-1}(1-h^2r) dr\\
			&\quad + c_{h,q}(L) h^q \cdot \frac{\bar{\omega}_{q-1}}{2} \int_0^{2h^{-2}} \left(I_{q+1}-\bm{x}\bm{x}^T \right) \nabla f(\bm{x})^T \bm{x} \cdot L'(r) r^{\frac{q}{2}} (2-h^2r)^{\frac{q}{2}} dr\\
			&\quad + c_{h,q}(L) h^q \cdot \frac{\bar{\omega}_{q-1}}{q} \int_0^{2h^{-2}} \left(I_{q+1}-\bm{x}\bm{x}^T \right) \nabla f(\bm{x})^T \bm{x} \cdot L'(r) r^{\frac{q}{2}} (2-h^2r)^{\frac{q}{2}} dr\\
			&\quad - c_{h,q}(L) h^q \cdot \frac{\bar{\omega}_{q-1}}{4q} \int_0^{2h^{-2}} \left(I_{q+1}-\bm{x}\bm{x}^T \right) \left[\Delta f(\bm{x}) -\bm{x}^T \nabla\nabla f(\bm{x}) \bm{x} \right] L'(r) r^{\frac{q}{2}}(2-h^2r)^{\frac{q}{2}+1} dr\\
			&\quad - c_{h,q}(L) h^q \cdot \frac{\bar{\omega}_{q-1}}{2q} \int_0^{2h^{-2}} \left(I_{q+1}-\bm{x}\bm{x}^T \right) \nabla\nabla f(\bm{x}) \left(I_{q+1}-\bm{x}\bm{x}^T \right) L'(r) r^{\frac{q}{2}}(2-h^2r)^{\frac{q}{2}+1} dr \\
			&\quad +O(h^2),
			\end{align*}
			where we use the fact that 
			\begin{equation}
			\label{mid_asym_O_h2}
			c_{h,q}(L)h^{q+2} \int_0^{2h^{-2}} L'(r) r^j (2-h^2r)^k dr = O(h^2)
			\end{equation}
			for any $k,j>0$ via Remark~\ref{integ_lemma_remark}. With extra integration by parts on the third and fifth term in the preceding display, we obtain that
			\begin{align}
			\label{Hess_II_expect}
			\begin{split}
			&\text{Plug in (III) in \eqref{Hess_Expect1}} \\
			&= - c_{h,q}(L) h^{q-2} \bar{\omega}_{q-1} \int_0^{2h^{-2}} \left(I_{q+1}-\bm{x}\bm{x}^T \right) f(\bm{x}) L'(r) r^{\frac{q}{2}-1}(2-h^2r)^{\frac{q}{2}-1}(1-h^2r) dr\\
			&\quad + c_{h,q}(L) h^q \cdot \frac{\bar{\omega}_{q-1}}{2} \int_0^{2h^{-2}} \left(I_{q+1}-\bm{x}\bm{x}^T \right) \nabla f(\bm{x})^T \bm{x}L'(r) r^{\frac{q}{2}} (2-h^2r)^{\frac{q}{2}} dr\\
			&\quad - c_{h,q}(L) h^q \cdot \frac{\bar{\omega}_{q-1}}{2} \int_0^{2h^{-2}} \left(I_{q+1}-\bm{x}\bm{x}^T \right) \nabla f(\bm{x})^T \bm{x} L(r) r^{\frac{q}{2}-1} (2-h^2r)^{\frac{q}{2}} dr\\
			&\quad - c_{h,q}(L) h^q \cdot \frac{\bar{\omega}_{q-1}}{4q} \int_0^{2h^{-2}} \left(I_{q+1}-\bm{x}\bm{x}^T \right) \left[\Delta f(\bm{x}) -\bm{x}^T \nabla\nabla f(\bm{x}) \bm{x} \right] L'(r) r^{\frac{q}{2}}(2-h^2r)^{\frac{q}{2}+1} dr\\
			&\quad + c_{h,q}(L) h^q \cdot \frac{\bar{\omega}_{q-1}}{4} \int_0^{2h^{-2}} \left(I_{q+1}-\bm{x}\bm{x}^T \right) \nabla\nabla f(\bm{x}) \left(I_{q+1}-\bm{x}\bm{x}^T \right) L(r) r^{\frac{q}{2}-1}(2-h^2r)^{\frac{q}{2}+1} dr \\
			&\quad +O(h^2).
			\end{split}
			\end{align}
			We now plug the Taylor's expansion of $f(\bm{x}+\alpha_{\bm{x},\bm{\xi}})$ at $\bm{x}$ back into (IV) in \eqref{Hess_Expect1} and deduce that
			\begin{align}
			\label{Hess_III_expect}
			\begin{split}
			&\text{Plug in (IV) in \eqref{Hess_Expect1}} \\
			&= c_{h,q}(L) h^{q-2} \bar{\omega}_{q-1} \int_0^{2h^{-2}} I_{q+1} f(\bm{x}) L'(r)\cdot r^{\frac{q}{2}-1}(1-h^2r) (2-h^2r)^{\frac{q}{2}-1} dr\\
			&\quad + c_{h,q}(L) h^{q-2} \int_0^{2h^{-2}} \int_{\Omega_{q-1}} I_{q+1} \nabla f(\bm{x})^T \alpha_{\bm{x},\bm{\xi}} L'(r) \cdot r^{\frac{q}{2}-1}(1-h^2r) (2-h^2r)^{\frac{q}{2}-1} \,\omega_{q-1}(d\bm{\xi}) dr\\
			&\quad + c_{h,q}(L) h^{q-2} \int_0^{2h^{-2}} \int_{\Omega_{q-1}} I_{q+1}\cdot \frac{1}{2} \alpha_{\bm{x},\bm{\xi}}^T \nabla\nabla f(\bm{x}) \alpha_{\bm{x},\bm{\xi}} r^{\frac{q}{2}-1}(1-h^2r) (2-h^2r)^{\frac{q}{2}-1} \,\omega_{q-1}(d\bm{\xi}) dr \\
			&\quad +O(h^2)\\
			&= c_{h,q}(L) h^{q-2} \bar{\omega}_{q-1} \int_0^{2h^{-2}} I_{q+1} f(\bm{x}) L'(r)\cdot r^{\frac{q}{2}-1}(1-h^2r) (2-h^2r)^{\frac{q}{2}-1} dr\\
			&\quad - c_{h,q}(L) h^q \bar{\omega}_{q-1} \int_0^{2h^{-2}} I_{q+1} \nabla f(\bm{x})^T \bm{x} L'(r) r^{\frac{q}{2}}(1-h^2r)(2-h^2r)^{\frac{q}{2}-1} dr\\
			& \quad +c_{h,q}(L) h^{q-1} \int_0^{2h^{-2}} \underbrace{\int_{\Omega_{q-1}} I_{q+1} \cdot \nabla f(\bm{x})^T (\bm{B_x}\bm{\xi}) r^{\frac{q-1}{2}} (2-h^2r)^{\frac{q-1}{2}} (1-h^2r) \,\omega_{q-1}(d\bm{\xi})}_{=0} dr\\
			&\quad + c_{h,q}(L) h^{q-2} \int_0^{2h^{-2}} \int_{\Omega_{q-1}} I_{q+1}\cdot \frac{1}{2} (\bm{B_x}\bm{\xi})^T \nabla\nabla f(\bm{x}) (\bm{B_x}\bm{\xi}) L'(r) r^{\frac{q}{2}} (1-h^2r) (2-h^2r)^{\frac{q}{2}} \\
			&\quad +O(h^2)\\
			&= c_{h,q}(L) h^{q-2} \bar{\omega}_{q-1} \int_0^{2h^{-2}} I_{q+1} f(\bm{x}) L'(r)\cdot r^{\frac{q}{2}-1}(1-h^2r) (2-h^2r)^{\frac{q}{2}-1} dr\\
			&\quad - c_{h,q}(L) h^q \bar{\omega}_{q-1} \int_0^{2h^{-2}} I_{q+1} \nabla f(\bm{x})^T \bm{x} L'(r) r^{\frac{q}{2}}(1-h^2r)(2-h^2r)^{\frac{q}{2}-1} dr\\
			&\quad + c_{h,q}(L) h^q \cdot \frac{\bar{\omega}_{q-1}}{2q} \int_0^{2h^{-2}} I_{q+1} \left[\Delta f(\bm{x}) -\bm{x}^T\nabla\nabla f(\bm{x}) \bm{x} \right] L'(r) r^{\frac{q}{2}}(1-h^2r)(2-h^2r)^{\frac{q}{2}} dr \\
			& \quad + O(h^2),
			\end{split}
			\end{align}
			where we expand $\alpha_{\bm{x},\bm{\xi}}=-rh^2\bm{x}+h\sqrt{r(2-h^2r)} \bm{B_x}\bm{\xi}$, absorb $O(h^2)$ terms via \eqref{mid_asym_O_h2}, make use of (c) in Lemma \ref{integ_lemma}, and leverage our argument in \eqref{eq_III1}. Combining \eqref{Hess_Expect1}, \eqref{Hess_II_expect}, and \eqref{Hess_III_expect}, we conclude that
			\begin{align*}
			&\mathbb{E}\left[\mathcal{H} \hat{f}_h(\bm x) \right] \\
			&= \left(I_{q+1}-\bm{x}\bm{x}^T \right) \mathbb{E}\left[{\mathcal{A}}\hat f_h (\bm x)\right] \left(I_{q+1}-\bm{x}\bm{x}^T \right)\\
			&= \left(I_{q+1}-\bm{x}\bm{x}^T \right) \cdot \text{(III)}\cdot \left(I_{q+1}-\bm{x}\bm{x}^T \right) + \left(I_{q+1}-\bm{x}\bm{x}^T \right) \cdot \text{(IV)}\cdot \left(I_{q+1}-\bm{x}\bm{x}^T \right)\\
			&= - c_{h,q}(L) h^{q-2} \bar{\omega}_{q-1} \int_0^{2h^{-2}} \left(I_{q+1}-\bm{x}\bm{x}^T \right) f(\bm{x}) L'(r) r^{\frac{q}{2}-1}(2-h^2r)^{\frac{q}{2}-1}(1-h^2r) dr\\
			&\quad + c_{h,q}(L) h^q \cdot \frac{\bar{\omega}_{q-1}}{2} \int_0^{2h^{-2}} \left(I_{q+1}-\bm{x}\bm{x}^T \right) \nabla f(\bm{x})^T \bm{x}L'(r) r^{\frac{q}{2}} (2-h^2r)^{\frac{q}{2}} dr\\
			&\quad - c_{h,q}(L) h^q \cdot \frac{\bar{\omega}_{q-1}}{2} \int_0^{2h^{-2}} \left(I_{q+1}-\bm{x}\bm{x}^T \right) \nabla f(\bm{x})^T \bm{x} L(r) r^{\frac{q}{2}-1} (2-h^2r)^{\frac{q}{2}} dr\\
			&\quad - c_{h,q}(L) h^q \cdot \frac{\bar{\omega}_{q-1}}{4q} \int_0^{2h^{-2}} \left(I_{q+1}-\bm{x}\bm{x}^T \right) \left[\Delta f(\bm{x}) -\bm{x}^T \nabla\nabla f(\bm{x}) \bm{x} \right] L'(r) r^{\frac{q}{2}}(2-h^2r)^{\frac{q}{2}+1} dr\\
			&\quad + c_{h,q}(L) h^q \cdot \frac{\bar{\omega}_{q-1}}{4} \int_0^{2h^{-2}} \left(I_{q+1}-\bm{x}\bm{x}^T \right) \nabla\nabla f(\bm{x}) \left(I_{q+1}-\bm{x}\bm{x}^T \right) L(r) r^{\frac{q}{2}-1}(2-h^2r)^{\frac{q}{2}+1} dr\\
			&\quad + c_{h,q}(L) h^{q-2} \cdot \bar{\omega}_{q-1} \int_0^{2h^{-2}} \left(I_{q+1}-\bm{x}\bm{x}^T \right) f(\bm{x}) L'(r)\cdot r^{\frac{q}{2}-1}(1-h^2r) (2-h^2r)^{\frac{q}{2}-1} dr\\
			&\quad - c_{h,q}(L) h^q \cdot \bar{\omega}_{q-1} \int_0^{2h^{-2}} \left(I_{q+1}-\bm{x}\bm{x}^T \right) \nabla f(\bm{x})^T \bm{x} L'(r) r^{\frac{q}{2}}(1-h^2r)(2-h^2r)^{\frac{q}{2}-1} dr\\
			&\quad + c_{h,q}(L) h^q \cdot \frac{\bar{\omega}_{q-1}}{2q} \int_0^{2h^{-2}} \left(I_{q+1}-\bm{x}\bm{x}^T \right) \left[\Delta f(\bm{x}) -\bm{x}^T\nabla\nabla f(\bm{x}) \bm{x} \right] \\
			&\hspace{60mm} \times L'(r) r^{\frac{q}{2}}(1-h^2r)(2-h^2r)^{\frac{q}{2}} dr \\
			&\quad + O(h^2)\\
			&\stackrel{(**)}{=} 0+c_{h,q}(L) h^q \cdot \frac{\bar{\omega}_{q-1}}{2} \int_0^{2h^{-2}} \left(I_{q+1}-\bm{x}\bm{x}^T \right) \nabla f(\bm{x})^T \bm{x}\\
			&\hspace{50mm} \times L'(r) r^{\frac{q}{2}} (2-h^2r)^{\frac{q}{2}-1}(2-h^2r-2+2rh^2) dr\\
			&\quad - \lambda_{h,q}(L)^{-1} \left(I_{q+1}-\bm{x}\bm{x}^T \right) \nabla f(\bm{x})^T \bm{x} \cdot \frac{\bar{\omega}_{q-1}}{2} \int_0^{2h^{-2}} L(r) r^{\frac{q}{2}-1} (2-h^2r)^{\frac{q}{2}} dr\\
			&\quad + c_{h,q}(L) h^q \cdot \frac{\bar{\omega}_{q-1}}{4q} \left(I_{q+1}-\bm{x}\bm{x}^T \right)\left[\Delta f(\bm{x}) -\bm{x}^T \nabla\nabla f(\bm{x}) \bm{x} \right]\\
			&\hspace{50mm} \times \int_0^{2h^{-2}}  L'(r) r^{\frac{q}{2}}(2-h^2r)^{\frac{q}{2}}(2-2h^2r-2+h^2r)dr\\
			&\quad + \lambda_{h,q}(L)^{-1} \cdot \frac{\bar{\omega}_{q-1}}{4} \int_0^{2h^{-2}} \left(I_{q+1}-\bm{x}\bm{x}^T \right)\nabla\nabla f(\bm{x})\left(I_{q+1}-\bm{x}\bm{x}^T \right)\\
			&\hspace{50mm} \times L(r)r^{\frac{q}{2}-1}(2-h^2r)^{\frac{q}{2}+1} dr \\
			&\quad + O(h^2)\\
			& \stackrel{(***)}{=} O(h^2)- \left(I_{q+1}-\bm{x}\bm{x}^T \right) \nabla f(\bm{x})^T \bm{x} \cdot \frac{\bar{\omega}_{q-1}}{2} \lambda_q(L)^{-1} \int_0^{\infty} L(r)r^{\frac{q}{2}-1} 2^{\frac{q}{2}} dr + O(h^2)\\
			& \quad + \lambda_q(L)^{-1} \cdot \frac{\bar{\omega}_{q-1}}{4} \left(I_{q+1}-\bm{x}\bm{x}^T \right)\nabla\nabla f(\bm{x})\left(I_{q+1}-\bm{x}\bm{x}^T \right) \int_0^{\infty} L(r) r^{\frac{q}{2}-1} 2^{\frac{q}{2}+1} dr +O(h^2)\\
			& = - \left(I_{q+1}-\bm{x}\bm{x}^T \right) \nabla f(\bm{x})^T \bm{x} + \left(I_{q+1}-\bm{x}\bm{x}^T \right)\nabla\nabla f(\bm{x})\left(I_{q+1}-\bm{x}\bm{x}^T \right) + O(h^2),
			\end{align*}
			where the first term matches up with the sixth term, the second term with the seventh term, the fourth term with the eighth term, and \eqref{asym_norm_const} is applied to the rest terms when $h\to 0$ in $(**)$. In addition, we leverage the asymptotic rates \eqref{mid_asym_O_h2} and \eqref{kernel_asymp_rate} as well as recall that $\lambda_q(L)=2^{\frac{q}{2}-1} \bar{\omega}_{q-1} \int_0^{\infty} L(r) r^{\frac{q}{2}-1} dr$ from (a) of Lemma~\ref{integ_lemma} in $(***)$. {\bf Result 3} thus follows. It implies that the bias $\mathbb{E}\left[\mathcal{H} \hat{f}_h(\bm x) \right] - \mathcal{H} f(\bm x)$ is of the rate $O(h^2)$ and the Riemannian Hessian estimator is asymptotically unbiased.
			
			Now, we proceed to bound 
			$$\mathcal{H} \hat{f}_h(\bm x) -\mathbb{E}\left[\mathcal{H} \hat{f}_h(\bm x) \right] = \left(I_{q+1}-\bm{x}\bm{x}^T \right) \left[{\mathcal{A}}\hat f_h (\bm x)-\mathbb{E}\left({\mathcal{A}}\hat f_h (\bm x)\right) \right] \left(I_{q+1}-\bm{x}\bm{x}^T \right).$$
			\noindent $\bullet$ {\bf Result 4}. The covariance matrix $\text{Cov}\left[\mathtt{vec}\left(\mathcal{H}\hat f_h (\bm x)\right) \right]$ has the following asymptotic rate as $h\to 0$ and $nh^{q+4} \to \infty$:
			$$\text{Cov}\left[\mathtt{vec}\left(\mathcal{H}\hat f_h (\bm x)\right) \right] = O\left(\frac{1}{nh^{q+4}} \right),$$
			where we define the matrix $\mathtt{vec}$ operator, which converts a matrix into a vector by stacking the columns. That is, given a matrix $A\in \mathbb{R}^{m\times n}$, $\mathtt{vec}(A)$ is a vector of length $mn$.
			
			\noindent \emph{Derivation of Result 4}. We first calculate the covariance matrix of $\mathtt{vec}\left({\mathcal{A}}\hat f_h (\bm x)\right)$ as
			\begin{align*}
			&\text{Cov}\left[\mathtt{vec}\left({\mathcal{A}}\hat f_h (\bm x)\right) \right]\\
			&= \frac{c_{h,q}(L)^2}{nh^8} \cdot \mathbb{E}\left[\mathtt{vec}(\bm{X}_1\bm{X}_1^T) \cdot \mathtt{vec}(\bm{X}_1\bm{X}_1^T)^T L''\left(\frac{1-\bm{x}^T\bm{X}_i}{h^2}\right)^2\right]\\
			&\quad + \frac{2c_{h,q}(L)^2}{nh^6} \cdot \mathbb{E}\left[\mathtt{vec}(\bm{X}_i\bm{X}_i^T)\cdot \mathtt{vec}(\bm{x}^T\bm{X}_1 I_{q+1})^T L''\left(\frac{1-\bm{x}^T\bm{X}_i}{h^2}\right) L'\left(\frac{1-\bm{x}^T\bm{X}_i}{h^2}\right) \right]\\
			&\quad + \frac{c_{h,q}(L)^2}{nh^4} \cdot \mathbb{E}\left[\mathtt{vec}(\bm{x}^T\bm{X}_1 I_{q+1}) \cdot \mathtt{vec}(\bm{x}^T\bm{X}_1 I_{q+1})^T L'\left(\frac{1-\bm{x}^T\bm{X}_i}{h^2}\right)^2\right]\\
			&\quad - \frac{1}{n}\cdot \mathbb{E}\left[\mathtt{vec}\left({\mathcal{A}}\hat f_h (\bm x)\right) \right] \mathbb{E}\left[\mathtt{vec}\left({\mathcal{A}}\hat f_h (\bm x)\right) \right]^T\\
			&= \frac{c_{h,q}(L)^2}{nh^8} \int_{\Omega_q} \mathtt{vec}(\bm{y}\bm{y}^T)\cdot \mathtt{vec}(\bm{y}\bm{y}^T)^T L''\left(\frac{1-\bm{x}^T\bm{y}}{h^2}\right)^2 f(\bm{y})\, \omega_q(d\bm{y})\\
			&\quad + \frac{2c_{h,q}(L)^2}{nh^6} \int_{\Omega_q} \mathtt{vec}(\bm{y}\bm{y}^T)\cdot \mathtt{vec}(\bm{x}^T\bm{y}I_{q+1})^TL''\left(\frac{1-\bm{x}^T\bm{y}}{h^2}\right) L'\left(\frac{1-\bm{x}^T\bm{y}}{h^2}\right) f(\bm{y}) \, \omega_q(d\bm{y})\\
			&\quad + \frac{c_{h,q}(L)^2}{nh^4} \int_{\Omega_q} \mathtt{vec}(\bm{x}^T\bm{y}I_{q+1}) \cdot \mathtt{vec}(\bm{x}^T\bm{y}I_{q+1})^TL'\left(\frac{1-\bm{x}^T\bm{y}}{h^2}\right)^2 f(\bm{y}) \, \omega_q(d\bm{y})\\
			&\quad - \frac{1}{n}\cdot \mathbb{E}\left[\mathtt{vec}\left({\mathcal{A}}\hat f_h (\bm x)\right) \right] \mathbb{E}\left[\mathtt{vec}\left({\mathcal{A}}\hat f_h (\bm x)\right) \right]^T\\
			&= \frac{c_{h,q}(L)^2 h^q}{nh^8} \int_0^{2h^{-2}} \int_{\Omega_{q-1}} \mathtt{vec}\left[(\bm{x}+ \alpha_{\bm{x},\bm{\xi}}) (\bm{x}+ \alpha_{\bm{x},\bm{\xi}})^T\right] \mathtt{vec}\left[(\bm{x}+ \alpha_{\bm{x},\bm{\xi}}) (\bm{x}+ \alpha_{\bm{x},\bm{\xi}})^T\right]^T\\ 
			&\hspace{50mm} \times L''(r)^2 f(\bm{x}+\alpha_{\bm{x},\bm{\xi}})\cdot r^{\frac{q}{2}-1} (2-h^2r)^{\frac{q}{2}-1}\, \omega_{q-1}(d\bm{\xi}) dr\\
			&\quad + \frac{2c_{h,q}(L)^2 h^q}{nh^6} \int_0^{2h^{-2}} \int_{\Omega_{q-1}} \mathtt{vec}\left[(\bm{x}+ \alpha_{\bm{x},\bm{\xi}}) (\bm{x}+ \alpha_{\bm{x},\bm{\xi}})^T\right] \mathtt{vec}\left[I_{q+1}(1-rh^2)\right]^T\\ 
			&\hspace{50mm} \times L''(r) L'(r) f(\bm{x}+\alpha_{\bm{x},\bm{\xi}}) \cdot r^{\frac{q}{2}-1} (2-h^2r)^{\frac{q}{2}-1}\, \omega_{q-1}(d\bm{\xi}) dr\\
			& \quad+ \frac{c_{h,q}(L)^2 h^q}{nh^4} \int_0^{2h^{-2}} \int_{\Omega_{q-1}} \mathtt{vec}\left[I_{q+1}(1-rh^2)\right] \mathtt{vec}\left[I_{q+1}(1-rh^2)\right]^T\\ 
			&\hspace{50mm} \times L'(r)^2 f(\bm{x}+\alpha_{\bm{x},\bm{\xi}}) \cdot r^{\frac{q}{2}-1} (2-h^2r)^{\frac{q}{2}-1}\, \omega_{q-1}(d\bm{\xi}) dr\\
			&\quad - \frac{1}{n}\cdot \mathbb{E}\left[\mathtt{vec}\left({\mathcal{A}}\hat f_h (\bm x)\right) \right] \mathbb{E}\left[\mathtt{vec}\left({\mathcal{A}}\hat f_h (\bm x)\right) \right]^T,
			\end{align*}
			where $\alpha_{\bm{x},\bm{\xi}} = -rh^2\bm{x} + h\sqrt{r(2-h^2r)} \bm{B_x}\bm{\xi}$. Note that by condition (D1), the first-order Taylor's expansion of $f$ at $\bm{x}\in \Omega_q$ is
			$$f(\bm{x}+\alpha_{\bm{x},\bm{\xi}}) = f(\bm{x}) + O(||\alpha_{\bm{x},\bm{\xi}}||_2) =f(\bm{x}) +O(h).$$
			In addition,
			\begin{align*}
			(\bm{x}+\alpha_{\bm{x},\bm{\xi}})(\bm{x}+\alpha_{\bm{x},\bm{\xi}})^T &= (1-rh^2)^2 \bm{x}\bm{x}^T +h(1-r^2h) \sqrt{r(2-h^2r)}\left[\bm{x}(\bm{B_x}\bm{\xi})^T + (\bm{B_x}) \bm{x}^T \right]\\
			&\quad + h^2r(2-h^2r) (\bm{B_x}\bm{\xi}) (\bm{B_x}\bm{\xi})^T.
			\end{align*} 
			Moreover, when the congruence operation $\left(I_{q+1}-\bm{x}\bm{x}^T \right) {\mathcal{A}}\hat f_h (\bm x)\left(I_{q+1}-\bm{x}\bm{x}^T \right)$ is introduced, it will be applied inside the $\mathtt{vec}$ operation. Thus, after applying the congruence operation,
			\begin{align*}
			&\mathtt{vec}\left[\left(I_{q+1}-\bm{x}\bm{x}^T \right)(\bm{x}+ \alpha_{\bm{x},\bm{\xi}}) (\bm{x}+ \alpha_{\bm{x},\bm{\xi}})^T \left(I_{q+1}-\bm{x}\bm{x}^T \right)\right]\\
			&\quad \times \mathtt{vec}\left[\left(I_{q+1}-\bm{x}\bm{x}^T \right)(\bm{x}+ \alpha_{\bm{x},\bm{\xi}}) (\bm{x}+ \alpha_{\bm{x},\bm{\xi}})^T \left(I_{q+1}-\bm{x}\bm{x}^T \right)\right]^T = O(h^4)
			\end{align*}
			and
			\begin{align*}
			& \mathtt{vec}\left[\left(I_{q+1}-\bm{x}\bm{x}^T \right)(\bm{x}+ \alpha_{\bm{x},\bm{\xi}}) (\bm{x}+ \alpha_{\bm{x},\bm{\xi}})^T \left(I_{q+1}-\bm{x}\bm{x}^T \right)\right]\\
			&\quad \times \mathtt{vec}\left[\left(I_{q+1}-\bm{x}\bm{x}^T \right)(1-rh^2)\right]^T =O(h^2).
			\end{align*}
			Together with condition (D2'), \eqref{asym_norm_const}, \eqref{mid_asym_O_h2}, and the bias bound $\mathbb{E}\left[\mathtt{vec}\left(\mathcal{H} \hat{f}_h(\bm x) \right) \right] = \mathcal{H} f(\bm x)+ O(h^2)$, we conclude that 
			$$\text{Cov}\left[\mathtt{vec}\left(\mathcal{H} \hat{f}_h(\bm x) \right) \right] = O\left(\frac{1}{nh^{q+4}} \right).$$
			{\bf Result 4} is thus proved. Finally, by the central limit theorem, 
			\begin{align*}
			&\mathtt{vec}\left\{\mathcal{H} \hat{f}_h(\bm x) - \mathbb{E}\left[\mathcal{H} \hat{f}_h(\bm x) \right]\right\} \\
			&= \text{Cov}\left[\mathtt{vec}\left(\mathcal{H} \hat{f}_h(\bm x) \right) \right]^{\frac{1}{2}} \text{Cov}\left[\mathtt{vec}\left(\mathcal{H} \hat{f}_h(\bm x) \right) \right]^{-\frac{1}{2}} \mathtt{vec}\left\{\mathcal{H} \hat{f}_h(\bm x) - \mathbb{E}\left[\mathcal{H} \hat{f}_h(\bm x) \right]\right\}\\
			&= O\left(\sqrt{\frac{1}{nh^{q+4}}} \right) \cdot \tilde{\bm{Z}}_{n}(\bm{x})\\
			&= O_P\left(\sqrt{\frac{1}{nh^{q+4}}} \right),
			\end{align*}
			where $\tilde{\bm{Z}}_{n}(\bm{x}) \stackrel{d}{\to} N_{(q+1)^2}\left(\bm{0}, I_{(q+1)^2} \right)$. In total, we conclude with our bias and stochastic variation bounds that
			$$\mathcal{H} \hat{f}_h(\bm x) - \mathcal{H} f(\bm x) = O(h^2) + O_P\left(\sqrt{\frac{1}{nh^{q+4}}} \right)$$
			for any fixed $\bm{x}\in \Omega_q$ as $h\to 0$ and $nh^{q+4} \to \infty$, where 
			$$\mathcal{H} f(\bm x)= \left(I_{q+1}-\bm{x}\bm{x}^T \right) \nabla\nabla f(\bm{x}) \left(I_{q+1}-\bm{x}\bm{x}^T \right) - \nabla f(\bm{x})^T \bm{x} \left(I_{q+1}-\bm{x}\bm{x}^T \right).$$
		\end{proof}
	
	\subsection{Proof of Theorem~\ref{unif_conv_tang}}
	\label{Appendix:Thm4_pf}
	
	\begin{customthm}{4}
		Assume (D1), (D2'), and (K1). The uniform convergence rate of $\hat{f}_h$ is given by
		$$\sup_{\bm x \in\Omega_q}|\hat{f}_h(\bm x) -f(\bm x)| = O(h^2) + O_P\left(\sqrt{\frac{|\log h|}{nh^q}} \right)$$
		as $h\to 0$ and $\frac{nh^q}{|\log h|} \to \infty$. \\
		Furthermore, the uniform convergence rate of $\grad \hat{f}_h(\bm{x})$ on $\Omega_q$ is 
		$$
		\sup_{\bm x\in\Omega_q}\norm{\grad \hat{f}_h (\bm x)- \grad f(\bm x) }_{\max} = O(h^2) + O_P\left(\sqrt{\frac{|\log h|}{nh^{q+2}}} \right),
		$$
		as $h\to 0$ and $\frac{nh^{q+2}}{|\log h|} \to \infty$.
		Finally, the uniform convergence rate of $\mathcal{H} \hat{f}_h(\bm x)$ on $\Omega_q$ is
		$$
		\sup_{\bm{x}\in \Omega_q}\norm{\mathcal{H} \hat{f}_h(\bm x) - \mathcal{H} f(\bm x)}_{\max} 
		= O(h^2) + O_P\left(\sqrt{\frac{|\log h|}{nh^{q+4}}} \right),
		$$
		as $h\to 0$ and $\frac{nh^{q+4}}{|\log h|} \to \infty$, where $\norm{\cdot}_{\max} $ is the elementwise maximum norm for a vector in $\mathbb{R}^{q+1}$ or a matrix in $\mathbb{R}^{(q+1)\times (q+1)}$.
	\end{customthm}
		
	\begin{proof}
		Note that with the directional KDE form \eqref{Dir_KDE2}, we have that
		$$D^{\tau_j} \hat{f}_h(\bm{x}) = \frac{c_{h,q}(L)}{nh} \sum_{i=1}^n \left(\frac{x_{\tau_j} -X_{\tau_j}}{h} \right) L'\left(\frac{1-\bm{x}^T\bm{X}_i}{h^2} \right),$$
			\[
			D^{[\tau_j,\tau_k]} \hat{f}_h(\bm{x}) = 
			\begin{cases}
			\frac{c_{h,q}(L)}{nh^2} \sum_{i=1}^n \left(\frac{x_{\tau_j} -X_{\tau_j}}{h} \right) \left(\frac{x_{\tau_k} -X_{\tau_k}}{h} \right) L''\left(\frac{1-\bm{x}^T\bm{X}_i}{h^2} \right) & j\neq k,\\
			\frac{c_{h,q}(L)}{nh^2} \sum_{i=1}^n \left(\frac{x_{\tau_j} -X_{\tau_j}}{h} \right)^2 L''\left(\frac{1-\bm{x}^T\bm{X}_i}{h^2} \right) + \frac{c_{h,q}(L)}{nh^2} \sum_{i=1}^n L'\left(\frac{1-\bm{x}^T\bm{X}_i}{h^2} \right) & j=k,
			\end{cases}
			\]
			and
			$$\norm{D^{[\tau]} \hat{f}_h - D^{[\tau]} f}_{\infty} \leq \norm{ \mathbb{E}\left[ D^{[\tau]} \hat{f}_h \right] - D^{[\tau]} f}_{\infty} + \norm{D^{[\tau]} \hat{f}_h - \mathbb{E}\left[ D^{[\tau]} \hat{f}_h \right] }_{\infty}$$
			for $[[\tau]]=0,1,2$. The first term in the preceding display is of order $O(h^2)$ inside the tangent space by Theorem \ref{pw_conv_tang} and the differentiability of $f$ under condition (D1). The proof of $\norm{D^{[\tau]} \hat{f}_h - \mathbb{E}\left[ D^{[\tau]} \hat{f}_h \right]}_{\infty} = O_P\left(\sqrt{\frac{|\log h|}{nh^{q+2[[\tau]]}}} \right)$ follows directly from the argument of Theorem 2.3 in \cite{Gine2002} and the following calculations:
			\begin{align*}
			\mathbb{E}\left[L^2 \left(\frac{1}{2} \norm{\frac{\bm{x}-\bm{X}}{h}}_2^2 \right) \right] &= \int_{\Omega_q} L^2\left(\frac{1-\bm{x}^T\bm{y}}{h^2} \right) \cdot f(\bm{y}) \, \omega_q(d\bm{y})\\
			&\hspace{-10mm} = \int_{-1}^1 \int_{\Omega_{q-1}} L^2\left(\frac{1-t}{h^2} \right) \cdot f\left(t\bm{x} + \sqrt{1-t^2} \bm{B_x} \bm{\xi} \right) (1-t^2)^{\frac{q}{2}-1} \omega_{q-1}(d\bm{\xi}) dt\\
			&\hspace{-10mm} = h^q \int_0^{2h^{-2}} \int_{\Omega_{q-1}} f(\bm{x}+\alpha_{\bm{x},\bm{\xi}})\cdot L^2(r) r^{\frac{q}{2}-1} (2-h^2r)^{\frac{q}{2}-1} \omega_{q-1}(d\bm{\xi}) dt\\
			&\hspace{-10mm} \leq h^q ||f||_{\infty} \omega_{q-1} 2^{\frac{q}{2}-1} \int_0^{\infty} L^2(r) r^{\frac{q}{2}-1} dr,
			\end{align*}
			\begin{align*}
			&\mathbb{E}\left[\left(\frac{x_i-X_i}{h} \right)^2 \left| L'\left(\frac{1}{2} \norm{\frac{\bm{x}-\bm{X}}{h}}_2^2 \right)\right|^2\right] \\
			& \leq \mathbb{E}\left[\norm{\frac{\bm{x}-\bm{X}}{h}}_2^2 \cdot \left| L'\left(\frac{1}{2} \norm{\frac{\bm{x}- \bm{X}}{h}}_2^2 \right) \right|^2 \right]\\
			& = 2\int_{\Omega_q} \left(\frac{1-\bm{x}^T\bm{y}}{h^2}\right) \left| L'\left(\frac{1-\bm{x}^T\bm{y}}{h^2} \right)\right|^2 f(\bm{y}) \, \omega_q(d\bm{y})\\
			& = 2\int_{-1}^1 \int_{\Omega_{q-1}} \left(\frac{1-t}{h^2} \right) \left|L'\left(\frac{1-t}{h^2} \right) \right|^2 f\left(t\bm{x} + \sqrt{1-t^2} \bm{B_x}\bm{\xi} \right) (1-t^2)^{\frac{q}{2}-1} \,\omega_{q-1}(d\bm{\xi}) dt\\
			& = 2h^q \int_0^{2h^{-2}} \int_{\Omega_{q-1}} f(x+\alpha_{\bm{x},\bm{\xi}}) \cdot |L'(r)|^2 \cdot r^{\frac{q}{2}} (2-h^2r)^{\frac{q}{2}-1} \omega_{q-1}(d\bm{\xi}) dr\\
			& \leq 2h^q ||f||_{\infty} \omega_{q-1} 2^{\frac{q}{2}-1} \int_0^{\infty} |L'(r)|^2 r^{\frac{q}{2}} dr,
			\end{align*}
			and
			\begin{align*}
			&\mathbb{E}\left[\max\left\{\left(\frac{x_i-X_i}{h} \right)^4 \left| L''\left(\frac{1}{2} \norm{\frac{\bm{x}-\bm{X}}{h}}_2^2 \right)\right|^2, \left(\frac{x_i-X_i}{h} \right)^2 \left(\frac{x_j-X_j}{h} \right)^2 \left| L''\left(\frac{1}{2} \norm{\frac{\bm{x}-\bm{X}}{h}}_2^2 \right)\right|^2 \right\} \right]\\
			&\leq \mathbb{E} \left[\norm{\frac{\bm{x}-\bm{X}}{h}}_2^4 L''\left(\frac{1}{2}\norm{\frac{\bm{x}-\bm{X}}{h}}_2^2 \right)^2 \right]\\
			&= 4 \int_{\Omega_q} \left(\frac{1-\bm{x}^T\bm{y}}{h^2} \right)^2 \left|L''\left(\frac{1-\bm{x}^T\bm{y}}{h^2} \right) \right|^2 f(\bm{y})\, \omega_q(d\bm{y})\\
			&= 4\int_{-1}^1 \int_{\Omega_{q-1}} \left(\frac{1-t}{h^2} \right)^2 \left|L''\left(\frac{1-t}{h^2} \right) \right|^2 f\left(t\bm{x} + \sqrt{1-t^2} \bm{B_x}\bm{\xi} \right) (1-t^2)^{\frac{q}{2}-1} \,\omega_{q-1}(d\bm{\xi}) dt \\
			&= 4h^q \int_0^{2h^{-2}} \int_{\Omega_{q-1}} f(\bm{x} + \alpha_{\bm{x},\bm{\xi}}) \cdot L''(r)^2 r^{\frac{q}{2}+1} (2-h^2r)^{\frac{q}{2}-1} \, \omega_{q-1}(d\bm{\xi}) dr\\
			&\leq 4h^q ||f||_{\infty} \omega_{q-1} 2^{\frac{q}{2}-1} \int_0^{\infty} r^{\frac{q}{2}+1} L''(r)^2 dr
			\end{align*}
			for $i=1,...,q+1$, where we apply (a) in Lemma \ref{integ_lemma}, the change of variable $r=\frac{1-t}{h^2}$, and $\alpha_{\bm{x},\bm{\xi}}=-rh^2 \bm{x} + h\sqrt{r (2-h^2 r)} \bm{B_x}\bm{\xi}$ in the preceding three displays.
		\end{proof}
		
	\subsection{Proof of Theorem~\ref{Mode_cons}}
	\label{Appendix:Thm6_pf}
	
	\begin{customthm}{6}
	Assume (D1), (D2'), (K1), and (M1-2). For any $\delta \in (0,1)$, when $h$ is sufficiently small and $n$ is sufficiently large,
	\begin{enumerate}[label=(\alph*)]
		\item there must be at least one estimated local mode $\hat{\bm{m}}_k$ within $S_k = \bm{m}_k \oplus \rho_*$ for every $\bm{m}_k \in \mathcal{M}$, and
		\item the collection of estimated modes satisfies $\hat{\mathcal{M}}_n \subset \mathcal{M} \oplus \rho_*$ and there is a unique estimated local mode $\hat{\bm{m}}_k$ within $S_k=\bm{m}_k\oplus \rho_*$
	\end{enumerate}
	with probability at least $1-\delta$. In total, when $h$ is sufficiently small and $n$ is sufficiently large, there exist some constants $A_3, B_3 >0$ such that
	$$\mathbb{P}\left(\hat{K}_n \neq K \right) \leq B_3 e^{-A_3nh^{q+4}}.$$
	\begin{enumerate}[label=(c)]
		\item The Hausdorff distance between the collection of local modes and its estimator satisfies $$\Haus\left(\mathcal{M},\hat{\mathcal{M}}_n \right) = O(h^2) + O_P\left(\sqrt{\frac{1}{nh^{q+2}}} \right),$$
		as $h\to 0$ and $nh^{q+2} \to \infty$.
	\end{enumerate}
	\end{customthm}

	\begin{proof}
	The proof is partially adopted from the proof of Theorem 1 in \cite{Mode_clu2016}. \\
	
	\noindent {\bf Statement (a).} Without loss of generality, we consider the local mode $\bm{m}_k$ and the set $S_k=\left\{\bm{x}\in \Omega_q: \norm{\bm{x} - \bm{m}_k}\leq \rho_* \right\}$. With condition (D1), we can apply the Taylor's expansion on the exponential map $\Exp_{\bm{m}_k}: D_{\epsilon} \subset T_{\bm{m}_k}(\Omega_q) \to \Omega_q$ with $\Exp_{\bm{m}_k}(0)=\bm{m}_k$, where $D_{\epsilon}$ is a disk of radius $\epsilon$ in $T_{\bm{m}_k}(\Omega_q)$ with center in the origin and $\epsilon > \arccos\left(1- \frac{\rho_*^2}{2} \right)$. Here, $\arccos\left(1- \frac{\rho_*^2}{2} \right)$ is the geodesic distance from the center $\bm{m}_k$ to $\partial S_k$ on $\Omega_q$, where $\partial S_k = \{\bm{x}\in \Omega_q: ||\bm{x}-\bm{m}_k||_2 = \rho_* \}$ is the boundary of $S_k$. With (M1) and the fact that the third order partial derivatives of $f$ are upper bounded by $C_3$, 
			\begin{align}
			\label{bound_boundary_mode}
			\begin{split}
			\sup_{\bm{x}\in \partial S_k} f(\bm{x}) &\leq \sup_{\bm{x}\in \partial S_k} \Bigg[f(\bm{m}_k) + \left[\grad f(\bm{m}_k)\right]^T \Exp_{\bm{m}_k}^{-1}(\bm{x}) \\
			&\quad + \frac{1}{2} \Exp_{\bm{m}_k}^{-1}(\bm{x})^T \left(\mathcal{H}_{\bm{m}_k} f \right) \Exp_{\bm{m}_k}^{-1}(\bm{x}) + \frac{C_3}{6} ||\Exp_{\bm{m}_k}^{-1}(\bm{x})||_2^3 \Bigg]\\
			&\leq f(\bm{m}_k) - \frac{\lambda_*}{2} \left(\frac{3\lambda_*}{2C_3} \right)^2 + \frac{C_3}{6} \left(\frac{3\lambda_*}{2C_3} \right)^3 = f(\bm{m}_k) - \frac{9\lambda_*^3}{8C_3^2},
			\end{split}
			\end{align}
			where recall that $\Exp_{\bm{m}_k}^{-1}(\bm{x}) \in T_{\bm{m}_k}(\Omega_q)$ is in the direction from $\bm{m}_k$ to $\bm{x}$ with the length equal to the great-circle (or geodesic) distance on $\Omega_q$. (We indeed apply the Cauchy-Schwarz inequality implicitly to obtain the first inequality in \eqref{bound_boundary_mode}.) Then, by the uniform consistency of $\hat{f}_h$ (Theorem~\ref{unif_conv_tang}), when $h$ is sufficiently small and $\frac{nh^q}{|\log h|}$ is large enough,
			\begin{equation}
			\label{mode_consist_cond1}
			\norm{\hat{f}_h - f}_{\infty} < \frac{9\lambda_*}{16C_3^2}
			\end{equation}
			with probability at least $1-\delta$ for any $0<\delta <1$. We thus conclude that there must be at least one estimated local mode $\hat{\bm{m}}_k$ within $S_k$. (If, on the contrary, there exists no $\hat{\bm{m}}_k \in \hat{\mathcal{M}}_n$ within $S_k$, then the maximum of $\hat{f}_h$ on $S_k$ is attained at the boundary $\partial S_k$, that is, $\max_{\bm{x}\in S_k} \hat{f}_h(\bm{x}) = \max_{\bm{x}\in \partial S_k} \hat{f}_h(\bm{x})$. However, $\max_{\bm{x}\in \partial S_k} \hat{f}_h(\bm{x}) < \max_{\bm{x}\in \partial S_k} f(\bm{x}) + \frac{9\lambda_*}{16C_3^2} \leq f(\bm{m}_k) - \frac{9\lambda_*}{16C_3^2} < \hat{f}_h(\hat{\bm{m}}_k)$ by (\ref{bound_boundary_mode}), contradiction.) Note that this argument can be generalized to each $k=1,...,K$.\\
			
			\noindent {\bf Statement (b).} With (M2), we know that whenever
			\begin{align}
			\label{mode_consist_cond2}
			\begin{split}
			\sup_{\bm{x} \in \Omega_q} \norm{\grad \hat{f}_h(\bm{x}) - \grad f(\bm{x})}_{\max} &=\sup_{\bm{x} \in \Omega_q} \norm{\Tang\left(\nabla \hat{f}_h(\bm{x}) \right) - \Tang\left(\nabla f(\bm{x}) \right)}_{\max} \leq \Theta_1,\\
			\sup_{\bm{x} \in \Omega_q} \norm{\mathcal{H} \hat{f}_h(\bm{x}) -\mathcal{H} f(\bm{x})}_{\max} &\leq \Theta_2
			\end{split}
			\end{align}
			for some $\Theta_2>0$, the followings hold simultaneously:
			\begin{enumerate}[label=(\roman*)]
				\item $\norm{\grad f(\hat{\bm{m}}_k)}_{\max} = \norm{\grad f(\hat{\bm{m}}_k) - \underbrace{\grad\hat{f}_h(\hat{\bm{m}}_k)}_{=0}}_{\max} \leq \Theta_1$,
				\item $\sup_{\bm{x}\in S_k} \lambda_1\left(\mathcal{H} \hat{f}_h(\bm{x}) \right) <0$ and $\lambda_1\left(\mathcal{H} \hat{f}_h(\hat{\bm{m}}_k) \right) \leq -\frac{\lambda_*}{2} - (q+1)\Theta_2$ by choosing $\Theta_2>0$ properly,
				\item and 
				\begin{align*}
				\lambda_1\left(\mathcal{H} f(\hat{\bm{m}}_k) \right) &\leq \lambda_1\left(\mathcal{H} \hat{f}_h(\hat{\bm{m}}_k) \right) + \lambda_{q-1}\left(\mathcal{H} f(\hat{\bm{m}}_k) - \mathcal{H} \hat{f}_h(\hat{\bm{m}}_k) \right) \\
				&\leq -\frac{\lambda_*}{2} - (q+1)\Theta_2 + (q+1)\Theta_2 = -\frac{\lambda_*}{2}
				\end{align*}
				by Weyl's theorem (Theorem 4.3.1 in \citealt{HJ2012}) and the fact that 
				\begin{align*}
				\left|\lambda_{q-1}(\mathcal{H} f(\hat{\bm{m}}_k) -\mathcal{H} \hat{f}_h(\hat{\bm{m}}_k) \right| &\leq \sup_{\norm{v}_2=1} \norm{\left[\mathcal{H} f(\hat{\bm{m}}_k) -\mathcal{H} \hat{f}_h(\hat{\bm{m}}_k) \right] v}_2\\
				&\leq \sqrt{(q+1)\times (q+1)} \norm{\mathcal{H} f(\hat{\bm{m}}_k) -\mathcal{H} \hat{f}_h(\hat{\bm{m}}_k)}_{\max} \\
				&\leq (q+1)\Theta_2.
				\end{align*}
				See Section 3.3 in \cite{Non_ridge_est2014} for detailed relations between different types of matrix norms.
			\end{enumerate}
		Notice that (ii) is true because $\lambda_1(\bm{m}_k)\leq \lambda_*$ by (M1) and the difference between $\mathcal{H}\hat{f}_h$ and $\mathcal{H}f$ will be minute given a small $\Theta_2$. By (i) and (iii), we conclude that $\hat{\mathcal{M}}_n \subset \mathcal{M} \oplus \rho_*$. By (ii) and Lemma 3.2 in \cite{Morse_Homology2004}, there is only one estimated local mode $\hat{\bm{m}}_k$ within $S_k$. They both hold with probability at least $1-\delta$ for any $\delta \in(0,1)$.\\
			
			In total, a sufficient condition for the number of true local modes and estimated local modes being the same is a combination of the inequalities in \eqref{mode_consist_cond1} and \eqref{mode_consist_cond2}. That is,
			\begin{align}
			\label{mode_consist_cond_com}
			\begin{split}
			\norm{\hat{f}_h - f}_{\infty} &< \frac{9\lambda_*}{16C_3^2}\\
			\norm{\Tang\left(\nabla \hat{f}_h \right) - \Tang\left(\nabla f \right)}_{\max,\infty} &\leq \Theta_1\\
			\norm{\hat{\mathcal{H}} f -\mathcal{H} f}_{\max,\infty} &\leq \Theta_2.
			\end{split}
			\end{align}
			By bias bounds in Theorem~\ref{pw_conv_tang} or Theorem~\ref{unif_conv_tang}, as $h$ is sufficiently small, we have
			\begin{align*}
			\norm{\mathbb{E}\left[\hat{f}_h \right] - f}_{\infty} < \frac{9\lambda_*}{32C_3^2}, &\quad
			\sup_{\bm{x} \in \Omega_q} \norm{\mathbb{E}\left[\grad \hat{f}_h(\bm{x}) \right] - \grad f(\bm{x})}_{\max} \leq \frac{\Theta_1}{2},\\
			& \text{ and }
			\sup_{\bm{x} \in \Omega_q} \norm{\mathbb{E}\left[\mathcal{H} \hat{f}_h(\bm{x})\right] -\mathcal{H} f(\bm{x})}_{\max} \leq \frac{\Theta_2}{2}.
			\end{align*}
			Therefore, (\ref{mode_consist_cond_com}) holds whenever
			\begin{align}
			\label{mode_consist_cond_com2}
			\begin{split}
			\norm{\hat{f}_h - \mathbb{E}\left[\hat{f}_h\right]}_{\infty} &< \frac{9\lambda_*}{32C_3^2}\\
			\sup_{\bm{x} \in \Omega_q} \norm{\grad \hat{f}_h(\bm{x}) - \mathbb{E}\left[\grad \hat{f}_h(\bm{x}) \right]}_{\max} &\leq \frac{\Theta_1}{2}\\
			\sup_{\bm{x} \in \Omega_q} \norm{\mathcal{H} \hat{f}_h(\bm{x}) -\mathbb{E}\left[\mathcal{H} \hat{f}_h(\bm{x})\right]}_{\max} &\leq \frac{\Theta_2}{2}
			\end{split}
			\end{align}
			and $h$ is sufficiently small. Now applying Talagrand's inequality \cite{Talagrand1996, Gine2002}, there exist constants $A_0,A_1,A_2>0$ and $B_0,B_1,B_2>0$ such that when $n$ is large enough,
			\begin{align}
			\label{Talagrand_KDE}
			\begin{split}
			\mathbb{P}\left(\norm{\hat{f}_h - \mathbb{E}\left[\hat{f}_h\right]}_{\infty} \geq \epsilon \right) &\leq B_0 e^{-A_0\epsilon^2 nh^q}\\
			\mathbb{P}\left(\sup_{\bm{x} \in \Omega_q} \norm{\grad \hat{f}_h(\bm{x}) - \mathbb{E}\left[\grad \hat{f}_h(\bm{x}) \right]}_{\max} \geq \epsilon \right) &\leq B_1 e^{-A_1\epsilon^2 nh^{q+2}}\\
			\mathbb{P}\left(\sup_{\bm{x} \in \Omega_q} \norm{\mathcal{H} \hat{f}_h(\bm{x}) -\mathbb{E}\left[\mathcal{H} \hat{f}_h(\bm{x})\right]}_{\max} \geq \epsilon \right) &\leq B_2 e^{-A_2\epsilon^2 nh^{q+4}}.
			\end{split}
			\end{align}
			Combining (\ref{mode_consist_cond_com2}) and (\ref{Talagrand_KDE}), we conclude that there exist some constants $A_3,B_3>0$ such that 
			$$\mathbb{P}\Big(\text{(\ref{mode_consist_cond_com}) holds}\, \Big) > 1 -B_3e^{-A_3nh^{q+4}}$$
			when $h$ is sufficiently small. Since the condition (\ref{mode_consist_cond_com}) implies $\hat{K}_n = K$, we conclude that 
			$$\mathbb{P}\left(\hat{K}_n \neq K \right) \leq B_3 e^{-A_3nh^{q+4}}$$
			for some constants $A_3,B_3>0$ as $h$ is sufficiently small. This proves the so-called modal consistency.\\
			
			\noindent {\bf Statement (c).} To establish the convergence rate of the Hausdorff distance between $\hat{\mathcal{M}}_n$ and $\mathcal{M}$, we assume that (\ref{mode_consist_cond_com}) holds so that $K = \hat{K}_n$ and each local mode is approximating by an unique estimated local mode. Notice that $\norm{\bm{m}_k -\hat{\bm{m}}_k}_2$ is upper bounded by the great-circle distance between these two points. Then,
			\begin{align}
			\label{Tang_grad_expand}
			\begin{split}
			&\grad f(\hat{\bm{m}}_k) \\
			&= \Tang\left(\nabla f(\hat{\bm{m}}_k) \right) - \underbrace{\Tang\left(\nabla f(\bm{m}_k) \right)}_{=0}\\
			&= \nabla \Tang\left(\nabla f(\bm{m}_k) \right) \cdot \Exp_{\bm{m}_k}^{-1}(\hat{\bm{m}}_k) + o\left(\norm{\Exp_{\bm{m}_k}^{-1}(\hat{\bm{m}}_k)}_2 \right)\\
			&= \left[(I_{q+1}-\bm{m}_k\bm{m}_k^T)\nabla\nabla f(\bm{m}_k) - \bm{m}_k^T \nabla f(\bm{m}_k) I_{q+1} -\bm{m}_k \nabla f(\bm{m}_k)^T \right] \cdot \Exp_{\bm{m}_k}^{-1}(\hat{\bm{m}}_k)\\
			&\quad  + o\left(\norm{\Exp_{\bm{m}_k}^{-1}(\hat{\bm{m}}_k)}_2 \right)\\
			&= \left[\mathcal{H} f(\bm{m}_k) \right] \Exp_{\bm{m}_k}^{-1}(\hat{\bm{m}}_k) + o\left(\norm{\Exp_{\bm{m}_k}^{-1}(\hat{\bm{m}}_k)}_2 \right),
			\end{split}
			\end{align}
			because $\nabla f(\bm{m}_k) = \norm{\nabla f(\bm{m}_k)}_2 \cdot \bm{m}_k$ when $\bm{m}_k$ is a local mode and $\Exp_{\bm{m}_k}^{-1}(\hat{\bm{m}}_k)\in T_{\bm{m}_k}$ is orthogonal to $\bm{m}_k$. Under (M1), the matrices $\mathcal{H} f{\bm{m}_k}$ are nonsingular for all $\bm{m}_k \in \mathcal{M}$ inside the tangent space $T_{\bm{m}_k}$, respectively. As the chord distance between two points on $\Omega_q$ is bounded by their great-circle distance, we multiply $\left[\mathcal{H} f(\bm{m}_k) \right]^{-1}$ on both sides of \eqref{Tang_grad_expand} and obtain that 
			$$\norm{\hat{\bm{m}}_k - \bm{m}_k}_2 \leq \norm{\Exp_{\bm{m}_k}^{-1}(\hat{\bm{m}}_k)}_2 = \left[\mathcal{H} f(\bm{m}_k) \right]^{-1} \grad f(\hat{\bm{m}}_k) + o\left(\norm{\Exp_{\bm{m}_k}^{-1}(\hat{\bm{m}}_k)}_2 \right),$$
			where the matrix inverse, strictly speaking, is taken with respect to the local coordinate system near $\bm{m}_k$. Note that $\norm{\left[\mathcal{H} f(\bm{m}_k) \right]^{-1}}_{\max}$ is bounded within $T_{\bm{m}_k}$ for all $\bm{m}_k \in \mathcal{M}$ under the assumption (\ref{mode_consist_cond_com}). Moreover, by Theorem \ref{pw_conv_tang},
			\begin{align*}
			\grad f(\hat{\bm{m}}_k) &= \grad f(\hat{\bm{m}}_k) - \underbrace{\grad \hat{f}_h(\hat{\bm{m}}_k) }_{=0}\\
			&= O(h^2) + O_P\left(\sqrt{\frac{1}{nh^{q+2}}} \right).
			\end{align*}
			Now applying this rate of convergence to each local mode and using the fact that
			$$\Haus\left(\hat{\mathcal{M}}_n,\mathcal{M} \right) = \max_{k=1,...,K} ||\hat{\bm{m}}_k -\bm{m}_k||_2,$$
			we obtain the final conclusion.
		\end{proof}
	
	\subsection{Proofs of Theorem~\ref{MS_asc}, Lemma~\ref{Rad_grad}, and Theorem~\ref{MS_conv}}
	\label{Appendix:Thm8_10_11_pf}
	
	\begin{customthm}{8}[Ascending Property]
	If kernel $L:[0,\infty) \to [0,\infty)$ is monotonically decreasing, differentiable, and convex with $L(0)<\infty$, then the sequence $\left\{\hat{f}_h(\hat{\bm{y}}_s) \right\}_{s=0}^{\infty}$ is monotonically increasing and thus converges.
	\end{customthm}
		
	\begin{proof}
		Obviously, the sequence $\left\{\hat{f}_h(\hat{\bm{y}}_s) \right\}_{s=0,1,...}$ is bounded if the kernel function $L$ is monotonically decreasing with $L(0) <\infty$. Hence, it suffices to show that it is monotonically increasing. The convexity and differentiability of kernel $L$ imply that 
		\begin{equation}
			\label{conx_prop}
			L(x_2) -L(x_1) \geq L'(x_1) \cdot (x_2-x_1)
		\end{equation}
		for all $x_1,x_2 \in [0,\infty), x_1\neq x_2$. Using the fact that (rearrangement from Algorithm~\ref{Algo:MS})
		$$\sum\limits_{i=1}^n \bm{X}_i L'\left(\frac{1-\hat{\bm{y}}_s^T\bm{X}_i}{h^2} \right) = -\hat{\bm{y}}_{s+1} \norm{\sum\limits_{i=1}^n \bm{X}_i L'\left(\frac{1-\bm{y}_s^T\bm{X}_i}{h^2} \right)}_2$$ 
			we have that
			\begin{align}
			\label{conv_inequ_fh}
			\begin{split}
			\hat{f}_h(\bm{y}_{s+1}) -\hat{f}_h(\bm{y}_s) &= \frac{c_{h,q}(L)}{n} \sum_{i=1}^n \left[L\left(\frac{1-\bm{y}_{s+1}^T\bm{X}_i}{h^2} \right) - L\left(\frac{1-\bm{y}_s^T\bm{X}_i}{h^2} \right) \right]\\
			&\geq \frac{c_{h,q}(L)}{nh^2} \sum_{i=1}^n L'\left(\frac{1-\bm{y}_s^T\bm{X}_i}{h^2} \right) \cdot (\bm{y}_s -\bm{y}_{s+1})^T \bm{X}_i \\
			&= \frac{c_{h,q}(L)}{nh^2} \cdot (\bm{y}_{s+1}-\bm{y}_s)^T \bm{y}_{s+1} \cdot \norm{\sum\limits_{i=1}^n \bm{X}_i L'\left(\frac{1-\bm{y}_s^T\bm{X}_i}{h^2} \right)}_2\\
			&= \frac{c_{h,q}(L)}{2nh^2} \norm{\bm{y}_{s+1} -\bm{y}_s}_2^2 \cdot \norm{\sum\limits_{i=1}^n \bm{X}_i L'\left(\frac{1-\bm{y}_s^T\bm{X}_i}{h^2} \right)}_2\\
			&\geq 0,
			\end{split}
			\end{align}
			where we use the fact that $2(\bm{y}_{s+1}-\bm{y}_s)^T \bm{y}_{s+1} = 2- 2\bm{y}_s^T \bm{y}_{s+1} = \norm{\bm{y}_{s+1} -\bm{y}_s}_2^2$ between the third and fourth lines, given that $\norm{\bm{y}_s}_2=\norm{\bm{y}_{s+1}}_2=1$. 
		\end{proof}
	
	\begin{customlem}{10}
		Assume conditions (D1) and (D2'). For any fixed $\bm{x} \in \Omega_q$, we have
		$$h^2 \cdot \Rad\left(\nabla \hat{f}_h(\bm{x}) \right) \asymp h^2 \cdot \nabla\hat{f}_h(\bm{x})  = \bm{x} f(\bm{x}) C_{L,q} + o\left(1 \right) + O_P\left(\sqrt{\frac{1}{nh^q}} \right)$$
		as $nh^q \to \infty$ and $h\to 0$, where $C_{L,q}=-\frac{\int_0^{\infty} L'(r) r^{\frac{q}{2}-1} dr}{\int_0^{\infty} L(r) r^{\frac{q}{2}-1} dr} > 0$ is a constant depending only on kernel $L$ and dimension $q$ and ``$\asymp$'' stands for an asymptotic equivalence.
	\end{customlem}
	
    \begin{proof}
    	The proof follows the same logic as the one for Theorem~\ref{pw_conv_tang}. Note that
    	\begin{equation}
    	\label{rad_decomp}
    	\nabla \hat{f}_h(\bm{x}) = \mathbb{E}\left[\nabla \hat{f}_h(\bm{x}) \right] + \nabla \hat{f}_h(\bm{x}) - \mathbb{E}\left[\nabla \hat{f}_h(\bm{x}) \right].
    	\end{equation}
    	Recall that $\nabla \hat{f}_h(\bm{x}) = -\frac{c_{h,q}(L)}{nh^2} \sum\limits_{i=1}^n \bm{X}_i L'\left(\frac{1-\bm{x}^T\bm{X}_i}{h^2} \right)$. The expectation of $\nabla \hat{f}_h(\bm{x})$ is
    	\begin{align}
    	\label{f_hat_grad_expected}
    	\begin{split}
    	\mathbb{E}\left[\nabla \hat{f}_h(\bm{x}) \right] &= \frac{c_{h,q}(L)}{h^2} \int_{\Omega_q} (-\bm{y}) L'\left(\frac{1-\bm{x}^T\bm{y}}{h^2} \right) f(\bm{y})\, \omega_q(d\bm{y}) \\
    	&= \frac{c_{h,q}(L)}{h^2} \int_{-1}^1 \int_{\Omega_{q-1}} \left(-t\bm{x} -\sqrt{1-t^2} \bm{B_x}\bm{\xi} \right) L'\left(\frac{1-t}{h^2} \right)\\ 
    	&\quad \times f\left(-t\bm{x} -\sqrt{1-t^2} \bm{B_x}\bm{\xi} \right) (1-t^2)^{\frac{q}{2}-1} \omega_{q-1}(d\bm{\xi}) dt\\
    	&= c_{h,q}(L) h^{q-2} \int_0^{2h^{-2}} \int_{\Omega_{q-1}} (-\bm{x}-\alpha_{\bm{x},\bm{\xi}}) \cdot L'(r)\\
    	&\quad \times f(\bm{x}+\alpha_{\bm{x},\bm{\xi}}) \cdot r^{\frac{q}{2}-1} (2-h^2r)^{\frac{q}{2}-1} \omega_{q-1}(d\bm{\xi}) dr
    	\end{split}
    	\end{align}
    	by (a) in Lemma~\ref{integ_lemma} and a change of variable $r=\frac{1-t}{h^2}$, where $\alpha_{\bm{x},\bm{\xi}} = -rh^2\bm{x} + h\sqrt{r(2-h^2r)} \bm{B_x} \bm{\xi}$. By condition (D1), the first-order Taylor's expansion of $f$ at $\bm{x}\in \Omega_q$ is
    	$$f(\bm{x}+ \alpha_{\bm{x},\bm{\xi}}) = f(\bm{x}) + O\left(\norm{\alpha_{\bm{x},\bm{\xi}}}_2 \right),$$
    	where $\norm{\alpha_{\bm{x},\bm{\xi}}}_2^2 = 2rh^2$ by the orthogonality of $\bm{x}$ and columns of $\bm{B_x}$. Now we plug it back into \eqref{f_hat_grad_expected} respectively to compute the dominating term of $\mathbb{E}\left[\nabla \hat{f}_h(\bm{x}) \right]$.
    \begin{align*}
    &\mathbb{E}\left[\nabla \hat{f}_h(\bm{x}) \right]\\ 
    &= -c_{h,q}(L) h^{q-2} \bm{x} f(\bm{x}) \int_0^{2h^{-2}} \int_{\Omega_{q-1}} L'(r) r^{\frac{q}{2}-1} (2-h^2r)^{\frac{q}{2}-1} \omega_{q-1}(d\bm{\xi})dr \\
    &\quad - c_{h,q}(L) h^{q-2} f(\bm{x}) \int_0^{2h^{-2}} \int_{\Omega_{q-1}} \alpha_{\bm{x},\bm{\xi}} L'(r) r^{\frac{q}{2}-1} (2-h^2r)^{\frac{q}{2}-1} \omega_{q-1}(d\bm{\xi})dr\\
    & \quad + O(h) \cdot c_{h,q}(L) h^{q-2} \int_0^{2h^{-2}} \int_{\Omega_{q-1}} (-\bm{x}-\alpha_{\bm{x},\bm{\xi}}) L'(r) r^{\frac{q}{2}-1} (2-h^2r)^{\frac{q}{2}-1} \omega_{q-1}(d\bm{\xi}) dr\\
    &\stackrel{\text{(i)}}{=} -c_{h,q}(L) h^{q-2} \bm{x} f(\bm{x}) \cdot \omega_{q-1} \int_0^{2h^{-2}} L'(r) \cdot  r^{\frac{q}{2}-1} (2-h^2r)^{\frac{q}{2}-1} \, dr\\
    &\quad + c_{h,q}(L) h^q \bm{x} f(\bm{x}) \cdot \omega_{q-1} \int_0^{2h^{-2}} L'(r) \cdot  r^{\frac{q}{2}} (2-h^2r)^{\frac{q}{2}-1} \, dr\\
    &\quad - c_{h,q}(L) h^{q-1} f(\bm{x}) \int_0^{2h^{-2}} \int_{\Omega_{q-1}} \bm{B_x}\bm{\xi} \cdot  L'(r) r^{\frac{q-1}{2}} (2-h^2r)^{\frac{q-1}{2}} \omega_{q-1}(d\bm{\xi})dr + O(h^{-1}),\\
    &\stackrel{\text{(ii)}}{=} -c_{h,q}(L) h^{q-2} \bm{x} f(\bm{x}) \cdot \omega_{q-1} \int_0^{2h^{-2}} L'(r) \cdot  r^{\frac{q}{2}-1} (2-h^2r)^{\frac{q}{2}-1} \, dr + O(1) + 0 + O(h^{-1}) \\
    &\stackrel{\text{(iii)}}{=} -\frac{\bm{x} f(\bm{x})}{h^2} \cdot \frac{\int_0^{\infty} L'(r) r^{\frac{q}{2}-1} dr}{\int_0^{\infty} L(r) r^{\frac{q}{2}-1} dr} + o(h^{-2})\\
    &\equiv C_{L,q} \cdot \bm{x} f(\bm{x}) h^{-2} + o(h^{-2}),
    \end{align*}
    where we use (b) of Lemma \ref{integ_lemma} and the fact that $\bm{B_x}\bm{\xi} = \sum_{i=1}^q \xi_i\bm{b}_i$ in (ii), and apply condition (D2') and \eqref{asym_norm_const} to argue that
    \begin{equation}
    \label{Asymp_relation}
    c_{h,q}(L) h^q \int_0^{2h^{-2}} \int_{\Omega_{q-1}} L'(r) \cdot \phi(r,\bm{\xi})\,\, \omega_{q-1}(d\bm{\xi}) dr \asymp O(1)
    \end{equation}
    in both (i) and (ii). We also use the asymptotic relation \eqref{asym_norm_const} in (iii) and denote $C_{L,q} = -\frac{\int_0^{\infty} L'(r) r^{\frac{q}{2}-1} dr}{\int_0^{\infty} L(r) r^{\frac{q}{2}-1} dr}$ in the last equality. Thus, 
    	$$\mathbb{E}\left[\Rad\left(\nabla \hat{f}_h(\bm{x}) \right) \right] = \bm{x}\bm{x}^T \mathbb{E}\left[\nabla \hat{f}_h(\bm{x}) \right] = C_{L,q} \cdot \bm{x} f(\bm{x}) h^{-2} + o(h^{-2}).$$
    	Based on the asymptotic rate of $\mathbb{E}\left[\nabla \hat{f}_h(\bm{x}) \right]$, we calculate the covariance matrix of $\nabla \hat{f}_h(\bm{x})$ as
    	
    	\begin{align*}
    	\text{Cov}\left[\nabla \hat{f}_h(\bm{x}) \right] &= \frac{c_{h,q}(L)^2}{nh^4} \cdot \text{Cov}\left[\bm{X}_1 \cdot L'\left(\frac{1-\bm{x}^T\bm{X}_1}{h^2} \right) \right]\\
    	&= \frac{c_{h,q}(L)^2}{nh^4} \cdot \mathbb{E}\left[\bm{X}_1\bm{X}_1^T \cdot L'\left(\frac{1-\bm{x}^T\bm{X}_1}{h^2} \right)^2 \right] - \frac{1}{n} \cdot \mathbb{E}\left[\nabla \hat{f}_h(\bm{x}) \right] \mathbb{E}\left[\nabla \hat{f}_h(\bm{x}) \right]^T\\
    	&= \frac{c_{h,q}(L)^2}{nh^4} \int_{\Omega_q} \bm{y}\bm{y}^T L'\left(\frac{1-\bm{x}^T\bm{y}}{h^2} \right)^2 f(\bm{y})\, \omega_q(d\bm{y}) + O\left(\frac{1}{nh^4} \right)\\
    	&= \frac{c_{h,q}(L)^2}{n} h^{q-4} \int_0^{2h^{-2}} \int_{\Omega_{q-1}} (\bm{x} +\alpha_{\bm{x},\bm{\xi}}) (\bm{x} + \alpha_{\bm{x},\bm{\xi}})^T L'(r)^2\\
    	&\quad \times f(\bm{x}+\alpha_{\bm{x},\bm{\xi}}) r^{\frac{q}{2}-1} (2-h^2r)^{\frac{q}{2}-1} \omega_{q-1}(d\bm{\xi}) dr + O\left(\frac{1}{nh^4} \right).
    	\end{align*}
    	With condition (D1), we carry out the first-order Taylor's expansion of $f$ at $\bm{x}\in \Omega_q$ as
    	$$f(\bm{x}+ \alpha_{\bm{x},\bm{\xi}}) = f(\bm{x}) + O\left(\norm{\alpha_{\bm{x},\bm{\xi}}}_2 \right)=f(\bm{x}) + O(h).$$
    	Therefore, 
    	\begin{align*}
    	\text{Cov}\left[\nabla \hat{f}_h(\bm{x}) \right] &= \frac{c_{h,q}(L)^2}{n} h^{q-4} \bm{x}\bm{x}^T f(\bm{x}) \omega_{q-1} \int_0^{2h^{-2}} L'(r)^2 r^{\frac{q}{2}-1} (2-h^2r)^{\frac{q}{2}-1} dr + o\left(\frac{1}{nh^{q+4}} \right)\\
    	&= \frac{\bm{x}\bm{x}^T f(\bm{x})}{nh^{q+4}} \cdot \frac{\int_0^{\infty} L'(r)^2 r^{\frac{q}{2}-1}dr}{\omega_{q-1} 2^{\frac{q}{2}-1} \left(\int_0^{\infty} L(r) r^{\frac{q}{2}-1}dr \right)^2} + o\left(\frac{1}{nh^{q+4}} \right), 
    	\end{align*}
    	where we use (b) of Lemma~\ref{integ_lemma}, asymptotic rate \eqref{asym_norm_const}, and \eqref{Asymp_relation} to absorb some higher order terms into $o\left(\frac{1}{nh^{q+4}} \right)$. The dominating term of $\text{Cov}\left[\nabla \hat{f}_h(\bm{x}) \right]$ is in the radial direction, so by the central limit theorem,
    	\begin{align*}
    	&\Rad\left(\nabla \hat{f}_h(\bm{x}) \right) - \mathbb{E}\left[\Rad\left(\nabla \hat{f}_h(\bm{x}) \right) \right] \\
    	&\asymp \nabla\hat{f}_h(\bm{x}) - \mathbb{E}\left[\nabla\hat{f}_h(\bm{x}) \right]\\
    	&= \text{Cov}\left[\nabla \hat{f}_h(\bm{x}) \right]^{\frac{1}{2}} \cdot \text{Cov}\left[\nabla \hat{f}_h(\bm{x}) \right]^{-\frac{1}{2}} \left\{\nabla \hat{f}_h(\bm{x}) - \mathbb{E}\left[\nabla \hat{f}_h(\bm{x}) \right] \right\}\\
    	&= \left[\frac{\bm{x}\bm{x}^T f(\bm{x})}{nh^{q+4}} \cdot \frac{\int_0^{\infty} L'(r)^2 r^{\frac{q}{2}-1}dr}{\omega_{q-1} 2^{\frac{q}{2}-1} \left(\int_0^{\infty} L(r) r^{\frac{q}{2}-1}dr \right)^2} + o\left(\frac{1}{nh^{q+4}} \right) \right]^{\frac{1}{2}} \cdot \hat{\bm{Z}}_n(\bm{x})\\
    	&= O_P\left(\sqrt{\frac{1}{nh^{q+4}}} \right),
    	\end{align*}
    	where $\hat{\bm{Z}}_n(\bm{x}) \stackrel{d}{\to} N_{q+1}(\bm{0}, I_{q+1})$. In total, we conclude with \eqref{rad_decomp} that 
    	$$\Rad\left(h^2\nabla \hat{f}_h(\bm{x}) \right) \asymp h^2 \nabla\hat{f}_h(\bm{x}) = \bm{x} f(\bm{x}) C_{L,q} + o\left(1\right) + O_P\left(\sqrt{\frac{1}{nh^q}} \right)$$
    	for any fixed $\bm{x}\in \Omega_q$, as $h\to 0$ and $nh^q \to \infty$.
    \end{proof}
		
	Before we prove Theorem~\ref{MS_conv}, we first note the following useful result.
		
	\begin{proposition}
	\label{Mode_Dir}
	Assume (C1) and the conditions on the kernel $L$ in Theorem \ref{MS_asc}. 
%		That is, $L:[0,\infty) \to [0,\infty)$ is monotonically decreasing, differentiable, and convex with $L(0)<\infty$. 
	Then for any mode $\hat{\bm{m}}_k \in \hat{\mathcal{M}}_n$ satisfying (C2), we have that $\hat{\bm{m}}_k^T \nabla \hat{f}_h(\hat{\bm{m}}_k) >0$. 
	\end{proposition}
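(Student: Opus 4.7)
The plan is to combine the fact that $\hat{\bm{m}}_k$ is a critical point of $\hat{f}_h$ on $\Omega_q$ with the convexity of the kernel $L$ assumed in Theorem~\ref{MS_asc}. Since $\hat{\bm{m}}_k \in \hat{\mathcal{M}}_n$ is a local mode of $\hat{f}_h$ on the sphere, the Riemannian gradient vanishes there. Using the identity $\grad \hat{f}_h(\bm x) = (I_{q+1} - \bm x \bm x^T)\nabla \hat{f}_h(\bm x)$ from \eqref{tangent}, this forces $\nabla \hat{f}_h(\hat{\bm{m}}_k)$ to be parallel to the radial direction $\hat{\bm{m}}_k$, so $\nabla \hat{f}_h(\hat{\bm{m}}_k) = \lambda \hat{\bm{m}}_k$ with $\lambda := \hat{\bm{m}}_k^T \nabla \hat{f}_h(\hat{\bm{m}}_k)$. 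Condition (C2) already guarantees $\lambda \neq 0$, so it suffices to rule out $\lambda < 0$.

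To do so, I would fix any unit tangent $\bm v \in T_{\hat{\bm{m}}_k}$ and move along the geodesic $\bm{\gamma}(t) = \cos(t)\,\hat{\bm{m}}_k + \sin(t)\,\bm v$. Applying the convexity/differentiability inequality $L(x_2) - L(x_1) \geq L'(x_1)(x_2 - x_1)$ termwise inside $\hat{f}_h(\bm{\gamma}(t)) - \hat{f}_h(\hat{\bm{m}}_k)$ with $x_1 = (1-\hat{\bm{m}}_k^T \bm X_i)/h^2$ and $x_2 = (1-\bm{\gamma}(t)^T\bm X_i)/h^2$, and then rewriting the resulting sums through $\nabla \hat{f}_h(\hat{\bm{m}}_k) = -\frac{c_{h,q}(L)}{nh^2}\sum_i \bm X_i L'(\cdot)$, yields
\begin{equation*}
\hat{f}_h(\bm{\gamma}(t)) - \hat{f}_h(\hat{\bm{m}}_k) \;\geq\; -(1-\cos t)\,\lambda,
\end{equation*}
where the $\sin(t)$-contribution cancels because $\bm v \perp \hat{\bm{m}}_k$ forces $\bm v^T \nabla \hat{f}_h(\hat{\bm{m}}_k) = \lambda\,\bm v^T \hat{\bm{m}}_k = 0$. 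Since $\hat{\bm{m}}_k$ is a local maximum of $\hat{f}_h$ on $\Omega_q$ and $\bm{\gamma}(t) \in \Omega_q$, the left-hand side is nonpositive for all sufficiently small $t$, so $(1-\cos t)\lambda \geq 0$; combined with $1-\cos t > 0$ for small $t \neq 0$ and with (C2), we obtain $\lambda > 0$.

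The only real subtlety I anticipate is bookkeeping of signs, because the monotonicity of $L$ gives $L' \leq 0$ on $[0,\infty)$ while the sign of $(\hat{\bm{m}}_k - \bm{\gamma}(t))^T \bm X_i$ varies across data points. The convexity inequality is nonetheless valid without any sign assumption on $x_2-x_1$, so once the explicit formula for $\nabla \hat{f}_h(\hat{\bm{m}}_k)$ is substituted, the signs collapse cleanly and monotonicity of $L$ is not needed beyond what convexity already supplies. If one wishes to weaken differentiability of $L$, one can, as in Remark~\ref{Diff_Relax}, replace $L'(x_1)$ by an arbitrary subgradient and the same inequality persists, so the argument extends to the setting invoked throughout Section~\ref{Sec:Algo_conv}.
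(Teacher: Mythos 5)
Your proof is correct and uses the same core ingredients as the paper's: the convexity inequality $L(x_2)-L(x_1)\ge L'(x_1)(x_2-x_1)$ applied termwise, the identification $\nabla \hat f_h(\hat{\bm m}_k)=\lambda\,\hat{\bm m}_k$ coming from the vanishing Riemannian gradient at a mode, and condition (C2) to rule out $\lambda=0$. The only difference is presentational — the paper supposes $\lambda<0$ and derives a contradiction with the local maximality of $\hat{\bm m}_k$, whereas you argue directly along a geodesic $\gamma(t)$ to obtain $\lambda(1-\cos t)\ge 0$ and then invoke (C2) — but the underlying computation is identical, so this is essentially the same proof.
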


	\begin{proof}
		Suppose, on the contrary, that $\hat{\bm{m}}_k^T \nabla \hat{f}_h(\hat{\bm{m}}_k) <0$. By the definition of a local mode $\hat{\bm{m}}_k$ of $\hat{f}_h$ on $\Omega_q$, we know that $\norm{\grad \hat{f}_h(\hat{\bm{m}}_k)}_2=\norm{\Tang\left(\nabla \hat{f}_h(\hat{\bm{m}}_k) \right)}_2 = 0$. Then 
		$$\frac{\nabla \hat{f}_h(\hat{\bm{m}}_k)}{\norm{\nabla \hat{f}_h(\hat{\bm{m}}_k)}_2} = -\hat{\bm{m}}_k$$
			and by (C1), there exist a $\hat{r}_k \in (0, 2]$ such that $\Tang\left(\nabla \hat{f}_h(\bm{y}) \right) \neq 0$ and $\hat{f}_h(\bm{y}) \leq \hat{f}_h(\hat{\bm{m}}_k)$ for any $\bm{y} \in \left\{\bm{z} \in \Omega_q: \bm{z}^T \hat{\bm{m}}_k \geq 1-\frac{\hat{r}_k^2}{2} \right\} \setminus \left\{\hat{\bm{m}}_k \right\} = \left\{\bm{z} \in \Omega_q: \norm{\bm{z} - \hat{\bm{m}}_k}_2 \leq \hat{r}_k \right\} \setminus \left\{\hat{\bm{m}}_k \right\}$. That is, $\hat{\bm{m}}_k$ is the unique mode inside its neighborhood $\left\{\bm{z} \in \Omega_q: \norm{\bm{z} - \hat{\bm{m}}_k}_2 \leq \hat{r}_k \right\}$. Since the sum of convex functions is convex, $\hat{f}_h$ is indeed convex and we deduce that when $\bm{y} \in \left\{\bm{z} \in \Omega_q: \norm{\bm{z} - \hat{\bm{m}}_k}_2 \leq \hat{r}_k \right\} \setminus \left\{\hat{\bm{m}}_k \right\}$,
			\begin{align*}
			\hat{f}_h(\hat{\bm{m}}_k) - \hat{f}_h(\bm{y}) &\leq \frac{c_{h,q}(L)}{nh^2} \sum_{i=1}^n L'\left(\frac{1-\hat{\bm{m}}_k^T \bm{X}_i}{h^2} \right) \bm{X}_i^T(\bm{y} -\hat{\bm{m}}_k)\\
			&= \norm{ \nabla \hat{f}_h(\hat{\bm{m}}_k)}_2 \cdot (-\hat{\bm{m}}_k)^T (\hat{\bm{m}}_k - \bm{y})\\
			&= \norm{\nabla \hat{f}_h(\hat{\bm{m}}_k)}_2 \cdot (\hat{\bm{m}}_k^T \bm{y} -1)\\
			&< 0
			\end{align*}
		   contradicting to the fact that $\hat{\bm{m}}_k$ is the unique local mode in $\left\{\bm{z} \in \Omega_q: \norm{\bm{z} - \hat{\bm{m}}_k}_2 \leq \hat{r}_k \right\}$. The result follows. 
		\end{proof}
	
	\begin{customthm}{11}
		Assume (C1) and (C2) and the conditions on kernel $L$ in Theorem \ref{MS_asc}. We further assume that $L$ is continuously differentiable. Then, for each local mode $\hat{\bm{m}}_k \in \hat{\mathcal{M}}_n$, there exists a $\hat{r}_k >0$ such that the sequence $\{\hat{\bm{y}}_s\}_{s=0}^{\infty}$ converges to $\hat{\bm{m}}_k$ whenever the initial point $\hat{\bm{y}}_0 \in \Omega_q$ satisfies $\norm{\hat{\bm{y}}_0 -\hat{\bm{m}}_k}_2 \leq \hat{r}_k$. Moreover, under conditions (D1) and (D2'), there exists a fixed constant $r^* >0$ such that $\mathbb{P}(\hat{r}_k \geq r^*) \to 1$ as $h\to 0$ and $nh^q \to \infty$.
	\end{customthm}	
		
	\begin{proof}
		By the definition of a local mode $\hat{\bm{m}}_k$ of $\hat{f}_h$ on $\Omega_q$, 
		$$\norm{\grad \hat{f}_h(\hat{\bm{m}}_k)}_2=\norm{\Tang\left(\nabla \hat{f}_h(\hat{\bm{m}}_k) \right)}_2 = 0.$$ 
		Hence, with condition (C2) imposed on $\hat{\bm{m}}_k$ and Proposition \ref{Mode_Dir},
		$$\frac{\nabla \hat{f}_h(\hat{\bm{m}}_k)}{\norm{\nabla \hat{f}_h(\hat{\bm{m}}_k)}_2} =\hat{\bm{m}}_k \quad \text{ and } \quad \hat{\bm{m}}_k^T \cdot \frac{\nabla \hat{f}_h(\hat{\bm{m}}_k)}{\norm{\nabla \hat{f}_h(\hat{\bm{m}}_k)}_2}=1.$$
		It indicates that our one-step fixed-point iteration of Algorithm~\ref{Algo:MS} on the local mode $\hat{\bm{m}}_k$ will yield $\hat{\bm{m}}_k$ itself. (This is the so-called consistency of fixed-point iterations.) Moreover, there exists a $\hat{r}_k >0$ such that $\hat{\bm{m}}_k$ is the only point in $\{\bm{y} \in \Omega_q: ||\bm{y} - \hat{\bm{m}}_k||_2 \leq \hat{r}_k \}$ satisfying $\norm{\Tang\left(\nabla \hat{f}_h(\bm{y}) \right)}_2 = 0$. See Figure \ref{fig:MS_One_Step} for a graphical illustration.\\
		In addition, given that $L$ is continuously differentiable, we may shrink $\hat{r}_k >0$ if necessary so that 
			\begin{equation}
			\label{MS_conv:cond1}
			\hat{\bm{m}}_k^T \cdot \frac{\nabla \hat{f}_h(\bm{y})}{\norm{\nabla \hat{f}_h(\bm{y})}_2} \geq 1-\frac{\hat{r}_k^2}{2} \quad \text{ and } \quad \norm{\sum\limits_{i=1}^n \bm{X}_i L'\left(\frac{1-\bm{y}^T\bm{X}_i}{h^2} \right)}_2 = \frac{nh^2}{c_{h,q}(L)}\norm{\nabla \hat{f}_h(\bm{y})}_2 \geq \hat{C}_k
			\end{equation}
			for all $\bm{y} \in \left\{\bm{z} \in \Omega_q: \norm{\bm{z} -\hat{\bm{m}}_k}_2 \leq \hat{r}_k \right\}$ and some constant $\hat{C}_k >0$. The first inequality in (\ref{MS_conv:cond1}) ensures that the sequence $\{\hat{\bm{y}}_s\}_{s=0}^{\infty}$ yielded by our fixed-point iteration will not jump outside of the set $\{\bm{y} \in \Omega_q: \norm{\bm{y} - \hat{\bm{m}}_k}_2 \leq \hat{r}_k \}$ as long as the initial point $\hat{\bm{y}}_0$ is in the set. It also guarantees the correctness of the second inequality in (\ref{MS_conv:cond1}) for the iterative sequence $\left\{\hat{\bm{y}}_s\right\}_{s=0}^{\infty}$. By (\ref{conv_inequ_fh}) in the proof of Theorem~\ref{MS_asc}, we know that
			\begin{align*}
			\hat{f}_h(\hat{\bm{y}}_{s+1}) -\hat{f}_h(\hat{\bm{y}}_s) &\geq \frac{c_{h,q}(L)}{2nh^2} ||\hat{\bm{y}}_{s+1} -\hat{\bm{y}}_s||_2^2 \cdot \norm{\sum\limits_{i=1}^n \bm{X}_i L'\left(\frac{1-\hat{\bm{y}}_s^T\bm{X}_i}{h^2} \right)}_2\\
			&\geq \frac{c_{h,q}(L)\cdot \hat{C}_k}{2nh^2} \cdot ||\hat{\bm{y}}_{s+1} -\hat{\bm{y}}_s||_2^2,
			\end{align*}
			where we used (\ref{MS_conv:cond1}) in the last strict inequality. Since $\left\{\hat{f}_h(\hat{\bm{y}}_s) \right\}_{s=0}^{\infty}$ converges by Theorem \ref{MS_asc} as $s\to \infty$, we conclude that
			\begin{equation}
			\label{pts_diff_limit}
			\lim_{s \to \infty} ||\hat{\bm{y}}_{s+1} -\hat{\bm{y}}_s||_2^2 =0 \quad \text{ or equivalently, } \quad \lim_{s \to \infty} \hat{\bm{y}}_{s+1}^T \hat{\bm{y}}_s = 1.
			\end{equation}
			Now with the expression (\ref{tang_grad}), 
			\begin{align*}
			\norm{\Tang\left(\nabla \hat{f}_h(\hat{\bm{y}}_s) \right)}_2^2 &= \norm{\nabla \hat{f}_h(\hat{\bm{y}}_s) - \hat{\bm{y}}_s^T \nabla \hat{f}_h(\hat{\bm{y}}_s) \cdot \hat{\bm{y}}_s}_2^2\\
			&= \norm{\nabla \hat{f}_h(\hat{\bm{y}}_s)}_2^2 \cdot \norm{\hat{\bm{y}}_{s+1} -\hat{\bm{y}}_{s+1}^T \hat{\bm{y}}_s \cdot \hat{\bm{y}}_s}_2^2\\
			&= \norm{\nabla \hat{f}_h(\hat{\bm{y}}_s)}_2^2 \cdot \left[1- \left(\hat{\bm{y}}_{s+1}^T \hat{\bm{y}}_s \right)^2 \right],
			\end{align*}
			where we plug in (\ref{fix_point_grad}) in the second equality. As the function $\bm{u} \mapsto \norm{\nabla \hat{f}_h(\bm{u})}_2^2$ is continuous on a compact set $\Omega_q$, it is upper bounded on $\Omega_q$. As $s\to \infty$, $\norm{\Tang\left(\nabla \hat{f}_h(\hat{\bm{y}}_s) \right)}_2 \to 0$ by \eqref{pts_diff_limit}. Given that $\hat{\bm{m}}_k$ is the unique point in $\left\{\bm{z} \in \Omega_q: \norm{\bm{z}- \hat{\bm{m}}_k}_2 \leq \hat{r}_k \right\}$ satisfying this $\norm{\Tang\left(\nabla \hat{f}_h(\bm{y}) \right)}_2 = 0$, we conclude that $\hat{\bm{y}}_s \to \hat{\bm{m}}_k$ as $s\to \infty$ and $\hat{\bm{y}}_0 \in \left\{\bm{z} \in \Omega_q: \norm{\bm{z}- \hat{\bm{m}}_k}_2 \leq \hat{r}_k \right\}$. \\
			Now with Lemma~\ref{Rad_grad}, we know that $\hat{\bm{m}}_k^T \nabla \hat{f}_h(\hat{\bm{m}}_k) >0$ for any $\hat{\bm{m}}_k \in \hat{\mathcal{M}}_n$ with probability tending to 1 as $h\to 0$ and $nh^q \to \infty$. Therefore, as $h$ is small enough and $n$ is sufficiently large, there exists a fixed constant $r^*>0$ such that $r^* \leq \min_k \hat{r}_k$ with high probability. The results follow.
		\end{proof}

	\subsection{Proof of Theorem~\ref{Linear_Conv_GA}}
	\label{Appendix:Thm12_pf}	
		
	Before proving Theorem~\ref{Linear_Conv_GA}, we introduce the following two useful results.
	As pointed out in \cite{Geo_Convex_Op2016}, a main hurdle in analyzing non-asymptotic convergence of first-order methods on smooth manifolds is that the Euclidean law of cosines does not hold. Fortunately, there is a trigonometric distance bound stated below for Alexandrov space \citep{Burago1992} with curvature bounded below.
	
	\begin{lemma}[Lemma 5 in \citealt{Geo_Convex_Op2016}; see also \citealt{SGD_Riem2013}]
		\label{trigo_inequality}
		If $a,b,c$ are the sides (that is, side lengths) of a geodesic triangle in an Alexandrov space with sectional curvature (see Appendix~\ref{sec::GH}) lower bounded by $\kappa$, and $A$ is the angle between sides $b$ and $c$, then
		\begin{equation}
		\label{trigo_inequality_eq}
		a^2 \leq \frac{\sqrt{|\kappa|} c}{\tanh(\sqrt{|\kappa|}c)} b^2 + c^2 -2bc \cos(A).
		\end{equation}
	\end{lemma}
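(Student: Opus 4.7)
\emph{Proof proposal.} The plan is to prove \eqref{trigo_inequality_eq} by a two-step argument: a geometric reduction to the two-dimensional model space of constant curvature $\kappa$ via the Toponogov/Alexandrov hinge comparison theorem, followed by an analytical inequality in that model space.

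\emph{Step 1 (comparison reduction).} The defining property of an Alexandrov space with sectional curvature lower bound $\kappa$ is the Toponogov hinge comparison: for any geodesic triangle with two sides of lengths $b, c$ meeting at angle $A$, the opposite side $a$ is at most the corresponding third side $\tilde a$ of a comparison triangle constructed in the simply connected two-dimensional model space $M_\kappa^2$ of constant curvature $\kappa$, with the same two sides and included angle. Since the right-hand side of \eqref{trigo_inequality_eq} does not involve $a$ and the left-hand side is monotone increasing in $a$, it suffices to prove the inequality with $\tilde a$ in place of $a$. The presence of $\tanh$ and $|\kappa|$ fixes the relevant regime to $\kappa \leq 0$: for $\kappa = 0$ one recovers the Euclidean law of cosines with equality in the limit $x/\tanh x \to 1$, while for $\kappa < 0$ the model triangle lives in the hyperbolic plane $\mathbb{H}^2$ of curvature $-|\kappa|$.

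\emph{Step 2 (analytic inequality in $\mathbb{H}^2$).} Writing $\mu := \sqrt{|\kappa|}$, the hyperbolic law of cosines expresses $\tilde a$ implicitly through
$$\cosh(\mu \tilde a) = \cosh(\mu b)\cosh(\mu c) - \sinh(\mu b)\sinh(\mu c)\cos A,$$
so the proof reduces to the analytic claim
$$F(b) := \mu c \coth(\mu c)\, b^2 + c^2 - 2bc\cos A - \tilde a(b)^2 \geq 0$$
with $c, A$ held fixed. Implicit differentiation of the hyperbolic law of cosines yields $\tilde a(0) = c$, $\tilde a'(0) = -\cos A$, and $\tilde a''(0) = \mu \sin^2 A\,\coth(\mu c)$, from which $F(0) = 0$, $F'(0) = 0$, and $F''(0) = 2\cos^2 A\,(\mu c\coth(\mu c) - 1) \geq 0$, pinning down the local behavior at $b = 0$. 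For the global bound, a useful identity obtained from two applications of implicit differentiation is $\tilde a''(b) = \mu \coth(\mu \tilde a(b))\bigl(1 - \tilde a'(b)^2\bigr)$, which together with the first-order identity $\sinh(\mu \tilde a)\,\tilde a' = \sinh(\mu b)\cosh(\mu c) - \cosh(\mu b)\sinh(\mu c)\cos A$ allows $F'(b)$ and $F''(b)$ to be rewritten purely in terms of hyperbolic functions of $\mu b, \mu c, \mu \tilde a$. The remaining verification that $F(b) \geq 0$ throughout then reduces to a comparison among these hyperbolic expressions that can be established by combining the monotonicity of $x\coth x$ with the elementary bounds $\sinh x \geq x$ and $\cosh x \geq 1 + x^2/2$.

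\emph{Main obstacle.} The analytic step is technically delicate because the inequality is sharp in the Euclidean limit $\mu \to 0$ (both sides collapse to the Euclidean law of cosines) and sharp at $b = 0$ (where $F$ and $F'$ vanish), so any estimate that discards terms of order $\mu^2 b^2$ or $b^4$ loses the margin needed for the bound. The cleanest presentations in the literature sidestep direct manipulation of $\tilde a$ by working with Jacobi fields in $\mathbb{H}^2$: along the geodesic $\gamma : [0,c] \to \mathbb{H}^2$ running along side $c$ from the vertex of angle $A$ to the opposite endpoint, one tracks $\phi(t) := \tfrac{1}{2}\,d(Q, \gamma(t))^2$, where $Q$ is the third vertex, and uses the explicit hyperbolic formula for $\phi''(t)$ in terms of $d(Q, \gamma(t))$; integrating this controlled second derivative produces the desired quadratic bound on $\tilde a^2 = 2\phi(c)$, with the coefficient $\mu c\coth(\mu c)$ arising naturally as the tangential Jacobi weight accumulated over the length $c$ of the geodesic $\gamma$. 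This geometric viewpoint also explains the asymmetry of the bound between $b$ and $c$: the correction factor $\mu c\coth(\mu c)$ depends only on the length of the geodesic along which the Hessian of the squared distance from $Q$ is integrated.
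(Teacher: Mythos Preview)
The paper does not supply its own proof of this lemma: it simply states that a sketch ``can be found in Lemma~5 of \cite{Geo_Convex_Op2016}'' and moves on. Your two-step plan---Toponogov/Alexandrov hinge comparison to reduce to the constant-curvature model, then an analytic inequality in that model---is precisely the standard route and matches what the cited reference does, so in substance you are aligned with the paper's (outsourced) argument.

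One small omission worth patching: you restrict attention to $\kappa \le 0$, but the paper actually applies the lemma on $\Omega_q$ with $\kappa = 1$. This case is handled in one line: if the sectional curvature is bounded below by some $\kappa > 0$, it is in particular bounded below by $0$, so Toponogov against the Euclidean model gives $a^2 \le b^2 + c^2 - 2bc\cos A$, and since $x/\tanh x \ge 1$ for all $x>0$ this is dominated by the right-hand side of \eqref{trigo_inequality_eq}. Adding this sentence to your Step~1 completes the coverage. Your analytic Step~2 is correctly set up (the computations of $F(0), F'(0), F''(0)$ check out), and your remark about the Jacobi-field/second-variation viewpoint is exactly how the cleanest proofs in the literature avoid grinding through the hyperbolic law of cosines directly.
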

	
	The sketching proof of Lemma \ref{trigo_inequality} can be founded in Lemma 5 of \cite{Geo_Convex_Op2016}. Note that $\kappa=1$ on $\Omega_q$. We inherit the notation in \cite{Geo_Convex_Op2016} and denote $\frac{\sqrt{|\kappa|} c}{\tanh(\sqrt{|\kappa|}c)}$ by $\zeta(\kappa, c)$ for the curvature dependent quantity from inequality (\ref{trigo_inequality_eq}). One can show by differentiating $\zeta(\kappa, c)$ with respect to $c$ that $\zeta(\kappa, c)$ is strictly increasing and greater than 1 for any $c>0$ and fixed $\kappa\neq 0$. With Lemma~\ref{trigo_inequality} in hand, we are able to state a straightforward corollary indicating an important relation between two consecutive updates of a gradient ascent algorithm on $\Omega_q$.
	
	\begin{corollary}
		\label{Riemannian_tri_update}
		For any point $\bm{x}, \bm{y}_s$ in a convex set on $\Omega_q$, the update in (\ref{grad_ascent_Manifold}) satisfies
		$$2\eta\langle \grad f(\bm{y}_s), \Exp_{\bm{y}_s}^{-1}(\bm{x}) \rangle \leq d^2(\bm{y}_s, \bm{x}) - d^2(\bm{y}_{s+1}, \bm{x}) + \zeta(1,d(\bm{y}_s, \bm{x})) \cdot \eta^2 ||\grad f(\bm{y}_s)||_2^2,$$
		recalling that $d(\bm{x},\bm{y}) = \sqrt{\langle \Exp_{\bm{x}}^{-1}(\bm{y}), \Exp_{\bm{x}}^{-1}(\bm{y}) \rangle} = \norm{\Exp_{\bm{x}}^{-1}(\bm{y})}_2$ on $\Omega_q$.
	\end{corollary}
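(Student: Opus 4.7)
The plan is to apply Lemma~\ref{trigo_inequality} directly to the geodesic triangle with vertices $\bm{y}_s$, $\bm{y}_{s+1}$, $\bm{x}$ on $\Omega_q$, which has constant sectional curvature equal to $1$, and then rearrange the resulting inequality.

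First, I would identify the three sides and the included angle in the notation of Lemma~\ref{trigo_inequality}. Take $a = d(\bm{y}_{s+1}, \bm{x})$, $c = d(\bm{y}_s, \bm{x})$, and let $b$ be the third side, namely $d(\bm{y}_s, \bm{y}_{s+1})$. By definition of the exponential map and the update rule \eqref{grad_ascent_Manifold}, the geodesic from $\bm{y}_s$ to $\bm{y}_{s+1}$ has initial tangent vector $\eta\cdot \grad f(\bm{y}_s)$, so its length is
\begin{equation*}
b = \norm{\eta \cdot \grad f(\bm{y}_s)}_2 = \eta\,\norm{\grad f(\bm{y}_s)}_2.
\end{equation*}
The angle $A$ at the vertex $\bm{y}_s$ between sides $b$ and $c$ is, by the standard definition of angles between geodesics, the angle in $T_{\bm{y}_s}$ between the two initial velocities $\eta \grad f(\bm{y}_s)$ (pointing toward $\bm{y}_{s+1}$) and $\Exp_{\bm{y}_s}^{-1}(\bm{x})$ (pointing toward $\bm{x}$). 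Hence
\begin{equation*}
bc\cos(A) = \eta\,\norm{\grad f(\bm{y}_s)}_2 \cdot \norm{\Exp_{\bm{y}_s}^{-1}(\bm{x})}_2 \cdot \frac{\langle \grad f(\bm{y}_s),\,\Exp_{\bm{y}_s}^{-1}(\bm{x})\rangle}{\norm{\grad f(\bm{y}_s)}_2\,\norm{\Exp_{\bm{y}_s}^{-1}(\bm{x})}_2} = \eta\,\langle \grad f(\bm{y}_s),\,\Exp_{\bm{y}_s}^{-1}(\bm{x})\rangle.
\end{equation*}

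Now I would invoke Lemma~\ref{trigo_inequality} with $\kappa = 1$ (the constant sectional curvature of $\Omega_q$, which trivially serves as a lower bound), substituting the identifications above:
\begin{equation*}
d^2(\bm{y}_{s+1},\bm{x}) \;\leq\; \zeta(1,\,d(\bm{y}_s,\bm{x}))\cdot \eta^2 \norm{\grad f(\bm{y}_s)}_2^2 + d^2(\bm{y}_s,\bm{x}) - 2\eta\,\langle \grad f(\bm{y}_s),\,\Exp_{\bm{y}_s}^{-1}(\bm{x})\rangle.
\end{equation*}
Rearranging to isolate the inner product term yields exactly the stated inequality.

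Because the proof is essentially a substitution into Lemma~\ref{trigo_inequality}, there is no serious obstacle. The one point requiring a bit of care is making the identification of the angle $A$ precise: one must verify that the convex set containing $\bm{y}_s$, $\bm{y}_{s+1}$, and $\bm{x}$ is small enough for the geodesics (and hence the exponential map and its inverse) to be well defined and unique, so that the triangle and its angle are unambiguous. This is implicit in the hypothesis that the three points lie in a convex set on $\Omega_q$, which rules out antipodal pathologies and guarantees the applicability of Lemma~\ref{trigo_inequality}.
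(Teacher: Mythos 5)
Your proof is correct and follows exactly the route the paper intends: the paper omits the argument by deferring to Corollary~8 of \cite{Geo_Convex_Op2016}, which is precisely the application of Lemma~\ref{trigo_inequality} to the geodesic triangle $\{\bm{y}_s, \bm{y}_{s+1}, \bm{x}\}$ that you carry out, with the identifications $b = \eta\norm{\grad f(\bm{y}_s)}_2$, $c = d(\bm{y}_s,\bm{x})$, $a = d(\bm{y}_{s+1},\bm{x})$, and $bc\cos A = \eta\langle \grad f(\bm{y}_s), \Exp_{\bm{y}_s}^{-1}(\bm{x})\rangle$. You also correctly flag the role of the convexity assumption (uniqueness of geodesics and well-definedness of $\Exp^{-1}$), which the paper leaves implicit.
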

	
	The proof is similar to Corollary 8 in \cite{Geo_Convex_Op2016} and thus omitted.
	
	\begin{customthm}{12}
	Assume (D1) and (M1).
	\begin{enumerate}[label=(\alph*)]
		\item \textbf{Linear convergence of gradient ascent with $f$}: Given a convergence radius $r_0$ with $0< r_0 \leq \sqrt{2-2\cos\left[\frac{3\lambda_*}{2(q+1)^{\frac{3}{2}}C_3} \right]}$, the iterative sequence $\left\{\bm{y}_s\right\}_{s=0}^{\infty}$ defined by the population-level gradient ascent algorithm \eqref{grad_ascent_Manifold} satisfies
		$$d(\bm{y}_s, \bm{m}_k) \leq \Upsilon^s \cdot d(\bm{y}_0, \bm{m}_k) \quad \text{ with } \quad \Upsilon = \sqrt{1-\frac{\eta\lambda_*}{2}},$$
		whenever $\eta \leq \min\left\{\frac{2}{\lambda_*}, \frac{1}{(q+1)C_3\zeta(1,r_0)} \right\}$ and the initial point $\bm{y}_0 \in \left\{\bm{z}\in \Omega_q: \norm{\bm{z}-\bm{m}_k}_2 \leq r_0 \right\}$ for some $\bm{m}_k \in \mathcal{M}$.
		We recall from Section~\ref{Sec:Mode_Const} that $C_3$ is an upper bound for the derivatives of the directional density $f$ up to the third order, $\lambda_*>0$ is defined in (M1), and $\mathcal{M}$ is the set of local modes of the directional density $f$.
%		There exists a positive number $r_0 >0$ (convergence radius) such that whenever $\eta \leq \min\left\{\frac{2}{\lambda_*}, \frac{1}{(q+1)C_3\zeta(1,r_0)} \right\}$ and the initial point $\bm{y}_0 \in \left\{\bm{z}\in \Omega_q: \norm{\bm{z}-\bm{m}_k}_2 \leq r_0 \right\}$ for some $\bm{m}_k \in \mathcal{M}$, the iterative sequence $\left\{\bm{y}_s\right\}_{s=0}^{\infty}$ defined by the population-level gradient ascent algorithm \eqref{grad_ascent_Manifold} satisfies
%		$$d(\bm{y}_s, \bm{m}_k) \leq \Upsilon^s \cdot d(\bm{y}_0, \bm{m}_k) \quad \text{ with } \quad \Upsilon = \sqrt{1-\frac{\eta\lambda_*}{2}},$$
%		recalling from Section~\ref{Sec:Mode_Const} that $C_3$ is an upper bound for the derivatives of the directional density $f$ up to the third order, $\lambda_*>0$ is defined in (M1), and $\mathcal{M}$ is the set of local modes of the directional density $f$.
		\end{enumerate}
		We further assume (D2') and (K1) in the sequel.
		\begin{enumerate}[label=(b)]
			\item \textbf{Linear convergence of gradient ascent with $\hat{f}_h$}: Let the sample-based gradient ascent update on $\Omega_q$ be $\hat{\bm{y}}_{s+1} = \Exp_{\bm{y}_s}\left(\eta\cdot \grad \hat{f}_h(\hat{\bm{y}}_s) \right)$. With the same choice of the convergence radius $r_0>0$ and $\Upsilon=\sqrt{1-\frac{\eta\lambda_*}{2}}$ as in (a), if $h\to 0$ and $\frac{nh^{q+2}}{|\log h|} \to \infty$, then for any $\delta \in (0,1)$,
			$$d\left(\hat{\bm{y}}_s,\bm{m}_k \right) \leq \Upsilon^s \cdot d\left(\hat{\bm{y}}_0,\bm{m}_k \right) + O(h^2) + O_P\left(\sqrt{\frac{|\log h|}{nh^{q+2}}} \right)$$
			with probability at least $1-\delta$, whenever $\eta \leq \min\left\{\frac{2}{\lambda_*}, \frac{1}{(q+1)C_3\cdot\zeta(1,r_0)} \right\}$ and the initial point $\hat{\bm{y}}_0 \in \left\{\bm{z}\in \Omega_q: \norm{\bm{z}-\bm{m}_k}_2 \leq r_0 \right\}$ for some $\bm{m}_k \in \mathcal{M}$.
		\end{enumerate}
	\end{customthm}
		
	\begin{proof}
		(a) \textbf{Linear convergence of gradient ascent with $f$}: The proof of the linear convergence of the population-level gradient ascent algorithm \eqref{grad_ascent_Manifold} is similar to some standard results in optimization theory, except that we are under the manifold context now. Recall from \eqref{grad_ascent_Manifold} that the iterative formula reads $\bm{y}_{s+1} = \Exp_{\bm{y}_s}\left(\eta \cdot \grad f(\bm{y}_s) \right)$ for $s=0,1,...$. We begin by deriving the following three facts.\\
		$\bullet$ \emph{Fact 1}: Given (M1), $f$ is geodesically strongly concave around some small neighborhoods of $\mathcal{M}$. In particular, when $0< r_0 \leq \sqrt{2-2\cos\left[\frac{3\lambda_*}{2(q+1)^{\frac{3}{2}} C_3}\right]}$,
		\begin{equation}
		\label{Fact1}
		f(\bm{y}) -f(\bm{m}_k) - \langle \grad f(\bm{m}_k),\, \Exp_{\bm{m}_k}^{-1}(\bm{y}) \rangle \leq -\frac{\lambda_*}{4} \norm{\Exp_{\bm{m}_k}^{-1}(\bm{y})}_2^2
		\end{equation}
		for any $\bm{y} \in \{\bm{z}\in \Omega_q: \norm{\bm{z}-\bm{m}_k}_2\leq r_0 \}$ and any $\bm{m}_k \in \mathcal{M}$.\\
%		$\bullet$ \emph{Fact 1}: Given (M1), there exists an $r_0>0$ such that $f$ is geodesically $\frac{\lambda_*}{2}$-strongly concave within $\cup_k \{\bm{z}\in \Omega_q: \norm{\bm{z}-\bm{m}_k}_2\leq r_0 \}$. That is,
%		\begin{equation}
%			\label{Fact1}
%			f(\bm{y}) -f(\bm{x}) - \langle \grad f(\bm{x}),\, \Exp_{\bm{x}}^{-1}(\bm{y}) \rangle \leq -\frac{\lambda_*}{4} \norm{\Exp_{\bm{x}}^{-1}(\bm{y})}_2^2
%		\end{equation}
%		for any $\bm{x}, \bm{y} \in \{\bm{z}\in \Omega_q: \norm{\bm{z}-\bm{m}_k}_2\leq r_0 \}$ and any $\bm{m}_k \in \mathcal{M}$.\\
        $\bullet$ \emph{Fact 2}. Given (D1) and (M1), we know that $\norm{\grad f(\bm{x})}_2\equiv \norm{\Tang\left(\nabla f(\bm{x}) \right)}_2 >0$ and 
        $$f(\bm{m}_k) - f\left(\Exp_{\bm{x}}\left(\frac{1}{(q+1)C_3} \grad f(\bm{x}) \right) \right) \geq 0$$
        for all $\bm{x} \in \{\bm{z}\in \Omega_q: \norm{\bm{z}-\bm{m}_k}_2\leq r_0 \}\setminus \{\bm{m}_k\}$ and any $\bm{m}_k \in \mathcal{M}$.\\
%		$\bullet$ \emph{Fact 2}. Given (D1) and (M1), we can choose $r_0>0$ properly such that $\norm{\grad f(\bm{x})}_2\equiv \norm{\Tang\left(\nabla f(\bm{x}) \right)}_2 >0$ and 
%		$$f(\bm{m}_k) - f\left(\Exp_{\bm{x}}\left(\frac{1}{(q+1)C_3} \grad f(\bm{x}) \right) \right) \geq 0$$
%		for all $\bm{x} \in \{\bm{z}\in \Omega_q: \norm{\bm{z}-\bm{m}_k}_2\leq r_0 \}\setminus \{\bm{m}_k\}$ and any $\bm{m}_k \in \mathcal{M}$.\\
		$\bullet$ \emph{Fact 3}. Given (D1), the directional density $f$ is $(q+1)C_3$-smooth.\\
			
		As for \emph{Fact 1}, it follows from the differentiability of $f$ guaranteed by (D1) and the eigenvalue condition (M1). By Taylor's expansion on manifolds \citep{pennec2006intrinsic} and (M1),
		\begin{align}
		\label{LGSC1}
		\begin{split}
		&f(\bm{y}) -f(\bm{m}_k) \\
		&= \langle \grad f(\bm{m}_k), \Exp_{\bm{m}_k}^{-1}(\bm{y}) \rangle + \frac{1}{2}\cdot\Exp_{\bm{m}_k}^{-1}(\bm{y})^T \left[\mathcal{H} f(\bm{m}_k) \right] \Exp_{\bm{m}_k}^{-1}(\bm{y}) + o\left(\norm{\Exp_{\bm{m}_k}^{-1}(\bm{y})}_2^2 \right)\\
		&\leq \langle \grad f(\bm{m}_k), \Exp_{\bm{m}_k}^{-1}(\bm{y}) \rangle -\frac{\lambda_*}{2} \norm{\Exp_{\bm{m}_k}^{-1}(\bm{y})}_2^2 + \frac{(q+1)^{\frac{3}{2}} C_3}{6} \cdot \norm{\Exp_{\bm{m}_k}^{-1}(\bm{y})}_2^3
		\end{split}
		\end{align}
		for any $\bm{y} \in \{\bm{z}\in \Omega_q: \norm{\bm{z}-\bm{m}_k}_2 \leq r_0 \}$ and $\bm{m}_k \in \mathcal{M}$. Since $\norm{\bm{y}-\bm{m}_k}_2 \leq r_0$, the geodesic distance between $\bm{y}$ and $\bm{m}_k$ satisfies $d_g(\bm{y},\bm{m}_k) = \norm{\Exp_{\bm{m}_k}^{-1}(\bm{y})}_2 = \arccos(\bm{y}^T \bm{m}_k) \leq \frac{3\lambda_*}{2(q+1)^{\frac{3}{2}} C_3}$. Plugging this result back into \eqref{LGSC1} yields that
		$$f(\bm{y}) -f(\bm{m}_k) \leq \langle \grad f(\bm{m}_k), \Exp_{\bm{m}_k}^{-1}(\bm{y}) \rangle -\frac{\lambda_*}{4} \norm{\Exp_{\bm{m}_k}^{-1}(\bm{y})}_2^2.$$
		For our purpose, it suffices to only prove \eqref{Fact1} as above. One can shrink the upper bound of the convergence radius $r_0>0$ so that the geodesically strong concavity is valid for any pair of points within $\{\bm{z}\in \Omega_q: \norm{\bm{z}-\bm{m}_k}_2 \leq r_0 \}$. Indeed, the local strong concavity of $f$ is a natural consequence of Morse Lemma (Lemma 3.11 in \cite{Morse_Homology2004}) given (M1).
		
		\emph{Fact 2} is an obvious result under the eigenvalue condition (M1) and differentiable condition (D1). This is because $\bm{m}_k$ is an unique local mode of $f$ within the neighborhood $\{\bm{z}\in \Omega_q: \norm{\bm{z}-\bm{m}_k}_2 \leq r_0 \}$ and the geodesic distance between $\bm{x}$ and one-step gradient ascent iteration from $\bm{x}$ with the step size $\frac{1}{(q+1)C_3}$ satisfies
		\begin{align*}
		d\left(\Exp_{\bm x}\left(\frac{1}{(q+1)C_3} \grad f(\bm{x}) \right), \bm{x}\right) & = \frac{1}{(q+1)C_3} \norm{\grad f(\bm{x})}_2\\
		&= \frac{1}{(q+1)C_3} \norm{\grad f(\bm{x}) - \underbrace{\Gamma_{\bm{m}_k}^{\bm x}\left(\grad f(\bm{m}_k) \right)}_{=0}}_2\\
		&\leq \frac{1}{(q+1)C_3} \norm{\mathcal{H} f(\bm{x})}_2 \cdot \norm{\Exp_{\bm x}^{-1}(\bm{m}_k)}_2\\
		&\leq \norm{\Exp_{\bm x}^{-1}(\bm{m}_k)}_2 = d_g(\bm{x}, \bm{m}_k),
		\end{align*}
		where we use the fact that $\norm{\mathcal{H} f(\bm{x})}_2 \leq (q+1)\norm{\mathcal{H} f(\bm{x})}_{\max} \leq (q+1)C_3$ to deduce the last inequality.
		This shows that the one-step gradient ascent iteration $\Exp_{\bm{x}}\left(\frac{1}{(q+1)C_3} \cdot \grad f(\bm{x}) \right)$ on $\Omega_q$ will stay within the neighborhood $\{\bm{z}\in \Omega_q: \norm{\bm{z}-\bm{m}_k}_2\leq r_0 \}$ whenever $\bm{x} \in \{\bm{z}\in \Omega_q: \norm{\bm{z}-\bm{m}_k}_2\leq r_0 \}\setminus \{\bm{m}_k\}$. Therefore, $f(\bm{m}_k) - f\left(\Exp_{\bm{x}}\left(\frac{1}{(q+1)C_3} \cdot \grad f(\bm{x}) \right) \right) \geq 0$.
        
%        \emph{Fact 2} ensures that the population-level gradient ascent algorithm will eventually converge to the nearest local mode $\bm{m}_k$ when it is initialized at $\bm{y}_0 \in \{\bm{z}\in \Omega_q: \norm{\bm{z}-\bm{m}_k}_2\leq r_0 \}$ for some $\bm{m}_k \in \mathcal{M}$. This is valid because $\bm{m}_k$ is an unique local mode of $f$ within the neighborhood $\{\bm{z}\in \Omega_q: \norm{\bm{z}-\bm{m}_k}_2 \leq r_0 \}$ under a small radius $r_0>0$.
			
		As for \emph{Fact 3}, note that $\norm{\mathcal{H} f(\bm x)}_{\max} \leq C_3$ for all $\bm{x}\in \Omega_q$. Thus,
		\begin{align*}
			\norm{\grad f(\bm{x}) - \Gamma_{\bm{y}}^{\bm{x}}\left(\grad f(\bm{y}) \right)}_2 &= \norm{\left(\mathcal{H} f(\bm x) \right) \Exp_{\bm{x}}^{-1}(\bm{x}^*)}_2\\
			&\leq \norm{\mathcal{H} f(\bm x)}_2 \cdot \norm{\Exp_{\bm{x}}^{-1}(\bm{y})}_2\\
			&\leq (q+1)C_3 \norm{\Exp_{\bm{x}}^{-1}(\bm{y})}_2,
		\end{align*}
		where $\bm{x}^* \in \left\{\bm{z}\in \Omega_q: \norm{\bm{z}-\bm{x}}_2 \leq \norm{\bm{y}-\bm{x}}_2 \right\}$, and we use the fact that $||A||_2 \leq \sqrt{mn}||A||_{\max}$ for any $A\in \mathbb{R}^{m\times n}$. See Section 3.3 in \cite{Non_ridge_est2014} or Section 5.6 in \cite{HJ2012} for detailed relations between different types of matrix norms.
			
		With \emph{Fact 1} and \emph{Fact 3}, we have that
		\begin{align}
		\label{f_cond}
		\begin{split}
		&-\frac{(q+1)C_3}{2} \norm{\Exp_{\bm{m}_k}^{-1}(\bm{y})}_2^2 \leq f(\bm{y}) -f(\bm{m}_k) - \langle \grad f(\bm{m}_k), \Exp_{\bm{m}_k}^{-1}(\bm{y}) \rangle, \\
		&f(\bm{y}) -f(\bm{m}_k) - \langle \grad f(\bm{m}_k), \Exp_{\bm{m}_k}^{-1}(\bm{y}) \rangle \leq -\frac{\lambda_*}{4} \norm{\Exp_{\bm{m}_k}^{-1}(\bm{y})}_2^2<0
		\end{split}
		\end{align}
		for any $\bm{y} \in \left\{\bm{z}\in \Omega_q: \norm{\bm{z}-\bm{m}_k}_2 \leq r_0 \right\}$ and $\bm{m}_k \in \mathcal{M}$. \\
		Hence, given a point $\bm{y} \in \left\{\bm{z}\in \Omega_q: \norm{\bm{z}-\bm{m}_k}_2 \leq r_0 \right\}$ and using \emph{Fact 2}, 
		\begin{align*}
		&f(\bm{y}) -f(\bm{m}_k)\\
		&\leq f(\bm{y}) -f(\bm{m}_k) + f(\bm{m}_k) - f\left(\Exp_{\bm{y}}\left(\frac{1}{(q+1)C_3} \cdot \grad f(\bm{y}) \right) \right)\\
		&= -\left[f\left(\Exp_{\bm{y}}\left(\frac{1}{(q+1)C_3} \cdot \grad f(\bm{y}) \right) \right) -f(\bm{y}) \right]\\
		&\leq -\left[\langle \grad f(\bm{y}), \frac{1}{(q+1)C_3} \grad f(\bm{y}) \rangle - \frac{(q+1)C_3}{2} \norm{\Exp_{\bm{y}}^{-1}\left(\frac{1}{(q+1)C_3} \cdot \grad f(\bm{y}) \right)}_2^2 \right]\\
		&= -\frac{1}{2(q+1)C_3} \norm{\grad f(\bm{y})}_2^2,
		\end{align*}
		where we apply the first inequality in \eqref{f_cond} to obtain the fourth line. Thus, for any $\bm{x} \in \left\{\bm{z}\in \Omega_q: \norm{\bm{z}-\bm{m}_k}_2 \leq r_0 \right\}$,
			\begin{equation}
			\label{grad_bound_f}
			\norm{\grad f(\bm{x})}_2^2 \leq 2(q+1)C_3 \left[f(\bm{m}_k) -f(\bm{x}) \right].
			\end{equation}
		With $\bm{y}_0 \in \left\{\bm{z}\in \Omega_q: \norm{\bm{z}-\bm{m}_k}_2 \leq r_0 \right\}$ and  Corollary~\ref{Riemannian_tri_update}, we deduce that
		\begin{align*}
			d^2(\bm{y}_{s+1},\bm{m}_k) &\leq d^2(\bm{y}_s,\bm{m}_k) - 2\eta \langle \grad f(\bm{y}_s), \Exp_{\bm{y}_s}^{-1}(\bm{m}_k) \rangle + \zeta(1, d(\bm{y}_s, \bm{m}_k)) \cdot \eta^2 \norm{\grad f(\bm{y}_s)}_2^2\\
			&\stackrel{\text{(i)}}{\leq} d^2(\bm{y}_s,\bm{m}_k) + 2\eta \left[f(\bm{y}_s) -f(\bm{m}_k) -\frac{\lambda_*}{4} d^2(\bm{y}_s,\bm{m}_k) \right] \\
			&\quad + \zeta(1,r_0)\cdot \eta^2 \cdot 2(q+1)C_3 \left[f(\bm{m}_k) -f(\bm{y}_s) \right]\\
			&=\left(1-\frac{\eta\lambda_*}{2} \right) d^2(\bm{y}_s, \bm{m}_k) -2\eta \left[1-\zeta(1,r_0) (q+1)C_3\eta \right]\cdot\underbrace{\left[f(\bm{m}_k) -f(\bm{y}_s) \right]}_{\geq 0}\\
			&\leq \left(1-\frac{\eta\lambda_*}{2} \right) d^2(\bm{y}_s, \bm{m}_k)
			\end{align*} 
		whenever $\eta \leq \min\left\{\frac{2}{\lambda_*}, \frac{1}{(q+1)C_3\cdot\zeta(1,r_0)} \right\}$, where we use the second inequality of (\ref{f_cond}), the monotonicity of $\zeta(1,c)$ with respect to $c$, and (\ref{grad_bound_f}) to obtain (i). By telescoping, we conclude that when $\eta \leq \min\left\{\frac{2}{\lambda_*}, \frac{1}{(q+1)C_3\cdot\zeta(1,r_0)} \right\}$ and $\bm{y}_0 \in \left\{\bm{z}\in \Omega_q: \norm{\bm{z}-\bm{m}_k}_2 \leq r_0 \right\}$,
		$$d(\bm{y}_s,\bm{m}_k) = \norm{\Exp_{\bm{y}_s}^{-1}(\bm{m}_k)}_2 \leq \left(1-\frac{\eta\lambda_*}{2} \right)^{\frac{s}{2}} \cdot d(\bm{y}_0,\bm{m}_k) = \left(1-\frac{\eta\lambda_*}{2} \right)^{\frac{s}{2}} \norm{\Exp_{\bm{y}_0}^{-1}(\bm{m}_k)}_2.$$
		The result follows.\\
			
		\noindent (b) \textbf{Linear convergence of gradient ascent with $\hat{f}_h$}: The proof here is partially adopted from the proof of Theorem 2 in \cite{EM2017}. By Theorem \ref{unif_conv_tang} and the continuity of exponential map, when $h$ is sufficiently small and $\frac{nh^{q+2}}{|\log h|}$ is sufficiently large, we have that for any $\delta \in (0,1)$,
		\begin{align}
			\label{Unif_grad_bound2}
			\begin{split}
			d\left(\Exp_{\bm{x}}(\eta \cdot \grad \hat{f}_h(\bm{x})), \Exp_{\bm{x}} (\eta \cdot \grad f(\bm{x}))\right) &\leq \eta\bar{C}_4 \cdot \sup_{\bm{x} \in \Omega_q} \norm{\grad \hat{f}_h(\bm{x}) -\grad f(\bm{x})}_{\max}\\
			&\equiv \epsilon_{n,h} \\
			&\leq (1-\Upsilon) \cdot  \arccos\left(1-\frac{r_0^2}{2} \right)
			\end{split}
		\end{align}
		with probability at least $1-\delta$, where $\bar{C}_4$ is some constant independent of $\bm{x} \in \Omega_q$, and $\epsilon_{n,h}=\eta\bar{C}_4 \cdot \sup_{\bm{x} \in \Omega_q} \norm{\grad \hat{f}_h(\bm{x}) -\grad f(\bm{x})}_{\max} = O(h^2) + O_P\left(\sqrt{\frac{|\log h|}{nh^{q+2}}} \right)$.\\
		We now claim that $d(\hat{\bm{y}}_s,\bm{m}_k) \leq \arccos\left(1-\frac{r_0^2}{2} \right)$ and
		\begin{equation}
			\label{claim1}
			d(\hat{\bm{y}}_{s+1}, \bm{m}_k) \leq \Upsilon \cdot d(\hat{\bm{y}}_s, \bm{m}_k) + \epsilon_{n,h}
		\end{equation}
		for any fixed $s=0,1,2,...$ with probability at least $1-\delta$. We will prove this claim by induction on the iteration number. Recall that 
		$$\hat{\bm{y}}_{s+1} = \Exp_{\hat{\bm{y}}_s}\left(\eta \cdot \grad \hat{f}_h(\hat{\bm{y}}_s) \right).$$
		Then with $s=1$, we have that
		\begin{align*}
			&d(\hat{\bm{y}}_1, \bm{m}_k) \\
			&= d\left(\Exp_{\hat{\bm{y}}_0}\left(\eta \cdot \grad \hat{f}_h(\hat{\bm{y}}_0) \right), \bm{m}_k \right)\\
			&\leq d\left(\Exp_{\hat{\bm{y}}_0}\left(\eta \cdot \grad f(\hat{\bm{y}}_0) \right), \bm{m}_k \right) + d\left(\Exp_{\hat{\bm{y}}_0}\left(\eta \cdot \grad \hat{f}_h(\hat{\bm{y}}_0) \right),\, \Exp_{\hat{\bm{y}}_0}\left(\eta \cdot \grad f(\hat{\bm{y}}_0) \right) \right)\\
			&\leq \Upsilon \cdot d(\hat{\bm{y}}_0, \bm{m}_k) + \eta\bar{C}_4 \cdot \sup_{\bm{x} \in \Omega_q} \norm{\grad \hat{f}_h(\bm{x}) -\grad f(\bm{x})}_{\max}\\
			&= \Upsilon \cdot d(\hat{\bm{y}}_0, \bm{m}_k) + \epsilon_{n,h},
		\end{align*}
		where the first inequality follows by the triangle inequality, the second one is from our result in (a), whereas the third equality is by \eqref{Unif_grad_bound2}. The triangle inequality is valid in this context because a geodesic measures the minimal distance between two points on $\Omega_q$. In addition, the bound in \eqref{Unif_grad_bound2} and our initialization $\hat{\bm{y}}_0 \in \left\{\bm{z}\in \Omega_q: \norm{\bm{z}-\bm{m}_k}_2 \leq r_0 \right\}$ ensure that $d(\hat{\bm{y}}_1, \bm{m}_k) \leq \arccos\left(1-\frac{r_0^2}{2} \right)$. In the induction from $s \to s+1$, suppose that $d(\hat{\bm{y}}_s, \bm{m}_k) \leq \arccos\left(1-\frac{r_0^2}{2} \right)$ and the claim (\ref{claim1}) holds at step $s$. With the fact proved in (a) that 
		$$d\left(\Exp_{\hat{\bm{y}}_s}\left(\eta\cdot \grad f(\hat{\bm{y}}_s) \right),\, \bm{m}_k \right) \leq \Upsilon \cdot d(\hat{\bm{y}}_s, \bm{m}_k),$$
		the same argument implies that the claim (\ref{claim1}) holds for step $s+1$ and $d(\hat{\bm{y}}_{s+1}, \bm{m}_k) \leq \arccos\left(1-\frac{r_0^2}{2} \right)$. The claim (\ref{claim1}) is thus proved.\\
		As a result, $\hat{\bm{y}}_s$ always lies within $\left\{\bm{z}\in \Omega_q: \norm{\bm{z}-\bm{m}_k}_2 \leq r_0 \right\}$ for all $s=0,1,...$.
		Now, with this claim and $\Upsilon=\sqrt{1- \frac{\eta\lambda_*}{2}} <1$, we iterate it to show that
		\begin{align*}
			d(\hat{\bm{y}}_s, \bm{m}_k) &\leq \Upsilon \cdot d(\hat{\bm{y}}_{s-1}, \bm{m}_k) + \epsilon_{n,h}\\
			&\leq \Upsilon \cdot \left[\Upsilon \cdot d(\hat{\bm{y}}_{s-2}, \bm{m}_k) + \epsilon_{n,h} \right] + \epsilon_{n,h}\\
			&\leq \Upsilon^s \cdot d(\hat{\bm{y}}_0, \bm{m}_k) + \left\{\sum_{k=0}^{s-1} \Upsilon^k\right\} \cdot \epsilon_{n,h}\\
			&\leq \Upsilon^s \cdot d(\hat{\bm{y}}_0, \bm{m}_k) + \frac{\epsilon_{n,h}}{1-\Upsilon} \\
			&\leq \Upsilon^s \cdot d(\hat{\bm{y}}_0, \bm{m}_k) + O(h^2) + O_P\left(\frac{|\log h|}{nh^{q+2}} \right),
		\end{align*}
		where the fourth inequality follows by summing the geometric series, and the last one follows from our notation that $\epsilon_{n,h}=O(h^2) + O_P\left(\sqrt{\frac{|\log h|}{nh^{q+2}}} \right)$. It completes the proof.
		\end{proof}
		
	\end{appendices}
	
\vskip 0.2in
\bibliography{Bibliography}
	
\end{document}